\DeclareMathAlphabet{\mathscr}{LS1}{stixscr}{m}{n}
\title{Universality in Transfer Learning for Linear Models}
\author{ Reza Ghane \thanks{Equal Contribution} \thanks{Department of Electrical Engineering, 
	California Institute of Technology} \and  Danil Akhtiamov \footnotemark[1] \thanks{Department of Computing and Mathematical Sciences, 
	California Institute of Technology} \and Babak Hassibi \footnotemark[2] \,\footnotemark[3] }
\newcommand{\bSigma}{\bm{\Sigma}}
\newcommand{\bOmega}{\bm{\Omega}}
\newcommand{\bomega}{\bm{\omega}}
\newcommand{\bepsilon}{\bm{\epsilon}}
\newcommand{\btheta}{\bm{\theta}}
\newcommand{\bmeta}{\bm{\eta}}
\newcommand{\bmu}{\bm{\mu}}
\newcommand{\barphi}{\bar{\phi}}
\newcommand{\tlPhi}{\tilde{\Phi}}
\newcommand{\tlepsilon}{\tilde{\epsilon}}
\newcommand{\tldelta}{\tilde{\delta}}
\newcommand{\tlpsi}{\tilde{\psi}}
\newcommand{\bA}{\mathbf{A}}
\newcommand{\bB}{\mathbf{B}}
\newcommand{\bC}{\mathbf{C}}
\newcommand{\bD}{\mathbf{D}}
\newcommand{\bE}{\mathbf{E}}
\newcommand{\bG}{\mathbf{G}}
\newcommand{\bH}{\mathbf{H}}
\newcommand{\bI}{\mathbf{I}}
\newcommand{\bM}{\mathbf{M}}
\newcommand{\bP}{\mathbf{P}}
\newcommand{\bR}{\mathbf{R}}
\newcommand{\bS}{\mathbf{S}}
\newcommand{\bW}{\mathbf{W}}
\newcommand{\bX}{\mathbf{X}}
\newcommand{\bY}{\mathbf{Y}}
\newcommand{\ba}{\mathbf{a}}
\newcommand{\bb}{\mathbf{b}}
\newcommand{\bc}{\mathbf{c}}
\newcommand{\bg}{\mathbf{g}}
\newcommand{\bh}{\mathbf{h}}
\newcommand{\bu}{\mathbf{u}}
\newcommand{\bv}{\mathbf{v}}
\newcommand{\bw}{\mathbf{w}}
\newcommand{\bx}{\mathbf{x}}
\newcommand{\by}{\mathbf{y}}
\newcommand{\bz}{\mathbf{z}}
\newcommand{\hepsilon}{\hat{\epsilon}}
\newcommand{\bxi}{\bm{\xi}}
\newcommand{\hp}{\hat{p}}
\newcommand{\hbA}{\hat{\mathbf{A}}}
\newcommand{\hbw}{\hat{\mathbf{w}}}
\newcommand{\bbE}{\mathbb{E}}
\newcommand{\bbR}{\mathbb{R}}
\newcommand{\bbS}{\mathbb{S}}
\newcommand{\bbP}{\mathbb{P}}
\newcommand{\bbN}{\mathbb{N}}
\newcommand{\bbH}{\mathbb{H}}
\newcommand{\calC}{\mathcal{C}}
\newcommand{\calL}{\mathcal{L}}
\newcommand{\calN}{\mathcal{N}}
\newcommand{\calM}{\mathcal{M}}
\newcommand{\calS}{\mathcal{S}}
\newcommand{\calT}{\mathcal{T}}
\newcommand{\mscl}{\mathscr{\ell}}
\newcommand{\mscm}{\mathscr{m}}
\newcommand{\tlC}{\tilde{C}}
\newcommand{\tlg}{\tilde{g}}
\newcommand{\tls}{\tilde{s}}
\newcommand{\tlt}{\tilde{t}}
\newcommand{\tlbA}{\tilde{\bA}}
\newcommand{\tlbM}{\tilde{\bM}}
\newcommand{\tlba}{\tilde{\ba}}
\newcommand{\tlbw}{\tilde{\bw}}
\newcommand{\tlbx}{\tilde{\bx}}
\newcommand{\tlby}{\tilde{\by}}
\newcommand{\bzero}{\mathbf{0}}
\newcommand{\tr}{\text{Tr}}
\newcommand{\diag}{\text{diag}}
\newcommand{\rarrowp}{\xrightarrow[]{\bbP}}
\theoremstyle{plain}
\newtheorem{theorem}{Theorem}
\newtheorem{remark}{Remark}
\newtheorem{lemma}{Lemma}
\newtheorem{corollary}{Corollary}
\newtheorem{definition}{Definition}
\newtheorem{assumptions}{Assumptions}
\newtheorem{proposition}{Proposition}
\DeclareMathOperator*{\argmax}{arg\,max}
\DeclareMathOperator*{\argmin}{arg\,min}
\begin{document}
\maketitle
\maketitle

\begin{abstract}

% Transfer learning is an attractive framework for problems where there is a paucity of data, or where data collection is costly. One common approach to transfer learning is referred to as "fine-tuning", and involves using a model that is pretrained on samples from a source distribution, which is easier to acquire, and then fine-tuning the model on a few samples from the target distribution. The hope is that, if the source and target distributions are "close", then the fine-tuned model will perform well on the target distribution even though it has seen only a few samples from it. 

We study the problem of transfer learning and fine-tuning in linear models for both regression and binary classification. In particular, we consider the use of stochastic gradient descent (SGD) on a linear model initialized with pretrained weights and using a small training data set from the target distribution. In the asymptotic regime of large models, we provide an exact and rigorous analysis and relate the generalization errors (in regression) and classification errors (in binary classification) for the pretrained and fine-tuned models. In particular, we give conditions under which the fine-tuned model outperforms the pretrained one.
An important aspect of our work is that all the results are "universal", in the sense that they depend only on the first and second order statistics of the target distribution. They thus extend well beyond the standard Gaussian assumptions commonly made in the literature. Furthermore, our universality results extend beyond standard SGD training to the test error of a classification task trained using a ridge regression.

\end{abstract}

\section{Introduction}
% Deep neural networks have revolutionized the way data processing and statistical inference are conducted. Despite their ground-breaking performance, these models often require a plethora of training samples which can make the process of data acquisition expensive. Moreover, with the advent of more and more deep networks and especially Large Language Models (LLMs), the process of training from scratch has also become extortinate.

% To alleviate the scarcity of prepared/labeled data and training-overhead, various approaches have been proposed. 
% One such method is fine-tuning in which a network previously trained on a source dataset different from the target dataset is leveraged as the initialization point for training on the target dataset. Alternatively, for a network with billions of parameters, only a subset of weights are updated to adopt to the task at hand.

% A foundational question would be: How effective is this procedure? Are there fundamental limits to how much one can achieve by utilization of a model pretrained on a different distribution? 

Deep neural networks have revolutionized the way data processing and statistical inference are conducted. Despite their ground-breaking performance, these models often require a plethora of training samples, which can make the process of data acquisition expensive. Moreover, with the advent of increasingly complex deep networks and especially Large Language Models (LLMs), the process of training from scratch has also become prohibitively expensive. To alleviate the scarcity of prepared data and the high training costs, various strategies have been proposed. One such method is fine-tuning in which a network previously trained on a source dataset/task different from the target dataset/task is leveraged as the initialization point for training on the target dataset/task. In many practical applications, especially for networks containing billions of parameters, only a subset of weights are updated to adapt the model to the new task. A particularly attractive method involves fine-tuning only the last layer of the network. A foundational question would be: How effective is this procedure? Are there fundamental limits to how much one can achieve by utilization of a model pretrained on a different distribution? 

In this work, we would like to investigate this problem rigorously through the lens of linear regression and binary classification in the overparametrized regime, where the number of weights/parameters exceeds the number of data points. To do so, we analyze the performance of regressors/classifiers obtained through performing SGD initialized at a weight $\bw_0 \in \bbR^d$ acquired through training on the source domain.  It was shown in \cite{pmlr-v80-gunasekar18a} and \cite{azizan2018stochastic} that gradient descent (GD) and stochastic gradient descent(SGD) iterations converge to the optimal solution $\bw\in \bbR^d$ of the following optimization problem:
\begin{align}\label{eq: sgd_char}
    \min_\bw \|\bw-\bw_0\|_2^2 \\
    s.t \quad \by=\bX\bw \nonumber
\end{align}
where $\bw_0 \in \bbR^d$ is the initialization point for SGD, $\bX \in \bbR^{n\times d}$ is the design/data matrix which is comprised of independent rows reflecting the fact that datapoints $(\bx_i,y_i)$ are sampled independently, and $\by\in\bbR^n$ is the vector of labels. This property of SGD is known as "Implicit Regularization" and will serve as the basis for our analysis as the pretraining step is captured by the vector $\bw_0$. Understanding this "interpolating regime" is key to the theoretical analysis of machine learning models as most of the deep neural networks, on account of their highly overparametrized nature, operate in a setting where they are able to attain negligible training error. It is noteworthy that, even for linear models, characterizing the exact performance of model-based transfer learning approach has been somewhat limited and is inhibited furthermore by the Gaussianity assumption on $\bX$. One of our main contributions is to overcome this limitation. In fact, to showcase the ubiquity of our results, we establish a general universality theorem that applies to a large class of data distributions and extends beyond the context of Transfer Learning. To go into more detail, we say that Gaussian universality holds for a data distribution $\bbP$ and a training algorithm $T$ if the test error obtained by training on data sampled from $\bbP$ using $T$ is the same as the test error obtained by training on data sampled from the Gaussian distribution $\calN (\bmu_{\bbP}, \bSigma_{\bbP})$ using $T$, where $\bmu_{\bbP}$ and $\bSigma_{\bbP}$ stand for the mean and the covariance matrix of $\bbP$ respectively. In the context of binary classification, we establish universality of the classification error with respect to the distribution of each class. In light of the the latter result, the problem reduces to analyzing the case of the Gaussian design matrices. We allow classes with different non-scalar covariance matrices $\bSigma_1$ and $\bSigma_2$ and build on the results of  \cite{akhtiamov2024novel} to address the problem in this specific scenario.

\subsection{Related Works} 
\label{sec: lit}
% \subsubsection{Related Works}
Transfer Learning has been an active topic of research at least since the 1970's \cite{bozinovski2020reminder}. It is mainly applied in the situations where obtaining data from the true distribution, which we will refer as the {\it target distribution}, is costly but there is a cheap way to access data from a {\it source distribution}, which bears resemblance to the target distribution. The two most popular approaches to transfer learning consist of instance-based transfer learning and model-based transfer learning.

Instance-based transfer learning incorporates the source dataset, along with the target dataset, and trains the model on this amalgamated dataset. The key insight is that, provided that the source distribution is close enough to the target distibution and a suitable training scheme is chosen, the final performance enjoys an improvement. We refer the reader to the landmark work \cite{dai2007boosting} as well as to the comprehensive overviews covering the empirical advances in this area \cite{tan2018survey, zhuang2020comprehensive}. For theoretical analysis, we would like to emphasize a recent work \cite{jain2024scaling} that characterized the scaling laws for ridge regression over the union of the source and target data in the high dimensional regime.

The present manuscript focuses on model-based transfer learning in which a model is first pretrained on the source distribution and then fine-tuned using the target data. \cite{dar2021common} analyzes transfer learning for linear regression assuming Gaussianity of the data. The results obtained in this paper, in particular, imply that the generalization error obtained in their work is universal, in the sense that they can be immediately extended to a much broader set of target distributions, which we elaborate on in much greater detail in the main body of the paper. 

\cite{gerace2022probing} consider the problem of binary classification via a two-layer neural network in a synthetic setting. The source data and the source labels are sampled according to $\bx = Relu(\bG \ba)$ and $y = Sign (\ba^T \bz)$ respectively where $\bG, \ba$  and $\bz$ are i.i.d.~Gaussian. The target data and labels are generated by perturbing $\bG$ and $\ba$ and then applying the same rules as for the source data. 
\cite{gerace2022probing} trained the network on the source dataset and then proceeded by only fine-tuning the last layer on the target domain using the cross-entropy loss with an $\ell_2$ regularizer. 
The analysis was conducted using the Replica method. The authors of \cite{gerace2022probing} compared their results with an equivalent random feature model and observed empirically a certain kind of Gaussian universality for real-world datasets, such as MNIST.

Extension of results obtained under Gaussianity assumptions to non-Gaussian distributions remains a pertinent research direction in which the idea of Gaussian universality plays a central role. \cite{montanari2017universality} proved the universality of the generalization error for the elastic net, assuming the design matrix $\bA$ is i.i.d subgaussian. \cite{panahi2017universal} generalized the result to the problem of regularized regression with a quadratic loss and a convex separable regularizer which is either $f(\cdot) = \|\cdot\|_1$ (LASSO) or strongly convex. \cite{han2023universality} generalized \cite{panahi2017universal}'s results to non-separable Lipschitz test functions and provided non-asymptotic bounds for the concentration of solutions. In the random geometry literature, \cite{oymak2018universality} showed universality of the embedding dimension of randomized dimension reduction linear maps with i.i.d entries satisfying certain moment conditions. \cite{abbasi2019universality} proved universality of the recovery threshold for minimizing strongly convex functions under linear measurements constraint, assuming the rows are iid and the norm of the mean is asymptotically negligible compared to the noise. \cite{lahiry2023universality}, proved the universality of generalization error for ridge regression and LASSO when the rows are distributed iid with a specific block structure dependence per row, $\bA \bD$ where $\bA$ has zero mean iid random subgaussian vectors per row and $\bD$ being diagonal. 

Leveraging universality is not limited to the signal recovery literature. 
As an example, \cite{hu2022universality, bosch2023precise, schroder2023deterministic}, have analyzed the performance of the random feature models by replacing nonlinear functions of Gaussians with the corresponding Gaussian distribution having the same mean and covariance in single layer and multiple layer scenarios, respectively, through a universality argument, referred to as the Gaussian equivalence principle.
 In a more general setting, under subgaussianity assumption on the data,
\cite{montanari2022universality} proved the universality of the training error for general loss and regularizer and test error when the regularizer is strongly convex and the loss is convex. \cite{hastie2022surprises} proved universality for the the minimal $\ell_2$-norm linear interpolators for the data generated via a very specific rule $\bx = \sigma(\bW\bSigma^{\frac{1}{2}}\bz)$, where $\bz$ is i.i.d zero mean and satisfies several other technical properties and $\bSigma$ is a deterministic PSD matrix. \cite{dandi2024universality} considered the universality of mixture distributions in a similar setting to \cite{montanari2022universality} and proved the universality of free energy of test error. 

\subsection{Outline of the Paper}

In Section \ref{subsec: Prob_set}, we formally define the optimization problem for which we aim to establish universality. Section \ref{subsec: Cont} outlines our primary contributions. In Section \ref{sec: main}, we present our main universality result, accompanied by several insightful remarks. Section \ref{sec: App} focuses on our findings related to fine-tuning in the contexts of linear regression and binary classification. In Section \ref{sec: Exp}, we validate our theoretical results through empirical experiments. Finally, we conclude with a summary and discussion in Section \ref{sec: con}.

\section{Problem Formulation}\label{sec: Prelim}

\subsection{Notations and definitions}

We use bold letters for vectors and matrices. We denote the $i$'th largest singular value of matrix $\bA$ by $s_i(\bA)$. We consider the proportional regime where $d=\Theta(n)$ and $\frac{d}{n} \rightarrow \kappa >1$.  We call a convex function $f:\bbR^d \rightarrow \bbR$ separable if $f(\bw) = \sum_{i=1}^d f_i(w_i)$, where $f_i: \bbR \rightarrow \bbR$ are convex. Examples of such functions include $\|\cdot\|_p^p$ for every $p\ge 1$. A  function $f:\bbR^d \rightarrow \bbR$ is $M$-strongly convex for $M>0$ if $f(\bw) - M \|\bw\|_2^2$ is a convex function of $\bw \in \bbR^d$. Our universality theorem holds for the regularizer and test functions satisfying the following definition:
\begin{definition}\label{def: h}
    We call a function $f: \bbR^d \rightarrow \bbR$ regular if it satisfies the following conditions
    \begin{enumerate}
        \item $f(0) = O(d)$.
        \item $f$ is three times differentiable.
        \item $f$ satisfies the following third-order condition for some $C_f>0$:
        \begin{align*}
            \sum_{i,j,k=1}^d \frac{\partial^3 f}{\partial w_i \partial w_j \partial w_k} v_i v_j v_k \le C_f \sum_{i=1}^d |v_i|^3
        \end{align*}
    \end{enumerate}
\end{definition}
Note that separable functions with bounded third order derivatives satisfy the third assumption of Definition \ref{def: h}. For the description of the main results on regression and classification we recall several probability theory concepts. Given a real-valued measure $\mu$, its Stieltjes transform is defined as 
\begin{align}\label{def: Stiltran}
    S_\mu(z) := \int_\bbR \frac{1}{r-z} \mu(dr), \quad z \in \bbH_+ \cup \bbR \setminus Supp(\mu)
\end{align}
With $\bbH_+ = \{z: Im(z)>0\}$. 
Moreover, we denote convergence in probability and weak convergence for measures by $\rarrowp$ and $\rightsquigarrow$ respectively. For a random variable $X$, we sometimes use $Var(X)$ for the variance of $X$. For a convex differentiable function $g$, the Bregman divergence is defined by
\begin{align*}
    D_g (\bw, \bw_0) := g(\bw) - g(\bw_0) - \nabla g(\bw_0)^T (\bw - \bw_0)
\end{align*}

\subsection{The Setting}

% \subsection{Universality}
Given a training dataset $\{(\bx_i, y_i)\}_{i=1}^n$ and a model with a fixed architecture, the conventional approach of learning the weights for the model consists of choosing a loss function $\mathcal{L}$ and finding $w$ minimizing $\frac{1}{n} \sum_{i=1}^n \mathcal{L}(\bx_i, \bw_i, y_i)$. In this exposition we focus on linear models and loss functions of the form $\mathcal{L}(\bx_i, \bw_i, y_i) = \ell (y_i - \bx_i^T \bw_i)$, where $\ell$ is differentiable and $\ell(0) = 0$.  \cite{azizan2018stochastic} characterized behavior of a broad family of optimizers, called {\it stochastic mirror descent} (SMD) algorithms, for linear models in the over-parametrized regime. For a strictly convex function $g: \mathbb{R}^d \to \bbR$, the update rule of the SMD with a {\it mirror} $g$ and a learning rate $\eta > 0$ is defined as 
\begin{equation}\label{eq:MD}
\nabla g(\bw_t)=\nabla g(\bw_{t-1})-\eta \nabla \calL_t(\bw_{t-1}), \quad t\geq 1,
\end{equation}
where $\eta>0$ is the learning rate and $\calL_t$ is the loss function $\calL$ evaluated at a point chosen at random in the dataset corresponding to the $t$'th iteration. Due to strict convexity, $\nabla g(\cdot)$ defines an invertible transformation, which is why \eqref{eq:MD} is indeed a well-defined update rule. Note also that this includes SGD as a special case, which corresponds to taking $g(\cdot) = \|\cdot\|_2^2$. \cite{azizan2018stochastic} show that applying SMD initialized at $\bw_0$ to minimize $\frac{1}{n} \sum_{i=1}^n \ell(y_i - \bx_i^T \bw)$ yields a weight vector defined by the following optimization problem:
\begin{align*}
   & \min_\bw D_g(\bw, \bw_0) \\
    s.t &\quad \by_i = \bx_i^T \bw, \quad 1\le i\le n
\end{align*}
 Hence, stating in a more general fashion, we are interested in the analysis of the following optimization problem, with the main case of interest being when $f$ is a quadratic:
\begin{align*}
    \min_\bw f(\bw) \\
    s.t \quad \by = \bA \bw
\end{align*} 
As the objective is comprised of minimizing a strongly convex function $f$ over a closed convex set, it has a unique minimizer. By using a Lagrange multiplier $\lambda \in \bbR$, this convex optimization problem can be written in the unconstrained form: 
\begin{align}\label{eq: objsup}
    \Phi(\bA):=\sup_{\lambda > 0} \min_w \frac{\lambda}{2} \|\bA \bw - \by\|_2^2 + f(\bw)
\end{align}
Considering the objective without the $\sup_{\lambda > 0}$ yields the following regularized linear regression problem for which we will also establish a universality theorem:
\begin{align}\label{eq: objlam}
    \Phi_\lambda (\bA):=\min_w \frac{\lambda}{2} \|\bA \bw - \by\|_2^2 + f(\bw)
\end{align}
Note that \eqref{eq: objlam} captures the case of explicit regularization, which is of highest importance whenever there is a high amount of noise or label corruption present. Furthermore, by deliberately choosing the regularizer $f$, it is possible to obtain solutions that exhibit certain behavior, viz being sparse or compressed.  Next, we will provide a description of the design matrices investigated in this paper. It will be clear soon that these assumptions are often satisfied in practice. 

\begin{definition}\label{def: block_reg}
    We call a random matrix $\bA \in \bbR^{n\times d}$ block-regular if it satisfies the following properties:
    \begin{enumerate}
        \item $\bA^T = \begin{pmatrix}
        \bA_1^T &
        \bA_2^T &
        \hdots &
        \bA_k^T
    \end{pmatrix}$ with $k$ being finite and for each $\bA_i \in \bbR^{n_i \times d}$, $n_i = \Theta(n)$
    \item For $1\le i \le k$, the rows of $\bA_i$ are distributed independently and identically.
    \item  $\bbE \bA_i =: \mathds{1} \bmu_i^T$ and each $\|\bmu_i\|_2^2 = O(1)$
    \item Let $\ba$ be any row of $\bA$ and $\bmu$ be its mean. Then for any deterministic vector $\bv \in \bbR^d$, and $q\in \bbN$, $q \le 6$,  there exists a constant $K > 0$ such that $\bbE_\ba |(\ba-\bmu)^T \bv|^q \le K \frac{\|\bv\|_2^q}{d^{q/2}}$
    \item For any deterministic matrix $\bC \in \bbR^{d\times d}$ of bounded operator norm we have $Var(\ba^T \bC \ba) \rightarrow 0$ as $d \rightarrow \infty$ 
    \end{enumerate}
\end{definition}
Note that assumptions 1-3 from Definition \ref{def: block_reg} are satisfied for the design matrix both in the classification and regression scenarios, and the assumption $\|\bmu_i\|_2^2 = O(1)$ is just a matter of normalization. While assumptions 4 and 5 might appear obscure at first, any $\ba \in \bbR^d$  satisfying the "Concentration of Measure" property also satisfies the aforementioned assumptions. Recall that the Concentration of Measure property is defined as follows. 

\begin{definition}
\label{def: LCP}
We say that a distribution $\ba \in \bbR^d$ satisfies the Concentration of Measure property if the following inequality holds for any Lipschitz function $\phi: \mathbb{R}^d \to  \bbR$
\begin{equation*}
\bbP(|\phi(\ba) - \bbE \phi(\ba)|>t) \le 2 \exp{(-\frac{dt^2}{2\|\phi\|^2_{Lip}})}
\end{equation*}.
\end{definition}
  It is also worth mentioning that, as shown in \cite{seddik2020random}, any data distribution from which one can sample using a GAN satisfies Definition \ref{def: LCP}. Since it is known that many natural datasets can be approximated well by GAN-generated data in practice, we thus find this assumption realistic. These assumptions also bear significance for the instance-based approach to transfer learning as utilizing GANs can remedy the dearth of data. What 
 is more, these assumptions are not limited to the subgaussians and include many other subexponential distributions, including $\chi^2$ which appears naturally in many signal processing applications such as Phase Retrieval. In order to tackle optimizations such as \ref{eq: objsup}, \ref{eq: objlam}, we prove their equivalence to a problem with a suitable Gaussian design $\bG$. For this purpose we will need the following definition:
\begin{definition} \label{def: matchgau}
    We call a Gaussian matrix $\bG$ matching a block-regular $\bA$ if $\bG$ is block-regular as well,  $\bbE \bG_i = \bbE \bA_i = \mathds{1} \bmu_i^T$ and for any row $\bg$ of $\bG_i$ and any row $\ba$ of $\bA_i$, it holds that $\bbE \bg\bg^T = \bbE \ba\ba^T$.
\end{definition}

Our approach for proving universality relies on the technique introduced by \cite{lindeberg1922neue} \cite{chatterjee2006generalization}.

\subsection{Our Contributions}\label{subsec: Cont}

\subsubsection{Transfer Learning}

\textbf{Linear Regression}
We extend the results from \cite{dar2021common} by providing precise expressions for the generalization error in linear regression tasks. In addition, we demonstrate that the test error is always lower-bounded by a quantity that is attained only when the covariance matrix of the data is scalar. Furthermore, we identify specific conditions under which fine-tuning is successful.

\textbf{Binary Classification}
We present the first precise characterization of the classification error for linear models trained using stochastic gradient descent (SGD) on data drawn from a general mixture distribution with arbitrary covariance matrices. Moreover, we delineate the regimes in which fine-tuning proves effective.

\subsubsection{Universality}
\textbf{Proof of universality for implicit regularization.}  In the context of SGD and, more generally, SMD (see \ref{eq:MD}) with a mirror $f$ satisfying the assumptions of Theorem \ref{thm: main_univ1}, we prove universality of the generalization error as well as of the value of the corresponding implicit regularization objective for a wide range of data distributions characterized by Assumptions \ref{assum: explicit}. Note that this cannot be reduced to any known results on universality of constrained objectives as the latter assume that the constraints are deterministic, i.e the optimization variables belong to some determinitic set $\calC$ chatacterized by the constraints. While to analyze SMD one has to deal with constraints of the form $\bA\bw=\by$ which represents a random polytope. To the best of our knowledge, the only other paper dealing with universality in the context of implicit regularization is \cite{hastie2022surprises}. They study minimal $\ell_2$-norm linear interpolators, but they work with a very specific (random feature) model for data distributions defined by $\bx = \sigma(\bW\bSigma^{\frac{1}{2}}\bz)$, where $\bz$ is i.i.d zero mean and $\bW$ is i.i.d. Gaussian. Our paper generalizes theirs in two non-trivial ways, as it allows for arbitrary smooth convex strongly convex mirrors $f(\bw)$ as well as more general data distributions. 

%(universality of an objective with random constraints). we prove that the distance of the converging point of the algorithm from the initialization point is universal and so is the generalization error of weights obtained. From signal processing point of view, the recovery threshold and generalization error of optimizing a strongly convex objective exhibit universality. We believe our work is the first work to do so.

\textbf{Relaxing assumptions for explicit regularization.}  To the best of the authors' knowledge, all previous results on universality assumed either that the data points $\bx \in \bbR^d$ have i.i.d. entries or that the rows are independent and $\bx$ is subgaussian. We relax these assumptions. That is, we show that, for the quadratic loss function and any strongly convex not necessarily differentiable regularizer universality holds as long as the rows are i.i.d., all moments of the $\bx$ up to the 6th satisfy $\bbE_\bx |(\bx-\bbE \bx)^T \bv|^q \le K \frac{\|\bv\|_2^q}{d^{q/2}}$ for $\bv \in \bbR^d$ and $Var(\bx^T\bC\bx) \to 0$ for any $\bC$ of bounded operator norm. 

\textbf{Extending universality to mixtures of distributions.}
Since the main motivation behind this work is the study of generalization in classification tasks, we focus on data matrices sampled from mixture distributions. Note that most previous papers on universality, such as \cite{montanari2022universality}, do not apply to this setting directly. As an illustration to why results of \cite{montanari2022universality} cannot just be applied to the mixture distributions directly, consider $\bP = \frac{1}{2}\mathcal{N}(\bmu, \bI) + \frac{1}{2}\mathcal{N}(-\bmu, \bI) $ corresponding to a mixture of two classes with antipodal means and the resulting classification error depends on $\|\bmu\|$, while $\bP$ has mean $\bzero$ and covariance $\bI$ meaning that the matching gaussian distribution $\mathcal{N}(0,\bI)$ does not contain any information about the classes.  

However, there is one prior paper \cite{dandi2024universality} that studies universality specifically in the context of mixture distributions. It is worth mentioning that their definition of universality is different. Namely, they prove universality of the expectation with respect to the Gibbs distribution, which suffices to show universality of the train but not of the test error.

\textbf{Allowing for non-vanishing means}.  In Definition \ref{def: block_reg}, we assume only that $\bbE_\ba \|\ba-\bmu\|_2^2 = O(1)$, whereas in \cite{abbasi2019universality, montanari2022universality} and \cite{dandi2024universality} they have  $\frac{\|\bmu\|_2^2}{\bbE_\ba \|\ba-\bmu\|_2^2} \rightarrow 0$, implying that the norms of the means are negligible compared to the norms of the data points on average. To our knowledge, our work is the first to tackle it. 

% \subsubsection{Transfer Learning for Classification}

% To the best of our knowledge, we provide the first precise results for the classification error of a linear model for a general mixture distribution trained using SGD for general covariance matrix. Moreover, we identify regimes where fine-tuning succeeds.

\section{Main Result}\label{sec: main}
In this section, we present our main universality theorem and before doing so, the technical assumptions are presented. Consider the following convex optimization problem
    \begin{align}\label{eq: explicit}
    \Phi_\lambda (\bA)&:=\min_w \frac{\lambda}{2n} \|\bA \bw - \by\|_2^2 + \frac{1}{n} f(\bw)
    \end{align}
Denote the solution to the optimization problem \ref{eq: explicit} above by $\bw_{\Phi_\lambda(\bA)}$. Then we assume the following:
\begin{assumptions}\label{assum: explicit}
    \begin{enumerate}
        \item $\bA$ is a regular block matrix and $\bG$ is its matching Gaussian matrix.
        \item The regularizer $f(\cdot)$ is regular or there exists a sequence of regular $f_m$'s converging uniformly to $f$.
        \item $f(\cdot)$ is $M$-strongly convex.
        \item  $\bbE_{\by} \|\by\|_2^2 = O(d)$ with $\bbE |y_j|^q = O(1)$ for $j\in[n]$ and $q\in[6]$ and is independent of $\bA$.
         \item $s_{\min} (\bA \bA^T) = \Omega(1)$ with high probability,
        \item $Var(\|\by\|_2^2) \rightarrow 0$
        \item $\|\nabla f(\bw_{\Phi_\lambda(\bA)})\|_2 \le K_f \sqrt{n}$ for a constant $K_f$ independent of $\lambda$.
    \end{enumerate}
\end{assumptions}
Our first result will be about the universality in the explicit regularization tasks.
\begin{theorem} \label{thm: main_univ1}
    % Consider the following convex optimization problem
    % \begin{align*}
    % \Phi_\lambda (\bA)&:=\min_w \frac{\lambda}{2n} \|\bA \bw - \by\|_2^2 + \frac{1}{n} f(\bw)
    % \end{align*}
    % Let $\bA$ be a regular-block matrix as in Definition \ref{def: block_reg} and consider its matching Gaussian matrix $\bG$ (Definition \ref{def: matchgau}). Let the regularizer $f(\cdot)$ be regular (Definition \ref{def: h}) or assume that there exists a sequence of regular $f_m$'s converging uniformly to $f$ with $f$ being $M$-strongly convex for some $M>0$. For the vector $\by$ assume $\bbE_{\by} \|\by\|_2^2 = O(d)$ with $\bbE |y_j|^q = O(1)$ for $q\in[6]$ and is independent of $\bA$. 
    
    Under parts (1) - (4) of
    Assumptions \ref{assum: explicit} for every $\lambda > 0$ we have:
    \begin{enumerate}
         \item (Universality of the training error) For any L-Lipschitz $g:\bbR^d \rightarrow \bbR$, and every $t_1 > 0, t_2 > t_1 ,c \in \bbR$  the following holds:
         \begin{itemize}
             \item If $\bbP(|g(\Phi_\lambda(\bG))-c| > t_1) \rightarrow 0$ then $\bbP(|g(\Phi_\lambda(\bA))-c| > t_2) \rightarrow 0 $.
             \item Furthermore, if $g$ has bounded second derivative, then
            \begin{align*}
                \lim_{n\rightarrow \infty}\Bigl|\bbE_{\bA,\by}  g(\Phi_\lambda(\bA)) - \bbE_{\bG,\by}g(\Phi_\lambda(\bG))\Bigr| = 0
            \end{align*}
         \end{itemize}
         \item (Universality of the solution) For a function $\psi$ with $\nabla^2 \psi \preceq q \bI$ if either $\psi$ is regular (Definition \ref{def: h}) or there is a sequence of regular functions converging uniformly to $\psi$, 
        \begin{itemize}
            \item If for every $t>0$, $\bbP(|\Phi_\lambda(\bG)-c| > t) \rightarrow 0 $ then for $\bB \in \{\bA, \bG\}$, there exists $c_1 \in \bbR$ such that for any $t_1>0$ we have
            $\bbP(|\psi(\bw_{\Phi_\lambda(\bB)})-c_1| > t_1) \rightarrow 0$.
            \item Furthermore, if $\psi$ is bounded
             \begin{align*}
            \lim_{n\rightarrow \infty} \Bigl|\bbE_{\bA, \by} \psi(\bw_{\Phi_\lambda(\bA)})-\bbE_{\bG, \by} \psi(\bw_{\Phi_\lambda(\bG)})\Bigr| = 0
            \end{align*}
        \end{itemize}

    \end{enumerate}
\end{theorem}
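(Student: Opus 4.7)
The approach is the Lindeberg exchange scheme \cite{lindeberg1922neue, chatterjee2006generalization}. Enumerate the rows of $\bA$ and $\bG$ as $\ba_1,\ldots,\ba_n$ and $\bg_1,\ldots,\bg_n$, and form interpolating matrices $\bM_0=\bA,\bM_1,\ldots,\bM_n=\bG$, where $\bM_i$ is obtained from $\bM_{i-1}$ by swapping its $i$-th row $\ba_i$ for $\bg_i$. Writing
\begin{align*}
    \bbE\, g(\Phi_\lambda(\bG)) - \bbE\, g(\Phi_\lambda(\bA)) = \sum_{i=1}^n \bbE\bigl[g(\Phi_\lambda(\bM_i))-g(\Phi_\lambda(\bM_{i-1}))\bigr],
\end{align*}
it suffices to bound each summand by $o(1/n)$. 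For this I would Taylor-expand both $g(\Phi_\lambda(\bM_{i-1}))$ and $g(\Phi_\lambda(\bM_i))$ in the $i$-th row around the matrix $\bM_i^0$ obtained by zeroing that row. Definition \ref{def: matchgau} forces $\bbE \ba_i = \bbE \bg_i$ and $\bbE \ba_i \ba_i^T = \bbE \bg_i \bg_i^T$, so the $0$-th, $1$-st, and $2$-nd order Taylor terms agree in expectation between $\bA$ and $\bG$ and only the cubic remainder contributes.

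Bounding the cubic remainder requires control of the third partial derivatives of $g\circ\Phi_\lambda$ in a row of the design matrix. By strong convexity of $f$ and the envelope theorem, $\Phi_\lambda$ is $C^3$ in the entries of $\bA$ with
\begin{align*}
    \frac{\partial \Phi_\lambda}{\partial A_{ij}} = \frac{\lambda}{n}\bigl(\bA\bw_\star-\by\bigr)_i (\bw_\star)_j, \qquad \bw_\star := \bw_{\Phi_\lambda(\bA)},
\end{align*}
and higher derivatives involving the inverse of the regularized Hessian $\frac{\lambda}{n}\bA^T\bA+\frac{1}{n}\nabla^2 f(\bw_\star)$, whose operator norm is $O(1)$ by $M$-strong convexity. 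The inner objective at $\bw=\bzero$ is $O(1)$ thanks to Definition \ref{def: h}(1) and Assumption \ref{assum: explicit}(4), which combined with strong convexity yields $\|\bw_\star\|_2 = O(\sqrt{d})$ and $\|\bA\bw_\star-\by\|_2 = O(\sqrt{n})$ with high probability. Contracting the resulting third-order tensor three times against $\ba_i$ and invoking the sixth-moment bound in Definition \ref{def: block_reg}(4) then produces an $O(n^{-3/2})$ bound per Lindeberg step, summing to $O(n^{-1/2}) = o(1)$. The high-probability statement in the first bullet of Part 1 follows from the expectation statement by a standard mollification: convolve the indicator of $\{|x-c|>t_2\}$ with a bump supported on a scale smaller than $t_2-t_1$ to obtain a bounded $C^2$ majorant, and apply the expectation estimate.

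For Part 2 I would reduce to Part 1 via a convex-perturbation trick. For $|\epsilon|<M/q$ define
\begin{align*}
    \Phi_\lambda^{\epsilon}(\bA) := \min_{\bw} \frac{\lambda}{2n}\|\bA\bw-\by\|_2^2 + \frac{1}{n}\bigl(f(\bw)+\epsilon\psi(\bw)\bigr),
\end{align*}
so that $f+\epsilon\psi$ remains strongly convex and regular (or a uniform limit of regulars), whence Part 1 applies and $\Phi_\lambda^{\epsilon}$ is universal. The envelope theorem gives $\partial_\epsilon \Phi_\lambda^\epsilon|_{\epsilon=0} = \tfrac{1}{n}\psi(\bw_\star)$, and convexity of $\epsilon\mapsto\Phi_\lambda^\epsilon(\bA)$ sandwiches $\tfrac{1}{n}\psi(\bw_\star(\bA))$ between the universal finite differences $\bigl(\Phi_\lambda^\epsilon(\bA)-\Phi_\lambda(\bA)\bigr)/\epsilon$ and $\bigl(\Phi_\lambda(\bA)-\Phi_\lambda^{-\epsilon}(\bA)\bigr)/\epsilon$; sending $n\to\infty$ and then $\epsilon\to 0$ yields both the concentration and the expectation assertions, and the uniform-limit case is handled by approximating $\psi$ by regular functions and interchanging limits.

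The main obstacle I anticipate is the sharp row-wise third-derivative bound. Definition \ref{def: block_reg}(4) controls $\bbE|(\ba-\bmu)^T\bv|^q$ only in terms of $\|\bv\|_2$, so the three vector factors produced when contracting the third-order tensor of $\Phi_\lambda$ with $\ba_i$ must have $\ell_2$-norms with very tight scaling. A naive bound inserting $\|\bw_\star\|_2=O(\sqrt{d})$ and the $O(1)$ bound on the inverse Hessian threatens a per-step remainder larger than $1/n$ by powers of $n$; the delicate part is to extract cancellations inside the Hessian-inverse expressions (for instance by splitting $\bw_\star$ into its projection on $\mathrm{Im}(\bA^T)$ and its orthogonal complement) in order to recover the sharp $o(1/n)$ bound. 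Handling the case where $f$ is only a uniform limit of regulars adds a further approximation layer requiring uniformity of the estimates along the approximating sequence.
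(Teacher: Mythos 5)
Your Lindeberg skeleton and your Part~2 convex-perturbation argument (introducing $\Phi_\lambda^{\epsilon}$ with regularizer $f+\epsilon\psi$ and sandwiching $\psi(\bw_\star)$ between finite differences via Danskin/envelope concavity) coincide with the paper's proof of Part~2. The gap is in Part~1, in the way you propose to handle a single Lindeberg swap, and it is not the ``delicate cancellation'' you anticipate but something more structural: the cubic Taylor remainder of $\Phi_\lambda$ in the swapped row is $\Theta(1/n)$, not $o(1/n)$, so bounding each remainder separately cannot close the argument. Concretely, conditioning on $\tlbA,\tlby$ and letting $\bu$ minimize the row-dropped objective $\Theta$ with $\bOmega = \lambda\tlbA^T\tlbA + \nabla^2(f+\epsilon\psi)(\bu)$, the paper shows (Lemma~\ref{lem: lineareq}) that the exact row-wise increment of the quadratic surrogate is
\begin{align*}
\calL(\ba) - \Theta \;=\; \frac{\lambda}{2n}\,\frac{(\ba^T\bu-y)^2}{1+\lambda\,\ba^T\bOmega^{-1}\ba}.
\end{align*}
Writing $s=\ba^T\bu$ and $q=\ba^T\bOmega^{-1}\ba$, both $O(1)$, every Taylor order of this rational function in $\ba$ (and hence of $\Phi_\lambda(\ba)-\Phi_\lambda(0)$) is $\Theta(1/n)$: the degree-$3$ term is $\tfrac{\lambda^2}{n}\,ysq$, whose expectation involves $\bbE[(\ba^T\bu)(\ba^T\bOmega^{-1}\ba)]$, a genuine third moment of the row that is \emph{not} determined by, and does not generically match, the first two moments shared by $\ba$ and $\bg$. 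No algebraic cancellation inside the Hessian-inverse terms can repair this; it is a moment-matching obstruction, not a norm-scaling one. (A smaller issue: your expansion of the composite $g\circ\Phi_\lambda$ needs $g'''$, whereas the second bullet of Part~1 only assumes $\|g''\|_\infty<\infty$; this can be circumvented by expanding $g$ to second order in the scalar argument and $\Phi_\lambda$ in the row, but the moment obstruction above persists for the row expansion.)

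The paper avoids Taylor-expanding $\Phi_\lambda$ in the row altogether. It Taylor-expands the outer $g$ to second order around $\Theta$, exploiting $0\le\calM(\ba)-\Theta\le\tfrac{\lambda}{2n}(\ba^T\bu-y)^2$ (Lemma~\ref{lem: auxbnd}) so that the $g''$-remainder is $O(1/n^2)$, and reduces to controlling $\bbE_\ba[\calM(\ba)-\calM(\bb)]$. For that it replaces $f+\epsilon\psi$ by its second-order Taylor expansion at $\bu$, obtaining the quadratic problem $\calL(\ba)$ solvable in closed form via Sherman–Morrison, and bounds $|\calM(\ba)-\calL(\ba)|\lesssim\tfrac{1}{n}\|\bw_\calL-\bu\|_3^3 = O(1/(nd))$ via the third-derivative condition in Definition~\ref{def: h} applied to $f+\epsilon\psi$ (not to $\Phi_\lambda$). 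The identification of $\bbE_\ba[\calL(\ba)-\Theta]$ with $\bbE_\bb[\calL(\bb)-\Theta]$ then comes from replacing $\ba^T\bOmega^{-1}\ba$ in the denominator by its mean, with error controlled by $\sqrt{\mathrm{Var}(\ba^T\bOmega^{-1}\ba)}\to 0$ --- the fifth condition in Definition~\ref{def: block_reg}, which your plan does not invoke --- after which the resulting expression depends only on the shared first two moments. This is the ingredient that makes the per-swap error $o(1/n)$ despite every Taylor order in the row being $\Theta(1/n)$.
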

Note that for implicit regularization we require more assumptions since taking $\lambda \rightarrow \infty$ requires more regularity on the distribution of data matrix $\bA$. We are ready to state the main result on universality for implicit regularization:
\begin{theorem} \label{thm: main_univ2}
    Consider the following convex optimization problem
    \begin{align*}
    \Phi(\bA)&:=\sup_{\lambda > 0} \Phi_\lambda(\bA) = \min_{\by = \bA \bw} f(\bw)
    \end{align*}
    Denote the solution to the optimization problems above  by $\bw_{\Phi(\bA)}$.
    Suppose that Assumptions \ref{assum: explicit} hold. 
    % Furthermore, assume that  $\|\nabla f(\bw_{\Phi_\lambda(\bA)})\|_2 \le K_f \sqrt{n}$ for a constant $K_f$ independent of $\lambda$. 
    Then we have the following:
    
    \begin{enumerate}
         \item (Universality of the training error) For any L-Lipschitz $g:\bbR^d \rightarrow \bbR$, and every $t_1 > 0, t_2 > t_1 ,c \in \bbR$  the following holds:
         \begin{itemize}
             \item If $\bbP(|g(\Phi(\bG))-c| > t_1) \rightarrow 0 $ then $\bbP(|g(\Phi(\bA))-c| > t_2) \rightarrow 0 $.
             \item Furthermore, if $g$ has bounded second derivative, then
            \begin{align*}
                \lim_{n\rightarrow \infty}\Bigl|\bbE_{\bA,\by}  g(\Phi(\bA)) - \bbE_{\bG,\by}g(\Phi(\bG))\Bigr| = 0
            \end{align*}
         \end{itemize}
           
        \item (Universality of the solution) For a function $\psi$ with $\nabla^2 \psi \preceq q \bI$ if either $\psi$ is regular (Definition \ref{def: h}) or there is a sequence of regular functions converging uniformly to $\psi$, 
        \begin{itemize}
            \item If for every $t>0$, $\bbP(|\Phi(\bG)-c| > t) \rightarrow 0 $ then for $\bB \in \{\bA, \bG\}$, there exists $c_1 \in \bbR$ such that for any $t_1>0$ we have
            $\bbP(|\psi(\bw_{\Phi(\bB)})-c_1| > t_1) \rightarrow 0$.
            \item Furthermore, if $\psi$ is bounded
             \begin{align*}
            \lim_{n\rightarrow \infty} \Bigl|\bbE_{\bA, \by} \psi(\bw_{\Phi(\bA)})-\bbE_{\bG, \by} \psi(\bw_{\Phi(\bG)})\Bigr| = 0
            \end{align*}
        \end{itemize}

    \end{enumerate}
\end{theorem}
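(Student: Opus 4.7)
The plan is to deduce Theorem \ref{thm: main_univ2} from the already-established Theorem \ref{thm: main_univ1} by sending the regularization parameter $\lambda \to \infty$. Informally, $\Phi(\bA) = \lim_{\lambda \to \infty} \Phi_\lambda(\bA)$, and since universality holds at every finite $\lambda$, one just needs a quantitative control on the rate of this limit that is \emph{uniform} in the design matrix (both for $\bA$ and its Gaussian counterpart $\bG$), so that the triangle inequality can be closed before letting $n \to \infty$.

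The key deterministic estimate comes from the KKT conditions of \eqref{eq: explicit}. Writing $\bw_\lambda := \bw_{\Phi_\lambda(\bA)}$, optimality gives $\nabla f(\bw_\lambda) = -\lambda \bA^T (\bA \bw_\lambda - \by)$. Taking norms and using $\|\bA^T \bu\|_2^2 \geq s_{\min}(\bA\bA^T) \|\bu\|_2^2$ together with Assumption \ref{assum: explicit}(5) (so $s_{\min}(\bA\bA^T) = \Omega(1)$ w.h.p.) and Assumption \ref{assum: explicit}(7) ($\|\nabla f(\bw_\lambda)\|_2 \leq K_f \sqrt{n}$ uniformly in $\lambda$), I obtain
\begin{equation*}
\|\bA \bw_\lambda - \by\|_2^2 \;=\; O\!\left(\tfrac{n}{\lambda^2}\right) \quad \text{w.h.p.}
\end{equation*}
Since $\bw_{\Phi(\bA)}$ is feasible for the implicit problem, $\Phi_\lambda(\bA) \leq \Phi(\bA)$, and combining with the previous display yields
\begin{equation*}
0 \;\leq\; \Phi(\bA) - \Phi_\lambda(\bA) \;=\; O(1/\lambda) \quad \text{w.h.p.,}
\end{equation*}
with the same bound holding for $\bG$. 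By strong convexity of $f$ (Assumption \ref{assum: explicit}(3)) and the Lagrangian expansion $f(\bw_\lambda) - f(\bw_{\Phi(\bA)}) \geq \bmu^T(\bA \bw_\lambda - \by) + M \|\bw_\lambda - \bw_{\Phi(\bA)}\|_2^2$, where the multiplier $\bmu$ satisfies $\|\bmu\|_2 = O(\sqrt{n})$ by Assumption \ref{assum: explicit}(5) and (7) applied to $\nabla f(\bw_{\Phi(\bA)}) = \bA^T\bmu$, I also get $\|\bw_\lambda - \bw_{\Phi(\bA)}\|_2^2 = O(n/\lambda)$ w.h.p.

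With these quantitative approximations in hand, universality of the training error follows from the three-term decomposition
\begin{equation*}
|g(\Phi(\bA)) - g(\Phi(\bG))| \leq |g(\Phi(\bA)) - g(\Phi_\lambda(\bA))| + |g(\Phi_\lambda(\bA)) - g(\Phi_\lambda(\bG))| + |g(\Phi_\lambda(\bG)) - g(\Phi(\bG))|.
\end{equation*}
For any $\epsilon > 0$, I first fix $\lambda$ large enough that the outer two terms are $\leq \epsilon L$ with probability $\geq 1 - \epsilon$ (via the $O(1/\lambda)$ bound and Lipschitzness of $g$), then invoke Theorem \ref{thm: main_univ1} for that fixed $\lambda$ to push the middle term below $\epsilon$ as $n \to \infty$. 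The bounded-second-derivative/expectation statement is handled analogously, noting that the approximation errors are bounded random variables (by Assumption \ref{assum: explicit}(4) controlling $\|\by\|_2$ and feasibility). For universality of the solution under $\nabla^2 \psi \preceq q \bI$, the same decomposition applied to $\psi(\bw_{\Phi(\bB)})$ works once one notes that $\psi$ is Lipschitz on bounded sets with constant controlled by $q$ and the norm bound $\|\bw_{\Phi(\bB)}\|_2 = O(\sqrt{n}/M)$ (from $f$ being $M$-strongly convex and $\|\by\|_2 = O(\sqrt{n})$), and the approximation $\|\bw_\lambda - \bw_{\Phi(\bB)}\|_2 = O(\sqrt{n/\lambda})$.

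The main obstacle is verifying that Assumption \ref{assum: explicit}(7) really holds \emph{uniformly in $\lambda$}: if $\|\nabla f(\bw_\lambda)\|_2$ were allowed to grow with $\lambda$, the $O(1/\lambda)$ approximation would degrade and one could not interchange the limits $\lambda \to \infty$ and $n \to \infty$. Some care is also needed in the solution-universality step when $\psi$ is only regular via an approximating sequence $\psi_m$: one must ensure the approximation $\psi_m \to \psi$ is uniform on the (deterministic, $n$-dependent) ball containing $\bw_{\Phi(\bB)}$ w.h.p., so that a diagonal argument in $(m, \lambda, n)$ closes. Finally, the concentration hypothesis $\bbP(|\Phi(\bG) - c| > t) \to 0$ should be shown to transfer to $\bbP(|\Phi_\lambda(\bG) - c| > t) \to 0$ via the $O(1/\lambda)$ deterministic bound, so Theorem \ref{thm: main_univ1}'s first-part concentration hypothesis is satisfied for each $\lambda$.
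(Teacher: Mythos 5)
Your route is correct and genuinely different from the paper's. You obtain a direct, quantitative approximation $0\le \Phi(\bB)-\Phi_\lambda(\bB)=O(1/\lambda)$ with high probability (for $\bB\in\{\bA,\bG\}$) from the KKT stationarity $\nabla f(\bw_\lambda)=-\lambda\bA^T(\bA\bw_\lambda-\by)$, Assumption~\ref{assum: explicit}(5) (to pass from $\|\bA^T\bu\|_2$ to $\|\bu\|_2$), and Assumption~\ref{assum: explicit}(7) (so $\|\nabla f(\bw_\lambda)\|_2=O(\sqrt n)$ uniformly in $\lambda$); combined with strong convexity to also control $\|\bw_\lambda-\bw_{\Phi(\bB)}\|_2$. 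This lets you first fix a large $\lambda$, then invoke Theorem~\ref{thm: main_univ1} at that fixed $\lambda$ and send $n\to\infty$, with the triangle inequality closing at tolerance $t_2>t_1$ by taking $\lambda$ so large that the two outer approximation errors eat up at most $(t_2-t_1)/2$. The paper instead proves a standalone lemma: it writes $\Phi(\bA)-\Phi_{R_1}(\bA)=\int_{R_1}^\infty \frac{\partial\Phi_s}{\partial s}\,ds$ via Danskin's theorem, expresses $\bA\bw(s)-\by$ through the resolvent $(s\bA\bA^T+\rho\bI)^{-1}$ via the matrix inversion lemma, bounds the integral using the same assumptions, and then upgrades pointwise convergence $\Phi_\lambda\rarrowp c_\lambda$ to uniform convergence on $[0,R]$ via concavity of $\Phi_\lambda$ in $\lambda$ and the Convexity Lemma of Liese--Miescke before invoking Theorem~\ref{thm: main_univ1}. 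Your KKT bound plays the role of the paper's resolvent/integral estimate, but your argument bypasses the Convexity-Lemma step entirely because you only ever need Theorem~\ref{thm: main_univ1} at a single fixed $\lambda$. The trade-off is that for the solution-universality part (part 2) the paper works exclusively with the scalar objective $\Phi_\epsilon=\sup_\lambda\Phi_{\lambda,\epsilon}$ and extracts $\psi(\bw_\Phi)$ as an $\epsilon$-derivative via Danskin, never needing proximity of $\bw_\lambda$ to $\bw_\Phi$ in $\ell_2$; your variant needs the $\|\bw_\lambda-\bw_\Phi\|_2=O(\sqrt{n/\lambda})$ estimate and the right normalization of $\psi$ (the quantity that concentrates is $\tfrac{1}{n}\psi$, consistent with $\psi(0)=O(d)$ in Definition~\ref{def: h}, so the Lipschitz estimate $q\sqrt n\cdot\sqrt{n/\lambda}=O(n/\sqrt\lambda)$ must be divided by $n$ before comparing to the fixed tolerance $t_1$). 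One presentational note: you invoke the conclusion $\Phi-\Phi_\lambda=O(1/\lambda)$ before supplying the strong-convexity inequality that actually justifies it; swapping the order of those two steps would make the derivation self-contained.
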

To sum up, Theorem \ref{thm: main_univ1} and Theorem \ref{thm: main_univ2} establish that the vector of weights $\bw_\bA$ trained on data sampled from a non-gaussian distribution $\bA$ either via SMD with a mirror $f$ or by minimizing the least squares objective with regularizer $f$, shares many similar characteristics with the vector of weights $\bw_\bG$ trained on data sampled from the matching GMM $\bG$ via the same optimization procedure under certain technical assumptions on $\bA$ and $f$. By "similar characteristics", we mean that for any $\psi: \bbR^d \to \bbR$ with bounded Hessian, $\psi(\bw_\bA) = \psi(\bw_\bG)$ holds in the limit.

We would like to point out a few remarks.
\begin{remark}\label{rem: empdist}
    Note that $\psi$ need not be convex, in fact any linear combination of regular $\psi$ is also regular. An immediate result is the universality of the empirical distribution of the coordinates of solutions.
\end{remark}
\begin{remark}\label{rem: gradcond}
    The assumption $\|\nabla f(\bw_{\Phi(\bA)})\|_2 = O(\sqrt{n})$ in Theorem \ref{thm: main_univ2} is satisfied for any function $f$ with a locally lipschitz gradient, i.e $\|\nabla f(\bu)\|_2 \le K (1+ \|\bu\|_2)$. In particular it applies to $f(\cdot) = \|\cdot\|_2^2$. 
\end{remark}
\begin{remark} \label{rem: elastic}
    In both theorems, the regularizer is allowed to be an $M$-strongly convex uniform limit of differentiable convex functions, which implies that $f(\bw) = t\|\bw\|_1 + M\|\bw\|_2^2, t >  0, M > 0$, known as the elastic net, is also included in our results.
\end{remark}
\begin{remark} 
    In Theorem \ref{thm: main_univ2}, the optimal value of $\Phi_\lambda$ is only achieved when $\lambda \rightarrow \infty$ and is not attained at any finite $\lambda$. This means that  proving Theorem \ref{thm: main_univ2} requires additional ideas apart from those in the proof of Theorem \ref{thm: main_univ1}, as the latter makes use of the boundedness of  $\lambda$ extensively. To tackle this issue, we present a uniform convergence result in $\lambda$ which might be of independent interest.
\end{remark}

\begin{remark} \label{rem: minsin}
    The condition $s_{\min}(\bA \bA^T) = \Omega(1)$ can be satisfied for a variety of random matrix models. If for each block of $\bA$, $\bA_i = \tlbA_i \bSigma_i^{1/2}$ where $\tlbA_i$ has iid entries and $s_{\min}(\bSigma_i) = \Omega(1)$, then by the Bai-Yin's law \cite{bai2008limit} the condition is satisfied. 
    The second family is comprised of blocks with independent and identical rows where the norm of rows have exponentrial concentration. We prove this in the Appendix. One particular instance will be the distributions satsifying the Concentration of Measure property from Definition \ref{def: LCP}.
\end{remark}

\begin{remark}\label{rem: class}
    The second assumptions in \ref{assum: explicit} is not too restrictive and is in particular satisfied for $\psi(\bw) = \bbP( \ba^T \bw > c)$. We have with high probability for $\Psi \in \{\Phi_\lambda, \Phi\}$
    % This implies the universality of the classification error in the binary classification problem, i.e. 
    \begin{align*}
        \lim_{n\rightarrow \infty} \Bigl| \bbP( \ba^T \bw_{\Psi(\bA)} > c) - \bbP( \ba^T \bw_{\Psi(\bG)} > c) \Bigr| = 0
    \end{align*}
    For $\ba$ independent of $\bA, \bG$ but sampled from the same distribution as the rows of $\bA$, respectively. Note that by this argument we have reduced the problem of verifying CLT for $\ba^T \bw_{\Psi(\bA)}$ to its Gaussian counterpart, $\ba^T \bw_{\Psi(\bG)}$. Then, specifically for the applications presented in this paper, we provide a proof that $\ba^T \bw_{\Psi(\bG)}$  satisfies CLT w.r.t. to randomness in $\ba$ with high probability in $\bG$. However, in general the latter CLT condition has to be verified on a case by case basis.
\end{remark}

It turns out that by considering the quadratic regularizer, one can verify the CLT condition pointed out in Remark \ref{rem: class}.
\begin{corollary}[Universality of the classification error for SGD and ridge regression objectives]\label{corr: testuniv}
    If $f(\bw) = \|\bw-\bw_0\|_2^2$, when $\|\bw_0\|_2^2 = O(d)$ then for $\ba,\bg$ independent of $\bA, \bG$ but sampled from the same distribution as their rows, respectively, we have with high probability for $\Psi \in \{\Phi_\lambda, \Phi\}$
    \begin{align*}
        \lim_{n\rightarrow \infty} \Bigl| \bbP( \ba^T \bw_{\Psi(\bA)} > 0) - \bbP( \bg^T \bw_{\Psi(\bG)} > 0) \Bigr| = 0
    \end{align*}   
\end{corollary}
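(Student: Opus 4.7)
The plan is to reduce the claim to two easier sub-problems via the triangle inequality
\begin{align*}
&\bbP_{\ba}(\ba^T \bw_{\Psi(\bA)} > 0) - \bbP_{\bg}(\bg^T \bw_{\Psi(\bG)} > 0) \\
&\quad = \bigl[\bbP_{\ba}(\ba^T \bw_{\Psi(\bA)} > 0) - \bbP_{\ba}(\ba^T \bw_{\Psi(\bG)} > 0)\bigr] + \bigl[\bbP_{\ba}(\ba^T \bw_{\Psi(\bG)} > 0) - \bbP_{\bg}(\bg^T \bw_{\Psi(\bG)} > 0)\bigr],
\end{align*}
and then argue that each bracket tends to zero with high probability jointly in $\bA$ and $\bG$. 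The first bracket swaps the design matrix while holding the (non-Gaussian) test vector fixed, and the second swaps the test vector while keeping the Gaussian-design solution fixed.

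The first bracket is exactly the content of Remark \ref{rem: class}, which is Part 2 of Theorems \ref{thm: main_univ1} and \ref{thm: main_univ2} applied to the functional $\psi(\bw) = \bbP_{\ba}(\ba^T \bw > 0)$: I would sandwich the indicator $\dsone\{\ba^T \bw > 0\}$ between Lipschitz smoothings from above and below, apply the universality-of-solution statement to each smoothed functional (each of which is regular after smoothing), and then let the smoothing scale shrink. The quadratic choice $f(\bw) = \|\bw-\bw_0\|_2^2$ makes all the hypotheses trivial: $f$ is regular (vanishing third derivatives), $M$-strongly convex, $f(\bm{0}) = \|\bw_0\|_2^2 = O(d)$ by assumption, and Assumption 7 of \ref{assum: explicit} follows from Remark \ref{rem: gradcond} because $\nabla f$ is linear.

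For the second bracket I would exploit the closed form of the minimiser for quadratic $f$. Setting the gradient to zero gives
\begin{align*}
\bw_{\Phi_\lambda(\bG)} = \bigl(\lambda\, \bG^T \bG + 2\bI\bigr)^{-1}\bigl(\lambda\, \bG^T \by + 2 \bw_0\bigr), \qquad \bw_{\Phi(\bG)} = \bw_0 + \bG^T (\bG \bG^T)^{-1}(\by - \bG \bw_0),
\end{align*}
both of which take the form $\bw_{\Psi(\bG)} = \bw_0 + \bG^T \bv$ for some $\bv = \bv(\bG,\by)$ independent of the fresh test vector. Conditionally on $(\bG,\by)$, the scalar $\ba^T \bw_{\Psi(\bG)} = \sum_i a_i (\bw_{\Psi(\bG)})_i$ has mean $\bmu^T \bw_{\Psi(\bG)}$ and variance $\bw_{\Psi(\bG)}^T \bSigma \bw_{\Psi(\bG)}$, where $\bmu$ and $\bSigma$ are the (shared) row mean and covariance. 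A Lindeberg/Lyapunov CLT, driven by the moment bound in item~4 of Definition \ref{def: block_reg}, should give asymptotic Gaussianity of $\ba^T \bw_{\Psi(\bG)}$ with these parameters. On the Gaussian side $\bg^T \bw_{\Psi(\bG)}$ is \emph{exactly} $\calN(\bmu^T \bw_{\Psi(\bG)},\, \bw_{\Psi(\bG)}^T \bSigma \bw_{\Psi(\bG)})$ conditional on $\bG$, so both probabilities in the second bracket converge to the same Gaussian tail and the bracket vanishes.

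The main obstacle is precisely this conditional CLT, which requires (a) a non-degenerate lower bound $\bw_{\Psi(\bG)}^T \bSigma \bw_{\Psi(\bG)} = \Omega(\|\bw_{\Psi(\bG)}\|_2^2/d)$ and (b) enough delocalisation of $\bw_{\Psi(\bG)}$ to meet the Lyapunov third-moment condition, both with high probability in $\bG$. I expect to obtain these from the explicit formulae above combined with the spectral lower bound $s_{\min}(\bG \bG^T) = \Omega(1)$ (Assumption 5 of \ref{assum: explicit} and Remark \ref{rem: minsin}), Gaussian concentration of $\bG^T \bv$, and the hypothesis $\|\bw_0\|_2^2 = O(d)$. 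The passage from $\Phi_\lambda$ to $\Phi$ is handled either by taking the $\lambda \to \infty$ limit of the Tikhonov formula directly, or by invoking the uniform-in-$\lambda$ control already established for Theorem \ref{thm: main_univ2}; no further uniformity beyond that is needed.
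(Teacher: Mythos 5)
Your high-level decomposition matches the paper's: split into a ``design swap'' bracket and a ``test-vector swap'' bracket, and handle the first bracket via Remark \ref{rem: class} (i.e.~Part~2 of Theorems \ref{thm: main_univ1} and \ref{thm: main_univ2} applied to $\psi(\bw)=\bbP_\ba(\ba^T\bw>0)$, with all hypotheses on $f$ trivially satisfied by a quadratic). The divergence is in the second bracket. The paper does \emph{not} condition on $\bG$ and run a Lyapunov CLT over the coordinates of $\ba$; it invokes the Bobkov typical-direction theorem (Theorem \ref{clt}), which states that for isotropic $\bX$ with concentrating norm, $\bX^T\btheta$ and $\bg^T\btheta$ have close distributions for $\btheta$ drawn uniformly from the sphere, and then uses the Gaussian CGMT to argue that $\bw_{\Psi(\bG)}$ behaves like a generic direction (it converges weakly to a Gaussian vector $\bg_{AO}$), so it falls in the ``good'' set of directions with high probability in $\bG$.

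There is a genuine gap in your replacement for that step. You want to write $\ba^T\bw_{\Psi(\bG)} = \sum_i a_i (\bw_{\Psi(\bG)})_i$ and apply a Lindeberg/Lyapunov CLT conditionally on $\bG$. But Definition \ref{def: block_reg} imposes only moment bounds on $(\ba-\bmu)^T\bv$ and decay of $\mathrm{Var}(\ba^T\bC\ba)$; it does \emph{not} assume the coordinates $a_i$ are independent (and for the data models motivating the paper --- GAN outputs, LCP distributions, $\bSigma^{1/2}\bz$ for non-diagonal $\bSigma$ --- they typically are not). A Lindeberg or Lyapunov CLT requires a sum of independent (or at least martingale-differenced) terms, so as written the argument does not go through under the paper's assumptions. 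Bobkov's theorem is precisely the device that sidesteps independence: it needs only isotropy (after whitening) and concentration of $\|\bX\|_2$. Your observations that $\bw_{\Psi(\bG)}=\bw_0+\bG^T\bv$ and that one needs a non-degenerate variance and delocalisation are correct and in the same spirit as the paper's ``genericity'' requirement for $\bw_{\Psi(\bG)}$, but the genericity itself must be pushed through the Bobkov-type route (or through the GCGMT weak-convergence characterisation), not a coordinate-wise CLT. If you restricted to rows with independent coordinates, your plan would work, but it proves a strictly weaker statement than the corollary.
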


\section{Applications: Transfer Learning}\label{sec: App}
\subsection{Regression}\label{sec: reg}

We consider the classical problem of recovering the best linear regressor for the following linear model
\begin{align}\label{eq: linregmod}
    \by = \bX\bw_* + \bz
\end{align}
Here, $\bw_*$ is the ground truth, the rows of the data matrix $\bX$ are i.i.d with  $\bbE \bx_i = \bzero$ and $\bbE \bx_i \bx_i^T =: \bR_x$, $\bz$ is centered and is independent of $\bX$ and satisfies $\bbE_\bz \|\bz\|_2^2 = n\sigma^2$. For a given $\hbw$, its generalization error is defined by:
\begin{align*}
    e_{gen}(\hbw):=\bbE_x (\bx^T \bw_* - \bx^T \hbw)^2 = (\hbw - \bw_*)^T \bR_x (\hbw - \bw_*)
\end{align*} 

Now in order to recover $\bw_*$ given observations $(\by, \bX)$, we choose to optimize the least squares objective,  $\min_\bw \|\by-\bX\bw\|_2^2$, and to do so, we leverage SGD. We would like to investigate how useful having a pretrained classifier $\bw_0$ can be for the recovery of $\bw_*$. As pointed out earlier, SGD initialized from $\bw=\bw_0$, by its implicit regularization property, converges to the solution $\hbw$ of \eqref{eq: sgd_char}.

We impose the following assumptions about the data:
\begin{assumptions}\label{assum: reg}
    \begin{enumerate}
    \item Data matrix $\bX$ follows parts (1) and (5) of Assumptions \ref{assum: explicit}.
    \item $\bR_x \succ \bzero$ is diagonal without loss of generality.
    \item Let $\hp_N$ be the empirical spectral density of $\bR_x$, then $\hp_N \rightsquigarrow p$ for some probability distribution $p$.
    \item $\bw_0$ is a white perturbation of true $\bw_*$ in the $\bR_x$ basis. That is $\bR_x^{1/2}(\bw_0 - \bw_*) = \bxi$ where $\bbE \bxi = \bzero$, $\bbE \bxi \bxi^T = \frac{e_a}{d}\bI$ for a fixed $e_a > 0$.

\end{enumerate}
\end{assumptions}

The third assumption from Assumptions \ref{assum: reg} represents the fact that $\bw_0$ is a perturbation of the ground truth $\bw_*$ in the eigenbasis of $\bR_x$.  An attentive reader might notice that the quantity $e_a$ equals the generalization error for $\bw_0$; hence the name "a priori error", $e_a$. In what follows we give a precise characterization of the posterior error, $e_p$, of the model after training with SGD, in terms of $e_a$ and the other parameters of the problem. In order to leverage Theorem \ref{thm: main_univ2} to establish the universality of the generalization error, it is sufficient to apply a change of variable $\bw':= \bR_x^{-1/2} \bw$ and use $\psi(\bw) = \|\bw\|_2^2$ as the test function. 
\begin{theorem}\label{thm: main_reg}
    Under Assumptions \ref{assum: reg}, the generalization error of the SGD solution initialized from $\bw_0$ converges in probability to
    \begin{align}\label{eq: thmreg1}
    e_p = \frac{2 - \kappa(1-t)}{\kappa(1-t)-1} \sigma^2 + \frac{t }{\kappa(1-t)-1} e_a
\end{align}
With $t=  \int \frac{4p(r)}{(2+r \theta)^2} dr $ where $\theta$ is found through $\frac{\kappa-1}{\kappa} = \frac{2}{\theta} S_p(-\frac{2}{\theta})$ (\ref{def: Stiltran}). And $\kappa > 1$ is the proportional constant, i.e for $\bX \in \bbR^{n \times d}$, $\frac{d}{n} \rightarrow \kappa$. Moreover, for any distribution $p(r)$ we have
\begin{align}\label{eq:thmreg2}
    e_p \ge \frac{1}{\kappa-1} \sigma^2 + \frac{\kappa-1}{\kappa} e_a
\end{align}
The lower bound is attained if and only if $p(r) = \delta(r-r_0)$ for some $r_0 > 0$
\end{theorem}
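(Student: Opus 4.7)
The plan is to use Theorem~\ref{thm: main_univ2} to reduce from $\bX$ to a matching Gaussian $\bG$, solve the resulting minimum-norm problem in closed form, and then extract the deterministic asymptotic via Silverstein's fixed-point equation for $\bG^T\bG/n$.

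First, Theorem~\ref{thm: main_univ2} applies with mirror $f(\bw)=\|\bw-\bw_0\|_2^2$ and test function $\psi(\bw)=(\bw-\bw_*)^T\bR_x(\bw-\bw_*)$, whose Hessian $2\bR_x$ has bounded operator norm; the hypotheses on $\bX$ come from part (1) of Assumptions~\ref{assum: reg}. Hence I may replace $\bX$ by the matching Gaussian $\bG$ with i.i.d.\ $\mathcal{N}(\bzero,\bR_x)$ rows, and factor $\bG=\tlbG\bR_x^{1/2}$ with $\tlbG$ having i.i.d.\ standard Gaussian entries. Changing variables to $\bv:=\bR_x^{1/2}(\hat{\bw}-\bw_*)$ and $\bxi:=\bR_x^{1/2}(\bw_0-\bw_*)$ gives $e_{gen}=\|\bv\|_2^2$ with $\bbE\bxi\bxi^T=(e_a/d)\bI$ by the fourth item of Assumptions~\ref{assum: reg}, and the SGD characterization \eqref{eq: sgd_char} becomes $\min(\bv-\bxi)^T\bR_x^{-1}(\bv-\bxi)$ subject to $\tlbG\bv=\bz$. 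Lagrange multipliers give $\bv=\bxi-\bR_x\tlbG^T\bM^{-1}(\tlbG\bxi-\bz)$ with $\bM:=\tlbG\bR_x\tlbG^T=\bG\bG^T$; taking expectation over $(\bxi,\bz)$ and using $\tr(\bR_x\tlbG^T\bM^{-1}\tlbG)=\tr(\bP_{\mathrm{row}(\bG)})=n$ collapses $\bbE[e_{gen}\mid\tlbG]$ to $e_a(1-2n/d+T_2/d)+\sigma^2\,T_3$, where $T_2:=\tr(\bR_x\bP_1\bR_x^{-1}\bP_1)$, $T_3:=\tr(\bR_x(\bG^T\bG)^\dagger)$, and $\bP_1:=\bG^T(\bG\bG^T)^{-1}\bG$.

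Next, to evaluate $T_2,T_3$ asymptotically, I use Silverstein's equation for $\bG^T\bG/n=\bR_x^{1/2}(\tlbG^T\tlbG/n)\bR_x^{1/2}$: its Stieltjes transform $m(z)$ satisfies $m(z)=\int p(r)\,dr/[r(1-\kappa-\kappa zm(z))-z]$ and has a simple pole at $0$ of residue $-(\kappa-1)/\kappa$ from the $d-n$ zero eigenvalues of $\bG^T\bG$. Writing $m(z)=-(\kappa-1)/(\kappa z)+m_0+m_1 z+O(z^2)$ and matching the $O(1/z)$ and $O(1)$ terms in Silverstein's equation produces both the fixed-point relation for $\theta$ stated in the theorem and an explicit expression for $m_1$ in terms of $\theta$, $t$, $p$. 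Using the identity $(\bG^T\bG)^\dagger=\lim_{\delta\to 0^+}[(\bG^T\bG+\delta\bI)^{-1}-\delta(\bG^T\bG+\delta\bI)^{-2}]$, the divergent parts of the anisotropic trace $\tr\bR_x(\bG^T\bG+\delta\bI)^{-1}$ cancel upon subtraction, and $T_3$ emerges as a rational function of $\theta,t$ equal to the $\sigma^2$ coefficient in \eqref{eq: thmreg1}. For $T_2$, which is a genuine two-resolvent trace because $\tlbG\tlbG^T$ is not a function of $\bM$, I use the spectral-shift trick
\[
T_2=-\frac{d}{d\alpha}\tr\bigl[\bR_x^2\tlbG^T(\tlbG(\bR_x+\alpha\bI)\tlbG^T)^\dagger\tlbG\bigr]\Big|_{\alpha=0},
\]
which reduces $T_2$ to the $\alpha$-derivative of a single-resolvent trace of the same type as $T_3$ but with shifted spectrum $p_\alpha(r)=p(r-\alpha)$; differentiating the $\theta$ fixed point yields $\partial_\alpha\theta|_{\alpha=0}$, and after collecting partial fractions of $\int r^k p(r)/(1+r\theta)^j\,dr$ using the $\theta$ constraint, the $\bbE_p[r]$-dependent pieces cancel and the coefficient of $e_a$ in \eqref{eq: thmreg1} falls out. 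Concentration of $e_{gen}$ around this deterministic limit combines Hanson--Wright for quadratic forms in $\bxi,\bz$ with local MP concentration of the resolvent of $\bM$, the latter ensured by part (5) of Assumptions~\ref{assum: reg}.

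For the lower bound \eqref{eq:thmreg2}, Cauchy--Schwarz applied to the $\theta$ constraint gives
\[
\Bigl(\frac{\kappa-1}{\kappa}\Bigr)^2=\Bigl(\int\frac{\sqrt{p(r)}}{1+r\theta}\sqrt{p(r)}\,dr\Bigr)^2\le\int\frac{p(r)}{(1+r\theta)^2}\,dr\cdot\int p(r)\,dr,
\]
so $t$ is bounded below, with equality iff $r\mapsto 1/(1+r\theta)$ is $p$-a.s.\ constant, i.e.\ iff $p=\delta(r-r_0)$. A direct check shows $\partial e_p/\partial t>0$ on $\{\kappa(1-t)>1\}$, so this pointwise lower bound on $t$ transfers to \eqref{eq:thmreg2} with equality precisely for point-mass $p$. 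The hard part is the evaluation of $T_2$: it is a two-resolvent trace with a non-commuting insertion, so single-resolvent Silverstein does not apply verbatim, and the spectral-shift device above is what turns it into a tractable derivative; the clean form of \eqref{eq: thmreg1} hinges on the ensuing cancellation of the $\bbE_p[r]$-terms.
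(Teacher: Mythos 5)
You take a genuinely different route from the paper. The paper, after invoking universality, keeps the problem in optimization form, factors the random constraint matrix $\begin{pmatrix}\bX&\by\end{pmatrix}=\bG\bS$, and applies CGMT to reduce to a two-variable scalar saddle point whose stationarity conditions yield the $\theta$-constraint and the formula for $e_p$ directly. You instead solve the Lagrange system in closed form, $\bv=\bxi-\bR_x\tlbG^T\bM^{-1}(\tlbG\bxi-\bz)$, take conditional expectation over $(\bxi,\bz)$, and reduce $e_p$ to the two deterministic traces $T_2=\tr(\bR_x\bP_1\bR_x^{-1}\bP_1)$ and $T_3=\tr(\bR_x(\bG^T\bG)^{\dagger})$, which you then propose to evaluate via Silverstein's equation. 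Your algebraic reduction is correct (I checked the trace identities, including $\tr(\bR_x\tlbG^T\bM^{-1}\tlbG)=n$, the pseudoinverse representation of $T_3$, and the spectral-shift identity $T_2=-\partial_\alpha\tr[\bR_x^2\tlbG^T(\tlbG(\bR_x+\alpha\bI)\tlbG^T)^{-1}\tlbG]|_{\alpha=0}$, which does hold by the resolvent derivative). The two routes have complementary strengths: yours avoids the saddle-point machinery and is closer to classical MP-resolvent analysis; CGMT avoids the anisotropic two-resolvent trace entirely by never leaving variational form.

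That said, the proposal leaves the decisive step incomplete. The evaluation of $T_2/d$ and $T_3$ in the asymptotic limit — matching orders in the Laurent expansion of $m(z)$ around the pole at $0$, regularizing the pseudoinverse, differentiating the $\theta$-fixed-point in the shift parameter $\alpha$, and showing the $\bbE_p[r]$-dependent terms cancel to leave exactly the coefficient of $e_a$ in \eqref{eq: thmreg1} — is asserted, not performed, and you acknowledge it as "the hard part." This is precisely where the paper does the bulk of the work (via the saddle-point equations), so as written the proposal is a plausible plan rather than a proof. The concentration step is likewise only named (Hanson--Wright plus local-law resolvent concentration for anisotropic $\bM$); it is the right tool but nontrivial in the anisotropic setting and not spelled out. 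By contrast, your Cauchy--Schwarz argument for \eqref{eq:thmreg2} is essentially identical to the paper's (Cauchy--Schwarz applied to $\sqrt{p}$ and $\sqrt{p}/(1+r\theta)$, then monotonicity of $e_p$ in $t$, with equality iff $1/(1+r\theta)$ is $p$-a.s.\ constant), and this half of your proposal is complete and correct.
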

The following remark is immediate:
% \begin{remark} \label{rem: reg}
%     Theorem \ref{thm: main_reg} entails that, depending on the noise level present in the data, training could be even potentially harmful. Indeed, by optimizing the lower bound with respect to $\kappa$, it is observed that if $e_a\ge\sigma^2$ then the best achievable error is $\sigma (2\sqrt{e_a}-\sigma)$, and this quantity is always less than $e_a$. Therefore, if $e_a \ge \sigma^2$, transfer learning contributes to improving the test performance. Otherwise, $\frac{1}{\kappa-1} \sigma^2 + \frac{\kappa-1}{\kappa} e_a$ will be lower bounded by $e_a$ and it is more appropriate to use vector $\bw_0$ instead of running SGD.
% \end{remark}

\begin{remark} \label{rem: reg}
    Theorem \ref{thm: main_reg} entails that, depending on the noise level present in the data, training could be even potentially harmful. Indeed, if $\sigma^2 \ge e_a$, then $e_p \ge \frac{1}{\kappa-1} \sigma^2 + \frac{\kappa-1}{\kappa} e_a \ge \sigma (2\sqrt{e_a}-\sigma) \ge e_a$ for any $\kappa > 1$ and it is therefore more appropriate to use vector $\bw_0$ instead of performing fine-tuning. Moreover, if the covariance $\bR_\bx$ is scalar, then the converse is true. Namely, if $e_a \ge \sigma^2$, then the best achievable error corresponds to $\kappa_* = \frac{\sqrt{e_a}}{\sqrt{e_a} - \sigma}$ and equals $\sigma (2\sqrt{e_a}-\sigma)$, which is less than $e_a$. Therefore,  transfer learning contributes to improving the test performance when the variance of the noise is not too high, but the model has to be fine-tuned on the correct amount of target data.  
\end{remark}

Note that \cite{dar2021common} obtained a similar theorem, apart from the lower bound, under the Gaussianity assumption. Thus, our result for regression can be considered an extension of theirs to a much broader family of data distributions. 

\subsection{Classification}

\subsubsection{Problem setting}

Consider a binary classification task and let $\bX$ stand for the data matrix and $\by$ denote the vector of labels where each $y_i = \pm 1$ depending on what class the $i$-th point $\bX_i$ falls into. After learning a linear classifier $\bw$, we assign labels to new previously unseen points according to $\by_{new} = sign(\bw^T\bx_{new})$. Without loss of generality we will assume that the first $\frac{n}{2}$ rows of $\bX$ are sampled from the first class and the remaining rows are sampled from the second as we can permute the rows otherwise. Modulo such permutation, it is straightforward to see that $\bX$ satisfies parts (1) - (3) of Definition \ref{def: block_reg}. 
Since the topic of the present paper is model-based transfer learning, we assume that the following steps are performed:

\begin{enumerate}
\item Obtain a pre-trained classifier $\bw_0$
    \item Renormalize $\bw_0$ obtained during the previous step using the target data via
\begin{align*}
    \min_{\alpha} \|\by - \alpha \bX \bw_0\|^2
\end{align*}
This yields:
\begin{align}\label{eq: alpha}
    \alpha = \frac{\by^T\bX \bw_0}{\|\bX\bw_0\|^2}
\end{align}
After finding $\alpha$ take $\bw_0' := \alpha \bw_0$. This transform preserves the direction of $\bw_0$, while setting its squared magnitude to $\alpha^2\|\bw_0\|^2 = \frac{(\by^T\bX \bw_0)^2\|\bw_0\|^2}{\|\bX\bw_0\|^4}$, which depends only on the direction of $\bw_0$ but not on its magnitude anymore. We find applying this transform very meaningful, as it does not change the classification error for the source data but simplifies learning for the regression problem defined by the target data. 
    \item Learn the final classifier from the target data $\bX \in \bbR^{n \times d}$ and labels $\by \in \bbR^n$ using SGD initialized at $\bw_0'$, obtained from the previous step. 
\end{enumerate}

\subsubsection{Technical assumptions and lemmas}

We will use the following assumptions to define further details of the classification task we consider:
\begin{assumptions} \label{assum: class}
\begin{enumerate}
    \item The data matrix $\bX$ satisfies parts (4)-(6) of Definition \ref{def: block_reg}
    \item The data matrix also satisfies parts (1) and (5) of Assumptions \ref{assum: explicit}.
    % \item Let $\bx$ be a row of $\bX$ and denote its mean and covariance matrix by $\bmu$ and $\bSigma$ respectively. \textcolor{red}{Then for any deterministic vector $\ba$ of unit norm it holds that $\ba^T\bx$ converges to $\mathcal{N}(\bmu^T\ba, \bmu^T\bmu + \tr{\bSigma})$  in distribution.} 
    \item The means $\bmu_1, \bmu_2$ are of norm $1$ and for any deterministic matrix $\bC$ of bounded operator norm it holds that $\bmu_1^T
    \bC\bmu_1, \bmu_2^T\bC\bmu_2$ and $\bmu_1^T\bC\bmu_2$ converge in probability to $\frac{\tr{(\bC)}}{d}, \frac{\tr{(\bC)}}{d}$ and $\frac{r\tr{(\bC)}}{d}$ respectively. 
    \item $\bSigma_1, \bSigma_2$ are diagonal.
    \item Let $\hp_N$ be the joint empirical spectral density of $\bSigma_1, \bSigma_2$, then $\hp_N \rightsquigarrow p$, where $p$ is such that $p(s_1,s_2) = p(s_2,s_1)$ holds for all $s_1,s_2$.

\end{enumerate}

\end{assumptions}

For a discussion of the first assumption from Assumptions \ref{assum: class}, see the paragraph after Definition \ref{def: block_reg}. The second assumption from Assumptions \ref{assum: class} is satisfied, for example, for any random vector of the form $\bSigma^{1/2} \bz$, where $\bSigma$ is an arbitrary $PSD$ matrix and $\bz$ is i.i.d. The third assumption postulates that the means are normalized generic $d$-dimensional vectors with an angle $\arccos{r}$ between them. The fourth assumption says that $\bSigma_1$ and $\bSigma_2$ are simultaneously diagonalizable and, finally, the fifth assumption simply introduces notation for the joint density function of the eigenvalues of $\bSigma_1$ and $\bSigma_2$. Note that in practice the main difficulty of the classification task in the over-parametrized regime ($n < d$) arises due to the fact that $\bmu_1, \bmu_2, \bSigma_1$ and $\bSigma_2$ are not known and cannot be estimated reliably. Nevertheless, we would like to start with characterizing the optimal performance in the scenario where these are provided to us by an oracle.  The following statement provides such a characterization under certain symmetry assumptions:

\begin{lemma}\label{lem:optimal}
    Assume that the feature vectors are equally likely to be drawn from class $1$ or class $2$ and Assumptions \ref{assum: class} hold. Then the optimal classifier is given by $\bw_* = (\bSigma_1 + \bSigma_2)^{-1}(\bmu_1 -\bmu_2)$ and its classification error is equal to the following, where $Q(\cdot)$ is the integral of the tail of the standard normal distribution. 
    \begin{align*}
        \frac{1}{2}Q\left(\frac{\tr{(\bSigma_1+\bSigma_2)^{-1}}\sqrt{1-r}}{\sqrt{2\tr{(\bSigma_1(\bSigma_1+\bSigma_2)^{-2})}}} \right) + \frac{1}{2}Q\left(\frac{\tr{(\bSigma_1+\bSigma_2)^{-1}}\sqrt{1-r}}{\sqrt{2\tr{(\bSigma_2(\bSigma_1+\bSigma_2)^{-2})}}} \right)
    \end{align*}

\end{lemma}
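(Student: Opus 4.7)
The plan is to convert the task into a concrete optimization over a single-threshold linear test, exploit the symmetry in Assumptions \ref{assum: class}, and close with a Cauchy--Schwarz/Rayleigh-quotient argument.

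First I would show that, for any deterministic $\bw$ of bounded norm, the projected feature $\bw^T \bx$ is asymptotically Gaussian with mean $\bw^T \bmu_i$ and variance $\bw^T \bSigma_i \bw$ under class $i$. This is a Lindeberg-style CLT whose hypotheses are supplied by the sixth-moment bound in Definition \ref{def: block_reg}(4) together with the variance-concentration condition in Definition \ref{def: block_reg}(5); the same conclusion is implicit in Remark \ref{rem: class}. Consequently the classification error of the half-space rule $\bw^T \bx \gtrless t$ is, in the asymptotic limit,
\begin{align*}
E(\bw,t) = \tfrac{1}{2} Q\!\left(\tfrac{\bw^T\bmu_1 - t}{\sqrt{\bw^T \bSigma_1 \bw}}\right) + \tfrac{1}{2} Q\!\left(\tfrac{t - \bw^T\bmu_2}{\sqrt{\bw^T \bSigma_2 \bw}}\right).
\end{align*}

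Next I would argue that the minimizer of $E$ can be taken in the symmetric family $\bw = h(\bSigma_1,\bSigma_2)(\bmu_1-\bmu_2)$ with $h$ invariant under the swap $(\bSigma_1,\bSigma_2)\leftrightarrow(\bSigma_2,\bSigma_1)$. The rationale is that $E$ is invariant under the simultaneous swap $(\bmu_1,\bSigma_1)\leftrightarrow (\bmu_2,\bSigma_2)$ combined with $\bw\to -\bw$ and $t\to -t$, and by sufficiency the minimizer respects this invariance. For any such $\bw$, Assumption 3 together with the symmetry $p(s_1,s_2)=p(s_2,s_1)$ from Assumption 5 gives
\begin{align*}
\bw^T \bSigma_1 \bw \,-\, \bw^T \bSigma_2 \bw \;\to\; 0 \quad \text{in probability,}
\end{align*}
so the optimal threshold is the midpoint $t_\star=\bw^T(\bmu_1+\bmu_2)/2$ and both $Q$-functions collapse to $Q\!\left(\tfrac{\bw^T(\bmu_1-\bmu_2)}{2\sqrt{\bw^T\bSigma_1\bw}}\right)$.

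Since $Q$ is strictly decreasing I maximize its argument. Using $\bw^T\bSigma_1\bw\simeq \tfrac12\bw^T(\bSigma_1+\bSigma_2)\bw$ the problem becomes
\begin{align*}
\max_\bw \; \frac{\bw^T(\bmu_1-\bmu_2)}{\sqrt{2\,\bw^T(\bSigma_1+\bSigma_2)\bw}},
\end{align*}
and the change of variable $\bu=(\bSigma_1+\bSigma_2)^{1/2}\bw$ combined with Cauchy--Schwarz delivers $\bw_\star \propto (\bSigma_1+\bSigma_2)^{-1}(\bmu_1-\bmu_2)$. Substituting $\bw_\star$ into the general expression for $E$ and applying the bilinear-form limit of Assumption 3 with $\bC\in\{(\bSigma_1+\bSigma_2)^{-1},\;(\bSigma_1+\bSigma_2)^{-1}\bSigma_i(\bSigma_1+\bSigma_2)^{-1}\}$ (so that $(\bmu_1-\bmu_2)^T\bC(\bmu_1-\bmu_2) \to 2(1-r)\tr(\bC)/d$) produces the stated double-$Q$ expression.

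The main obstacle I expect is the first step: establishing the Gaussian approximation for $\bw^T \bx$ when $\bw$ itself is a high-dimensional functional of $\bSigma_1,\bSigma_2,\bmu_1,\bmu_2$. A pointwise CLT does not suffice because $\bw_\star$ depends on the (random, high-dimensional) covariance parameters, so one must combine the bounded-moment condition with the variance-concentration $Var(\ba^T\bC\ba)\to 0$ to obtain the Gaussian approximation on a high-probability event. A related subtlety is formally justifying that the minimizer lies in the symmetric family; an alternative to the invariance argument above is to write the first-order optimality conditions of $E(\bw,t)$ and show they force $\bw^T(\bSigma_1-\bSigma_2)\bw \to 0$ along any stationary point, after which the Rayleigh-quotient reduction is immediate.
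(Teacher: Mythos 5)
Your approach is a genuinely different route from the paper's, though there is a gap in the reduction to the symmetric family.

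\textbf{What the paper does.} The paper works directly with the zero-threshold classifier $\operatorname{sign}(\bw^T\bx)$ and the error formula of Lemma~\ref{lem:gen_error}. It computes $\nabla_\bw$ of that formula and sets it to zero, which forces any stationary $\bw$ to have the form $(\tls_1\bSigma_1 + \tls_2\bSigma_2)^{-1}(\tlt_1\bmu_1 - \tlt_2\bmu_2)$ for scalars $\tls_i,\tlt_i$. The problem then becomes a four-scalar optimization; the paper writes out the fixed-point equations and observes that the exchangeability $p(s_1,s_2)=p(s_2,s_1)$ makes the system invariant under $(s_1,t_1,s_2,t_2)\mapsto(-s_2,t_2,-s_1,t_1)$, giving $s_1=-s_2$, $t_1=t_2$ and hence $\bw_*\propto(\bSigma_1+\bSigma_2)^{-1}(\bmu_1-\bmu_2)$. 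No threshold ever appears, and no explicit ``reduction to a symmetric family'' step is invoked---the form of $\bw$ comes out of the first-order condition.

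\textbf{What you do differently.} You allow a threshold $t$, argue the minimizer lies in a family $\bw=h(\bSigma_1,\bSigma_2)(\bmu_1-\bmu_2)$ with $h$ symmetric in its arguments, then show that within that family the two class variances agree, the optimal $t$ is the midpoint (asymptotically zero, so the threshold-free classifier is recovered), and close with a Cauchy--Schwarz/Rayleigh-quotient argument. This is cleaner once you are in the symmetric family, and the Rayleigh-quotient step is more transparent than the paper's explicit gradient algebra.

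\textbf{The gap.} The assertion ``by sufficiency the minimizer respects this invariance'' does not by itself justify restricting to the symmetric family. The invariance you describe says that if $(\bw^*,t^*)$ minimizes $E(\cdot,\cdot;\bmu_1,\bSigma_1,\bmu_2,\bSigma_2)$ then $(-\bw^*,-t^*)$ minimizes the swapped problem; because the parameters $(\bmu_i,\bSigma_i)$ are fixed, this does not constrain $\bw^*$ without an additional uniqueness argument, and even then the conclusion would be $\bw^*=-\bw^*$ rather than ``$\bw^*$ is a symmetric function of $\bSigma_1,\bSigma_2$ applied to $\bmu_1-\bmu_2$.'' Your alternative suggestion --- write out the first-order optimality conditions and show they force $\bw^T(\bSigma_1-\bSigma_2)\bw\to 0$ --- is the correct fix and is, in essence, exactly the route the paper takes: the gradient equation gives the parametric form of the stationary point, and only then is the exchangeability symmetry applied to identify the scalars. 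So your overall plan is sound and arrives at the same endpoint, but the symmetric-family reduction needs to be carried out via the stationarity conditions rather than by the invariance-of-the-minimizer appeal.

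One smaller point: your CLT step for $\bw^T\bx$ is stated as something you would need to establish, but the paper takes the Gaussian form of the error for granted in Lemma~\ref{lem:gen_error} (which is stated without proof) and handles the universality issue separately via Corollary~\ref{corr: testuniv}. Your instinct that the CLT must hold on a high-probability event because $\bw_*$ depends on the random problem parameters is correct and is exactly the concern that Remark~\ref{rem: class} and Corollary~\ref{corr: testuniv} address.
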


In view of Lemma \ref{lem:optimal} it is natural to introduce the following assumption:

\begin{assumptions}\label{assum: init}
    The initialization point $\bw_0$ satisfies 
    \begin{align}\label{def: init_class}
        \bw_0 =  t_*\bw_* + t_{\eta}\bmeta
    \end{align}
    where $\bw_*$ is the optimal classifier defined as in Lemma \ref{lem:optimal}. $\bmeta$ is a noise vector with $\|\bmeta\|^2 = 1 - r$ and for any deterministic matrix $\bC$ of bounded operator norm it holds that $\bmeta^T
    \bC\bmeta$ converges in probability to $\frac{\tr{(\bC)}(1-r)}{d}$. Note that the smaller the ratio $\frac{t_{\eta}}{t_*}$ is, the better performance $\bw_0$ has.
\end{assumptions}

Indeed, \ref{def: init_class} captures the closeness of the initilization point $\bw_0$ to the optimal classifier $\bw_*$. For example, in the isotropic case $\bSigma_1 = \bSigma_2 = \sigma^2 \bI$, if $\bw_0$ has the classification error $e_a$, this means that we should take $\frac{t_\eta}{t_*} = \frac{d}{2\sigma^2} \sqrt{\frac{d(1-r)}{\sigma^2 Q^{-1}(e_a)^2} - 2} $.
% Indeed, since $\bw_0$ is an initialization point obtained from a pre-trained model we expect it to perform much better than a random point from the weight space, meaning that it cannot be too far from the optimal classifier.
%

\subsubsection{Analysis of the Classification Error}
\begin{theorem}\label{thm: main_class}
    Let $\bw_0$ be an initialization point satisfying Assumption \ref{assum: init} and  $\bX$ be a data matrix satisfying Assumptions \ref{assum: class}. Then the classification error of the SGD solution initialized at $\alpha \bw_0$ for $\alpha$ defined by \eqref{eq: alpha} and trained on $\bX, \by$ is given by
$$e_p = Q\left(\frac{-\gamma + 2}{2\sqrt{\tau^2 - \frac{\gamma^2}{4}}}\right)$$ where 
$\gamma$ is determined by the following equations that include an additional variable $\theta$ and $\tau$ has a closed form expression provided in the Appendix (cf. Section \ref{append: class}, equation \ref{eq: tau}):
\begin{align*}
      & \frac{1}{\theta}S_{\bSigma_1 + \bSigma_2}\Bigl(-\frac{1}{\theta}\Bigr) = 1 - \frac{1}{\kappa} \\
      &  \gamma  = \left(\frac{1}{2} + \frac{\theta(1-r)}{2\sqrt{2}}\Bigl(1-\frac{1}{\kappa}\Bigr)\right)^{-1} \left ( 1    - \alpha \int \int p(s_1, s_2)\frac{t_*(1-r)(s_1+s_2)^{-1}}{1+\theta(s_1 + s_2)} \right)
\end{align*}

\end{theorem}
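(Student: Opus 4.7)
The plan is to first reduce to a Gaussian design via Corollary \ref{corr: testuniv} and then extract the asymptotic characterization of the resulting Gaussian problem through the convex Gaussian min--max theorem (CGMT), following the template of \cite{akhtiamov2024novel}.

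\emph{Step 1 (reduction to a Gaussian design).} The SGD solution is the minimizer of $\|\bw - \alpha \bw_0\|_2^2$ subject to $\bX\bw = \by$. The scalar $\alpha = \by^T\bX\bw_0 / \|\bX\bw_0\|_2^2$ is a Lipschitz function of quadratic forms in $\bX$; under Assumptions \ref{assum: class} and \ref{assum: init} both the numerator and the denominator concentrate around deterministic limits (parts 3 and 5 of Definition \ref{def: block_reg} do the work), so $\alpha \to \alpha_\infty$ in probability, where $\alpha_\infty$ depends only on the first two moments of the rows of $\bX$. Hence the random initialization $\alpha\bw_0$ may be replaced by the deterministic $\alpha_\infty\bw_0$ at the cost of a $o(1)$ perturbation in the classification error; since $\|\alpha_\infty\bw_0\|_2^2 = O(d)$, Corollary \ref{corr: testuniv} applies and allows us to replace $\bX$ by the matching Gaussian matrix $\bG$ in the asymptotic analysis.

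\emph{Step 2 (CGMT to a scalar saddle-point).} In the Gaussian case write each block as $\bG_i = \bone\bmu_i^T + \bH_i\bSigma_i^{1/2}$ with $\bH_i$ i.i.d.\ standard Gaussian, and rotate into the basis jointly diagonalizing $\bSigma_1,\bSigma_2$ (Assumption \ref{assum: class} part 4). The interpolation problem is recast as the min--max
\begin{align*}
\min_\bw \sup_\blambda \ \|\bw - \alpha_\infty \bw_0\|_2^2 + \blambda^T(\bG\bw - \by).
\end{align*}
I would decompose $\bw$ along the low-dimensional span of $\bmu_1,\bmu_2,\bw_*,\bmeta$ plus an orthogonal residual. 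CGMT then replaces the bilinear couplings $\blambda^T\bH_i\bSigma_i^{1/2}\bw$ by expressions involving two independent Gaussian vectors, producing an auxiliary problem (AO). The concentration of quadratic forms in Assumption \ref{assum: class} part 3 and Assumption \ref{assum: init} collapses all cross inner products of $\bmu_i,\bmeta,\bw_*$ into deterministic quantities, reducing the AO to a finite-dimensional scalar saddle-point in the coefficients of the decomposition and the norm of the orthogonal residual.

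\emph{Step 3 (fixed-point equations and classification error).} The stationarity conditions of the scalar AO yield the two equations of the theorem. The equation for $\theta$ arises from optimizing the quadratic-in-residual term, with the Stieltjes transform $S_{\bSigma_1+\bSigma_2}$ entering through the trace of the resolvent $\bigl(\bI + \tfrac{\theta}{2}\sqrt{n/2}(\bSigma_1+\bSigma_2)\bigr)^{-1}$ and the ratio $(d-n)/d$ reflecting the Gordon width of the nullspace. The equation for $\gamma$ encodes the projection of $\hbw$ onto $\bmu_1-\bmu_2$; the $\alpha_\infty$-dependent integral quantifies how much of the pretraining direction survives the interpolation constraint, while the shift $\sqrt{n}\theta(1-r)(d-n)/(8d)$ is the effective self-energy picked up from the Gaussian features. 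For a test point $\ba \sim \bmu_i + \bSigma_i^{1/2}\bz$, the Gaussian reduction gives $\ba^T\hbw \to \calN(\bmu_i^T\hbw,\, \hbw^T\bSigma_i\hbw)$ asymptotically, so the classification error is $\tfrac12\sum_i Q\bigl(\bmu_i^T\hbw/\sqrt{\hbw^T\bSigma_i\hbw}\bigr)$; the spectral symmetry $p(s_1,s_2)=p(s_2,s_1)$ (Assumption \ref{assum: class} part 5) collapses the two terms, and identifying $\bmu_1^T\hbw = 1 - \gamma/2$ together with $\hbw^T\bSigma_1\hbw = \tau^2 - \gamma^2/4$ from the scalar AO solution produces the stated closed form.

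\emph{Main obstacle.} The principal technical difficulty is the CGMT reduction itself: the usual formulations assume a scalar or isotropic covariance and a single, mean-zero class, while here one must carry two distinct covariance matrices $\bSigma_1,\bSigma_2$ and two non-vanishing means $\bmu_1,\bmu_2$ through the Gordon construction. Tracking the interaction of the pretraining direction $\alpha_\infty\bw_0 = \alpha_\infty(t_*\bw_* + t_\eta\bmeta)$ with the mean and noise subspaces, and verifying that every cross inner product that appears concentrates to a deterministic limit compatible with the saddle-point, is where most of the work lies; the closed-form expression for $\tau$ deferred to the appendix is exactly the bookkeeping output of this coupled saddle-point.
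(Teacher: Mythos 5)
Your proposal takes essentially the same route as the paper: universality reduction to a matching Gaussian design, then the generalized CGMT of \cite{akhtiamov2024novel}, scalarization of the auxiliary problem, and identification of $\bmu_1^T\hbw = 1 - \gamma/2$ and $\hbw^T\bSigma_1\hbw = \tau^2 - \gamma^2/4$ with the exchangeability of $p$ collapsing the two $Q$-terms. The only presentational difference is in the scalarization: the paper keeps $\bw$ full-dimensional, introduces Fenchel duals $\gamma_i$ for the scalar terms $(\bmu_i^T\bw + (-1)^i)^2$, and then optimizes over $\bw$ in closed form, whereas you propose to decompose $\bw$ along the span of $\bmu_1,\bmu_2,\bw_*,\bmeta$ and an orthogonal residual -- both parametrizations yield the same scalar saddle-point and fixed-point equations.
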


The expressions from Theorem \ref{thm: main_class} can be simplified drastically in the case of scalar covariance matrices. 

\begin{corollary}\label{cor: scal_sigmas}
Under the notation from Theorem \ref{thm: main_class}, assume $\Sigma_1 = \Sigma_2 = \frac{\sigma^2\bI_d}{d}$ and define $ \rho: = \frac{d(1-r)}{\sigma^2}$. Let $e_a$ be the classification error of $\bw_0$ initialized according to \ref{def: init_class}. Then 

% Then we have for the classification error $e_p(\kappa, \rho)$ initialized according to $\frac{t_\eta}{t_*} = \frac{d}{2\sigma^2} \sqrt{\frac{\rho}{Q^{-1}(e_a)^2} - 2} $:
\begin{align}\label{eq: gam_tau}
    % & \gamma = \frac{ \frac{n}{d} + \frac{d-n}{d}\frac{1}{Q^{-1}(e_a)^2  + 1}}{\frac{1}{2} + \frac{\rho}{4}} \\
    % & \tau^2 = \frac{d}{d-n} \biggl[-\Bigl(\frac{\rho}{4}+\frac{n}{2d}\Bigr)\Bigl(1 + \frac{\rho}{4}\Bigr)\gamma^2 + \left( 1 + \frac{n+d}{d}\Bigl(\frac{1}{2} + \frac{\rho}{4}\Bigr)\right)\gamma - \frac{n}{d}\biggr]   
    % & \gamma = \frac{ 4(Q^{-1}(e_a)^2+ \kappa)}{(2\kappa + \rho)(Q^{-1}(e_a)^2  + 1)} \\
    % & \tau^2 = \frac{1}{\kappa-1} \biggl[-\frac{1}{16\kappa}\Bigl(\rho+2\Bigr)\Bigl(4\kappa + \rho \Bigr)\gamma^2 + \frac{1}{4\kappa} \left( 4\kappa + (\kappa +1)\Bigl(2\kappa + \rho \Bigr)\right)\gamma - 1\biggr] \\
    &e_p(\kappa, \rho) \rarrowp Q\Biggl(\frac{ \Bigl(Q^{-1}(e_a)^2 (2 \kappa +\rho -2)+\rho\Bigr)\sqrt{ \kappa(\kappa -1)}}{\sqrt{(2 \kappa +\rho)((4 \kappa -2) Q^{-1}(e_a)^4+Q^{-1}(e_a)^2 \left(2 \kappa ^3+\kappa ^2 \rho -2 \kappa  (\rho -1)+\rho \right)+ 2 \kappa ^2)}} \Biggr)
\end{align}
\end{corollary}

We arrive at the following conclusion summarizing the derivations above. It essentially says that transfer learning has a chance of being fruitful only if $\rho = \Theta(1)$.

\begin{remark}\label{rem: scaling}
    \begin{itemize}
        \item If $\rho = \omega(1)$, then $e_p(\kappa, \infty)\rarrowp Q\biggl(\Bigl(Q^{-1}(e_a)+\frac{1}{Q^{-1}(e_a)}\Bigr)\sqrt{\frac{\kappa}{\kappa-1}}\biggr) < e_a$ and fine-tuning always succeeds. It is observed that for $e_a > Q(1)$, the worse $e_a$ is, the better $e_p$ will be.
        
        \item If $\rho = \Theta(1)$, then the fine-tuning step may or may not succeed depending on whether $e_p(\kappa, \rho) < e_a$ or not. See Section \ref{exp: class} for further (empirical) explorations on the usefulness of the fine-tuning of the pre-trained solution for this regime. Note that in this regime $e_p(\infty, \rho)\rarrowp e_a$ which aligns with the intuition that in the highly overparametrized regime, the weight vector during training does not move a whole lot.
        \item If $\rho = o(1)$, then the classification error of any linear classifier goes to $\frac{1}{2}$ as $n \to \infty$ since it is lower bounded by $Q(\sqrt{\frac{\rho}{2}})$ according to Lemma \ref{lem:optimal}. Thus, any kind of learning will fail in this regime. 
    \end{itemize}
\end{remark}
An interesting regime is where the number of samples is much lower than the number of parameters which rises naturally in the context of transfer learning and corresponds to letting $\kappa \gg 1$. For this regime, we have
\begin{align*}
    e_p \approx Q\Biggl(Q^{-1}(e_a) + \frac{1}{2} \Bigl(\frac{\rho-1}{Q^{-1}(e_a)} - 3 Q^{-1}(e_a)\Bigr) \frac{1}{\kappa} \Biggr)
\end{align*}
We observe that if $\rho < 1$, transfer learning always fails independent of value of $e_a$ for $\kappa \gg 1$ and training would not help with improving performance. On the other hand, if $e_a > Q(\sqrt{\frac{\rho-1}{3}}) > Q(\sqrt{\frac{\rho}{2}})$, for large enough $\kappa$, transfer learning always achieves an error less than $e_a$. 

% Now to understand the objective in \ref{eq: gam_tau} in depth, we consider the second-order approximation:
% \begin{align*}
%     e_p \approx Q\Biggl(Q^{-1}(e_a) + \frac{1}{2} \Bigl(\frac{\rho-1}{Q^{-1}(e_a)} - 3 Q^{-1}(e_a)\Bigr) \frac{1}{\kappa} + \frac{1}{4}\xi(\rho,Q^{-1}(e_a))\frac{1}{\kappa^2}\Biggr)
% \end{align*}
% Where $\xi(\rho,Q^{-1}(e_a)):=-\frac{2}{Q^{-1}(e_a)}\rho^2+ \rho ( 8Q^{-1}(e_a)-\frac{2}{Q^{-1}(e_a)^3}) - 8Q^{-1}(e_a)^3+\frac{3}{Q^{-1}(e_a)^3}+\frac{6}{Q^{-1}(e_a)}-Q^{-1}(e_a)$.

It is noteworthy that by the virtue of Corollary \ref{corr: testuniv}, the results of this section in fact hold for all distributions satisfying Assumptions \ref{assum: explicit}.

% \begin{figure}[!h] 
%     \centering    
%     \subfloat [Regression $e_0\le 10$] {
%     \label{subfig:reg}
%     \resizebox{0.4\textwidth}{!}{%
%       \includegraphics{phasetransreg.png}
%         } 
%     }
%     \subfloat [Classification] {
%     \label{subfig:class}
%     \resizebox{0.4\textwidth}{!}{%
%         \includegraphics{phasetransition.png}
%         } 
%     }
%     \caption{Blue region indicates that for any value of $e_a$, fine-tuning succeeds.}
%     \label{fig: phasetran}
% \end{figure}

\section{Numerical experiments}\label{sec: Exp}
\subsection{Regression}
To corroborate our findings, we plotted the generalization error of the weight obtained through running SGD according to the Assumptions \ref{assum: reg} in Section \ref{sec: reg} with respect to $\kappa = \frac{d}{n}$. To do so, we fixed $d=1000$ and varied $n$ across different values. We used CVXPY (\cite{cvx}, \cite{agrawal2018rewriting}) to solve \eqref{eq: sgd_char} efficiently on a Laptop CPU. To verify the universality of our results, we initially constructed a centered random matrix $\bX'$ with i.i.d components according to the distributions $\calN(0,1)$, $Ber(0.5)$, and $\chi^2(1)$ and using a correlation matrix $\bR_x$  we defined $\bX:= \bX' \bR_x^{1/2}$. 

On the other hand, we generated $\bR_x$ according to the following three distributions: single level $p(r):= \delta(r-1)$, bilevel $p(r):=0.3 \delta(r-1)+ 0.7 \delta(r-5)$ and uniform on the interval $[1, 5]$. We specifically consider these cases as they are common in the literature and we use the parameter $\sigma^2$ for the component-wise variance of $\bz$ in \ref{eq: linregmod}. Additionally, $\bw_0$ is chosen according to Assumptions \ref{assum: reg} in such a manner that $e_a = 1$. In both Figures \ref{fig: Bi}, \ref{fig: Unif} the blue line represents the prediction \ref{eq: thmreg1} made by Theorem \ref{thm: main_reg}, the red line depicts the lower bound \ref{eq:thmreg2}. The markers showcase the performance of weights obtained under different distributions as described earlier. It can be seen that from Figures \ref{fig: Bi}, \ref{fig: Unif}, for the bilevel and uniform distributions, depending on the value of $\sigma$, transfer learning might not be beneficial as discussed in Remark \ref{rem: reg}. In particular, in Figures \ref{subfig::sig200Bi} and \ref{subfig::sig200Unif}, the generalization error is always lower bounded by $e_a = 1$ and only in $\kappa\rightarrow \infty$ can get close to 1. Finally, the single-level distribution on $\bR_x$ is always a lower bound for the generalization error of various distributions on $\bR_x$. 
\subsection{Classification}\label{exp: class}

Similar to the preceding subsection, we experimented with sampling the entries of $\bX$ independently from three different centered distributions: normal, Bernoulli, and $\chi^2$. We also sampled the means $\bmu_1$ and $\bmu_2$ from $\mathcal{N}(0, \frac{1}{d}\bI_d)$ with a cross-correlation $r = \bbE[\bmu_{1i}\bmu_{2i}] = 0.9$ and added them to the corresponding rows of $\bX$. For Figures \ref{fig: class}, we fixed $\rho = 0.8, 2, 5$  respectively and plotted the classification error predicted by Corollary \ref{cor: scal_sigmas} as a solid red line, empirically observed classification errors for the normal, Bernoulli and $\chi^2$ entries as black squares, green circles and red triangles respectively. The blue lines depict the classification error at the initialization. It can be seen that there is a close match between the empirical errors between points from different distributions as well as with the theoretical prediction, thus validating both Theorem \ref{thm: main_univ1} and Theorem \ref{thm: main_class}. Note that fixing $\rho$ in this setting corresponds to fixing $\sigma^2$ as $\rho = \frac{d(1-r)}{\sigma^2}$. It is also worth mentioning that fine-tuning improves performance in the setting of Figure \ref{fig: kappa_3} but hurts it for Figure \ref{fig: kappa_1}. Moreover, note that in Figure \ref{fig: kappa_2}, although for smaller $\kappa$ transfer learning hurts, past a certain $\kappa$, training improves the performance. Also we observe that by increasing $\rho$ across the three plots, the classification task becomes easier and fine-tuning improves performance as supported by Remark \ref{rem: scaling}.
% Figure \ref{fig: rho} follows the same conventions as Figures \ref{fig: kappa_1} and \ref{fig: kappa_2} but it fixes $\kappa = 2$ and depicts plots the error for different values of $\rho$ instead, serving both as another validation of our main results and as an illustration of Remark \ref{rem: scaling}.

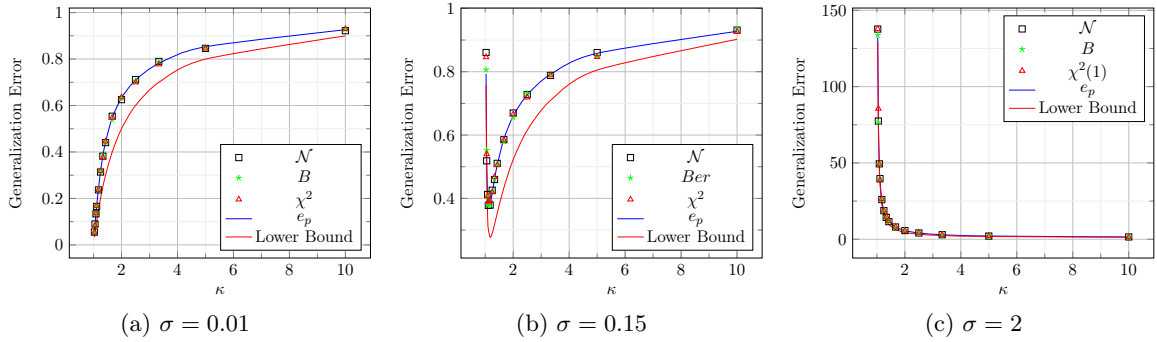
\begin{figure}[!h] 
    \centering
   \subfloat [$\sigma = 0.01$] {
    \label{subfig:sig001Bi}
    \resizebox{0.3\textwidth}{!}{%
        \begin{tikzpicture}
        \begin{axis}[
          xlabel={$\kappa$},
          ylabel=Generalization Error,
          legend pos = south east,
          grid = both,
            minor tick num = 1,
            major grid style = {lightgray},
            minor grid style = {lightgray!25},
          ]
        \addplot[mark = square, black, only marks] 
        table[ y=genGauss, x=kappa]{regsig1e-2bilevelr0=1r1=5.dat};
        \addlegendentry{$\calN$}
        \addplot[mark = star, green, only marks] 
        table[ y=genBer, x=kappa]{regsig1e-2bilevelr0=1r1=5.dat};
        \addlegendentry{$B$}
        \addplot[mark = triangle, red, , only marks] 
        table[ y=genChi, x=kappa]{regsig1e-2bilevelr0=1r1=5.dat};
        \addlegendentry{$\chi^2$}
        \addplot[smooth, thin, blue] 
        table[ y=genbilvl, x=kappa]{regsig1e-2bilevelr0=1r1=5.dat};
        \addlegendentry{$e_p$}
        \addplot[smooth, thin, red] 
        table[ y=genlwrbnd, x=kappa]{regsig1e-2bilevelr0=1r1=5.dat};
        \addlegendentry{Lower Bound}
        \end{axis}
    \end{tikzpicture}   
        } 
    }
    \subfloat [$\sigma = 0.15$] {
    \label{subfig:sig015Bi}
    \resizebox{0.3\textwidth}{!}{%
        \begin{tikzpicture}
        \begin{axis}[
          xlabel={$\kappa$},
          ylabel=Generalization Error,
          legend pos = south east,
          grid = both,
            minor tick num = 1,
            major grid style = {lightgray},
            minor grid style = {lightgray!25},
          ]
        \addplot[mark = square, black, only marks] 
        table[ y=genGauss, x=kappa]{regsig15e-2bilevelr0=1r1=5.dat};
        \addlegendentry{$\calN$}
        \addplot[mark = star, green, only marks] 
        table[ y=genBer, x=kappa]{regsig15e-2bilevelr0=1r1=5.dat};
        \addlegendentry{$B$}
        \addplot[mark = triangle, red, , only marks] 
        table[ y=genChi, x=kappa]{regsig15e-2bilevelr0=1r1=5.dat};
        \addlegendentry{$\chi^2$}
        \addplot[smooth, thin, blue] 
        table[ y=genbilvl, x=kappa]{regsig15e-2bilevelr0=1r1=5.dat};
        \addlegendentry{$e_p$}
        \addplot[smooth, thin, red] 
        table[ y=genlwrbnd, x=kappa]{regsig15e-2bilevelr0=1r1=5.dat};
        \addlegendentry{Lower Bound}
        \end{axis}
    \end{tikzpicture}   
        } 
    }
    \subfloat [$\sigma = 2$] {
    \label{subfig::sig200Bi}
    \resizebox{0.3\textwidth}{!}{%
        \begin{tikzpicture}
        \begin{axis}[
          xlabel={$\kappa$},
          ylabel=Generalization Error,
          legend pos = north east,
          grid = both,
            minor tick num = 1,
            major grid style = {lightgray},
            minor grid style = {lightgray!25},
          ]
        \addplot[mark = square, black, only marks] 
        table[ y=genGauss, x=kappa]{regsig2bilevelr0=1r1=5.dat};
        \addlegendentry{$\calN$}
        \addplot[mark = star, green, only marks] 
        table[y=genBer, x=kappa]{regsig2bilevelr0=1r1=5.dat};
        \addlegendentry{$B$}
        \addplot[mark = triangle, red, , only marks] 
        table[ y=genChi, x=kappa]{regsig2bilevelr0=1r1=5.dat};
        \addlegendentry{$\chi^2(1)$}
        \addplot[smooth, thin, blue] 
        table[ y=genbilvl, x=kappa]{regsig2bilevelr0=1r1=5.dat};
        \addlegendentry{$e_p$}
        \addplot[smooth, thin, red] 
        table[ y=genlwrbnd, x=kappa]{regsig2bilevelr0=1r1=5.dat};
        \addlegendentry{Lower Bound}
        \end{axis}
    \end{tikzpicture}   
        } 
    }
    \caption{Generalization error for the bilevel distribution}
    \label{fig: Bi}
\end{figure}
\begin{figure}[!h]
    \centering
   \subfloat [$\sigma = 0.01$] {
    \label{subfig::sig001Unif}
    \resizebox{0.3\textwidth}{!}{%
        \begin{tikzpicture}
        \begin{axis}[
          xlabel={$\kappa$},
          ylabel=Generalization Error,
          legend pos = south east,
          grid = both,
            minor tick num = 1,
            major grid style = {lightgray},
            minor grid style = {lightgray!25},
          ]
        \addplot[mark = square, black, only marks] 
        table[ y=genGauss, x=kappa]{regsig1e-2unifr0=1r1=5.dat};
        \addlegendentry{$\calN$}
        \addplot[mark = star, green, only marks] 
        table[ y=genBer, x=kappa]{regsig1e-2unifr0=1r1=5.dat};
        \addlegendentry{$B$}
        \addplot[mark = triangle, red, , only marks] 
        table[ y=genChi, x=kappa]{regsig1e-2unifr0=1r1=5.dat};
        \addlegendentry{$\chi^2$}
        \addplot[smooth, thin, blue] 
        table[ y=genunif, x=kappa]{regsig1e-2unifr0=1r1=5.dat};
        \addlegendentry{$e_p$}
        \addplot[smooth, thin, red] 
        table[ y=genlwrbnd, x=kappa]{regsig1e-2unifr0=1r1=5.dat};
        \addlegendentry{Lower Bound}
        \end{axis}
    \end{tikzpicture}   
        } 
    }
    \subfloat [$\sigma = 0.15$] {
    \label{subfig::sig015Unif}
    \resizebox{0.3\textwidth}{!}{%
        \begin{tikzpicture}
        \begin{axis}[
          xlabel={$\kappa$},
          ylabel=Generalization Error,
          legend pos = south east,
          grid = both,
            minor tick num = 1,
            major grid style = {lightgray},
            minor grid style = {lightgray!25},
          ]
        \addplot[mark = square, black, only marks] 
        table[ y=genGauss, x=kappa]{regsig15e-2unifr0=1r1=5.dat};
        \addlegendentry{$\calN$}
        \addplot[mark = star, green, only marks] 
        table[ y=genBer, x=kappa]{regsig15e-2unifr0=1r1=5.dat};
        \addlegendentry{$B$}
        \addplot[mark = triangle, red, , only marks] 
        table[ y=genChi, x=kappa]{regsig15e-2unifr0=1r1=5.dat};
        \addlegendentry{$\chi^2$}
        \addplot[smooth, thin, blue] 
        table[ y=genunif, x=kappa]{regsig15e-2unifr0=1r1=5.dat};
        \addlegendentry{$e_p$}
        \addplot[smooth, thin, red] 
        table[ y=genlwrbnd, x=kappa]{regsig15e-2unifr0=1r1=5.dat};
        \addlegendentry{Lower Bound}
        \end{axis}
    \end{tikzpicture}   
        } 
    }
    \subfloat [$\sigma = 2$] {
    \label{subfig::sig200Unif}
    \resizebox{0.3\textwidth}{!}{%
        \begin{tikzpicture}
        \begin{axis}[
          xlabel={$\kappa$},
          ylabel=Generalization Error,
          legend pos = north east,
          grid = both,
            minor tick num = 1,
            major grid style = {lightgray},
            minor grid style = {lightgray!25},
          ]
        \addplot[mark = square, black, only marks] 
        table[ y=genGauss, x=kappa]{regsig2unifr0=1r1=5.dat};
        \addlegendentry{$\calN$}
        \addplot[mark = star, green, only marks] 
        table[ y=genBer, x=kappa]{regsig2unifr0=1r1=5.dat};
        \addlegendentry{$B$}
        \addplot[mark = triangle, red, , only marks] 
        table[ y=genChi, x=kappa]{regsig2unifr0=1r1=5.dat};
        \addlegendentry{$\chi^2$}
        \addplot[smooth, thin, blue] 
        table[ y=genunif, x=kappa]{regsig2unifr0=1r1=5.dat};
        \addlegendentry{$e_p$}
        \addplot[smooth, thin, red] 
        table[ y=genlwrbnd, x=kappa]{regsig2unifr0=1r1=5.dat};
        \addlegendentry{Lower Bound}
        \end{axis}
    \end{tikzpicture}   
        } 
    }
    \caption{Generalization error for the uniform distribution}
    \label{fig: Unif}
\end{figure}
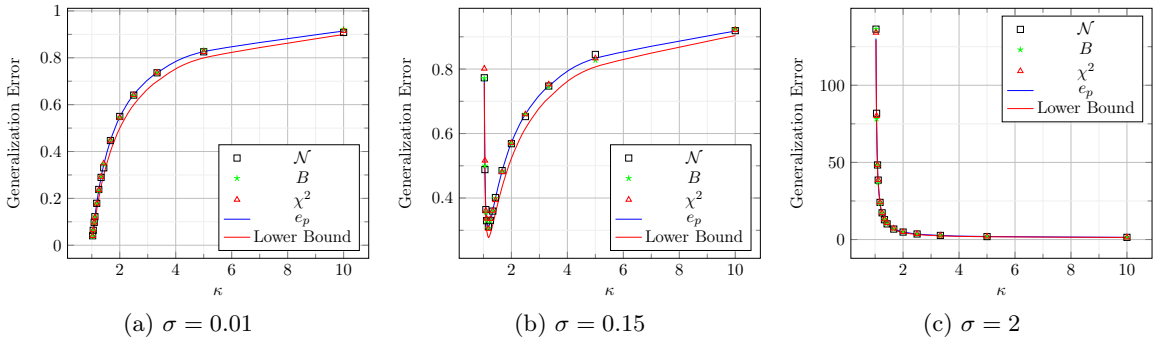
\begin{figure}[!h]
    \centering
    \subfloat [$\rho = 0.8, r = 0.9, e_a = 0.3$] {
    \label{fig: kappa_1}
    \resizebox{0.3\textwidth}{!}{%
        \begin{tikzpicture}
        \begin{axis}[
          xlabel={$\kappa$},
          ylabel=Classification Error,
          legend pos = north east,
          grid = both,
            minor tick num = 1,
            major grid style = {lightgray},
            minor grid style = {lightgray!25},
          ]
        \addplot[smooth, thin, blue] 
        table[ y=eaGauss, x=kappa]{class_kap_rho08new.dat};
        \addplot[mark = square, black, only marks] 
        table[ y=epGauss, x=kappa]{class_kap_rho08new.dat};
        \addlegendentry{$\calN$}
        \addplot[mark = star, green, only marks] 
        table[ y=epBer, x=kappa]{class_kap_rho08new.dat};
        \addlegendentry{$B$}
        \addplot[mark = triangle, red, , only marks] 
        table[ y=epChi, x=kappa]{class_kap_rho08new.dat};
        \addlegendentry{$\chi^2$}
        \addplot[smooth, thin, red] 
        table[ y=epThm, x=kappa]{class_kap_rho08new.dat};
        \addlegendentry{$e_p$}
        \end{axis}
    \end{tikzpicture}   
        } 
    }
   \subfloat [$\rho = 2, r = 0.9, e_a = 0.3$] {
     \label{fig: kappa_2}
    \resizebox{0.3\textwidth}{!}{%
        \begin{tikzpicture}
        \begin{axis}[
          xlabel={$\kappa$},
          ylabel=Classification Error,
          legend pos = north east,
          grid = both,
            minor tick num = 1,
            major grid style = {lightgray},
            minor grid style = {lightgray!25},
          ]
        \addplot[smooth, thin, blue] 
        table[ y=eaGauss, x=kappa]{class_kap_rho2new.dat};
        \addlegendentry{$e_a$}
        \addplot[mark = square, black, only marks] 
        table[ y=epGauss, x=kappa]{class_kap_rho2new.dat};
        \addlegendentry{$\calN$}
        \addplot[mark = star, green, only marks] 
        table[ y=epBer, x=kappa]{class_kap_rho2new.dat};
        \addlegendentry{$B$}
        \addplot[mark = triangle, red, , only marks] 
        table[ y=epChi, x=kappa]{class_kap_rho2new.dat};
        \addlegendentry{$\chi^2$}
        \addplot[smooth, thin, red] 
        table[ y=epThm, x=kappa]{class_kap_rho2new.dat};
        \addlegendentry{$e_p$}
        \end{axis}
    \end{tikzpicture}   
        } 
    }
    \subfloat [$\rho = 5, r = 0.9, e_a = 0.3$] {
     \label{fig: kappa_3}
    \resizebox{0.3\textwidth}{!}{%
        \begin{tikzpicture}
        \begin{axis}[
          xlabel={$\kappa$},
          ylabel=Classification Error,
          legend pos = south east,
          grid = both,
            minor tick num = 1,
            major grid style = {lightgray},
            minor grid style = {lightgray!25},
          ]
        \addplot[smooth, thin, blue] 
        table[ y=eaGauss, x=kappa]{class_kap_rho5new.dat};
        \addplot[mark = square, black, only marks] 
        table[ y=epGauss, x=kappa]{class_kap_rho5new.dat};
        \addlegendentry{$\calN$}
        \addplot[mark = star, green, only marks] 
        table[ y=epBer, x=kappa]{class_kap_rho5new.dat};
        \addlegendentry{$B$}
        \addplot[mark = triangle, red, , only marks] 
        table[ y=epChi, x=kappa]{class_kap_rho5new.dat};
        \addlegendentry{$\chi^2$}
        \addplot[smooth, thin, red] 
        table[ y=epThm, x=kappa]{class_kap_rho5new.dat};
        \addlegendentry{$e_p$}
        \end{axis}
    \end{tikzpicture}   
        } 
    }
    
    \caption{Classification error} \label{fig: class}
\end{figure}
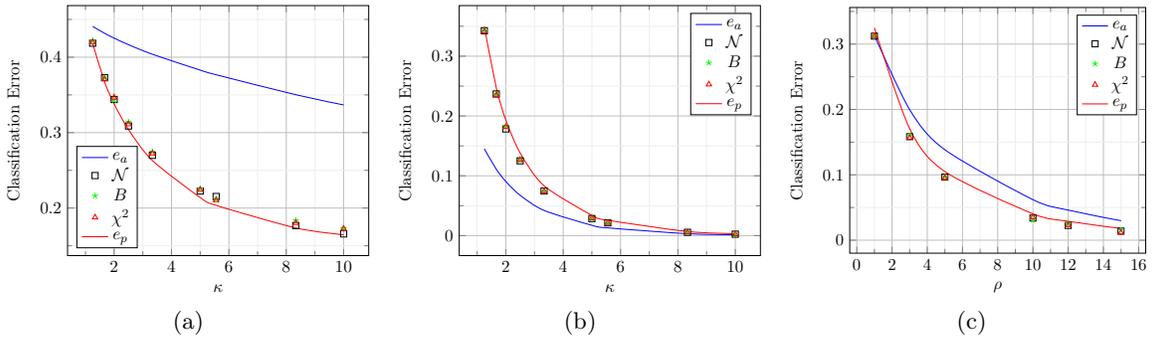

\section{Conclusion and future work} \label{sec: con}
We presented a novel Gaussian universality result and used it to study the problem of transfer learning in linear models, for both regression and binary classification. In particular, we were able to precisely relate the performance of the pretrained model to that of the fine-tuned model trained via SGD and, as a result, identified situations where transfer learning helps (or does not). 
 Possible future directions include investigating other problems where the universality result may be useful, extending the results to potential functions that are not necessarily convex nor separable, as well as exploring the implications of universality for objectives with explicit regularization.

\newpage

% In the unusual situation where you want a paper to appear in the
% references without citing it in the main text, use \nocite

\bibliographystyle{apalike}
\bibliography{main}

%%%%%%%%%%%%%%%%%%%%%%%%%%%%%%%%%%%%%%%%%%%%%%%%%%%%%%%%%%%%%%%%%%%%%%%%%%%%%%%
%%%%%%%%%%%%%%%%%%%%%%%%%%%%%%%%%%%%%%%%%%%%%%%%%%%%%%%%%%%%%%%%%%%%%%%%%%%%%%%
% APPENDIX
%%%%%%%%%%%%%%%%%%%%%%%%%%%%%%%%%%%%%%%%%%%%%%%%%%%%%%%%%%%%%%%%%%%%%%%%%%%%%%%
%%%%%%%%%%%%%%%%%%%%%%%%%%%%%%%%%%%%%%%%%%%%%%%%%%%%%%%%%%%%%%%%%%%%%%%%%%%%%%%
\newpage
\appendix
\onecolumn

\section{Proof of part 1 Theorems \ref{thm: main_univ1}, \ref{thm: main_univ2}} \label{append: univ}
In this section we provide the complete proof of part 1 of Theorems \ref{thm: main_univ1}, \ref{thm: main_univ2}.
\subsection{Proof of part 1 of Theorem \ref{thm: main_univ1}}\label{append: part1proof}
We prove the universality of the objective value and then use it to prove the universality of a $\psi$ applied to the optimal solutions. To do so, we construct the perturbed objective. Let
\begin{align*}
    \Phi_{\lambda,\epsilon}(\bA):= \frac{1}{n} \min_{w\in \calS_w}  \frac{\lambda}{2} \|\bA\bw - \by\|^2 + f(\bw) + \epsilon \psi(\bw)
\end{align*}
Where $|\epsilon|$ is small enough that $\epsilon \psi(\bw) + f(\bw)$ is $\rho$-strongly convex. For such $\epsilon$'s, we will show that $\Phi_{\lambda,\epsilon}(\bA)$ and $\Phi_{\lambda,\epsilon}(\bB)$ converge in probability to the same value and use that to deduce a similar result involving $\psi(\bw_{\Phi_{\lambda,0}(\bA)}), \psi(\bw_{\Phi_{\lambda,0}(\bB)})$  . A key step is to reduce the proof to upper bounding the difference of expectations, to do so, the following proposition will be instrumental whose proof is given in Appendix \ref{Append: Lemmata}.
\begin{proposition}
If for any Lipschitz function $g:\bbR \rightarrow \bbR$ and $t_1>0$,  $\bbP\Bigl(|g(\Phi_{\lambda,\epsilon}(\bB))-c| > t_1\Bigr) \rightarrow 0 $ and for every twice differentiable $\tlg: \bbR \rightarrow \bbR$ with bounded second derivative, we have that\\ ${\lim_{n\rightarrow \infty}\Bigl|\bbE_{\bA,\by}  \tlg(\Phi_{\lambda,\epsilon}(\bB)) - \bbE_{\bB,\by}\tlg(\Phi_{\lambda,\epsilon}(\bA))\Bigr| \rightarrow 0}$ then for any $t_2 > t_1$. 
\begin{align*}
    \lim_{n\rightarrow \infty}\bbP\Bigl(|g(\Phi_{\lambda,\epsilon}(\bA))-c| > t_2\Bigr) = 0
\end{align*}
\end{proposition}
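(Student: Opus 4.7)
The plan is to sandwich the indicator $\mathbf{1}_{\{|g(\Phi_{\lambda,\epsilon}(\bA))-c|>t_2\}}$ between two smooth test functions with a fixed bound on the second derivative, transfer their expectations from $\bB$ to $\bA$ using the expectation-matching hypothesis, and invoke the concentration hypothesis to show these expectations tend to zero. The slack $t_2 > t_1$ is what permits the construction of such a smooth sandwich.

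Fix intermediate radii $t_1 < a < b < t_2$, let $L$ be the Lipschitz constant of $g$, and set $\delta := \min(a-t_1,\; t_2-b)/L$. I would pick a smooth cutoff $\phi\colon \bbR \to [0,1]$ vanishing on $[c-a,\, c+a]$ and equal to $1$ outside $(c-b,\, c+b)$. Since $g$ is only Lipschitz, $\phi \circ g$ need not be twice differentiable, so I would regularize it by convolving with a standard mollifier $\rho_\delta \in C^\infty_c(\bbR)$ supported in $[-\delta,\delta]$, setting $\tlg := (\phi \circ g) \ast \rho_\delta$. Because $\phi \circ g$ is bounded by $1$ and $\|\rho_\delta''\|_{L^1} = O(\delta^{-2})$ with $\delta$ depending only on $t_1, t_2, L$, the function $\tlg$ is twice differentiable with a fixed uniform bound on its second derivative, hence admissible in the expectation-matching hypothesis.

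The Lipschitz property of $g$ propagates the vanishing and saturation of $\phi$ through the mollification: if $|g(x)-c| \leq t_1$, then for every $|u|\leq \delta$ one has $|g(x-u)-c| \leq t_1 + L\delta \leq a$, so $(\phi \circ g)(x-u) = 0$ throughout the support of $\rho_\delta$ and consequently $\tlg(x) = 0$; symmetrically, $|g(x)-c| \geq t_2$ forces $\tlg(x) = 1$. This yields the pointwise sandwich $\mathbf{1}_{\{|g-c|>t_2\}} \leq \tlg \leq \mathbf{1}_{\{|g-c|>t_1\}}$. By the expectation-matching hypothesis, $\bbE\tlg(\Phi_{\lambda,\epsilon}(\bA)) - \bbE\tlg(\Phi_{\lambda,\epsilon}(\bB)) \to 0$, while the concentration hypothesis yields $\bbE\tlg(\Phi_{\lambda,\epsilon}(\bB)) \leq \bbP(|g(\Phi_{\lambda,\epsilon}(\bB))-c|>t_1) \to 0$. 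Combining these two gives $\bbP(|g(\Phi_{\lambda,\epsilon}(\bA))-c|>t_2) \leq \bbE\tlg(\Phi_{\lambda,\epsilon}(\bA)) \to 0$, as required.

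The main obstacle is the tension between the $C^2$ smoothness required for the expectation-matching hypothesis and the mere Lipschitz regularity of $g$: the naive choice $\tlg = \phi \circ g$ has bounded first but unbounded second derivative in general. The mollification step resolves this cleanly, and the strict gap $t_2 > t_1$ is precisely what supplies a positive mollification width $\delta$ that preserves the sandwich; this is the structural reason why the conclusion cannot be stated with $t_2 = t_1$.
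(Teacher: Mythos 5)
Your proposal is correct, and it follows the same basic strategy as the paper's proof of Proposition~\ref{prop}: sandwich the indicator of the event $\{|g(\Phi_{\lambda,\epsilon}(\cdot))-c| > t_2\}$ between indicators at radii $t_1$ and $t_2$ via a smooth interpolating function with uniformly bounded second derivative, then transfer its expectation from $\bB$ to $\bA$ using the expectation-matching hypothesis and bound the $\bB$-side via concentration. The two treatments differ in how the interpolator is built. The paper first remarks that it suffices to treat $g = \mathrm{id}$ (arguing that Lipschitz $g$ propagates concentration of $\Phi(\bA)$ to concentration of $g(\Phi(\bA))$) and then uses a concrete piecewise-quadratic bump $\xi$ applied directly to $\Phi_{\lambda,\epsilon}(\cdot)-c$, invoking the expectation-matching hypothesis for this $\xi$, which is $C^1$ with bounded second derivative only almost everywhere. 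You instead mollify a smooth cutoff $\phi$ composed with $g$, obtaining a genuinely $C^\infty$ test function $\tlg = (\phi \circ g) \ast \rho_\delta$ with an explicit $O(\delta^{-2})$ second-derivative bound, and you propagate the sandwich through the convolution via the Lipschitz estimate $|g(x-u)-g(x)| \le L\delta$. This buys two things: it avoids the brief reduction to $g=\mathrm{id}$ (which implicitly uses the hypothesis for the identity map, and whose wording in the paper is slightly circular), and it sidesteps any worry about applying the expectation-matching hypothesis to a function that is only piecewise $C^2$. Your observation that the slack $t_2 - t_1$ is exactly what gives room for a positive mollification width $\delta$, and hence why the statement cannot hold with $t_2 = t_1$, matches the role that the transition band plays in the paper's $\xi$.
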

Thus it suffices to prove $\lim_{n\rightarrow \infty}\Bigl|\bbE_{\bA,\by}  g(\Phi_\lambda(\bB)) - \bbE_{\bB,\by}g(\Phi_\lambda(\bA))\Bigr| = 0$ for every twice differentiable $g$ with bounded second derivative. As stated earlier, we proceed by Lindeberg's approach. Fixing $\lambda$ and $\epsilon$, to simplify notation we use $\Phi(\bA)$ for $\Phi_{\lambda,\epsilon}(\bA)$. For $0\le j \le n$, let $q\in \bbN$ be the largest such that $j \ge \sum_{l=1}^q n_l$, and let $r = j -  \sum_{l=1}^q n_l$ if , then
\begin{align*}
    \hbA_j =\begin{cases}
        \begin{bmatrix}
        \ba_{1,1} & \ba_{1,2} & \hdots & \ba_{q,r} & \bb_{q ,r+1} & \hdots & \bb_{k,n_k}
    \end{bmatrix}^T & r<n_{q+1} \\
    \begin{bmatrix}
       \ba_{1,1} &\ba_{1,2} & \hdots &\ba_{q,r} & \bb_{q+1 ,1} & \hdots & \bb_{k,n_k}
    \end{bmatrix}^T & r=n_{q+1}
    \end{cases} 
\end{align*}
It follows that $\bA = \hbA_0$ and $\bB = \hbA_n$. Then we have by a telescopic sum
\begin{align*}
     \Bigl|\bbE_{\bA,\by} g(\Phi(\bA)) - \bbE_{\bB,\by}g(\Phi(\bB))\Bigr| &=  \Bigl|\bbE_{\bA,\bB,\by} \sum_{j=0}^{n-1}  g(\Phi(\hbA_j)) - g(\Phi(\hbA_{j-1}))\Bigr| \\ & \le \sum_{j=0}^{n-1} \Bigl|  \bbE_{\hbA_j,\by} g(\Phi(\hbA_j)) - \bbE_{\hbA_{j-1},\by} g(\Phi(\hbA_{j-1}))\Bigr|
\end{align*}
Now we define a new matrix by dropping the $j$'th row.
\begin{align*}
    \tlbA_j =\begin{cases}
        \begin{bmatrix}
        \ba_{1,1} & \ba_{1,2} & \hdots & \ba_{q,r-1} & \bb_{q ,r+1} & \hdots & \bb_{k,n_k}
    \end{bmatrix}^T & r<n_{q+1}\\
    \begin{bmatrix}
        \ba_{1,1} & \ba_{1,2} & \hdots & \ba_{q,r-1} & \bb_{q+1 ,1} & \hdots & \bb_{k,n_k}
    \end{bmatrix}^T & r=n_{q+1}
    \end{cases} 
\end{align*}
Note that
\begin{align*}
     \Phi(\hbA_j) = \frac{1}{n} \min_{w\in \calS_w}  \frac{\lambda}{2} \|\hbA_j\bw - \by\|^2 &+ f(\bw) + \epsilon \psi(\bw)  \\ = \frac{1}{n} \min_{\bw\in \calS_w}  \frac{\lambda}{2} \|\tlbA_j \bw- \tlby\|^2 + & \frac{\lambda}{2} (\ba_j^T \bw- y_j)^2  + f(\bw) + \epsilon \psi(\bw) =: \calM(\ba_j)  \\
       \Phi(\hbA_{j-1}) = \frac{1}{n} \min_{w\in \calS_w}  \frac{\lambda}{2} \|\hbA_j\bw- \by\|^2 &+ f(\bw) + \epsilon \psi(\bw)  \\ =\frac{1}{n} \min_{\bw\in \calS_w}  \frac{\lambda}{2} \|\tlbA_j \bw- \tlby\|^2 + &\frac{\lambda}{2} (\bb_j^T \bw- y_j)^2  + f(\bw) + \epsilon \psi(\bw) =: \calM(\bb_j) 
\end{align*}
And we define $\calM(\ba) := \min_{\bw\in \calS_w} \mscm(\ba,\bw)$. So, we need to bound $ \Bigl| \bbE_{\hbA_j,\by} g(\Phi(\hbA_j)) - \bbE_{\hbA_{j-1},\by} g(\Phi(\hbA_{j-1}))\Bigr|$, to do so, we condition on $\tlbA_j$:
\begin{align*}
    \Bigl| \bbE_{\hbA_j,\by} g(\Phi(\hbA_j)) - \bbE_{\hbA_{j-1},\by} g(\Phi(\hbA_{j-1}))\Bigr| = \biggl|\bbE_{\tlbA_j,\tlby}\bbE_{\ba_j, \bb_j, y_j}   \biggl[ g(\calM(\ba_j)) -g(\calM(\bb_j)) \biggl| \tlbA_j, \tlby \biggr]\biggr|
\end{align*}
Let us define the optimization whose terms are shared by both of $\calM(\ba_j)$ and $\calM(\bb_j)$
\begin{align*}
    \Theta:= \frac{1}{n}  \min_{\bw\in \calS_w} \frac{\lambda}{2} \|\tlbA_j \bw- \tlby\|^2   + f(\bw) + \epsilon \psi(\bw)
\end{align*}
Now note that since the second derivative of $g$ is bounded, we have
\begin{align} \label{ineq: bndsecder}
    \Bigl|g(\Theta) -g(\calM(\ba_j)) -g'(\Theta) (\calM(\ba_j)-\Theta)\Bigr| \le \|g''\|_{\infty}  (\Theta-\calM(\ba_j))^2 \nonumber \\
    \Bigl|g(\Theta) -g(\calM(\bb_j)) -g'(\Theta) (\calM(\bb_j)-\Theta)\Bigr| \le \|g''\|_{\infty}  (\Theta-\calM(\bb_j))^2 
\end{align}
Adding and subtracting $g(\Theta)$ and taking expectation and moving inside, we have
\begin{align*}
    \biggl|\bbE_{\tlbA_j,\tlby}\bbE_{\ba_j, \bb_j, y_j}  & \biggl[ g(\calM(\ba_j)) -g(\calM(\bb_j)) \biggl| \tlbA_j, \tlby \biggr]\biggr| \\ &\le \biggl|\bbE_{\tlbA_j,\tlby}\bbE_{\ba_j, \bb_j, y_j}   \biggl[ g(\Theta) -g(\calM(\ba_j)) -g'(\Theta) (\calM(\ba_j)-\Theta) \biggl| \tlbA_j, \tlby \biggr]\biggr| \\ &+ \biggl|\bbE_{\tlbA_j,\tlby}\bbE_{\ba_j, \bb_j, y_j}   \biggl[ g(\Theta) -g(\calM(\bb_j)) -g'(\Theta) (\calM(\bb_j)-\Theta) \biggl| \tlbA_j, \tlby \biggr]\biggr| \\ & + \biggl|\bbE_{\tlbA_j,\tlby}\bbE_{\ba_j, \bb_j, y_j}   \biggl[g'(\Theta) (\calM(\ba_j)-\calM(\bb_j)) \biggl| \tlbA_j, \tlby \biggr]\biggr|
\end{align*}
Using \ref{ineq: bndsecder}, and by the independence of $\Theta$ and $\ba_j, \bb_j, y_j$, we have
\begin{align}\label{eq: trineqcond}
    \biggl|\bbE_{\tlbA_j,\tlby}\bbE_{\ba_j, \bb_j, y_j}  \biggl[ g(\calM(\ba_j)) &-g(\calM(\bb_j)) \biggl| \tlbA_j, \tlby \biggr]\biggr|  \le \biggl|\bbE_{\tlbA_j,\tlby} g'(\Theta) \bbE_{\ba_j, \bb_j, y_j}   \biggl[ (\calM(\ba_j)-\calM(\bb_j)) \biggl| \tlbA_j, \tlby \biggr]\biggr| \nonumber \\ &+ \|g''\|_{\infty}  \bbE_{\tlbA_j,\tlby,\ba_j, y_j}  (\Theta-\calM(\ba_j))^2 + \|g''\|_{\infty}  \bbE_{\tlbA_j,\tlby,\bb_j, y_j}  (\Theta-\calM(\bb_j))^2 \nonumber \\
    &\le \bbE_{\tlbA_j,\tlby} |g'(\Theta)| \biggl|\bbE_{\ba_j, \bb_j, y_j}   \biggl[ (\calM(\ba_j)-\calM(\bb_j)) \biggl| \tlbA_j, \tlby \biggr] \biggr| \nonumber \\ & + \|g''\|_\infty \Bigl( \bbE_{\tlbA_j,\tlby,\ba_j, y_j}  (\Theta-\calM(\ba_j))^2 + \bbE_{\tlbA_j,\tlby,\bb_j, y_j}  (\Theta-\calM(\bb_j))^2 \Bigr) \nonumber \\ &\le \|g'\|_\infty \bbE_{\tlbA_j,\tlby} \biggl|\bbE_{\ba_j, \bb_j, y_j}   \biggl[ (\calM(\ba_j)-\calM(\bb_j)) \biggl| \tlbA_j, \tlby \biggr] \biggr| \nonumber \\ &+ \|g''\|_\infty \Bigl( \bbE_{\tlbA_j,\tlby,\ba_j, y_j}  (\Theta-\calM(\ba_j))^2 + \bbE_{\tlbA_j,\tlby,\bb_j, y_j}  (\Theta-\calM(\bb_j))^2 \Bigr)
\end{align}

% Then by triangle inequality 
% \begin{align*}
%      |\bbE_{\ba_j, \bb_j,y_j}\calM(\ba_j) - \calM(\bb_j) | \le   |\bbE_{\ba_j,y_j} \calM(\ba_j) - \tlc| +  | \bbE_{\bb_j,y_j} \calM(\bb_j) - \tlc|
% \end{align*}
% Where $\tlc$ will be determined later on.

Thus we focus on $\calM(\ba_j), \calM(b_j)$ conditioned on $\tlbA_j$. From now on, we drop the index $j$ from $\tlbA_j$, $\ba_j$, $\bb_j$, $y_j$. Intuitively, we show that by dropping $(\ba^T \bw-y)^2$ and $(\bb^T \bw- y)^2 $ and approximating $f, \psi$ by their second-order Taylor expansion,  the objective values $\Phi(\hbA_j), \Phi(\hbA_{j-1})$ would not change much. Which implies $\Phi(\hbA_j)$ and $\Phi(\hbA_j)$ are also close in value. 

Let $u:= \argmin \frac{\lambda}{2} \|\tlbA \bw- \tlby\|^2 + f(\bw) + \epsilon \psi(\bw)$ and it is unique because of strong convexity. Now we use the second order Taylor expansion of $f + \epsilon \psi$:
\begin{align*}
    f(\bw) + \epsilon \psi(\bw) = f(\bu) + \epsilon \psi(\bu) &+  \nabla(f(\bu) + \epsilon \psi(\bu))^T (\bw-\bu) \\ & + \frac{1}{2} (\bw-\bu)^T \nabla^2(f(\bu) + \epsilon \psi(\bu)) (\bw-\bu) + R(\bw-\bu)
\end{align*}
Where $\lim_{\bw\rightarrow \bu} \frac{R(\bw-\bu)}{\|\bw-\bu\|^2}=0$. Let $\bg:= \nabla(f(\bu) + \epsilon \psi(\bu))$, $\bH:=  \nabla^2(f(\bu) + \epsilon \psi(\bu))$. Moreover, let
\begin{align*}
    \calL(\ba) := \min_{\bw\in\calS_w} \mscl(\ba,\bw) := \frac{1}{n} \min_{\bw\in \calS_w}  \frac{\lambda}{2} \|\tlbA \bw- \tlby\|^2 &+ \frac{\lambda}{2} (\ba^T \bw- y)^2 \\ & + f(\bu) + \epsilon \psi(\bu) + \bg ^T (\bw-\bu)  +  \frac{1}{2} (\bw-\bu)^T \bH (\bw-\bu)
\end{align*}
Note that $\ba$ and $\bb$ match in distribution up to the second moment, which implies that 
\begin{align*}
    \bbE_{\ba,y} (\ba^T \bu -y)^2 &= \bbE_{\bb,y} (\bb^T \bu -y)^2 \\
    \bbE_\ba \ba^T \bOmega^{-1}\ba &=\bbE_\bb \bb^T \bOmega^{-1}\bb
\end{align*}
By triangle inequality we obtain
\begin{align} \label{eq: ineqchain}
    &\bbE_{\tlbA,\tlby} \biggl|\bbE_{\ba, \bb, y}   \biggl[ (\calM(\ba)-\calM(\bb)) \biggl| \tlbA, \tlby \biggr] \biggr| \nonumber \\ &\le \bbE_{\tlbA,\tlby} \biggl|\bbE_{\ba, y}   \biggl[ (\calM(\ba)-\calL(\ba)) \biggl| \tlbA, \tlby \biggr] \biggr| + \bbE_{\tlbA,\tlby} \biggl|\bbE_{\bb, y}   \biggl[ (\calM(\bb)-\calL(\bb)) \biggl| \tlbA, \tlby \biggr] \biggr| \nonumber \\ &+   \bbE_{\tlbA,\tlby} \biggl|\bbE_{\ba, y} \biggl[ \calL(\ba) - \Theta - \frac{\lambda}{2n} \frac{\bbE_{\ba,y} (\ba^T \bu -y)^2 }{1+\lambda \bbE_\ba \ba^T \bOmega^{-1}\ba} \biggl| \tlbA, \tlby \biggr] \biggr| + \bbE_{\tlbA,\tlby} \biggl|\bbE_{\bb, y} \biggl[ \calL(\bb) - \Theta - \frac{\lambda}{2n} \frac{\bbE_{\bb,y} (\bb^T \bu -y)^2 }{1+\lambda \bbE_\bb \bb^T \bOmega^{-1}\bb} \biggl| \tlbA, \tlby \biggr] \biggr| \nonumber \\ &\le \bbE_{\tlbA, \tlby, \ba,y} | \calM(\ba) - \calL(\ba)| + \bbE_{\tlbA, \tlby, \bb, y} | \calM(\bb) - \calL(\bb)| \nonumber \\ &+ \bbE_{\tlbA,\tlby} \biggl|\bbE_{\ba, y} \biggl[ \calL(\ba) - \Theta - \frac{\lambda}{2n} \frac{(\ba^T \bu -y)^2 }{1+\lambda \bbE_\ba \ba^T \bOmega^{-1}\ba} \biggl| \tlbA, \tlby \biggr] \biggr| +  \bbE_{\tlbA,\tlby} \biggl|\bbE_{\bb, y} \biggl[ \calL(\bb) - \Theta - \frac{\lambda}{2n} \frac{(\bb^T \bu -y)^2 }{1+\lambda \bbE_\bb \bb^T \bOmega^{-1}\bb} \biggl| \tlbA, \tlby \biggr] \biggr| \nonumber \\ &\le \bbE_{\tlbA, \tlby, \ba,y} | \calM(\ba) - \calL(\ba)| + \bbE_{\tlbA, \tlby, \bb, y} | \calM(\bb) - \calL(\bb)| \nonumber \\ &+ \frac{\lambda}{2n} \bbE_{\tlbA, \tlby, \ba ,y} \biggl| \frac{(\ba^T \bu - y)^2}{1+\lambda \ba^T \bOmega^{-1}\ba} - \frac{ (\ba^T \bu -y)^2 }{1+\lambda \bbE_\ba \ba^T \bOmega^{-1}\ba} \biggr| + \frac{\lambda}{2n} \bbE_{\tlbA, \tlby, \bb ,y} \biggl| \frac{(\bb^T \bu - y)^2}{1+\lambda \bb^T \bOmega^{-1}\bb} - \frac{ (\bb^T \bu -y)^2 }{1+\lambda \bbE_\bb \bb^T \bOmega^{-1}\bb} \biggr|
\end{align}
First we will provide an upper bound for the third and fourth terms in \ref{eq: ineqchain}. As the function $x \mapsto \frac{1}{1+x}$ is 1-Lipschitz and using Cauchy Schwartz we arrive at
\begin{align}\label{ineq: var}
     \frac{\lambda}{2n} \bbE_{\tlbA, \tlby, \ba ,y} \biggl| \frac{(\ba^T \bu - y)^2}{1+\lambda \ba^T \bOmega^{-1}\ba} - \frac{ (\ba^T \bu -y)^2 }{1+\lambda \bbE_\ba \ba^T \bOmega^{-1}\ba} \biggr| &\le \frac{\lambda^2}{2n} \bbE_{\tlbA, \tlby, \ba,y}\Bigl|(\ba^T \bu -y)^2 (\ba^T \bOmega^{-1}\ba - \bbE_\ba\ba^T \bOmega^{-1}\ba)\Bigr| \\ &\le \frac{\lambda^2}{2n} \sqrt{\bbE_{\tlbA, \tlby, \ba,y}(\ba^T \bu - y)^4 \bbE_{\tlbA, \tlby, \ba,y}(\ba^T \bOmega^{-1}\ba - \bbE_\ba\ba^T \bOmega^{-1}\ba)^2 }
\end{align}
By Lemma \ref{lem: furtherbnd}, there exists a constant $C_1 < \infty$ such that $\bbE_{\tlbA, \tlby, \ba,y}(\ba^T \bu - y)^4< C_1$ for any $n$. Furthermore, by the assumptions, $\lim_{n\rightarrow \infty}\bbE_{\tlbA, \tlby, \ba,y}(\ba^T \bOmega^{-1}\ba - \bbE_\ba\ba^T \bOmega^{-1}\ba)^2 = 0$.

Now for the first and second terms in \eqref{eq: ineqchain}, using Lemma \ref{lem: bnd1stterm}, we obtain with probability 1,
\begin{align*}
    | \calM(\ba) - \calL(\ba)| &\le   \frac{8C_{f+ \epsilon \psi}}{n} \|\bu-\bw_\calL\|_3^3\mathds{1}\{\|\bu-\bw_\calL\|_3 \le \frac{\rho}{18C_{f+ \epsilon \psi}}\}  \\ & + \frac{\lambda}{2n} (\ba^T \bu -y)^2 \mathds{1}\{\|\bu-\bw_\calL\|_3 > \frac{\rho}{18C_{f+ \epsilon \psi}}\}
\end{align*}
Which implies
\begin{align*}
    \bbE_{\tlbA, \tlby, \ba,y} | \calM(\ba) - \calL(\ba)| \le &   \frac{8C_{f+ \epsilon \psi}}{n} \bbE_{\tlbA, \tlby, \ba,y} \biggl[ \|\bu-\bw_\calL\|_3^3\mathds{1}\{\|\bu-\bw_\calL\|_3 \le \frac{\rho}{18C_{f+ \epsilon \psi}}\} \biggr] \\ &+ \frac{\lambda}{2n} \bbE_{\tlbA, \tlby, \ba,y} \biggl[ \ (\ba^T \bu -y)^2 \mathds{1}\{\|\bu-\bw_\calL\|_3 > \frac{\rho}{18C_{f+ \epsilon \psi}}\} \biggr]
\end{align*}
We apply Cauchy-Schwarz to each term:
\begin{align*}
     \bbE_{\tlbA, \tlby, \ba,y} | \calM(\ba) - \calL(\ba)|  &\le \frac{8C_{f+ \epsilon \psi}}{n} \bbE_{\tlbA, \tlby, \ba,y} \|\bu-\bw_\calL\|_3^3 \\ & +\frac{\lambda}{2n}  \sqrt{\bbE_{\tlbA, \tlby, \ba,y} \mathds{1}^2\{\|\bu-\bw_\calL\|_3 > \frac{\rho}{18C_{f+ \epsilon \psi}}\} \bbE_{\tlbA, \tlby, \ba,y} (\ba^T \bu -y)^4 }  \\ &=  \frac{8C_{f+ \epsilon \psi}}{n} \bbE_{\tlbA, \tlby, \ba,y} \|\bu-\bw_\calL\|_3^3 \\ &+\frac{\lambda}{2n}  \sqrt{\bbP_{\tlbA, \tlby, \ba,y}( \|\bu-\bw_\calL\|_3 > \frac{\rho}{18C_{f+ \epsilon \psi}}) \bbE_{\tlbA, \tlby, \ba,y} (\ba^T \bu -y)^4 } 
\end{align*}
To deal with the terms involving $\bbP$, we leverage Markov's inequality
\begin{align*}
    \bbE_{\tlbA, \tlby, \ba,y} | \calM(\ba) - \calL(\ba)| \le & \frac{8C_{f+ \epsilon \psi}}{n} \bbE_{\tlbA, \tlby, \ba,y} \|\bu-\bw_\calL\|_3^3 \\ &+\frac{\lambda}{2n} \bigl(\frac{18C_{f+ \epsilon \psi}}{\rho}\bigr)^{3/2}  \sqrt{ \bbE_{\tlbA, \tlby, \ba,y}\|\bu-\bw_\calL\|_3^3 \bbE_{\tlbA, \tlby, \ba,y} (\ba^T \bu -y)^4 }
\end{align*}
By Lemma \ref{lem: furtherbnd}, there exists a $C_2 < \infty$ such that $\bbE_{\tlbA, \tlby, \ba,y}\|\bu-\bw_\calL\|_3^3 \le \frac{C_2}{d}$, and recall that $\bbE_{\tlbA, \tlby, \ba,y}(\ba^T \bu - y)^4< C_1$ for any $n$. Thus
\begin{align}\label{ineq: lama}
    \bbE_{\tlbA, \tlby, \ba,y} | \calM(\ba) - \calL(\ba)|  \le \frac{8C_2C_{f+ \epsilon \psi}}{nd} + \frac{\lambda \sqrt{C_1}}{2n\sqrt{d}} \bigl(\frac{18C_{f+ \epsilon \psi}}{\rho}\bigr)^{3/2} \le \frac{C_a}{n \sqrt{d}}
\end{align}
For some constant $\tlC = \frac{\lambda \sqrt{C_1}}{2} \bigl(\frac{18C_{f+ \epsilon \psi}}{\rho}\bigr)^{3/2}+ 8C_2C_{f+ \epsilon \psi}$. Plugging \eqref{ineq: var} and \eqref{ineq: lama} in \eqref{eq: ineqchain}, yields
\begin{align*}
    \bbE_{\tlbA,\tlby} \biggl|\bbE_{\ba, \bb, y}   \biggl[ (\calM(\ba)-\calM(\bb)) \biggl| \tlbA, \tlby \biggr] \biggr| \le \frac{\tlC + \tlC'}{n \sqrt{d}} + \frac{\lambda^2 \tlC}{2n} \sqrt{\bbE_{\tlbA,\tlby} Var_\ba(\ba^T \bOmega^{-1}\ba)} +  \frac{\lambda^2 \tlC'}{2n}  \sqrt{\bbE_{\tlbA,\tlby} Var_\bb(\bb^T \bOmega^{-1}\bb)}
\end{align*}

% Combining all the previous results, we observe that $ | \bbE_{\ba,y} \calM(\ba) - \calL(\ba)|$ and $| \bbE_{\ba,y} \calL(\ba) - \Theta - \frac{\lambda}{2n} \frac{\bbE_{a,y} (\ba^T \bu -y)^2 }{1+\lambda \bbE_\ba \ba^T \bOmega^{-1}\ba}|$ have $O(\frac{1}{d^{1+s}})$ growth for some $s>0$. This implies that $| \bbE_{\ba, \bb,y} \calM(\ba) - \calM(\bb) | = O(\frac{1}{d^{1+s}})$, thus we have
% \begin{align*}
%     \biggl|\bbE_{\tlbA,\tlby} g'(\Theta) \bbE_{\ba, \bb, y}   \biggl[ (\calM(\ba)-\calM(\bb)) \biggl| \tlbA, \tlby \biggr]\biggr| \le \frac{C}{d^{1+s}} \biggl|\bbE_{\tlbA,\tlby} g'(\Theta) \biggr| \le \frac{C |g'|}{d^{1+s}}
% \end{align*}
% What we want to upper bound is
% \begin{align} \label{ineq: triangle}
%       |\bbE_{\ba, \bb,y}  \calM(\ba) - \calM(\bb) | \le |  \bbE_{\ba,y} \calM(\ba) - \calL(\ba)| +  |\bbE_{\bb,y} \calM(\bb) - \calL(\bb)| +  | \bbE_{\ba,y} \calL(\ba) - \Theta - c| +  | \bbE_{\bb,y}\calL(\bb) - \Theta - c|
% \end{align}
% We focus on $ | \bbE_{\ba,y} \calM(\ba) - \calL(\ba)|$ and $ | \bbE_{\ba,y} \calL(\ba) - \Theta - c|$.
% We take $c:= \frac{\lambda}{2n} \frac{\bbE_{a,y} (\ba^T \bu -y)^2 }{1+\lambda \bbE_\ba \ba^T \bOmega^{-1}\ba}$. We would like to upper bound the terms in \ref{ineq: triangle} separately. For this purpose, we use the lemmata \ref{lem: stcon} and \ref{lem: reg} described here.

For the other terms in (\ref{eq: trineqcond}), we have by Lemmas \ref{lem: auxbnd}, \ref{lem: furtherbnd}:
\begin{align*}
    \|g''\|_{\infty}  \bbE_{\tlbA,\tlby,\ba, y}  (\Theta-\calM(\ba))^2 + & \|g''\|_{\infty}  \bbE_{\tlbA,\tlby,\bb, y}  (\Theta-\calM(\bb))^2 \\ &\le \|g''\|_{\infty} \bbE_{\tlbA,\tlby,\ba, y} \frac{\lambda^2}{4n^2} (\ba^T \bu - y)^2  + \|g''\|_{\infty} \bbE_{\tlbA,\tlby,\bb, y} \frac{\lambda^2}{4n^2} (\bb^T \bu - y)^2 \\ &\le \frac{\lambda^2\|g''\|_{\infty}(\tlC_1+\tlC'_1)}{4n^2}
\end{align*}
Plugging in \eqref{eq: trineqcond}, we obtain
\begin{align*}
    \biggl|\bbE_{\tlbA,\tlby}&\bbE_{\ba, \bb, y}  \biggl[ g(\calM(\ba)) -g(\calM(\bb)) \biggl| \tlbA, \tlby \biggr]\biggr| \\ &\le \|g'\|_\infty \biggl(\frac{\tlC + \tlC'}{n \sqrt{d}} + \frac{\lambda^2 \tlC}{2n} \bbE_{\tlbA,\tlby}  Var_\ba(\ba^T \bOmega^{-1}\ba) +  \frac{\lambda^2 \tlC'}{2n} \bbE_{\tlbA,\tlby} Var_\bb(\bb^T \bOmega^{-1}\bb)\biggr) +\frac{\lambda^2\|g''\|_{\infty}(\tlC+\tlC')}{4n^2}
\end{align*}
Therefore,
\begin{align*}
    &\Bigl|\bbE_{\bA,\by} g(\Phi(\bA)) - \bbE_{\bB,\by}g(\Phi(\bB))\Bigr| \le \sum_{j=0}^{n-1} \Bigl|  \bbE_{\hbA_j,\by} g(\Phi(\hbA_j)) - \bbE_{\hbA_{j-1},\by} g(\Phi(\hbA_{j-1}))\Bigr| \\ &\le \sum_{j=0}^{n-1} \|g'\|_\infty \biggl(\frac{\tlC + \tlC'}{n \sqrt{d}} + \frac{\lambda^2 \tlC}{2n} \bbE_{\tlbA_j,\tlby_j}  Var_{\ba_j}(\ba_j^T \bOmega^{-1}\ba_j) +  \frac{\lambda^2 \tlC'}{2n} \bbE_{\tlbA_j,\tlby_j} Var_{\bb_j}(\bb_j^T \bOmega^{-1}\bb_j)\biggr) +\frac{\lambda^2\|g''\|_{\infty}(\tlC+\tlC')}{4n^2} \\ = & \frac{\|g'\|_\infty(\tlC + \tlC')}{ \sqrt{d}} + \frac{\lambda^2 \|g''\|_\infty(\tlC + \tlC')}{4n} \\ & + \sum_{j=0}^{n-1} \|g'\|_\infty \biggl( \frac{\lambda^2 \tlC}{2n} \sqrt{\bbE_{\tlbA_j,\tlby_j}  Var_{\ba_j}(\ba_j^T \bOmega^{-1}\ba_j)} +  \frac{\lambda^2 \tlC'}{2n} \sqrt{\bbE_{\tlbA_j,\tlby_j} Var_{\bb_j}(\bb_j^T \bOmega^{-1}\bb_j)}\biggr)
\end{align*}
Now note that since $\|\bOmega\|_{op} \le \frac{1}{\rho}$ with probability 1, $\lim_{n \rightarrow \infty} Var_{\ba_j}(\ba_j^T \bOmega^{-1}\ba_j)=0$ from assumptions, thus for every class $[j]$, there exists a function $\zeta_{[j]}(n)$ that $Var_{\ba_j}(\ba_j^T \bOmega^{-1}\ba_j) \le  \zeta_{[j]}^2(n)$ for every $n \in \bbN$ and $\lim_{n \rightarrow \infty} \zeta_{[j]}(n) = 0$. Thus
\begin{align*}
    \Bigl|\bbE_{\bA,\by} g(\Phi(\bA)) - \bbE_{\bB,\by}g(\Phi(\bB))\Bigr| &\le \frac{\|g'\|_\infty(\tlC + \tlC')}{ \sqrt{d}} + \frac{\lambda^2 \|g''\|_\infty(\tlC + \tlC')}{4n} + \lambda^2 \|g'\|_\infty (\tlC+\tlC')  \sum_{i=1}^k   \frac{n_i}{2n} \zeta_i(n) \\ & \le \frac{\|g'\|_\infty(\tlC + \tlC')}{ \sqrt{d}} + \frac{\lambda^2 \|g''\|_\infty(\tlC + \tlC')}{4n} +\lambda^2 \|g'\|_\infty (\tlC+\tlC') \max_{1\le i\le k} \frac{n_i}{2n} \zeta_i(n)
\end{align*}
Now since $k$ is finite and $n_i = \Theta(n)$ for every $1\le i\le k$, we have that $\lim_{n \rightarrow \infty} \max_{1\le i\le k} \frac{n_i}{2n} \zeta_i(n) = 0$. Thus for every $\lambda, \epsilon>0$
\begin{align*}
    \lim_{n \rightarrow \infty} \Bigl|\bbE_{\bA,\by} g(\Phi_{\lambda, \epsilon}(\bA)) - \bbE_{\bB,\by}g(\Phi_{\lambda, \epsilon}(\bB))\Bigr| = 0
\end{align*}
and the proof of part 1 of Theorem \ref{thm: main_univ1} is concluded. 
\subsection{Proof of part 1 of Theorem \ref{thm: main_univ2}}

Now to prove the part 1 of Theorem \ref{thm: main_univ2}, we will combine the following lemma with the previous section. For the ease of notation, we drop the dependency on $\epsilon$. Recall that since $f$ is $\rho$-strongly convex, we can write
\begin{align*}
     \Phi^n_\lambda(A) = \min_\bw \frac{\lambda}{2} \|\bA \bw - \by\|_2^2 + \frac{\rho}{2} \|\bw\|_2^2 + h(\bw)
\end{align*}
For some convex $h: \bbR^{d} \rightarrow \bbR$.
\begin{lemma}
    If for every $\lambda > 0$, we have $ \Phi^n_\lambda(A) \rarrowp c_{\lambda}$, then for every Lipschitz function $g:\bbR \rightarrow \bbR$, we have $g(\sup_{\lambda > 0}  \Phi^n_\lambda(A)) \rarrowp g(\sup_{\lambda > 0} c_\lambda ) $ and $g(\sup_{\lambda > 0}  \Phi^n_\lambda(B)) \rarrowp g(\sup_{\lambda > 0} c_\lambda) $
\end{lemma}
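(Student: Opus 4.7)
The plan is to exchange the limits $n \to \infty$ and $\lambda \to \infty$ by combining the pointwise (in $\lambda$) convergence in probability with monotonicity of the objective and a uniform tail estimate in $\lambda$. Observe first that $\Phi_\lambda^n(\bA)$ is the pointwise supremum of affine functions of $\lambda$, hence convex and nondecreasing in $\lambda$, and $\sup_{\lambda > 0} \Phi_\lambda^n(\bA) = \lim_{\lambda \to \infty} \Phi_\lambda^n(\bA) = \Phi^n(\bA)$. The same monotonicity is inherited by $c_\lambda$ as a probability limit of a monotone family, so $c_\infty := \sup_{\lambda>0} c_\lambda$ is well defined; moreover $c_\lambda \le c_\infty$ for every $\lambda$.

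The key technical step is a uniform rate: I would show that there exists a deterministic $\epsilon(\lambda)$ with $\epsilon(\lambda)\to 0$ as $\lambda\to\infty$, independent of $n$, such that
\begin{equation*}
0 \le \Phi^n(\bA) - \Phi_\lambda^n(\bA) \le \epsilon(\lambda)
\end{equation*}
on a high-probability event. Let $\bw_\lambda$ minimize $\Phi_\lambda$. The first-order condition gives $\lambda \bA^T(\bA\bw_\lambda - \by) = -\nabla f(\bw_\lambda)$, so using Assumption 7 on $\|\nabla f(\bw_\lambda)\|_2 \le K_f\sqrt{n}$ and Assumption 5 on $s_{\min}(\bA\bA^T) = \Omega(1)$ with high probability, we obtain $\|\bA\bw_\lambda - \by\|_2^2 = O(n/\lambda^2)$. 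Then $\bw^\# := \bw_\lambda + \bA^T(\bA\bA^T)^{-1}(\by - \bA\bw_\lambda)$ is feasible for the constrained problem defining $\Phi^n(\bA)$ and satisfies $\|\bw^\# - \bw_\lambda\|_2 = O(\sqrt{n}/\lambda)$. Using local Lipschitz-type control of $\nabla f$ (cf. Remark \ref{rem: gradcond}) along the segment from $\bw_\lambda$ to $\bw^\#$ yields $|f(\bw^\#) - f(\bw_\lambda)|/n = O(1/\lambda)$. Hence $\Phi^n(\bA) \le f(\bw^\#)/n \le f(\bw_\lambda)/n + O(1/\lambda) \le \Phi_\lambda^n(\bA) + O(1/\lambda)$, so one may take $\epsilon(\lambda) = O(1/\lambda)$.

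With this uniform estimate in hand, the rest is standard. Fix $\eta > 0$. Choose $\lambda^*$ large enough that $\epsilon(\lambda^*) < \eta/3$ and $c_\infty - c_{\lambda^*} < \eta/3$. By the hypothesis, $\bbP(|\Phi_{\lambda^*}^n(\bA) - c_{\lambda^*}| > \eta/3) \to 0$. Combined with the uniform estimate on the high-probability event, the triangle inequality yields $\bbP(|\Phi^n(\bA) - c_\infty| > \eta) \to 0$, i.e.\ $\Phi^n(\bA) \rarrowp c_\infty$. Since $g$ is $L$-Lipschitz, $|g(\Phi^n(\bA)) - g(c_\infty)| \le L\,|\Phi^n(\bA) - c_\infty| \rarrowp 0$. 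The same argument applies verbatim with $\bB$ in place of $\bA$, since the only facts used about $\bA$ are the pointwise probability convergence (which holds for $\bB$ by hypothesis) and Assumptions \ref{assum: explicit}, which are symmetric in $\bA$ and $\bG$.

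The main obstacle is the uniform rate: pointwise convergence is insufficient because the limits $n \to \infty$ and $\lambda \to \infty$ do not commute in general. Assumption 7, which guarantees $\|\nabla f(\bw_{\Phi_\lambda(\bA)})\|_2 = O(\sqrt{n})$ uniformly in $\lambda$, is essential here; without it the optimality residual would not shrink at a $\lambda$-uniform rate, and the feasibility correction $\bw^\#$ would not be close to $\bw_\lambda$ in the regime of interest. The lower bound on $s_{\min}(\bA\bA^T)$ plays the complementary role of ensuring the pseudoinverse has bounded operator norm so that $\|\bw^\# - \bw_\lambda\|_2$ is controlled by $\|\bA\bw_\lambda - \by\|_2$.
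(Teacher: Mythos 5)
Your overall strategy is sound and the reduction you use — bounding $\Phi^n(\bA) - \Phi_\lambda^n(\bA)$ uniformly in $n$ by a quantity that vanishes as $\lambda\to\infty$, then a three-term triangle inequality at a single large $\lambda^*$ — is correct and, in fact, cleaner than the paper's final step. The paper bounds $|\sup_{0<\lambda<R}\Phi_\lambda^n - \sup_{0<\lambda<R}c_\lambda|$ via $\sup_{0<\lambda<R}|\Phi_\lambda^n - c_\lambda|$ and then invokes concavity of $\Phi_\lambda^n$ together with the Convexity Lemma of \cite{liese2008statistical} to get uniform convergence on $[0,R]$. You avoid all of that by exploiting monotonicity directly: $\sup_{0<\lambda<R}\Phi_\lambda^n = \Phi_R^n$ and $\sup_{0<\lambda<R}c_\lambda = c_R$, so only pointwise convergence at $\lambda^* = R$ is needed. (Note one minor slip: $\Phi_\lambda^n$ is an infimum, not a supremum, of affine functions of $\lambda$, so it is concave, not convex; you only use monotonicity, so this is cosmetic.)

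The gap is in your route to the uniform-in-$n$ estimate. You project $\bw_\lambda$ onto the constraint set to get the feasible $\bw^\#$ and then need $|f(\bw^\#) - f(\bw_\lambda)|/n = O(1/\lambda)$. That requires controlling $\|\nabla f\|$ along the segment from $\bw_\lambda$ to $\bw^\#$, i.e.\ at points that are \emph{not} the penalized minimizer, whereas Assumption~\ref{assum: explicit}(7) only controls $\|\nabla f(\bw_{\Phi_\lambda(\bA)})\|_2$. You handle this by appealing to the locally Lipschitz gradient condition $\|\nabla f(\bu)\|_2 \le K(1+\|\bu\|_2)$ from Remark~\ref{rem: gradcond}, but that remark presents a sufficient condition for Assumption~(7), not an assumption of the theorem. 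Under Assumption~(7) as stated, your estimate does not follow. The paper's route avoids this: by Danskin's theorem, $\Phi^n(\bA) - \Phi_{R_1}^n(\bA) = \int_{R_1}^\infty \frac{1}{2n}\|\bA\bw(s)-\by\|_2^2\,ds$, and the integrand is controlled by solving the first-order condition for $\bw(s)$ explicitly, which only ever involves $\nabla f$ evaluated at the minimizer $\bw(s) = \bw_{\Phi_s(\bA)}$ — exactly where Assumption~(7) applies. So either adopt the integration-by-Danskin argument, or explicitly strengthen the hypothesis to the locally Lipschitz gradient growth. Separately, the claim for $\bB$ does not follow "verbatim" from the literal hypothesis (which speaks only of $\bA$); in context, one first imports $\Phi_\lambda^n(\bB) \rarrowp c_\lambda$ from part~1 of Theorem~\ref{thm: main_univ1} and then repeats the argument, as the paper does.
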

\begin{proof} 
     First note that for a given $t>0$, we have for every Lipschitz function $g$
    \begin{align*}
        \bbP(|g(\Phi_{\lambda}^n) - g(c_\lambda) | > t) \le \bbP(|\Phi_{\lambda}^n - c_\lambda | > \frac{t}{L})
    \end{align*}
    Letting $\delta:= \frac{t}{L},\tldelta_1 > 0$
    , we know that for every $\lambda \ge 0$, there exists $N_1 \in \bbN$ such that for every $n \ge N_1$
    \begin{align*}
        \bbP(|\Phi_{\lambda}^n - c_\lambda | > \delta ) < \tldelta_1
    \end{align*}
    We will show that an $R_1 > 0$ can be chosen in such a way that there exists $N_2 \in \bbN$ such that for every $n \ge N_2$
    \begin{align*}
        \bbP(\sup_{\lambda > 0}\Phi_\lambda^n - \sup_{0<\lambda<R_1} \Phi_\lambda^n > \delta) \le \tldelta_2
    \end{align*}
    To see this, by the monotonicity of $\Phi_\lambda^n$ in $\lambda$, we have by the fundamental theorem of calculus for a given $R_1$
    \begin{align*}
        \bbP(\sup_{\lambda > 0}\Phi_\lambda^n - \sup_{0<\lambda<R_1} \Phi_\lambda^n > \delta) = \bbP(\lim_{\lambda \rightarrow \infty} \Phi_\lambda^n -  \Phi_{R_1}^n > \delta) = \bbP\Bigl( \int_{R_1}^\infty \frac{\partial\Phi_\lambda^n}{\partial \lambda}  ds > \delta\Bigr)
    \end{align*}
    By Danskin's theorem, 
    \begin{align}\label{appeq: calc}
        \bbP(\sup_{\lambda > 0}\Phi_\lambda^n - \sup_{0<\lambda<R_1} \Phi_\lambda^n > \delta) = \bbP\Bigl( \frac{1}{n}\int_{R_1}^\infty \|\bA \bw(s) - \by\|^2  ds > \delta\Bigr)
    \end{align}
    Now for the gradient of the main optimization we have, by $g:=\nabla h(\bw)$
    \begin{align*}
        \lambda \bA^T (\bA \bw - \by) + \rho \bw + \bg = 0
    \end{align*}
    Which implies $\bw = (\lambda \bA^T \bA + \rho \bI)^{-1} (\lambda \bA^T \by - \bg)$. Hence by matrix inversion lemma,
    \begin{align*}
        &\bA \bw - \by = - (\bI - \bA (\lambda \bA^T \bA + \rho \bI)^{-1} \lambda \bA^T) \by - \bA (\lambda \bA^T \bA + \rho \bI)^{-1} \bg \\& = -\rho (\lambda \bA \bA^T + \rho I)^{-1} \by - \bA (\lambda \bA^T \bA + \rho \bI)^{-1} \bg = -\rho (\lambda \bA \bA^T + \rho I)^{-1} \by -(\lambda \bA \bA^T + \rho I)^{-1} \bA \bg
    \end{align*}
    By triangle inequality and definition of the operator norm we have
    \begin{align*}
        \|\bA \bw + \by\|_2^2 \le 4 \rho^2 \|(\lambda \bA \bA^T + \rho I)^{-1}\|_{op}^2 \|\by\|_2^2 + 4 \|(\lambda \bA \bA^T + \rho I)^{-1}\bA\|_{op}^2 \|\bg\|_2^2
    \end{align*}
    Whence plugging back in \ref{appeq: calc} yields
    \begin{align*}
         \bbP(\sup_{\lambda > 0}\Phi_\lambda^n - \sup_{0<\lambda<R_1} \Phi_\lambda^n > \delta) &\le \bbP\Bigl(\frac{4}{n}\int_{R_1}^{\infty} \rho^2 \|(s \bA \bA^T + \rho I)^{-1}\|_{op}^2 \|\by\|_2^2 +  \|(s \bA \bA^T + \rho I)^{-1}\bA\|_{op}^2 \|\bg(s)\|_2^2 ds > \delta \Bigr) \\ 
          &\le \bbP\Bigl(\frac{4 \rho^2 \|\by\|_2^2}{n}\int_{R_1}^{\infty}  \|(s \bA \bA^T + \rho I)^{-1}\|_{op}^2 > \frac{\delta}{2}\Bigr) \\ &+ \bbP\Bigl( \frac{4}{n} \int_{R_1}^{\infty} \|(s \bA \bA^T + \rho I)^{-1}\bA\|_{op}^2 \|\bg(s)\|_2^2 ds > \frac{\delta}{2} \Bigr)
    \end{align*}
     For the norm of $\bg$, by assumption, there exists a $C_g$ independent of $\lambda$, such that $\frac{\|\bg(s)\|_2^2}{n} \le C_g$ with high probability.

    Consider the following the events
    \begin{align*}
        &\calT_y := \Bigl\{ \frac{\|\by\|_2^2}{n} \le C_y \Bigr\} \\
        &\calT_g := \Bigl\{\frac{\|\bg(s)\|_2^2}{n} \le C_g \Bigr\} \\
        &\calT_A := \Bigl\{ s_{\min}(\bA \bA^T) \ge C_A \Bigr\}
    \end{align*}
    We further have
    \begin{align}\label{ineq: suplam}
        &\bbP\Bigl(\frac{4 \rho^2 \|\by\|_2^2}{n}\int_{R_1}^{\infty}  \|(s \bA \bA^T + \rho I)^{-1}\|_{op}^2 ds > \frac{\delta}{2}\Bigr) \le \bbP(\calT_y^c ) + \bbP\Bigl(4 \rho^2 C_y \int_{R_1}^{\infty}  \|(s \bA \bA^T + \rho I)^{-1}\|_{op}^2 ds > \frac{\delta}{2}\Bigr) \nonumber \\
        &\bbP\Bigl(  \frac{4}{n} \int_{R_1}^{\infty} \|(s \bA \bA^T + \rho I)^{-1}\bA\|_{op}^2 \|\bg(s)\|_2^2 ds > \frac{\delta}{2} \Bigr) \le \bbP(\calT_g^c) + \bbP\Bigl(4C_g \int_{R_1}^{\infty} \|(s \bA \bA^T + \rho I)^{-1}\bA\|_{op}^2 ds > \frac{\delta}{2}  \Bigr) 
    \end{align}
    % By our normalization, we have that with high probability $\frac{\|\by\|_2^2}{n} =\Theta(1)$ and also $\frac{F (h(\tlbw) + \frac{\rho}{2 s_{\min}(\bA \bA^T)} \|\by\|^2)}{n}=\Theta(1)$. 
    Thus we need to focus on  the operator norm of each term in \ref{ineq: suplam}. After diagonalizing and using the fact that for any two rectangular matrices $\bX, \bY$ with matching dimensions,  $spec(\bX\bY) = spec(\bY \bX)$
    \begin{align*}
        &\|(\lambda \bA \bA^T + \rho I)^{-1}\|_{op}^2 = \max_{1\le i \le n} \frac{1}{(\lambda s_i(\bA\bA^T) + \rho)^2} = \frac{1}{(\lambda s_{\min}(\bA\bA^T) + \rho)^2} \\
        &\|(\lambda \bA \bA^T + \rho I)^{-1}\bA\|_{op}^2  = \max_{1\le i \le n} \frac{s_i(\bA \bA^T)}{(\lambda s_i(\bA \bA^T)+\rho)^2}
        % =\|(\lambda \bA^T \bA + \rho \bI)^{-1}\bA^T\bA (\lambda \bA^T \bA + \rho \bI)^{-1}\|_{op}
    \end{align*}
    Now by the event $\calT_A$, $s_{\min}(\bA\bA^T) \ge C_A$, then for a large enough $R_1$ such that $\frac{\rho}{R_1} \le C_A$, then for every $\lambda \ge R_1$
    \begin{align*}
        \|(\lambda \bA \bA^T + \rho I)^{-1}\|_{op}^2 \le \frac{1}{(\lambda C_A + \rho)^2} \\
        \|(\lambda \bA \bA^T + \rho I)^{-1}\bA\|_{op}^2 \le \frac{C_A}{(\lambda C_A + \rho)^2}
    \end{align*}
    Bounding the integrals implies
    \begin{align*}
        \int_{R_1}^{\infty}  \|(s \bA \bA^T + \rho I)^{-1}\|_{op}^2 ds \le \frac{1}{C_A} \frac{1}{C_A R_1 + \rho} \\
        \int_{R_1}^{\infty} \|(s \bA \bA^T + \rho I)^{-1}\bA\|_{op}^2 ds \le \frac{1}{C_A R_1 + \rho}
    \end{align*}
    Summarizing, 
    \begin{align}
        \bbP(\sup_{\lambda > 0}\Phi_\lambda^n - \sup_{0<\lambda<R_1} \Phi_\lambda^n > \delta) \le  \bbP\Bigl( \frac{4\rho^2C_y}{C_A(C_A R_1 + \rho)}> \frac{\delta}{2}\Bigr) + \bbP\Bigl(\frac{C_g}{C_A R_1 + \rho} > \frac{\delta}{2} \Bigr) + \bbP(\calT_y^c)+ \bbP(\calT_g^c) + 2\bbP(\calT_A ^c) 
    \end{align}
    Now we choose $R_1$ to be large enough such that the first two terms vanish. Therefore there exists an $N_2 \in \bbN$ such that for every $n \ge N_2$, $\bbP(\calT_y^c)+ \bbP(\calT_g^c) + 2\bbP(\calT_A ^c)  \le \tldelta_2$. Thus for such $R_1$ and $N_2$, we have 
    \begin{align*}
        \bbP(\sup_{\lambda > 0}\Phi_\lambda^n - \sup_{0<\lambda<R_1} \Phi_\lambda^n > \delta) \le \tldelta_2
    \end{align*}
    Now we can let $R_2 >0$ be chosen in such a way that:
    \begin{align*}
        \sup_{\lambda >0} c_\lambda - \sup_{0<\lambda < R_2} c_{\lambda} < \tldelta
    \end{align*}
    Take the maximum of $R_1$ and $R_2$ as $R$. By triangle inequality we observe
    \begin{align*}
         &\bbP(|\sup_{\lambda > 0} \Phi_{\lambda}^n - \sup_{\lambda > 0} c_\lambda | > \delta ) \le \bbP(|\sup_{0<\lambda < R} \Phi_{\lambda}^n - \sup_{\lambda > 0} c_\lambda | > \delta ) + \bbP(\sup_{0<\lambda} \Phi_{\lambda}^n - \sup_{0<\lambda < R} \Phi_{\lambda}^n  > \delta ) \\ &\le \bbP(|\sup_{0<\lambda < R} \Phi_{\lambda}^n - \sup_{0 < \lambda < R} c_\lambda | > \delta ) + \bbP( \sup_{\lambda >0} c_\lambda - \sup_{0<\lambda < R} c_{\lambda} > \delta) +\bbP(\sup_{0<\lambda} \Phi_{\lambda}^n - \sup_{0<\lambda < R} \Phi_{\lambda}^n  > \delta ) \\ & \le \bbP(|\sup_{0<\lambda < R} \Phi_{\lambda}^n - \sup_{0 < \lambda < R} c_\lambda | > \delta )  +0+\tldelta_1 \\&\le \tldelta_1 + \bbP(\sup_{0<\lambda < R}| \Phi_{\lambda}^n - c_\lambda | > \delta ) \\ &\le \tldelta_1 + \bbP(\sup_{0\le\lambda \le R}| \Phi_{\lambda}^n - c_\lambda | > \delta ) 
    \end{align*}
    First we prove $c_\lambda$ is concave in $\lambda$. Then by appealing to the Convexity Lemma (lemma 7.75 in \cite{liese2008statistical}), the result follows. Note that $\Phi_\lambda^n$ is concave in $\lambda$ as it is the pointwise-minimum of a concave function in $\lambda$ everywhere. Now to prove the concavity of $c_\lambda$, for a given pair of $\lambda_1, \lambda_2 \in [0,R]$ and $0<\theta<1$, we prove the deterministic event $\{ c_{\theta \lambda_1 + (1-\theta) \lambda_2}-\theta c_{\lambda_1}-(1-\theta) c_{\lambda_2}<0\}$ has probability zero. First note that
    \begin{align*}
        \Phi^n_{\theta \lambda_1 + (1-\theta) \lambda_2} - \theta \Phi_{\lambda_1}^n - (1-\theta) \Phi_{\lambda_2}^n \ge 0
    \end{align*}
    Thus by union bound for $\epsilon > 0$
    \begin{align*}
        \bbP\Bigl( \theta c_{\lambda_1}+&(1-\theta) c_{\lambda_2} -c_{\theta \lambda_1 + (1-\theta) \lambda_2} >\epsilon\Bigr) \\ &\le \bbP\Bigl(\theta (c_{\lambda_1}-\Phi_{\lambda_1}^n)+(1-\theta) (c_{\lambda_2} -\Phi_{\lambda_2}^n) + \Phi^n_{\theta \lambda_1 + (1-\theta) \lambda_2}-c_{\theta \lambda_1 + (1-\theta) \lambda_2} >\epsilon\Bigr) \\ &\le \bbP\Bigl(\theta |c_{\lambda_1}-\Phi_{\lambda_1}^n|+(1-\theta) |c_{\lambda_2} -\Phi_{\lambda_2}^n| + |\Phi^n_{\theta \lambda_1 + (1-\theta) \lambda_2}-c_{\theta \lambda_1 + (1-\theta) \lambda_2}| >\epsilon\Bigr) \\ &\le \bbP\Bigl(\theta |c_{\lambda_1}-\Phi_{\lambda_1}^n| > \frac{\epsilon}{3}\Bigr) + \bbP\Bigl((1-\theta) |c_{\lambda_2} -\Phi_{\lambda_2}^n| > \frac{\epsilon}{3} \Bigr) + \bbP\Bigl(|\Phi^n_{\theta \lambda_1 + (1-\theta) \lambda_2}-c_{\theta \lambda_1 + (1-\theta) \lambda_2}|> \frac{\epsilon}{3}\Bigr)
    \end{align*}
    Now by taking $n$ to be large enough, the RHS can be made arbitrarily small for every $\epsilon$ and $\theta$, thus $c_\lambda$ is concave. Furthermore, since $[0,R]$ is a compact set, we can appeal to the convexity lemma 7.75 in \cite{liese2008statistical} and the first claim follows. For the second claim, note that so far we have proved
    \begin{align*}
        \Phi_\lambda(A) \rarrowp c_\lambda \implies \sup_{\lambda>0} \Phi_\lambda(A) \rarrowp \sup_{\lambda>0} c_\lambda
    \end{align*}
    Then since $ |\Phi_\lambda(B) - \Phi_\lambda(A)| \rarrowp 0$ thus $\Phi_\lambda(B) \rarrowp c_\lambda$, repeating the same argument this time for $\Phi_\lambda(B)$ yields
    \begin{align*}
        \sup_{\lambda >0} \Phi_\lambda(B) \rarrowp \sup_{\lambda > 0} c_\lambda
    \end{align*}
\end{proof}
\subsubsection{Sequence of Regularizers}

We prove that by constraining ourselves to a large enough compact set $\calS_\bw$, if $f_m \rightarrow f$ uniformly, then the universality results also hold. Note for the $\Phi_\lambda$ case, we have that by setting $\bw = \bzero$ in the optimization, we have $\|\bw_{\Phi_\lambda(\bA)}\|_2 \le C_w \sqrt{n}$ for some $C_w > 0$ with high probability, thus we can instead set $\calS_\bw:= C_w \sqrt{n}$ and consider the following equivalent constrained optimization problem
\begin{align*}
    \tlPhi_{\lambda, \epsilon}^m(\bA):=\frac{1}{n}\min_{\bw \in \calS_\bw} \frac{\lambda}{2} \|\bA \bw - \by\|_2^2 + f_m(\bw) + \epsilon \psi(\bw)
\end{align*}
Now consider a sequence of regular functions $f_m$, converging uniformly to $f$ on $\calS_\bw$. Take $m$ large enough such that $\|f - f_m\|_\infty \le n\delta$ for any $n$ which implies $|\tlPhi_{\lambda, \epsilon}^m(\bA)- \tlPhi_{\lambda, \epsilon}(\bA)| \le \delta$. Thus for any $t>0$
\begin{align}\label{ineq: trireg}
    \bbP\Bigl( \Bigl|\tlPhi_{\lambda, \epsilon}(\bA) - c_{\lambda, \epsilon}\Bigr| > t\Bigr) \le \bbP\Bigl( \Bigl|\tlPhi_{\lambda, \epsilon}(\bA) - \tlPhi_{\lambda, \epsilon}^m(\bA)\Bigr| > \frac{t}{2}\Bigr) + \bbP\Bigl(\Bigl|\tlPhi^m_{\lambda, \epsilon}(\bA) - c^m_{\lambda, \epsilon}\Bigr| > \frac{t}{2}\Bigr) 
\end{align}
If $\tlPhi_{\lambda, \epsilon}^m(\bA) \rarrowp c_{\lambda, \epsilon}^m$, then by \eqref{ineq: trireg}, we have $\tlPhi_{\lambda, \epsilon}(\bA) \rarrowp c_{\lambda, \epsilon}$. A similar argument holds for
\begin{align*}
    \tlPhi^m_\epsilon(\bA) := \frac{1}{n}\min_{\bA \bw = \by, \bw \in \calS_\bw} f_m(\bw) + \epsilon \psi(\bw)
\end{align*}
Furthermore, subsequent results also hold for $\psi(\bw_{\Phi_\lambda^m(\bA)})$ and $\psi(\bw_{\Phi_\lambda(\bA)})$,  $\psi(\bw_{\Phi^m(\bA)})$ and $\psi(\bw_{\Phi(\bA)})$

\subsection{Proof of Second part of Theorem \ref{thm: main_univ1} and \ref{thm: main_univ2}}
\subsubsection{Proof of Second part of Theorem \ref{thm: main_univ1} for a regular test function}
Now we prove the second part of Theorem \ref{thm: main_univ1}.
Since we assumed $\Phi_{\lambda,0}(\bA)$ converges to some $c_{\lambda,0}$ in probability, then for a small enough $\epsilon$, we also have  $\Phi_{\lambda,\epsilon}(\bA) \rarrowp c_{\lambda,\epsilon}$. Therefore we may write by an application of triangle inequality and the union bound
\begin{align}\label{eq: concent}
    \bbP \biggl( \biggl| \frac{\Phi_{\lambda,\epsilon}(\bA) - \Phi_{\lambda,0}(\bA)}{\epsilon} - \frac{c_{\lambda,\epsilon}-c_{\lambda,0}}{\epsilon}\biggr| > t \biggr) \le \bbP \biggl(\Bigl| \Phi_{\lambda,\epsilon}(\bA) - c_{\lambda,\epsilon}  \Bigr| > \frac{\epsilon t}{2} \biggr) +  \bbP \Bigl(\Bigl| \Phi_{\lambda,0}(\bA) - c_{\lambda,0}  \Bigr| > \frac{\epsilon t}{2} \biggr)
\end{align}
For every $\epsilon, t > 0$, the RHS of \eqref{eq: concent} goes to zero. Similarly we have ${\bbP \biggl( \biggl| \frac{\Phi_{\lambda,0}(\bA) - \Phi_{\lambda,-\epsilon}(\bA)}{\epsilon} - \frac{c_{\lambda,0}-c_{\lambda,-\epsilon}}{\epsilon}\biggr| > t \biggr) \rightarrow 0}$.

Now we observe that by triangle inequality
\begin{align*}
    \bbP\biggl( \biggl| \psi( \bw_{\Phi_\lambda(\bA)})- \frac{dc_{\lambda,\epsilon}}{d\epsilon} \Bigl|_{\epsilon=0} \biggr| > t\biggr) \le \bbP\biggl(\psi( \bw_{\Phi_\lambda(\bA)}) > \frac{dc_{\lambda,\epsilon}}{d\epsilon} \Bigl|_{\epsilon=0} + t \biggr) + \bbP\biggl(\psi( \bw_{\Phi_\lambda(\bA)}) < \frac{dc_{\lambda,\epsilon}}{d\epsilon} \Bigl|_{\epsilon=0} - t\biggr )
\end{align*}
We take $\hepsilon$ to be small enough such that
\begin{align*}
    \biggl|\frac{dc_{\lambda,\epsilon}}{d\epsilon} \Bigl|_{\epsilon=0} - \frac{c_{\lambda,\hepsilon}-c_{\lambda,0}}{\hepsilon}\biggr| < \frac{t}{2}, \quad \biggl|\frac{dc_{\lambda,\epsilon}}{d\epsilon} \Bigl|_{\epsilon=0} - \frac{c_{\lambda,0}-c_{\lambda,-\hepsilon}}{\hepsilon}\biggr| < \frac{t}{2}
\end{align*}
This implies 
\begin{align*}
    \bbP\biggl( \biggl| \psi( \bw_{\Phi_\lambda(\bA)})- \frac{dc_{\lambda,\epsilon}}{d\epsilon} \Bigl|_{\epsilon=0} \biggr| > t\biggr) \le \bbP\biggl(\psi( \bw_{\Phi_\lambda(\bA)}) > \frac{c_{\lambda,0}-c_{\lambda,-\hepsilon}}{\hepsilon} + \frac{t}{2} \biggr) + \bbP\biggl(\psi( \bw_{\Phi_\lambda(\bA)}) < \frac{c_{\lambda,\hepsilon}-c_{\lambda,0}}{\hepsilon} - \frac{t}{2} \biggr)
\end{align*}
And by Danskin's theorem, $\Phi_{\lambda,\epsilon}(\bA)$ is minimum of a concave function $\epsilon$, therefore it is also concave and it is differentiable with respect to $\epsilon$, we observe that $\psi( w_{\Phi_\lambda(\bA)}) = \frac{d\Phi_{\lambda,\epsilon}(\bA)}{d\epsilon} \Bigl|_{\epsilon=0}$. Moreover, by concavity over $\hepsilon$:
\begin{align*}
    \frac{\Phi_{\lambda,\hepsilon}(\bA) - \Phi_{\lambda,0}(\bA)}{\hepsilon} \le \psi( w_{\Phi_\lambda(\bA)}) \le \frac{\Phi_{\lambda,0}(\bA) - \Phi_{\lambda,-\hepsilon}(\bA)}{\hepsilon}
\end{align*}
Whence
\begin{align*}
    \bbP\biggl( \biggl| \psi( \bw_{\Phi_\lambda(\bA)})- \frac{dc_{\lambda,\epsilon}}{d\epsilon} \Bigl|_{\epsilon=0} \biggr| > t\biggr) &\le \bbP\biggl(\frac{\Phi_{\lambda,0}(\bA) - \Phi_{\lambda,-\hepsilon}(\bA)}{\hepsilon} > \frac{c_{\lambda,0}-c_{\lambda,-\hepsilon}}{\hepsilon} + \frac{t}{2} \biggr)  \\ &+\bbP\biggl( \frac{\Phi_{\lambda,\hepsilon}(\bA) - \Phi_{\lambda,0}(\bA)}{\hepsilon} < \frac{c_{\lambda,\hepsilon}-c_{\lambda,0}}{\hepsilon} - \frac{t}{2} \biggr) \\ & \le \bbP\biggl(\biggl|\frac{\Phi_{\lambda,0}(\bA) - \Phi_{\lambda,-\hepsilon}(\bA)}{\hepsilon} - \frac{c_{\lambda,0}-c_{\lambda,-\hepsilon}}{\hepsilon}\biggr| > \frac{t}{2}\biggr) \\ &+ \bbP\biggl(\biggl| \frac{\Phi_{\lambda,\hepsilon}(\bA) - \Phi_{\lambda,0}(\bA)}{\hepsilon} - \frac{c_{\lambda,\hepsilon}-c_{\lambda,0}}{\hepsilon}\biggr| > \frac{t}{2} \biggr)
\end{align*}
By (\ref{eq: concent}), the RHS goes to zero, thus $\psi(\bw_{\Phi_\lambda(\bA)}) \rarrowp \frac{dc_{\lambda,\epsilon}}{d\epsilon} \Bigl|_{\epsilon=0} $. Furthermore, we have
\begin{align}\label{ineq: concentB}
    \bbP\biggl( \biggl| \psi( \bw_{\Phi_\lambda(\bB)})- \frac{dc_{\lambda,\epsilon}}{d\epsilon} \Bigl|_{\epsilon=0} \biggr| > t\biggr) &\le \bbP\biggl(\biggl|\frac{\Phi_{\lambda,0}(\bB) - \Phi_{\lambda,-\hepsilon}(\bB)}{\hepsilon} - \frac{c_{\lambda,0}-c_{\lambda,-\hepsilon}}{\hepsilon}\biggr| > \frac{t}{2}\biggr) \nonumber \\ &+ \bbP\biggl(\biggl| \frac{\Phi_{\lambda,\hepsilon}(\bB) - \Phi_{\lambda,0}(\bB)}{\hepsilon} - \frac{c_{\lambda,\hepsilon}-c_{\lambda,0}}{\hepsilon}\biggr| > \frac{t}{2} \biggr) \nonumber \\ & \le \bbP \biggl(\Bigl| \Phi_{\lambda,\epsilon}(\bB) - c_{\lambda,\epsilon}  \Bigr| > \frac{\epsilon t}{2} \biggr)  +\bbP \biggl(\Bigl| \Phi_{\lambda,-\epsilon}(\bB) - c_{\lambda,-\epsilon}  \Bigr| > \frac{\epsilon t}{2} \biggr) \nonumber \\ & +  2\bbP \Bigl(\Bigl| \Phi_{\lambda,0}(\bB) - c_{\lambda,0}  \Bigr| > \frac{\epsilon t}{2} \biggr)
\end{align}
By the result of the first part, we know that the RHS \eqref{ineq: concentB} goes to zero as $n \rightarrow \infty$. Hence, $\psi(\bw_{\Phi_\lambda(\bB)}) \rarrowp \frac{dc_{\lambda,\epsilon}}{d\epsilon} \Bigl|_{\epsilon=0}$.

Moreover, if $\psi$ is bounded, by definition of convergence in distribution, $|\bbE_\bA \psi(\bw_{\Phi_\lambda(\bA)}) - \bbE_\bB \psi(\bw_{\Phi_\lambda(\bB)})| \rightarrow 0$.
\subsubsection{Proof of Second part of Theorem \ref{thm: main_univ2} for a regular test function}
Defining
\begin{align*}
    \tlPhi_\epsilon(\bA) := \frac{1}{n}\min_{\bA \bw = \by, \bw \in \calS_\bw} f(\bw) + \epsilon \psi(\bw)
\end{align*}
Since $sup_{\lambda > 0} \tlPhi_{\lambda, \epsilon}(\bA) \rarrowp sup_{\lambda > 0} c_{\lambda, \epsilon}$, or equivalently $\tlPhi_\epsilon(\bA) \rarrowp c_\epsilon$, we can use the same argument from previous section and Danskin's theorem, to observe that $\psi(\bw_{\tlPhi_\lambda(\bA)}) \rarrowp \frac{dc_{\epsilon}}{d\epsilon} \Bigl|_{\epsilon=0}$ and $\psi(\bw_{\tlPhi_\lambda(\bB)}) \rarrowp \frac{dc_{\epsilon}}{d\epsilon} \Bigl|_{\epsilon=0}$.

\subsubsection{Sequences of regular test functions}
By a similar approach, it follows that for a sequence of regular functions $\psi_m$ converging uniformly to $\psi$, one also has $ \psi(\bw_{\Phi_\lambda}) \rarrowp \frac{dc_{\lambda,\epsilon}}{d\epsilon} \Bigl|_{\epsilon=0} := \lim_{m \rightarrow \infty}\frac{dc^m_\epsilon}{d\epsilon} \Bigl|_{\epsilon=0}$. Indeed,

\begin{align*}
    \bbP\biggl( \biggl| \psi( \bw_{\Phi_\lambda(\bA)})- \frac{dc_{\lambda,\epsilon}}{d\epsilon} \Bigl|_{\epsilon=0} \biggr| > t\biggr) \le \bbP\biggl( \biggl| \psi( \bw_{\Phi_\lambda(\bA)})-  \psi_m( \bw_{\Phi_\lambda(\bA)})\biggr| > \frac{t}{2}\biggr) + \bbP\biggl( \biggl| \psi_m( \bw_{\Phi_\lambda(\bA)})- \frac{dc_{\lambda,\epsilon}}{d\epsilon} \Bigl|_{\epsilon=0} \biggr| > \frac{t}{2}\biggr)
\end{align*}
Since we have uniform convergence, for $\frac{t}{2}$, there exists an $m_0$ such that for all $m\ge m_0$ and all $x \in \bbR^d$, we have $|\psi_m(x) - \psi(x)| < \frac{t}{2}$. Thus for such $m$'s, $\bbP\biggl( \biggl| \psi( \bw_{\Phi_\lambda(\bA)})-  \psi_m( \bw_{\Phi_\lambda(\bA)})\biggr| > \frac{t}{2}\biggr) = 0$. Then by triangle inequality
\begin{align*}
    \bbP\biggl( \biggl| \psi( \bw_{\Phi_\lambda(\bA)})- \frac{dc_{\lambda,\epsilon}}{d\epsilon} \Bigl|_{\epsilon=0} \biggr| > t\biggr) \le \bbP\biggl( \biggl| \psi_m( \bw_{\Phi_\lambda(\bA)})- \frac{dc^m_{\lambda,\epsilon}}{d\epsilon} \Bigl|_{\epsilon=0} \biggr| > \frac{t}{4}\biggr) + \bbP\biggl( \biggl| \frac{dc_{\lambda,\epsilon}}{d\epsilon} \Bigl|_{\epsilon=0} - \frac{dc^m_{\lambda,\epsilon}}{d\epsilon} \Bigl|_{\epsilon=0} \biggr| > \frac{t}{4}\biggr) 
\end{align*} 
By assumption $\bbP\biggl( \biggl| \psi_m( \bw_{\Phi_\lambda(\bA)})- \frac{dc^m_{\lambda,\epsilon}}{d\epsilon} \Bigl|_{\epsilon=0} \biggr| > \frac{t}{4}\biggr) \rightarrow 0$. And by the definition of $\frac{dc_{\lambda,\epsilon}}{d\epsilon} \Bigl|_{\epsilon=0}$ the claim follows.

% This proves the second part of Theorem \ref{thm: main_univ1}. For the second part of Theorem \ref{thm: main_univ1}, by letting $\Psi_\epsilon:= \Phi_{\lambda,\epsilon}$ and leveraging Danskin's theorem, the claim follows.

\section{Proof of Remarks}
\subsection{Proof of Remark \ref{rem: empdist}}
Similar to \cite{panahi2017universal}, consider for $\bc \in \bbR^d$
\begin{align*}
    F_\bc(\bw) := \sum_{i=1}^d \mathds{1}(w_i > c_i)
\end{align*}
Then observe that by letting $\psi_\bc(\bw) = \mathds{1}^T(\bw - \bc)_+$, we have for $\tlepsilon>0$
\begin{align*}
    \frac{\psi_{\bc}(\bw)-\psi_{\bc-\tlepsilon}(\bw)}{\tlepsilon}\le F_c(\bw) \le \frac{\psi_{\bc-\tlepsilon}(\bw)-\psi_{\bc}(\bw)}{\tlepsilon}
\end{align*}
Now since $\psi_\bc$ is convex, on the compact set $\calS_\bw$ containing $\bc$ there exists a sequence of convex functions $\psi_n$ with bounded third derivative converging uniformly to $\psi_\bc(\bw)$. Thus the universality result holds and by the boundedness of $F_\bc$, for any $\bc$ and $\Psi \in \{\Phi_\lambda, \Phi\}$
\begin{align*}
    \lim_{n\rightarrow \infty} \Bigl|\bbE_{\bA, \by} F_\bc(\bw_{\Psi(\bA)})-\bbE_{\bB, \by} F_\bc(\bw_{\Psi(\bB)})\Bigr| = 0
\end{align*}
\subsection{Proof of Remark \ref{rem: gradcond}}
First we derive a more precise bound on the norm of $\bw$. Observe that for any $\tlbw$ such that $\by = \bA \tlbw$
\begin{align*}
    h(\tlbw) + \frac{\rho}{2} \|\tlbw\|^2 \ge h(\bw) + \frac{\rho}{2} \|\bw\|^2 + \frac{\lambda}{2} \|\bA \bw-\by\|^2
\end{align*}
Now since $\bA \tlbw = \by \neq \bzero$, WLOG we can assume $\tlbw \perp Ker(\bA)$, thus
\begin{align*}
    \|\by\|_2^2 = \|\bA \tlbw\|_2^2 \ge \|\tlbw\|_2^2 \min_{\bx \perp Ker(\bA), \|\bx\|=1} \|\bA\bx\|_2^2 = s_{\min}(\bA \bA^T) \|\tlbw\|_2^2
\end{align*}
By the assumption that there exists a $C_A> 0$ such that $s_{\min}(\bA \bA^T) > C_A$ with high probability. Thus we obtain the following $\lambda$-free bound
\begin{align*}
    \frac{\rho}{2} \|\bw\|^2 \le h(\tlbw) + \frac{\rho}{2 s_{\min}(\bA \bA^T)} \|\by\|^2
\end{align*}
Now if there exists an $F:\bbR \rightarrow \bbR$ such that $\lim_{n\rightarrow \infty} \frac{F(\sqrt{n}}{\sqrt{n}} = O(1)$ and $\|\bg\|_2 \le F(\|\bw\|_2)$, then $\|\bg\|_2 = O(\sqrt{n})$

\subsection{Proof of Remark \ref{rem: class}}

We use Lemma \ref{lem: regularity} to prove that $\bbP(\ba^T \bw > c) = \bbE_a \mathds{1}( \ba^T \bw > c)$ is regular. Similar to \cite{panahi2017universal}, for a given $c$, let $\psi_{c}(\bw) := (\ba^T \bw-c)_+$, now since $\psi_c$ is convex, on the projection of the compact set $\calS_\bw$ containing $c$, there exists a sequence of convex functions $\psi_n$ with bounded third derivative converging uniformly to $\psi_c(x)$. We show that if $\psi_n(x) \rightarrow \psi_c(x)$ uniformly, then $\bbE_\ba \psi_n(\ba^T \bw) \rightarrow \bbE_\ba \psi_c(\ba^T \bw)$ uniformly as well. Indeed
\begin{align*}
    \sup_\bw |\bbE_\ba \psi_n(\ba^T \bw) - \bbE_\ba \psi_c(\ba^T \bw)| \le \sup_\bw\bbE_{\ba} |\psi_n(\ba^T \bw)-\psi_c(\ba^T \bw)| \le \bbE_{\ba} \sup_\bw |\psi_n(\ba^T \bw)-\psi_c(\ba^T \bw)|
\end{align*}
Now by uniform convergence of $\psi_n(x)$ to $\psi_c(x)$, $\sup_x |\psi_n(x)-\psi_c(x)|\rightarrow 0$, which by dominated convergence theorem implies $\sup_\bw |\bbE_\ba \psi_n(\ba^T \bw) - \bbE_\ba \psi_c(\ba^T \bw)| \rightarrow 0$. Thus $\bbE_\ba \psi_n(\ba^T \bw) \rightarrow \bbE_\ba \psi_c(\ba^T \bw)$ uniformly. Hence we can apply the same reasoning as before to observe that for any $\tlepsilon>0$
\begin{align*}
    \bbE_\ba \frac{1}{\tlepsilon}(\psi_{c}(\ba^T \bw)-(\psi_{c-\tlepsilon}(\ba^T \bw))\le \bbE_a \mathds{1}( \ba^T \bw > c) \le \bbE_\ba \frac{1}{\tlepsilon}(\psi_{c+\tlepsilon}(\ba^T \bw)-(\psi_{c}(\ba^T \bw))
\end{align*}
By the uniform convergence of corresponding sequences of $\bbE_\ba \psi_n(\ba^T \bw)$ and  by previous lemma $\bbE_\ba \psi_n(\ba^T \bw)$'s are regular, therefore the universality results follow.
\subsection{Proof of Remark \ref{rem: minsin}}
\begin{lemma}
    Let $\bA$ be block-regular, then under the assumption of subgaussianity of the rows and exponential concentration of the norms, $s_{\min}(\bA \bA^T) = \Omega(1)$
\end{lemma}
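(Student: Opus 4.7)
The plan is to use the variational characterization $s_{\min}(\bA \bA^T) = \inf_{\bv \in \bbS^{n-1}} \|\bA^T \bv\|_2^2$ and to prove a uniform lower bound of order $\Omega(1)$ via a standard $\epsilon$-net and concentration argument. I would first observe that, writing the $i$-th row as $\ba_i = \bmu_{[i]} + \bz_i$ with $\bbE \bz_i = \bzero$ and exploiting independence of the rows,
\begin{align*}
\bbE \|\bA^T \bv\|_2^2 \;=\; \Bigl\|\sum_i v_i \bmu_{[i]}\Bigr\|_2^2 + \sum_i v_i^2\, \bbE \|\bz_i\|_2^2 \;\ge\; \min_l \bbE \|\bz^{(l)}\|_2^2 \;=:\; c_0 > 0,
\end{align*}
uniformly over $\bv \in \bbS^{n-1}$, where the lower bound $c_0 > 0$ comes from $\tr(\bSigma_l) = \Theta(1)$, a consequence of part 4 of Definition \ref{def: block_reg} together with non-degeneracy of the within-block covariance.

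For pointwise concentration I would note that the map $(\ba_1,\ldots,\ba_n) \mapsto \|\bA^T \bv\|_2$ is $1$-Lipschitz on $\bbR^{nd}$ for every fixed $\bv \in \bbS^{n-1}$, and that independence of the rows tensorizes the subgaussian/LCP-type concentration hypothesis across $\bbR^{nd}$, yielding a Hanson-Wright-style bound
\begin{align*}
\bbP\Bigl(\bigl|\|\bA^T \bv\|_2^2 - \bbE \|\bA^T \bv\|_2^2\bigr| > t\Bigr) \;\le\; 2\exp\bigl(-c_1 d \min(t, t^2)\bigr),
\end{align*}
for some $c_1>0$ depending only on the subgaussian constant and on the rate in the norm-concentration hypothesis. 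Taking a $\delta$-net $\calN_\delta$ of $\bbS^{n-1}$ with $|\calN_\delta| \le (3/\delta)^n$ and union-bounding then gives simultaneous concentration over $\calN_\delta$ with failure probability at most $2\exp\bigl(n\log(3/\delta) - c_1 d \min(t, t^2)\bigr)$, which tends to zero because $d/n \to \kappa > 1$ once $t$ and $\delta$ are chosen as small enough absolute constants.

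To extend from the net to the full sphere I would use the bound $\bigl|\|\bA^T \bv\|_2^2 - \|\bA^T \bv'\|_2^2\bigr| \le 4\delta\, \|\bA\|_{op}^2$ whenever $\|\bv-\bv'\|_2 \le \delta$, after separately verifying via an analogous $\epsilon$-net argument that $\|\bA\|_{op}^2 = s_{\max}(\bA \bA^T) \le C$ with high probability. Combining everything, uniformly in $\bv \in \bbS^{n-1}$ we obtain $\|\bA^T \bv\|_2^2 \ge c_0 - t - 4 C \delta$ with probability $1-o(1)$, and choosing $t$ and $\delta$ small enough yields $s_{\min}(\bA \bA^T) \ge c_0/2 = \Omega(1)$. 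The main technical obstacle is ensuring that the concentration rate $\exp(-c_1 d)$ strictly dominates the net entropy $\exp(n \log(3/\delta))$; this is precisely what the subgaussian/LCP-type tail hypothesis buys us in the proportional regime, and a merely polynomial moment bound would not be enough.
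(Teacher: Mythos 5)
Your plan is appealing in outline, but there is a genuine gap coming from the non-vanishing means, and it shows up in two places.

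First, the auxiliary claim that $\|\bA\|_{op}^2 = s_{\max}(\bA\bA^T) \le C$ with high probability is false in this model. Writing $\ba_i = \bmu_{[i]} + \bz_i$ and taking, say, $\bv = \mathbf{1}/\sqrt{n}$ with all $\bmu_i = \bmu$, one gets $\bA^T\bv = \sqrt{n}\,\bmu + n^{-1/2}\sum_i \bz_i$, so $\|\bA^T\bv\|_2 = \Theta(\sqrt{n})$ because $\|\bmu\|_2 = \Theta(1)$ in Definition~\ref{def: block_reg}. Thus $\|\bA\|_{op}^2 = \Theta(n)$, a rank-$k$ spike created by the means. Consequently the continuity bound $|\|\bA^T\bv\|_2^2 - \|\bA^T\bv'\|_2^2| \le 4\delta\|\bA\|_{op}^2 = \Theta(\delta n)$ forces $\delta = o(1/n)$ in the net refinement, and then the net has size $(3/\delta)^n = e^{\Omega(n\log n)}$, which the pointwise concentration of order $e^{-\Theta(d)} = e^{-\Theta(n)}$ cannot beat.

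Second, the claimed pointwise rate $\bbP(|\|\bA^T\bv\|_2^2 - \bbE\|\bA^T\bv\|_2^2| > t) \le 2\exp(-c_1 d\min(t,t^2))$ does not hold uniformly over $\bv$. The cross term $2\langle \sum_i v_i\bmu_{[i]},\,\sum_j v_j\bz_j\rangle$ has variance of order $\|\sum_i v_i\bmu_{[i]}\|_2^2/d$, which can be as large as $n/d = \Theta(1)$. In that regime the dimension factor $d$ disappears from the exponent, and the union bound again cannot absorb the net entropy. (The net effect of the two issues is coupled: the directions where the concentration degrades are exactly those with large $\|\sum_i v_i\bmu_{[i]}\|_2$, where the expectation is also large, so one could try to split cases, but that is not what you wrote and it complicates the net argument considerably.)

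The paper sidesteps all of this with a one-line observation before touching any nets: multiply the $i$-th row by an independent Rademacher sign $\epsilon_i$ and set $\tlba_i := \epsilon_i\ba_i$. Then $\bA^T\bA = \sum_i\ba_i\ba_i^T = \sum_i\tlba_i\tlba_i^T = \tlbA^T\tlbA$ holds \emph{pointwise}, so $s_{\min}(\bA\bA^T) = s_{\min}(\tlbA\tlbA^T)$ exactly, while the new rows satisfy $\bbE\tlba_i = \bzero$ and $\bbE\tlba_i\tlba_i^T = \bSigma_{[i]} + \bmu_{[i]}\bmu_{[i]}^T$, whose trace is still $O(1)$. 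After this reduction the operator norm really is bounded with high probability, and a net argument in the spirit of Vershynin's Theorem~5.58 (with block structure handled by triangle inequality, decoupling, and subexponential/Bernstein bounds) goes through. Without this preliminary recentering, your argument does not close.
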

\begin{proof}

    % We show without loss of generality that we may assume we have one block. We proceed by an induction argument. It is sufficient to only consider the case $\bA^T = \begin{pmatrix}
    %     \bA_1^T & \bA_2^T
    % \end{pmatrix}$ .By assumption, we have found a $\c>0$ such that $s_{d-n+1}(\bA_i)\ge c$ with high probability for $i=1,2$. Then
    % \begin{align*}
    %     c \bI \preceq \bA \bA^T = \begin{pmatrix}
    %         \bA_1 \bA_1^T & \bA_1 \bA_2^T \\
    %         \bA_2 \bA_1^T & \bA_2 \bA_2^T
    %     \end{pmatrix} \Leftrightarrow \begin{pmatrix}
    %         \bA_1 \bA_1^T - c\bI & \bA_1 \bA_2^T \\
    %         \bA_2 \bA_1^T & \bA_2 \bA_2^T - c\bI
    %     \end{pmatrix} \succeq \bzero
    % \end{align*}
    % Calculating the Schur complement, and applying the matrix inversion lemma
    % \begin{align*}
    %     \bDelta =\bA_1 \bA_1^T - c\bI - \bA_1 \bA_2^T (\bA_2 \bA_2^T - c\bI)^{-1} \bA_2 \bA_1^T = -c \bI + \bA_1 ( \bI - \frac{1}{c} \bA_2 ^T \bA_2 ) ^{-1} \bA_1^T \succeq \bzero
    % \end{align*}
    % Thus we must have
    % \begin{align*}
    %       \bA_1( c\bI - \bA_2 ^T \bA_2 ) ^{-1} \bA_1^T \succeq \bI
    % \end{align*}
    % First, we assume we only have 1 block. 
    
    We argue that we can assume the rows of $\bA$, $\ba_i^T$'s, have zero mean. Indeed, take $\{\epsilon_i\}_{i=1}^n$ to be a sequence of iid Bernoulli random variables on $\{\pm1\}$, then we observe that redefining each row by $\tlba_i^T :=\epsilon_i \ba_i^T$, will make the mean zero and since $\bA^T \bA = \sum_{i=1}^n \ba_i \ba_i^T = \sum_{i=1}^n \ba_i \ba_i^T = \sum_{i=1}^n \tlba_i \tlba_i^T = \tlbA^T \tlbA$, then $\bA^T \bA$ will be equal to $\tlbA^T \tlbA$ in distribution and since $spec(\bA^T \bA) = spec(\bA\bA^T)$, we have $s_{\min}(\bA \bA^T) = s_{\min}(\tlbA \tlbA^T)$. Thus it suffices to consider matrices with zero mean as $\bSigma \leftarrow \bSigma + \mu \mu^T$'s operator norm is still bounded above by a constant. For the rest of the proof, we wish to modify the argument of Theorem 5.58 in \cite{vershynin2010introduction} to accommodate for high probability assumption on the norm of the rows, whereas in that theorem the norm of the rows are almost surely equal to 1. 
    
    Let us define the event 
    \begin{align*}
        \calT_i = \{ \|\bA_i\|_{op} \le C_{i} \}
    \end{align*}
    We know by Theorem 5.39 and Remark 5.40 in \cite{vershynin2010introduction}, $\|\bA_i\|_{op}\le \|\bSigma_i\|_{op} + \delta $ with probability at least $1-2\exp(-cd\delta^2)$, with $\delta = O(1)$, and since $\|\bSigma_i\|_{op} = \Theta(1)$, we have that $\bbP(\calT_i) \ge 1-2\exp(-cd\delta^2)$.
    Let $\bD$ be a diagonal matrix with $D_{ii} = \tr \Sigma_{\iota[i]}$.
    We use a $\frac{1}{4}$-net covering of the sphere, namely $\calN$. Furthermore, note that $\|\bA - \bD\|_{op} \le \frac{4}{3} \max_{\bx \in \calN} | \|\bA\bx\|_2^2 - \|\bD\bx\|_2^2 |$, therefore we focus on the analysis of $\max_{\bx \in \calN} | \|\bA\bx\|_2^2 - \|\bSigma\bx\|_2^2 |$. Using the structure of $\bA$, we have
    \begin{align*}
        \max_{\bx \in \calN} \biggl| \Bigl\|\sum_{i=1}^k\bA_i^T \bx_i \Bigr\|_2^2 - \sum_{i=1}^k \tr \bSigma_i \|\bx_i\|_2^2\biggr| \le \sum_{i=1}^k \max_{\bx_i \in \calN} \biggl| \|\bA_i^T \bx_i\|_2^2 - \tr \bSigma_i \|\bx_i\|_2^2\biggr| + \sum_{i\neq i'}^k \max_{\bx \in \calN} \biggl| \sum_{j\in[n_i],j'\in[n_{i'}]} x_{ij} x_{i'j'} \ba_{ij}^T \ba_{i'j'} \biggr|
    \end{align*}
    We have for the first and second terms
    \begin{align*}
         \bigl| \|\bA_i \bx_i\|_2^2 - \tr \bSigma_i \|\bx_i\|_2^2\bigr| =  \Bigl| \sum_{j=1}^{n_i} x_{ij}^2 (\|\ba_{ij}\|_2^2 - \tr \bSigma_i) + \sum_{j\neq j'} x_{ij} x_{ij'} \ba_{ij}^T \ba_{ij'} \Bigr| \\ \le \Bigl|\sum_{j=1}^{n_i} x_{ij}^2 (\|\ba_{ij}\|_2^2 - \tr \bSigma_i)\Bigr|+  \Bigl|\sum_{j\neq j'} x_{ij} x_{ij'} \ba_{ij}^T \ba_{ij'} \Bigr|
    \end{align*}
    Now by triangle inequality
    \begin{align*}
        \bbP \biggl(\max_{\bx \in \calN} \biggl| \Bigl\|\sum_{i=1}^k\bA_i^T \bx_i \Bigr\|_2^2 - \sum_{i=1}^k \tr \bSigma_i \|\bx_i\|_2^2\biggr| > t \biggr)  \le \sum_{i=1}^k \bbP \biggl(\max_{\bx \in \calN}\Bigl|\sum_{j=1}^{n_i} x_{ij}^2 (\|\ba_{ij}\|_2^2 - \tr \bSigma_i)\Bigr| > \frac{t}{k(k+1)} \biggr) \\+ \sum_{i=1}^k\bbP \biggl(\max_{\bx \in \calN}\Bigl|\sum_{j\neq j'} x_{ij} x_{ij'} \ba_{ij}^T \ba_{ij'} \Bigr| > \frac{t}{k(k+1)}  \biggr)
        +\sum_{i\neq i'}^k\bbP \biggl(\max_{\bx \in \calN} \biggl| \sum_{j\in[n_i],j'\in[n_{i'}]} x_{ij} x_{i'j'} \ba_{ij}^T \ba_{i'j'} \biggr| > \frac{2t}{k(k+1)} \biggr)
    \end{align*}
    For the first sum, we have for each summands
    \begin{align*}
         \Bigl\|x_{ij}^2 (\|\ba_{ij}\|_2^2 - \tr \bSigma_i) \Bigr \|_{\psi_1} \le 2  \|x_{ij}^2 \|\ba_{ij}\|_2^2 \|_{\psi_1}  \le 2 x_{ij}^2 \|\|\ba_{ij}\|_2 \|_{\psi_2}^2 \le 2 \|\|\ba_{ij}\|_2 \|_{\psi_2}^2 \le \frac{2}{d}
    \end{align*}
    Thus by subexponential deviation inequality, since for each $i$, as $\ba_{ij}$ have the same distribution, let $\nu_i = \|\|\ba_{i1}\|_2 \|_{\psi_2}^2 $
    \begin{align*}
        \bbP \biggl(\Bigl|\sum_{j=1}^{n_i} x_{ij}^2 (\|\ba_{ij}\|_2^2 - \tr \bSigma_j)\Bigr| > \frac{t}{k(k+1)} \biggr) &\le 2 \exp \Bigl[-c\min(\frac{t^2}{k^2 (k+1)^2 \nu^2 \|\bx_j\|_2^2}, \frac{t}{k(k+1)\nu \|\bx_j\|_\infty})\Bigr] \\ &\le  2 \exp \Bigl[-c\min(\frac{t^2}{k^2(k+1)^2 \nu^2}, \frac{t}{k(k+1)\nu})\Bigr]
    \end{align*}

    Now by the decoupling lemma, we can assume wlog $n_1=n_2$ since we can set $a_{1i}=0$ as needed
    \begin{align*}
        \bbP \biggl(\max_{\bx \in \calN}\Bigl|\sum_{j\neq j'} x_{ij} x_{ij'} \ba_{ij}^T \ba_{ij'} \Bigr| > \frac{t}{k(k+1)}  \biggr) \le \bbP(\calT_i^c) + \bbP \biggl(\max_{\bx \in \calN}\Bigl|\sum_{j\neq j'} x_{ij} x_{ij'} \ba_{ij}^T \ba_{ij'} \Bigr| > \frac{t}{k(k+1)} \; \text{and}\; \calT_i \biggr) 
    \end{align*}
    Now for the second term, using decoupling and a union bound on the covering
    \begin{align*}
         \bbP \biggl(\max_{\bx \in \calN}\Bigl|\sum_{j\neq j'} x_{ij} x_{ij'} \ba_{ij}^T \ba_{ij'} \Bigr| > \frac{t}{k(k+1)} \; \text{and}\; \calT_i \biggr)  \le 9^n 2^n \max_{\bx \in \calN, S\subseteq [n_2]} \bbP \biggl(\Bigl|\sum_{j\in S, j' \in S^c} x_{ij} x_{ij'} \ba_{ij}^T \ba_{ij'} \Bigr| > \frac{t}{k(k+1)} \text{and}\; \calT_i \biggr)
    \end{align*}
    By conditioning on $\ba_k$, we note that the inner sum is subgaussian.
    \begin{align*}
         \bbP \biggl(\Bigl|\sum_{j\in S, j' \in S^c} x_{ij} x_{ij'} \ba_{ij}^T \ba_{ij'} \Bigr| > \frac{t}{k(k+1)} \text{and}\; \calT_i \biggr) = \bbE_{S^c}  \bbP \biggl(\Bigl|\sum_{j\in S, j' \in S^c} x_{ij} x_{ij'} \ba_{ij}^T \ba_{ij'} \Bigr| > \frac{t}{k(k+1)} \text{and}\; \calT_i \biggl| \ba_{ij'}, j'\in S^c \biggr)  
    \end{align*}
    Now by $\calT_i$
    \begin{align*}
        \biggl\|\sum_{j\in S, j' \in S^c} x_{ij} x_{ij'} \ba_{ij}^T \ba_{ij'} \biggr\|_{\psi_2}^2 \lesssim \sum_{j\in S} x_{ij}^2 \biggl\|\ba_{ij}^T \sum_{j' \in S^c} x_{ij'} \ba_{ij'} \biggr\|_{\psi_2}^2 \le  \biggl\|\sum_{j' \in S^c} x_{ij'} \ba_{ij'} \biggr\|_2^2 \sum_{j\in S} x_{ij}^2 \|\ba_{ij}\|_{\psi_2}^2 
    \end{align*}
    On the other hand,
    \begin{align*}
        \biggl\|\sum_{j' \in S^c} x_{ij'} \ba_{ij'} \biggr\|_2^2 = \|\bA_i \tlbx_i\|_2^2 \le \|\bA_i\|_{op}^2 \|\tlbx_i\|_2^2 \le \|\bA_i\|_{op}^2
    \end{align*}
    Thus we have for the subgaussian norm
    \begin{align*}
        \biggl\|\sum_{j\in S, j' \in S^c} x_{ij} x_{ij'} \ba_{ij}^T \ba_{ij'} \biggr\|_{\psi_2}^2 \lesssim \|\bA_i\|_{op}^2 \|\ba_{ij}\|_{\psi_2}^2
    \end{align*}
    Therefore
    \begin{align*}
        \bbE_{S^c}  \bbP \biggl(\Bigl|\sum_{j\in S, j' \in S^c} x_{ij} x_{ij'} \ba_{ij}^T \ba_{ij'} \Bigr| > \frac{t}{k(k+1)} \text{and}\; \calT_i \biggl| \ba_{ij'}, j'\in S^c \biggr) \le 2\exp\Bigl(-\frac{t^2}{2k^2 (k+1)^2 c_i^2\|\ba_{ij}\|_{\psi_2}^2}\Bigr)
    \end{align*}
    Similarly by padding zeroes, and conditioning on one of the matrices, for the last term we will have
    \begin{align*}
        \bbP \biggl(\max_{\bx \in \calN} \biggl| \sum_{j\in[n_i],j'\in[n_{i'}]} x_{ij} x_{i'j'} \ba_{ij}^T \ba_{i'j'} \biggr| > \frac{2t}{k(k+1)} \biggr) \le 2^n 9^n\exp\Bigl(-\frac{t^2}{2k^2(k+1)^2 c_{i'}^2\|\ba_{ij}\|_{\psi_2}^2}\Bigr)
    \end{align*}
    And the argument follows.
    % Now we can write $\bA = \bA' \bSigma$ where the rows of $\bA'$ have covariance $\frac{1}{d} \bI$ and $\bSigma \in \bbR^{d\times d}$. Furthermore,
    % \begin{align*}
    %     s_{d-n+1}(\bA^T \bA) = s_{d-n+1}(\bSigma \bA^{'T} \bA' \bSigma) = s_{d-n+1}(\bSigma^2 \bA^{'T} \bA') \ge s_{\min}(\bSigma^2)  s_{d-n+1}(\bA^{'T} \bA') 
    % \end{align*}
    % Thus wlog, we may assume $\bA$ has isotropic rows.

\end{proof}
\section{Proof of Corollary \ref{corr: testuniv}}
We will utilize the result of the following Theorem.
\begin{theorem}\label{clt}(\cite{bobkov2003concentration}) For an isotropic random vector $\bX \in \bbR^d$, and an isotropic gaussian $\bg \in \bbR^d$ if $\bbP(|\frac{\|\bX\|_2}{\sqrt{2}} - 1 | \ge \epsilon_d) \le \epsilon_d$, then for all $\delta > 0$
\begin{align*}
    \nu\biggl(\sup_{t\in \bbR} \Bigl| \bbP(\bX^T \btheta \le t) - \bbP(\bg ^T \btheta \le t) \Bigr| \ge 4 \epsilon_d + \delta\biggr) \le 4 d^{3/8} e^{-cd \delta^4}
\end{align*}
    Where $\btheta \sim \nu$ the uniform measure on the unit sphere $\bbS^{d-1}$
\end{theorem}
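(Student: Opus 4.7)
The plan is to reduce the corollary to a conditional CLT on the Gaussian side and then verify that CLT using Theorem \ref{clt}. Apply Remark \ref{rem: class} with the test function $\psi(\bw) := \bbP_\ba(\ba^T\bw > 0)$, where $\ba$ is a fresh sample from the row distribution of $\bA$, independent of $\bw$. This immediately yields
\begin{equation*}
\lim_{n\to\infty}\bigl|\bbP(\ba^T\bw_{\Psi(\bA)} > 0) - \bbP(\ba^T\bw_{\Psi(\bG)} > 0)\bigr| = 0.
\end{equation*}
Hence it suffices to show that, with high probability over the Gaussian design $\bG$ and labels $\by$,
\begin{equation*}
\bigl|\bbP_\ba(\ba^T\bw_{\Psi(\bG)} > 0) - \bbP_\bg(\bg^T\bw_{\Psi(\bG)} > 0)\bigr| \xrightarrow{n\to\infty} 0.
\end{equation*}

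To prove this conditional CLT, I would condition on $\bG$ and $\by$ and write $\bw := \bw_{\Psi(\bG)}$. Decomposing $\ba = \bmu + \bSigma^{1/2}\tilde{\ba}$ with $\tilde{\ba}$ centered and isotropic, and setting $\btheta := \bSigma^{1/2}\bw/\|\bSigma^{1/2}\bw\|$, we get $\ba^T\bw = \bmu^T\bw + \|\bSigma^{1/2}\bw\|\,\tilde{\ba}^T\btheta$, while $\bg^T\bw$ is exactly Gaussian with the same mean and variance. So the goal reduces to bounding the Kolmogorov distance between the law of $\tilde{\ba}^T\btheta$ and $\mathcal{N}(0,1)$. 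This is precisely what Bobkov's Theorem \ref{clt} delivers, provided two conditions hold: (i) the isotropic norm-concentration $\bbP(|\|\tilde{\ba}\|/\sqrt{d} - 1| \ge \epsilon_d) \le \epsilon_d$, which can be read off from Assumption (4) of Definition \ref{def: block_reg} together with Chebyshev's inequality applied to $\|\tilde{\ba}\|^2/d$; and (ii) the direction $\btheta$ lies, with high probability over $\bG$, in the set of ``good'' directions whose complement under the uniform measure $\nu$ on $\mathbb{S}^{d-1}$ has mass at most $4d^{3/8}e^{-cd\delta^4}$.

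For condition (ii) I would use the specific structure of the quadratic regularizer. When $f(\bw) = \|\bw - \bw_0\|_2^2$ the solution has the closed form $\bw_{\Phi_\lambda(\bG)} = \bw_0 + \lambda(\lambda\bG^T\bG + 2\bI)^{-1}\bG^T(\by - \bG\bw_0)$, and $\bw_{\Phi(\bG)} = \bw_0 + \bG^T(\bG\bG^T)^{-1}(\by - \bG\bw_0)$ in the implicit-regularization limit. The rotational invariance of the centered Gaussian rows of $\bG$ (after reducing by the block means) lets one control the law of $\btheta$ relative to $\nu$, either by exhibiting a polynomially bounded Radon--Nikodym derivative on a high-probability event or by directly verifying that $\btheta$ avoids the bad set. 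A union bound over the randomness of $\bG$ then pushes this bad event to probability $o(1)$, which combined with the Bobkov bound gives the desired Kolmogorov estimate.

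The main obstacle is precisely step (ii): controlling the law of $\btheta = \bSigma^{1/2}\bw_{\Psi(\bG)}/\|\bSigma^{1/2}\bw_{\Psi(\bG)}\|$ relative to the uniform measure on the sphere, since the solution depends non-trivially on both $\bw_0$ (which breaks full rotational symmetry) and on the block-dependent covariances $\bSigma_i$. For the implicit case $\Psi = \Phi$, an additional layer is needed to pass the estimate uniformly through $\lambda\to\infty$, which I expect to handle by an approximation argument analogous to the uniform-in-$\lambda$ lemma already used in the proof of Theorem \ref{thm: main_univ2}.
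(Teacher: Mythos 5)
A scoping remark first: the statement you were given is Theorem \ref{clt} itself, which the paper does not prove --- it is imported from \cite{bobkov2003concentration} as a black box. What you have written is therefore not a proof of the quoted statement but an application of it to Corollary \ref{corr: testuniv}; I compare it against the paper's (very brief) argument for that corollary, which is the only proof in this part of the appendix.

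At that level your skeleton matches the paper's: reduce to the Gaussian design via Remark \ref{rem: class}, then verify the CLT for $\ba^T \bw_{\Psi(\bG)}$ conditionally on $\bG, \by$ by whitening $\ba$, normalizing the solution to a direction $\btheta \in \bbS^{d-1}$, and invoking Theorem \ref{clt}. The divergence is in how the genericity of $\btheta$ is established. The paper argues through the CGMT: for a quadratic mirror the solution of the auxiliary optimization is an explicit affine function of fresh i.i.d.\ Gaussian vectors, hence itself conditionally Gaussian, so its direction is distributed like a uniform point on the sphere and Bobkov's bound on the $\nu$-measure of bad directions transfers directly. You instead propose to work with the closed form of the primal solution and control the law of $\btheta$ relative to $\nu$ via rotational invariance of $\bG$ or a Radon--Nikodym bound. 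That route is harder than it looks: $\bw_0$, the nonzero block means $\bmu_i$, and the distinct covariances $\bSigma_i$ all break the rotational symmetry you want to exploit, and there is no obvious polynomially bounded density of the law of $\btheta$ with respect to $\nu$. You correctly flag this as the main obstacle but do not close it, whereas the AO characterization is precisely the device the paper uses to bypass it --- so as written your step (ii) is a genuine gap, not merely a deferred computation. Two smaller points: the norm-concentration hypothesis of Theorem \ref{clt} for the whitened row follows from part (5) of Definition \ref{def: block_reg} (vanishing variance of quadratic forms, applied to $\bC = \bSigma^{-1}/d$), not from part (4), which only controls linear projections; and your proposed uniform-in-$\lambda$ passage for $\Psi = \Phi$ is consistent with the lemma the paper already proves for Theorem \ref{thm: main_univ2}.
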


Essentially, through CGMT, we observe that when we use a quadratic regularizer or run SGD, the solution, $\bw_{\Psi(\bG)}$ converges weakly in distribution to a Gaussian vector, $\bg_{AO}$. Whence we conclude that $\bw_{\Psi(\bG)}$ behaves the same as a generic vector. Therefore, by choosing $\bX:= \ba-\bbE \ba$, a row of our data matrix $\bA$, Theorem \ref{clt} enables us to prove the universality of the classification error. Note that in general $\bbE \bw_{\Psi(\bG)} \neq \bzero$, but by Assumptions \ref{assum: class}, the means of the classes are generic which allows us to employ Theorem \ref{clt} to conclude the proof.

\section{Lemmata for the proof of Theorem \ref{thm: main_univ1} and \ref{thm: main_univ2}} \label{Append: Lemmata}
\begin{proposition}\label{prop}
If for any Lipschitz function $g:\bbR \rightarrow \bbR$ and $t_1>0$,  $\bbP\Bigl(|g(\Phi_{\lambda,\epsilon}(\bB))-c| > t_1\Bigr) \rightarrow 0 $ and for every twice differentiable $\tlg: \bbR \rightarrow \bbR$ with bounded second derivative, we have that\\ ${\lim_{n\rightarrow \infty}\Bigl|\bbE_{\bA,\by}  \tlg(\Phi_{\lambda,\epsilon}(\bB)) - \bbE_{\bB,\by}\tlg(\Phi_{\lambda,\epsilon}(\bA))\Bigr| \rightarrow 0}$ then for any $t_2 > t_1$. 
\begin{align*}
    \lim_{n\rightarrow \infty}\bbP\Bigl(|g(\Phi_{\lambda,\epsilon}(\bA))-c| > t_2\Bigr) = 0
\end{align*}
\end{proposition}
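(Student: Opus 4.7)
The plan is to sandwich the indicator $\mathbf{1}\{|g(\cdot)-c|>t_2\}$ between smooth bumps and then combine this with a mollification of the Lipschitz function $g$, so that the expectation universality hypothesis can be applied to genuinely $C^2$ test functions with bounded second derivative. The two ingredients will then be combined via a double limit in which the sample size is sent to infinity before the mollification parameter.

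First, I would fix $s=(t_1+t_2)/2$ and choose a smooth cutoff $\phi:\bbR\to[0,1]$ with bounded first and second derivatives satisfying $\phi(y)=1$ whenever $|y-c|\ge t_2$ and $\phi(y)=0$ whenever $|y-c|\le s$. The sandwich $\mathbf{1}\{|y-c|>t_2\}\le \phi(y)\le \mathbf{1}\{|y-c|>s\}$ immediately yields
\begin{align*}
\bbP\bigl(|g(\Phi_{\lambda,\epsilon}(\bA))-c|>t_2\bigr)&\le \bbE\,\phi\bigl(g(\Phi_{\lambda,\epsilon}(\bA))\bigr),\\
\bbE\,\phi\bigl(g(\Phi_{\lambda,\epsilon}(\bB))\bigr)&\le \bbP\bigl(|g(\Phi_{\lambda,\epsilon}(\bB))-c|>s\bigr)\le \bbP\bigl(|g(\Phi_{\lambda,\epsilon}(\bB))-c|>t_1\bigr)\xrightarrow[n\to\infty]{}0.
\end{align*}
Hence everything reduces to comparing $\bbE\,\phi(g(\Phi_{\lambda,\epsilon}(\bA)))$ with $\bbE\,\phi(g(\Phi_{\lambda,\epsilon}(\bB)))$.

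The main obstacle is that $\phi\circ g$ is only Lipschitz, not $C^2$, so the second hypothesis cannot be applied to it directly. To bypass this, I would mollify $g$ by setting $g_m := g*\rho_{1/m}$ for a smooth, compactly supported mollifier $\rho$. Because $g$ is Lipschitz on $\bbR$, each $g_m$ is smooth with bounded derivatives of every order and $\delta_m := \|g_m-g\|_\infty\to 0$ as $m\to\infty$. For each fixed $m$, the composition $\tlg_m := \phi\circ g_m$ is then $C^2$ with bounded second derivative (the bound may depend on $m$ through $\|g_m''\|_\infty$, but crucially not on the sample size), so the expectation universality hypothesis applies and yields $|\bbE\,\tlg_m(\Phi_{\lambda,\epsilon}(\bA))-\bbE\,\tlg_m(\Phi_{\lambda,\epsilon}(\bB))|\to 0$ as $n\to\infty$.

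Finally, since $|\phi\circ g-\phi\circ g_m|\le \|\phi'\|_\infty\delta_m$ uniformly, the triangle inequality gives, for each fixed $m$,
\begin{align*}
\bbE\,\phi(g(\Phi_{\lambda,\epsilon}(\bA)))&\le \bbE\,\tlg_m(\Phi_{\lambda,\epsilon}(\bA)) + \|\phi'\|_\infty\delta_m \\
&\le \bbE\,\tlg_m(\Phi_{\lambda,\epsilon}(\bB)) + o(1) + \|\phi'\|_\infty\delta_m \\
&\le \bbE\,\phi(g(\Phi_{\lambda,\epsilon}(\bB))) + 2\|\phi'\|_\infty\delta_m + o(1).
\end{align*}
Taking $\limsup_{n\to\infty}$ bounds the left-hand side by $2\|\phi'\|_\infty\delta_m$, and then letting $m\to\infty$ gives $\bbE\,\phi(g(\Phi_{\lambda,\epsilon}(\bA)))\to 0$, which in turn forces $\bbP(|g(\Phi_{\lambda,\epsilon}(\bA))-c|>t_2)\to 0$. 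The delicate point is the order of limits: the universality hypothesis fixes the smoothness of the test function (so $m$ must stay fixed while $n\to\infty$), whereas removing the approximation error requires $m\to\infty$; taking $n$ first and $m$ second resolves this cleanly, and the gap $t_2-t_1>0$ is exactly what allows the sandwich width $t_2-s = (t_2-t_1)/2$ to absorb the non-smoothness of $\phi\circ g$.
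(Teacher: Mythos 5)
Your proof is correct, and it takes a cleaner route than the paper's. The paper's argument begins by reducing to the case $g = \mathrm{id}$ (observing that concentration of $\Phi_{\lambda,\epsilon}(\bB)$ together with Lipschitzness of $g$ gives concentration of $g(\Phi_{\lambda,\epsilon}(\bB))$), and then applies a Markov--triangle-inequality argument to a specific piecewise-quadratic bump
\[
\xi(x)= 2\,\mathds{1}_{|x|\ge t_2}(x)+\Bigl(\tfrac{t_1+t_2}{2}-(|x|-t_2)^2\Bigr)\mathds{1}_{\frac{t_1+t_2}{2}\le |x|< t_2}(x)+(|x|-t_1)^2\,\mathds{1}_{t_1< |x|\le\frac{t_1+t_2}{2}}(x),
\]
applied to $\Phi_{\lambda,\epsilon}(\cdot)-c$; the expectation-universality hypothesis is then invoked for $\xi$. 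That argument is slightly loose at two points: the reduction to $g=\mathrm{id}$ implicitly requires concentration of $\Phi_{\lambda,\epsilon}(\bB)$ itself, not just of $g(\Phi_{\lambda,\epsilon}(\bB))$; and $\xi$ is only \emph{a.e.} twice differentiable, while the hypothesis is stated for genuinely twice differentiable functions with bounded second derivative, so the paper implicitly relies on an unwritten smoothing step. Your proof addresses both gaps at once: by mollifying $g$ to $g_m$ and composing with a smooth cutoff $\phi$ you obtain a bona fide $C^2$ test function $\phi\circ g_m$, so the expectation hypothesis applies as stated, and you never need the identity reduction --- the sandwich is applied directly to $g(\Phi_{\lambda,\epsilon}(\cdot))$. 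The price you pay is the extra layer of mollification and the double limit $n\to\infty$ then $m\to\infty$, which you correctly order. Both proofs exploit the gap $t_2-t_1>0$ as the slack needed to insert a smooth bump; the essential idea is the same, but the way you produce the required $C^2$ test function is different and, in my view, the more rigorous of the two.
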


\begin{proof}
    First note that $\bbP(|\Phi_{\lambda,\epsilon}(\bB)-c| > t) \rightarrow 0$, implies $\bbP(|g(\Phi_{\lambda,\epsilon}(\bB))-g(c)| > Lt) \rightarrow 0$. Thus it suffices to verify $\bbP(|\Phi_{\lambda,\epsilon}(\bB)-c| > t) \rightarrow 0$.  This follows similar to \cite{panahi2017universal}. Let $\xi(x):= 2 \mathds{1}_{(-\infty,-t_2]\cup[t_2,\infty)}(x) +  \bigl(\frac{t_1+t_2}{2}-(|x|-t_2)^2\bigr)\mathds{1}_{(-t_2,-\frac{t_1+t_2}{2}]\cup[\frac{t_1+t_2}{2},t_2)} (x) + (|x|-t_1)^2 \mathds{1}_{(-\frac{t_1+t_2}{2},-t_1]\cup[t_1,\frac{t_1+t_2}{2})} (x) $. Then we have
    \begin{align*}
        \bbP\Bigl(|\Phi_{\lambda,\epsilon}(\bA)-c| > t_2\Bigr) = \bbP\Bigl( \xi\bigl(\Phi_{\lambda,\epsilon}(\bA)-c\bigr) > 2\Bigr)
    \end{align*}
    Then by Markov's and triangle inequality
    \begin{align*}
        \bbP\Bigl(|\Phi_{\lambda,\epsilon}(\bA)-c| > t_2\Bigr) &\le \frac{1}{2} \bbE_{\bA, \by} \xi\bigl(\Phi_{\lambda,\epsilon}(\bA)-c\bigr)\\& \le \frac{1}{2} \Bigl|\bbE_{\bA, \by} \xi\bigl(\Phi_{\lambda,\epsilon}(\bA)-c\bigr)-\bbE_{\bB, \by}\xi\bigl(\Phi_{\lambda,\epsilon}(\bB)-c\bigr)\Bigr| + \frac{1}{2} \bbE_{\bB, \by}\xi\bigl(\Phi_{\lambda,\epsilon}(\bB)-c\bigr)
    \end{align*}
    Note that $\xi(x) \le 2 \mathds{1}_{(-\infty,-t_1] \cup [t_1,\infty)}$. Thus $\frac{1}{2} \bbE_{\bB, \by}\xi\bigl(\Phi_{\lambda,\epsilon}(\bB)-c\bigr) \le \bbP\Bigl(|\Phi_{\lambda,\epsilon}(\bB)-c| > t_1\Bigr) $. Moreover, we know that $\xi(.)$ has bounded second derivative derivatives almost everywhere and by assumption on $g'$, we would have
    \begin{align*}
        \bbP\Bigl(|\Phi_{\lambda,\epsilon}(\bA)-c| > t_2\Bigr) \le \frac{1}{2} \Bigl|\bbE_{\bA, \by} \xi\bigl(\Phi_{\lambda,\epsilon}(\bA)-c\bigr)-\bbE_{\bB, \by}\xi\bigl(\Phi_{\lambda,\epsilon}(\bB)-c\bigr)\Bigr| + \bbP\Bigl(|\Phi_{\lambda,\epsilon}(\bB)-c| > t_1\Bigr) \rightarrow 0
    \end{align*}
\end{proof} 
\begin{lemma} \label{lem: lineareq}
    We have the following:
    \begin{enumerate}
        \item $\calL(\ba)-\Theta =  \frac{\lambda}{2n} \frac{(\ba^T \bu - y)^2}{1+\lambda \ba^T \bOmega^{-1}\ba} $ 
        \item $\bw_\calL - \bu = \lambda \frac{y-\ba^T \bu}{1+\lambda \ba^T \bOmega^{-1} \ba} \bOmega^{-1}\ba $
    \end{enumerate}
\end{lemma}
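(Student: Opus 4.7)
Both statements are purely algebraic consequences of the first-order optimality condition for the quadratic-plus-rank-one minimization defining $\calL(\ba)$. The plan is to write down that optimality condition, exploit the identity satisfied by $\bu$ to cancel linear terms, and solve the resulting $(I + \text{rank one})$-inversion explicitly. Throughout, set $\bd := \bw_\calL - \bu$ and $\bOmega := \lambda\tlbA^T\tlbA + \bH$, which is invertible since $\bH = \nabla^2(f+\epsilon\psi)(\bu)$ is positive definite by strong convexity.

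\textbf{Step 1 (Stationarity for $\bw_\calL$).} Differentiating the objective defining $\calL(\ba)$ gives
\begin{equation*}
\lambda\tlbA^T(\tlbA\bw_\calL - \tlby) + \lambda\ba(\ba^T\bw_\calL - y) + \bg + \bH(\bw_\calL - \bu) = 0.
\end{equation*}
Since $\bu$ minimizes $\tfrac{\lambda}{2}\|\tlbA\bw-\tlby\|^2 + f(\bw) + \epsilon\psi(\bw)$, it satisfies $\lambda\tlbA^T(\tlbA\bu-\tlby) + \bg = 0$. Subtracting eliminates both $\bg$ and $\tlby$, yielding
\begin{equation*}
\bOmega\,\bd \;=\; -\lambda\,\ba\,(\ba^T\bw_\calL - y).
\end{equation*}

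\textbf{Step 2 (Solving the rank-one system, giving part 2).} Multiply by $\ba^T\bOmega^{-1}$ and add $\ba^T\bu - y$ to both sides to get
\begin{equation*}
\ba^T\bw_\calL - y \;=\; (\ba^T\bu - y) - \lambda\,(\ba^T\bOmega^{-1}\ba)\,(\ba^T\bw_\calL - y),
\end{equation*}
so that $\ba^T\bw_\calL - y = \dfrac{\ba^T\bu - y}{1 + \lambda\,\ba^T\bOmega^{-1}\ba}$. Substituting back into the displayed identity for $\bOmega\bd$ yields
\begin{equation*}
\bw_\calL - \bu \;=\; \lambda\,\frac{y - \ba^T\bu}{1+\lambda\,\ba^T\bOmega^{-1}\ba}\,\bOmega^{-1}\ba,
\end{equation*}
which is part 2.

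\textbf{Step 3 (Objective gap, giving part 1).} Expanding the quadratic $\|\tlbA\bw_\calL - \tlby\|^2$ around $\bu$ and using $\lambda(\tlbA\bu-\tlby)^T\tlbA = -\bg^T$, the cross term $\bg^T\bd$ cancels against the linear piece in the Taylor expansion, leaving
\begin{equation*}
n\bigl(\calL(\ba) - \Theta\bigr) \;=\; \tfrac{\lambda}{2}(\ba^T\bw_\calL - y)^2 + \tfrac{1}{2}\bd^T\bOmega\bd.
\end{equation*}
From Step 2, $\bd^T\bOmega\bd = \lambda^2(\ba^T\bw_\calL - y)^2\,\ba^T\bOmega^{-1}\ba$, so the right-hand side equals $\tfrac{\lambda}{2}(\ba^T\bw_\calL - y)^2\,(1+\lambda\,\ba^T\bOmega^{-1}\ba)$. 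Using the expression for $\ba^T\bw_\calL - y$ from Step 2 once more gives
\begin{equation*}
\calL(\ba) - \Theta \;=\; \frac{\lambda}{2n}\,\frac{(\ba^T\bu - y)^2}{1+\lambda\,\ba^T\bOmega^{-1}\ba},
\end{equation*}
which is part 1.

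\textbf{Main obstacle.} There is no real obstacle; the statement is structural. The only thing to be careful about is the bookkeeping in Step 3 -- ensuring that the $\bg^T\bd$ term coming from the quadratic expansion of $\|\tlbA\bw-\tlby\|^2$ cancels exactly against the linear term in the Taylor part, and that the leftover Hessian piece $\tfrac{1}{2}\bd^T\bH\bd$ combines with $\tfrac{\lambda}{2}\bd^T\tlbA^T\tlbA\bd$ into $\tfrac{1}{2}\bd^T\bOmega\bd$. Once this is verified, both identities follow directly from the rank-one inversion in Step 2.
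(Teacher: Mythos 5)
Your proof is correct. The difference from the paper is one of organization rather than substance: the paper solves the quadratic minimization defining $\calL(\ba)$ in closed form as $\bw_\calL = (\bOmega + \lambda\ba\ba^T)^{-1}(\bOmega\bu + \lambda y\ba)$, applies the Sherman--Morrison formula to expand the inverse, and then substitutes into the objective, which requires about a page of algebra to reduce to the stated identity. You instead subtract the two first-order conditions (for $\bw_\calL$ and for $\bu$), which collapses the linear system to $\bOmega\bd = -\lambda\ba(\ba^T\bw_\calL - y)$; a single projection onto $\ba^T\bOmega^{-1}$ then yields the scalar $\ba^T\bw_\calL - y$ and hence part 2 without ever writing Sherman--Morrison explicitly. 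For part 1, expanding $\|\tlbA\bw - \tlby\|^2$ around $\bu$ and invoking the stationarity of $\bu$ to cancel $\bg^T\bd$ gives $n(\calL(\ba) - \Theta) = \tfrac{\lambda}{2}(\ba^T\bw_\calL - y)^2 + \tfrac{1}{2}\bd^T\bOmega\bd$ in one line, which then closes via part 2. The mathematical content (a rank-one update) is the same, but your route front-loads the cancellations through the stationarity conditions and is noticeably shorter, particularly for the objective-gap computation; the paper's version has the minor advantage of producing the explicit formula for $\bw_\calL$ as an intermediate byproduct.
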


\begin{proof}
     We know for $\Theta$, taking derivative yields
    \begin{align*}
        \lambda \tlbA ^T \tlbA \bu - \lambda \tlbA^T \tlby + \bg = 0
    \end{align*}
    This implies $- \lambda \tlbA^T \tlby + \bg = - \lambda \tlbA^T\tlbA \bu$.
    
    Note that for $\calL(\ba)$, since the objective is quadratic in $\bw$, we solve the optimization in closed form, we have
    \begin{align*}
        &\calL(\ba) =\frac{1}{n} \min_\bw\begin{pmatrix}
            \bw^T & 1
        \end{pmatrix} \begin{pmatrix}
            \frac{1}{2} (\lambda \tlbA ^T \tlbA + \lambda \ba \ba^T + \bH) & \frac{1}{2} (-\lambda \tlbA^T \tlby - \lambda y \ba + \bg - \bH \bu) \\
            \frac{1}{2} (-\lambda \tlbA^T \tlby - \lambda y \ba + \bg - \bH \bu)^T & f(\bu) + \epsilon \psi(\bu) - \bg^T \bu + \frac{1}{2} \bu^T \bH \bu + \frac{\lambda}{2} ( \|\tlby\|^2 + y^2) 
        \end{pmatrix}\begin{pmatrix}
            \bw\\ 1
        \end{pmatrix}
    \end{align*} 
    Let $\bOmega:= \lambda \tlbA^T \tlbA + \bH$. 
     hence
    \begin{align*}
        \calL(\ba) =\frac{1}{n} \biggl[f(\bu) + \epsilon \psi(\bu) - \bg^T \bu + \frac{1}{2} u^T \bH \bu + \frac{\lambda}{2} ( \|\tlby\|^2 + y^2) - \frac{1}{2} (\bOmega \bu + \lambda y a)^T (\bOmega + \lambda a \ba^T)^{-1} ( \bOmega \bu + \lambda y \ba)\biggr] 
    \end{align*}
    Using Sherman-Morrison as $\bOmega \succ 0$:
    \begin{align*}
        &(\bOmega \bu + \lambda y \ba)^T \Bigl(\bOmega + \lambda \ba \ba^T\Bigr)^{-1} ( \bOmega \bu + \lambda y \ba) = (\bOmega \bu + \lambda y \ba)^T \Bigl(\bOmega^{-1} - \frac{\lambda \bOmega^{-1} \ba \ba^T \bOmega^{-1} }{1+\lambda \ba^T \bOmega^{-1} \ba}\Bigr)(\bOmega \bu + \lambda y \ba) = \\ &= (\bOmega \bu + \lambda y \ba)^T \Bigl(\bu - \lambda \ba^T \bu \frac{\bOmega^{-1} \ba}{1+\lambda \ba^T \bOmega^{-1} \ba} + \lambda y \bOmega^{-1} \ba - \lambda^2 y \ba^T \bOmega^{-1} \ba \frac{\bOmega^{-1}\ba}{1+\lambda \ba^T \bOmega^{-1} \ba}\Bigr) = \\ &= (\bOmega \bu + \lambda y \ba)^T \biggl(\bu + \lambda \Bigl( y - \frac{\lambda y \ba^T \bOmega^{-1}\ba+\ba^T\bu}{1+\lambda \ba^T \bOmega^{-1} \ba} \Bigr) \bOmega^{-1} \ba 
        \biggr) =  (\bOmega \bu + \lambda y \ba)^T \Bigl(\bu + \lambda \frac{y-\ba^T \bu}{1+\lambda \ba^T \bOmega^{-1} \ba} \bOmega^{-1}\ba \Bigr) 
    \end{align*}
    From here we obtain for the solution Thus for the first claim:
    \begin{align*}
        \bw_\calL = (\bOmega + \lambda \ba \ba^T)^{-1} ( \bOmega \bu + \lambda y \ba) =  \bu + \lambda \frac{y-\ba^T\bu}{1+\lambda \ba^T \bOmega^{-1} \ba} \bOmega^{-1}\ba
    \end{align*}
    Hence we have for the first claim
    \begin{align*}
        \bw_\calL - \bu = \lambda \frac{y-\ba^T\bu}{1+\lambda \ba^T \bOmega^{-1} \ba} \bOmega^{-1}\ba
    \end{align*}
    Now we continue with the calculation of the objective value
    \begin{align*}
        &\bu^T \bOmega \bu + \lambda \frac{y-\ba^T\bu}{1+\lambda \ba^T \bOmega^{-1} \ba} \ba^T \bu + \lambda y \ba^T \bu + \lambda^2 y \frac{y-\ba^T\bu}{1+\lambda \ba^T \bOmega^{-1} \ba} \ba^T \bOmega^{-1}\ba = \\ &= \bu^T \bOmega \bu \\&+\frac{\lambda}{1+\lambda \ba^T \bOmega^{-1}\ba}\Bigl(y \ba^T \bu - (\ba^T \bu)^2 + y \ba^T \bu + \lambda y (\ba^T\bu) (\ba^T \bOmega^{-1}\ba) + \lambda y^2 \ba^T \bOmega^{-1} \ba - \lambda y (\ba^T \bu) (\ba^T \bOmega^{-1} \ba) \Bigr) = \\ & = \bu^T \bOmega \bu + \frac{\lambda}{1+\lambda \ba^T \bOmega^{-1}\ba}(2y \ba^T \bu - (\ba^T \bu)^2 + \lambda y^2 \ba^T \bOmega^{-1} \ba)
    \end{align*}
    Thus
    \begin{align*}
        \calL(\ba) &= \frac{1}{n}\biggl[ f(\bu) + \epsilon \psi(\bu) - \bg^T \bu + \frac{1}{2} \bu^T \bH \bu + \frac{\lambda}{2} ( \|\tlby\|^2 + y^2) - \frac{1}{2} \bu^T \bOmega \bu  \\ &- \frac{1}{2} \frac{\lambda}{1+\lambda \ba^T \bOmega^{-1}\ba}(2y \ba^T \bu - (\ba^T \bu)^2 + \lambda y^2 \ba^T \bOmega^{-1} \ba)\biggr]  \\ &= \frac{1}{n} \biggl[ f(\bu) + \epsilon \psi(\bu) - \bg^T \bu - \frac{\lambda}{2} \bu^T \tlbA^T \tlbA \bu + \frac{\lambda}{2} \|\tlby\|^2 + \frac{\lambda}{2} \frac{(\ba^T \bu - y)^2}{1+\lambda \ba^T \bOmega^{-1}\ba} \biggr]
    \end{align*}
    Since $ g =  \lambda \tlbA^T \tlby - \lambda \tlbA^T\tlbA u$, then $\bg^T \bu = \lambda \tlby^T \tlbA \bu - \lambda \bu^T \tlbA \tlbA u$, therefore
    \begin{align*}
        \calL(\ba) &= \frac{1}{n} \biggl[ f(\bu) + \epsilon \psi(\bu) - \lambda \tlby^T \tlbA \bu + \lambda \bu^T \tlbA \tlbA \bu - \frac{\lambda}{2} \bu^T \tlbA^T \tlbA \bu + \frac{\lambda}{2} \|\tlby\|^2 + \frac{\lambda}{2} \frac{(\ba^T \bu - y)^2}{1+\lambda \ba^T \bOmega^{-1}\ba}\biggr]  \\ &= \frac{1}{n} \biggl[ f(\bu) + \epsilon \psi(\bu) + \frac{\lambda}{2} \|\tlbA \bu - \tlby\|^2 + \frac{\lambda}{2} \frac{(\ba^T \bu - y)^2}{1+\lambda \ba^T \bOmega^{-1}\ba} \biggr] = \Theta +  \frac{\lambda}{2n} \frac{(\ba^T \bu - y)^2}{1+\lambda \ba^T \bOmega^{-1}\ba} 
    \end{align*}
\end{proof}

\begin{lemma} \label{lem: auxbnd}
    We have the following:
    \begin{enumerate}
        \item $\calM(\ba) - \Theta \le \frac{\lambda}{2n}(\ba^T \bu - \by)^2$
        \item $ \mscl(\ba,\bw) - \calL(\ba) \ge \frac{\rho}{2n} \|\bw- \b w_\calL \|_2^2 $
        \item $ | \mscm(\ba,\bw) - \mscl(\ba,\bw)| \le  \frac{C_{f+\epsilon \psi}}{n}\|\bw- \bu \|_3^3$
    \end{enumerate}
    
\end{lemma}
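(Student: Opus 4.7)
The three parts are essentially independent, and I would prove them in sequence; each reduces to a standard ingredient once the definitions are unpacked.

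Part (1) is an immediate consequence of the fact that $\bu$ is a feasible candidate for the minimization defining $\calM(\ba)$. By definition of $\bu$ and $\Theta$, one has $\frac{1}{n}\bigl[\frac{\lambda}{2}\|\tlbA\bu - \tlby\|_2^2 + f(\bu) + \epsilon\psi(\bu)\bigr] = \Theta$, so evaluating $\mscm$ at $\bw = \bu$ yields $\mscm(\ba, \bu) = \Theta + \frac{\lambda}{2n}(\ba^T\bu - y)^2$, and hence $\calM(\ba) \le \mscm(\ba, \bu) = \Theta + \frac{\lambda}{2n}(\ba^T\bu - y)^2$.

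For part (2), I would invoke strong convexity of the map $\bw \mapsto \mscl(\ba, \bw)$. Because $|\epsilon|$ is chosen small enough that $f + \epsilon\psi$ is $\rho$-strongly convex, its Hessian at $\bu$ satisfies $\bH \succeq \rho\bI$. The $\bw$-Hessian of $n \cdot \mscl(\ba, \bw)$ equals $\lambda\tlbA^T\tlbA + \lambda\ba\ba^T + \bH$, which dominates $\rho\bI$ since the first two summands are PSD. Thus $\mscl(\ba, \cdot)$ is $(\rho/n)$-strongly convex, and the standard quadratic lower bound at its minimizer $\bw_\calL$ yields the stated inequality.

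Part (3) is a Taylor-remainder estimate. Since the two quadratic data terms $\frac{\lambda}{2n}\|\tlbA\bw - \tlby\|_2^2$ and $\frac{\lambda}{2n}(\ba^T\bw - y)^2$ appear identically in $\mscm(\ba, \bw)$ and $\mscl(\ba, \bw)$, their difference reduces to
\[
\mscm(\ba, \bw) - \mscl(\ba, \bw) = \frac{1}{n}\bigl[(f+\epsilon\psi)(\bw) - (f+\epsilon\psi)(\bu) - \bg^T(\bw-\bu) - \tfrac{1}{2}(\bw-\bu)^T\bH(\bw-\bu)\bigr],
\]
which is exactly the degree-two Taylor remainder of $f + \epsilon\psi$ at $\bu$. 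By the Lagrange form of the remainder this equals $\frac{1}{6n}D^3(f+\epsilon\psi)(\xi)[\bw-\bu,\bw-\bu,\bw-\bu]$ for some $\xi$ on the segment $[\bu,\bw]$, and applying the third-order bound from Definition \ref{def: h} to $f + \epsilon\psi$ (which inherits the condition with constant $C_f + |\epsilon|C_\psi =: C_{f+\epsilon\psi}$) yields the desired $\frac{C_{f+\epsilon\psi}}{n}\|\bw-\bu\|_3^3$ bound after absorbing the $1/6$ into the constant.

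No genuine obstacle arises; the only point requiring mild care is the book-keeping on $\epsilon$ — it must be small enough that $f + \epsilon\psi$ is simultaneously $\rho$-strongly convex (for part (2)) and regular in the sense of Definition \ref{def: h} (for part (3)) — but this is already arranged in the preceding construction of $\Phi_{\lambda,\epsilon}$.
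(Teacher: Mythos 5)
Your proposal is correct and follows the same overall structure as the paper's proof: part (1) by feasibility of $\bu$ in the minimization defining $\calM(\ba)$, part (2) by the quadratic lower bound at the minimizer of the $(\rho/n)$-strongly convex function $\mscl(\ba,\cdot)$, and part (3) by a third-order Taylor remainder estimate on $f + \epsilon\psi$.

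The one genuine difference is in part (3). The paper expands the remainder in multi-index form and then invokes separability of $f$ and $\psi$ to argue that $\partial^{\alpha}(f+\epsilon\psi) = 0$ unless all three indices coincide, reducing the sum to diagonal terms before applying the third-derivative bound. You instead apply the cubic tensor bound from Definition \ref{def: h} directly to the Lagrange-form remainder without assuming separability. Your route is slightly more general and actually more faithful to the stated hypotheses: Assumptions \ref{assum: explicit} require $f$ to be \emph{regular} (i.e., satisfy the third-order condition of Definition \ref{def: h}), not separable, and separability is only cited in the paper as a sufficient condition for regularity. Both arguments give the same bound up to a constant; the small bit of care you flag about absorbing the $1/6$ and taking $C_{f+\epsilon\psi} = C_f + |\epsilon|C_\psi$ is correct and necessary.
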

\begin{proof}

    1. By definition,
    \begin{align*}
        \calM(\ba) \le \mscm(\ba,\bu) = \Theta +  \frac{\lambda}{2n} (\ba^T \bu - y)^2
    \end{align*}
    Thus $\calM(\ba) - \Theta \le \frac{\lambda}{2n}(\ba^T \bu - \by)^2$

    2.  The idea of the proof is by strong convexity. By assumption, $f(\bw) + \epsilon \psi(\bw)$ for small enough $\epsilon$ is $\rho$-strongly convex. This implies from defintion:
    \begin{align*}
        \mscl(\ba,\bw) \ge \calL(\ba) + \frac{\rho}{2n} \|\bw- \bw_\calL\|_2^2
    \end{align*}
    
    3. By Taylor remainder theorem for some $0<t<1$:
    \begin{align*}
        \mscm(\ba,\bw) - \mscl(\ba,\bw) &= \frac{1}{n} \biggl[ f(\bw) + \epsilon \psi(\bw) - f(\bu) - \epsilon \psi(\bu) - g ^T (\bw-\bu)  -  \frac{1}{2} (\bw-\bu)^T \bH (\bw-\bu) \biggr] \\ &= \frac{1}{n}\sum_{|\alpha|=3} \partial^{\alpha} (f+\epsilon \psi)((1-t) \bu + t \bw) \frac{(\bw-\bu)^{\alpha}}{\alpha!}
    \end{align*}
    Since $f$ and $\psi$ are separable, $\partial^{\alpha} (f+\epsilon \psi) =0$ unless $\alpha=3 e_\alpha$ ($e_\alpha$ is the standard basis vector in $\bbR^d$. Then by assumption on the third derivative of $f + \epsilon \psi$,
    \begin{align*}
        |\mscm(\ba,\bw) - \mscl(\ba,\bw)| &= \frac{1}{n}\Bigl|\sum_{|\alpha|=3} \partial^{\alpha} (f+\epsilon \psi)((1-t) \bu + t \bw) \frac{(\bw-\bu)^{\alpha}}{\alpha!}\Bigr| \\ &= \frac{1}{n}\Bigl|\sum_{i=1}^d \frac{\partial^{3} (f+\epsilon \psi)((1-t) u_i + t w_i) }{\partial w_i^3} (\bw_i - u_i)^3\Bigr| \le \frac{C_{f+\epsilon \psi}}{n}  \| \bw-\bu\|_3^3
    \end{align*}
\end{proof}

Now we are ready to prove the following key lemma. 
\begin{lemma} \label{lem: bnd1stterm}
    We have $|\calM(\ba) - \calL(\ba)| \le \min\{\frac{\lambda}{2n} (\ba^T \bu - y)^2,  \frac{8C_{f+ \epsilon \psi}}{n} \|\bw_\calL - \bu\|_3^3 \}$
\end{lemma}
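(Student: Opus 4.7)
The plan is to prove the two upper bounds inside the $\min$ separately, using $\bw_\calL$ and $\bw_\calM$ as mutual test points in the two optimization problems and leaning on the already-proven Lemmas \ref{lem: lineareq} and \ref{lem: auxbnd}.

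For the bound $\frac{\lambda}{2n}(\ba^T\bu-y)^2$, I would squeeze both $\calM(\ba)$ and $\calL(\ba)$ into a short interval sitting above $\Theta$. Part 1 of Lemma \ref{lem: auxbnd} gives $0\le \calM(\ba) - \Theta \le \frac{\lambda}{2n}(\ba^T\bu-y)^2$ (the lower bound follows because $\mscm(\ba,\cdot)$ pointwise dominates the $\Theta$-objective), and the closed form in Lemma \ref{lem: lineareq} writes $\calL(\ba) - \Theta = \frac{\lambda}{2n}\cdot\frac{(\ba^T\bu-y)^2}{1+\lambda\ba^T\bOmega^{-1}\ba}$, which sits in the same interval because $\bOmega\succ 0$ forces the denominator to be at least $1$. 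Subtracting the two intervals yields this bound.

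For the bound $\frac{8C_{f+\epsilon\psi}}{n}\|\bw_\calL-\bu\|_3^3$, set $\delta := \|\bw_\calL-\bu\|_3$, $C := C_{f+\epsilon\psi}$, and $R := \mscm - \mscl$. Using $\bw_\calL$ as a feasible point in $\mscm$ and applying Lemma \ref{lem: auxbnd}(3) immediately gives $\calM(\ba) - \calL(\ba) \le R(\bw_\calL) \le \frac{C}{n}\delta^3$. The symmetric substitution of $\bw_\calM$ into $\mscl$ yields $\calL(\ba)-\calM(\ba) \le \frac{C}{n}\|\bw_\calM-\bu\|_3^3$, so the whole matter reduces to showing $\|\bw_\calM-\bu\|_3 \le 2\delta$.

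The main obstacle is this last bound, and I would attack it by a perturbation argument from the first-order optimality of $\bw_\calM$. Since $\mscl(\ba,\cdot)$ is a quadratic with Hessian $\bQ \succeq \frac{\rho}{n}\bI$ minimized at $\bw_\calL$, the stationarity $\nabla_\bw\mscm(\ba,\bw_\calM)=0$ rewrites as $\bw_\calM - \bw_\calL = -\bQ^{-1}\nabla R(\bw_\calM)$. Separability of $f+\epsilon\psi$ and the third-order bound of Definition \ref{def: h} produce, coordinate-wise via Taylor, $|(\nabla R(\bw))_i|\le \frac{C}{2n}(w_i-u_i)^2$; summing and using $\|\cdot\|_4\le \|\cdot\|_3$ on finite-dimensional vectors gives $\|\nabla R(\bw)\|_2\le \frac{C}{2n}\|\bw-\bu\|_3^2$. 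Setting $s := \|\bw_\calM-\bw_\calL\|_2$ and using $\|\bw_\calM-\bu\|_3 \le s+\delta$ (triangle inequality plus $\|\cdot\|_3 \le \|\cdot\|_2$) yields the fixed-point inequality $s \le \frac{C}{2\rho}(s+\delta)^2$. In the regime $\delta \le \frac{\rho}{18C}$ in which this bound is actually invoked (cf.\ the indicator split in \eqref{ineq: lama}), the associated quadratic $Cs^2 - 2(\rho - C\delta)s + C\delta^2 \ge 0$ has real roots, and a homotopy/continuity argument taking the family $\mscl + tR$ for $t\in[0,1]$ forces $s$ onto the small-root branch, which the threshold $\rho/(18C)$ is calibrated to keep at most $\delta$. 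Hence $s \le \delta$, $\|\bw_\calM-\bu\|_3 \le 2\delta$, and $\calL(\ba)-\calM(\ba) \le \frac{8C}{n}\delta^3$, completing the proof. The delicate point is precisely this branch selection: without the explicit smallness threshold, one cannot rule out a spurious large-root solution to the perturbation equation.
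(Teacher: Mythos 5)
Your proof is correct in its conclusions but takes a genuinely different route for the cubic bound than the paper does. For the bound $\frac{\lambda}{2n}(\ba^T\bu - y)^2$ and for the two ``easy'' halves of the cubic bound ($\calM - \calL \le \frac{C}{n}\delta^3$ via plugging $\bw_\calL$ into $\mscm$, and $\calL - \calM \le \frac{C}{n}\|\bw_\calM - \bu\|_3^3$ via plugging $\bw_\calM$ into $\mscl$), your route and the paper's coincide. Where you diverge is in establishing $\|\bw_\calM - \bu\|_3 \le 2\delta$: you derive it from the first-order optimality condition $\bQ(\bw_\calM - \bw_\calL) = -\nabla R(\bw_\calM)$, arrive at the fixed-point inequality $s \le \frac{C}{2\rho}(s+\delta)^2$, and then invoke a homotopy/branch-selection argument to rule out the large root. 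The paper instead shows directly that $\mscm$ exceeds its value at the center $\bw_\calL$ everywhere on the sphere $\{\|\bw - \bw_\calL\|_3 = \delta\}$ (using the strong convexity of $\mscl$ together with Lemma \ref{lem: auxbnd}(3)), which by convexity of $\mscm$ traps the minimizer $\bw_\calM$ inside the $\ell_3$-ball. The paper's boundary argument is more elementary --- it needs only convexity, not a continuity-in-$t$ analysis of the minimizer path --- and it avoids the branch-selection step you correctly flag as the delicate point. Your perturbative route is standard and valid, but the homotopy justification should be made precise (e.g., $\bw_t := \argmin(\mscl + tR)$ is continuous because each $\mscl + tR$ is $\rho/n$-strongly convex, and $s(0)=0$ lies below the lower root, which is strictly positive whenever $\delta > 0$). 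Two remarks: (i) the coordinate-wise bound $|(\nabla R)_i| \le \frac{C}{2n}(w_i - u_i)^2$ relies on separability of $f + \epsilon\psi$, an assumption the paper also implicitly invokes in the proof of Lemma \ref{lem: auxbnd}(3); and (ii) you are right that the cubic bound is actually conditional on $\delta \le \frac{\rho}{18C}$ --- the lemma statement is slightly imprecise, and the downstream usage in \eqref{ineq: lama} correctly applies it only inside the indicator region (in fact any threshold $\le \rho/(2C)$ would make both the paper's sign condition and your discriminant condition hold, and automatically yields $s_- \le \delta$).
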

\begin{proof}
    Let $\bw_\calM = \argmin \mscm(\ba,\bw)$. Note that since $\Theta \le \calM(\ba)$ and by Lemma \ref{lem: auxbnd} we have
    \begin{align*}
        \calL(\ba) - \calM(\ba) = \calL(\ba) - \Theta + \Theta - \calM(\ba) \le \frac{\lambda}{2n} \frac{(\ba^T \bu - y)^2}{1+\lambda \ba^T \bOmega^{-1}\ba} 
    \end{align*}
    On the other hand, note that $\mscm(\ba,\bu) = \Theta +  \frac{\lambda}{2n} (\ba^T \bu - y)^2$ and $\Theta \le \calL(\ba)$
    \begin{align*}
        \calM(\ba) - \calL(\ba) = \calM(\ba) - \mscm(\ba,\bu) + \mscm(\ba,\bu) - \Theta + \Theta - \calL(\ba) \le  \frac{\lambda}{2n} (\ba^T \bu - y)^2
    \end{align*}
    Thus as $\frac{\lambda}{2n} (\ba^T \bu - y)^2 \ge \frac{\lambda}{2n} \frac{(\ba^T \bu - y)^2}{1+\lambda \ba^T \bOmega^{-1}\ba} $
    \begin{align*}
        | \calM(\ba) - \calL(\ba)| \le  \frac{\lambda}{2n} (\ba^T \bu - y)^2
    \end{align*}
    But as we will see, $(\ba^T \bu - y)^2$ is of order $O(1)$, which will not suffice. Thus we break down $\calM(\ba) - \calL(\ba)$ depending on the distance of $u$ and $\bw_\calL$
    \begin{align*}
        \calM(\ba) - \calL(\ba) &= \mscm(\ba,\bw_\calM) - \mscm(\ba,\bw_\calL) + \mscm(\ba,\bw_\calL) - \mscl(\ba,\bw_\calL) \\ & \le \mscm(\ba,\bw_\calL) - \mscl(\ba,\bw_\calL) \le  \frac{C_{f+\epsilon \psi}}{n} \|\bw_\calL - \bu\|_3^3
    \end{align*}
    For the other direction
    \begin{align*}
        &\calL(\ba) - \calM(\ba) = \mscl(\ba, \bw_\calL) - \mscl(\ba, \bw_\calM) + \mscl(\ba, \bw_\calM) - \mscm(\ba, \bw_\calM) \le\frac{C_{f+\epsilon \psi}}{n} \| \bw_\calM - \bu\|_3^3
    \end{align*}
    Let $R:=\|\bw_\calL - \bu \|_3$ and consider the $\ell_3$ ball centered at $\bw_\calL$.
    Now if $\|\bw_\calM-\bw_\calL\|_3 \le R$ then $\|\bw_\calM - \bu\|_3 \le 2 R$ by a straightforward application of triangle inequality. Which implies \\ $\calL(\ba) - \calM(\ba) \le \frac{8C_{f+\epsilon \psi}}{n} R^3$. Now when does $\|\bw_\calM-\bw_\calL\|_3 \le R $ hold? We show that for every point on the boundary, $\bw$, $\mscm(\ba,\bw) \ge \mscm(\ba, \bw_\calL)$, this implies that $\bw_\calM$ is indeed inside the ball. We have
    \begin{align*}
        \mscm(\ba,\bw) - \mscm(\ba, \bw_\calL) &\ge \mscl(\ba,\bw) - \frac{C_{f+\epsilon \psi}}{n} \|\bw-\bu\|_3^3 - \mscl(\ba, \bw_\calL) - \frac{C_{f+\epsilon \psi}}{n} \|\bw_\calL-\bu\|_3^3 \\ &\ge \frac{\rho}{2n} \| \bw- \bw_\calL\|_2^2 -  \frac{C_{f+\epsilon \psi}}{n}(\|\bw-\bu\|_3^3+R^3)
    \end{align*}
    Then using Lemma \ref{lem: auxbnd}
    \begin{align*}
        \mscm(\ba,\bw) - \mscm(\ba, \bw_\calL) &\ge \frac{\rho}{2n} \|\bw-\bw_\calL\|_3^2 -  \frac{C_{f+\epsilon \psi}}{n}(\|\bw-\bu\|_3^3+R^3) \\& \ge \frac{\rho}{2n} R^2 - \frac{C_{f+\epsilon \psi}}{n} (8 R^3 + R^3) = R^2 (\frac{\rho}{2n}-  \frac{9C_{f+\epsilon \psi}}{n} R)
    \end{align*}
    Now if $R \le \frac{\rho}{18 C_{f+\epsilon \psi} }$, then $\mscm(\ba,\bw) - \mscm(\ba, \bw_\calL) \ge 0$. Thus all in all 
    \begin{align*}
        |\calL(\ba) - \calM(\ba)| \le   \frac{8C_{f+ \epsilon \psi}}{n} \|\bw_\calL - \bu\|_3^3 
    \end{align*}
\end{proof}

\begin{lemma} \label{lem: solbnd}
    We have the following upper bounds 
    \begin{enumerate}
        \item  $\|\bu\|^2_2= O(n)$ with high probabilty. 
        \item $\bbE_{\tlbA, \tlby}\bmu^T \bu = O(1)$
    \end{enumerate}
\end{lemma}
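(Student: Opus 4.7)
For Part 1, my plan is to exploit the $\rho$-strong convexity of $f + \epsilon\psi$ (valid for $|\epsilon|$ sufficiently small) together with the sub-optimality of $\bzero$ for the defining minimization. Setting $F(\bw) := \frac{\lambda}{2}\|\tlbA\bw - \tlby\|_2^2 + f(\bw) + \epsilon\psi(\bw)$, the inequality $F(\bu) \le F(\bzero)$ combined with the strong convexity lower bound at $\bzero$ gives
\[
\frac{\lambda}{2}\|\tlbA\bu - \tlby\|_2^2 + \nabla(f+\epsilon\psi)(\bzero)^T \bu + \frac{\rho}{2}\|\bu\|_2^2 \le \frac{\lambda}{2}\|\tlby\|_2^2.
\]
Dropping the non-negative first term and applying Young's inequality to the cross term yields
\[
\frac{\rho}{4}\|\bu\|_2^2 \le \frac{\lambda}{2}\|\tlby\|_2^2 + \frac{1}{\rho}\|\nabla(f+\epsilon\psi)(\bzero)\|_2^2.
\]
The first right-hand-side term is $O(n)$ with high probability by Chebyshev applied to parts (4) and (6) of Assumptions~\ref{assum: explicit}. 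The second is $O(n)$ by the regularity of $f$ and $\psi$ (in the main case of interest $f(\bw) = \|\bw - \bw_0\|_2^2$ with $\|\bw_0\|_2^2 = O(d)$, this is immediate).

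For Part 2, the naive route $|\bmu^T\bu| \le \|\bmu\|_2 \|\bu\|_2$ coming from Part 1 yields only $O(\sqrt{n})$, short by a factor of $\sqrt{n}$. My plan is to exploit that $\bbE\tlbA =: \bM$ has rank at most $k$ with block structure built from the means $\bmu_1, \ldots, \bmu_k$. Writing $\tlbA = \bM + \bE$ with $\bE$ centered and independent-rowed, I start from the KKT equation $\nabla(f+\epsilon\psi)(\bu) = \lambda\tlbA^T(\tlby - \tlbA\bu)$ and take its inner product with $\bmu$, obtaining
\[
\bmu^T\nabla(f+\epsilon\psi)(\bu) = \lambda(\tlbA\bmu)^T(\tlby - \tlbA\bu).
\]
The vector $\bM\bmu$ has block-$i$ entry $\bmu_i^T\bmu = O(1)$, and the explicit low-rank structure of $\bM$, together with Part 1, lets me evaluate the deterministic contribution to the inner product via a finite-dimensional algebraic reduction (Sherman--Morrison / Woodbury), giving an $O(1)$ bound after taking expectation.

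The main obstacle is controlling the centered cross-term $\bmu^T\bE^T(\tlby - \bM\bu)$: a crude Cauchy--Schwarz loses the factor $\sqrt{n}$ we need to save. My plan is to use an implicit-function expansion of $\bu$ around $\bM$, justified by $\rho$-strong convexity: the linearization in $\bE$ averages to zero since $\bE$ is centered, and the second-order correction is controlled using the moment bound in Definition~\ref{def: block_reg}(4). Combined with a leave-one-out argument analogous to the rank-one stability already exploited in Lemma~\ref{lem: lineareq}, which decouples $\bmu^T\bu$ from each row's centered contribution at the appropriate scale, this yields $\bbE[\bmu^T\bu] = O(1)$.
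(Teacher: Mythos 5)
Part 1 is essentially the paper's argument — plug in $\bw = \bzero$, use $\rho$-strong convexity of $f + \epsilon\psi$, bound the gradient cross-term — and if anything is carried out more carefully than the paper's one-line version. No concerns there.

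Part 2 is a genuinely different route from the paper, and as sketched it has a real gap. The paper avoids the KKT system entirely: it lower-bounds $\frac{\lambda}{2}\|\tlbA\bu-\tlby\|_2^2 + f(\bu) + \epsilon\psi(\bu)$ by the contribution of a single block, discards the sum-of-squares piece, and isolates a \emph{quadratic inequality} $\frac{n_i\lambda}{2}(\bmu_i^T\bu)^2 + \lambda(\bmu_i^T\bu)\mathds{1}^T(\hat{\tlbA}_i\bu-\hat{\tlby}_i) - (\frac{\lambda}{2}\|\tlby\|^2 + f(0) + \epsilon\psi(0)) \le 0$ coming from the sub-optimality of $\bzero$. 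Solving this quadratic gives $|\bmu_i^T\bu| \lesssim \frac{1}{n_i}(|\mathds{1}^T\hat{\tlby}_i| + \|\mathds{1}^T\hat{\tlbA}_i\|_2\|\bu\|_2) + \frac{1}{\lambda n_i}\sqrt{\frac{\lambda}{2}\|\tlby\|^2 + f(0) + \epsilon\psi(0)}$, and taking expectations using Part 1 and $\bbE\|\mathds{1}^T\hat{\tlbA}_i\|_2^2 = O(d)$ finishes the job. This is elementary, deterministic until the very last step, and works for any $M$-strongly convex $f$.

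Your plan starts from $\bmu^T\nabla(f+\epsilon\psi)(\bu) = \lambda(\tlbA\bmu)^T(\tlby-\tlbA\bu)$, but the left-hand side is \emph{not} $\rho\,\bmu^T\bu$ for general strongly convex $f$. Writing $f+\epsilon\psi = \frac{\rho}{2}\|\cdot\|_2^2 + h$ gives $\bmu^T\nabla(f+\epsilon\psi)(\bu) = \rho\,\bmu^T\bu + \bmu^T\nabla h(\bu)$, and the residual term $\bmu^T\nabla h(\bu)$ is only controllable via $\|\bmu\|_2\|\nabla h(\bu)\|_2 = O(\sqrt{n})$ under the available assumption on gradient norms — the same scale you are trying to save on the right. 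So even before the perturbative expansion of $\bu$ in $\bE$, the approach is confined (at best) to exactly quadratic mirrors. Beyond that, the implicit-function / leave-one-out mechanism you invoke to neutralize $\bmu^T\bE^T(\tlby - \bM\bu)$ — the step you correctly identify as the crux — is stated, not executed; it is not a corollary of the rank-one identity in Lemma~\ref{lem: lineareq}, which decouples a single row of the full design, not a single centered perturbation from the mean. If you want to pursue this route you should first specialize to quadratic $f$, but it is substantially heavier machinery than the objective-value argument the lemma actually admits.
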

\begin{proof}
   1. 
   % Note that by strong convexity, 
   %  \begin{align*}
   %      \Theta = \frac{1}{n} \biggl[\frac{\lambda}{2} \|\tlbA \bu - \tlby\|_2^2 + f(\bu) + \epsilon \psi(\bu)\biggr] \ge & \frac{1}{n} \biggl[ \frac{\lambda}{2} \|\tlby\|^2 + f(0) + \epsilon \psi(0) \biggr]  \\ &+ \frac{1}{n} \Bigl(-\lambda \tlbA^T \tlby + \nabla f(0) + \epsilon \nabla \psi(0)  \Bigr)^T \bu + \frac{\rho}{2n} \|\bu\|_2^2 
   %  \end{align*}
   %  Since $\Theta - \frac{1}{n} \biggl[ \frac{\lambda}{2} \|\tlby\|^2 + f(0) + \epsilon \psi(0) \biggr] \le 0 $ therefore by triangle inequality
   %  \begin{align*}
   %      \frac{\rho}{2} \|\bu\|_2^2 \le \|\bu\|_2 (\lambda \|\tlbA^T \tlby\|_2 + \|\nabla f(0)\|_2 + \epsilon \| \nabla \psi(0)\|_2)
   %  \end{align*}
   %  Which implies
   %  \begin{align*}
   %      \|\bu\|_2^2 \le \frac{4}{\rho^2} (\lambda \|\tlbA^T \tlby\| + \|\nabla f(0)\| + \epsilon \| \nabla \psi(0)\|)^2 \le \frac{36}{\rho^2} (\lambda^2 \|\tlbA^T \tlby\|_2^2 + \|\nabla f(0)\|_2^2 + \epsilon^2 \| \nabla \psi(0)\|_2^2)
   %  \end{align*}
   %  This bound is crude for our purpose, 
   Since $f+\epsilon \psi$ is $\rho$-strongly convex, one can decompose $f(\bw)+\epsilon \psi(\bw) = \frac{\rho}{2}\|\bw\|_2^2 + h(\bw)$ for some convex function $h(\cdot)$.
   Then
   \begin{align*}
       \frac{\lambda}{2}\|\tlbA \bu - \tlby\|_2^2 + \frac{\rho}{2} \|\bu\|^2 + h(\bu) \le \frac{\lambda}{2} \|\tlby\|_2^2 + h(\bzero) = O(n)
   \end{align*}
   Thus this implies $\|\bu\|_2^2 = O(n)$.
   
   % let $w_0:=argmin f_\psi$, then
   %  \begin{align*}
   %      f_\psi(\bu) \ge f_\psi(\bw_0) + \frac{\rho}{2} \|\bw_0 - \bu\|_2^2
   %  \end{align*}
   %  Now we know $f_\psi(\bu) = O(n)$ since
   %  \begin{align*}
   %      \frac{\lambda}{2}\|\tlbA \bu - \tlby\|_2^2 + f_\psi (\bu) \le \frac{\lambda}{2} \|\tlby\|_2^2 + f_\psi(0) = O(n)
   %  \end{align*}
   %  Then by assumption $\|\bw_0\|_2^2 = O(n)$, 

    2. In a similar fashion, we have for $\bbE_{\tlbA, \tlby}\bmu^T \bu$ 
    \begin{align*}
         \Theta = \frac{1}{n} \biggl[\frac{\lambda}{2} \|\tlbA \bu - \tlby\|_2^2 + f(\bu) + \epsilon \psi(\bu)\biggr] \le \frac{1}{n} \biggl[ \frac{\lambda}{2} \|\tlby\|^2 + f(0) + \epsilon \psi(0) \biggr] 
    \end{align*}
    Let $\tlbA^T = \begin{pmatrix}
        \hat{\tlbA}_1^T & \hat{\tlbA}_2^T & \hdots & \hat{\tlbA}_k^T
    \end{pmatrix}$. Now we use the definition of $\tlbA$ to obtain a lower bound:
    \begin{align*}
        \frac{\lambda}{2} \|\tlbA \bu - \tlby\|_2^2 + f(\bu) + \epsilon \psi(\bu) &\ge \frac{\lambda}{2} \sum_{i=1}^k \| \hat{\tlbA}_i \bu + \mathds{1} \bmu_i^T \bu - \hat{\tlby}_i\|_2^2  \\ &\ge \frac{\lambda}{2}  \| \hat{\tlbA}_i \bu + \mathds{1} \bmu_i^T \bu - \hat{\tlby}_i\|_2^2 \ge \frac{n_i \lambda }{2} (\bmu_i^T\bu)^2 + \lambda (\bmu_i^T \bu) \mathds{1}^T (\hat{\tlbA}_i \bu - \hat{\tlby}_i) 
    \end{align*}
    Thus plugging in $\bw = \bzero$ in the optimization yields an upper bound, which is a quadratic inequality in $\bmu_i^T \bu$:
    \begin{align}\label{ineq: quadineq}
        \frac{n_i \lambda }{2} (\bmu_i^T\bu)^2 + \lambda (\bmu_i^T \bu) \mathds{1}^T (\hat{\tlbA}_i \bu - \hat{\tlby}_i) -  \frac{\lambda}{2} \|\tlby\|^2 - f(0) - \epsilon \psi(0) \le 0
    \end{align}
    Now note that for a quadratic optimization with $b \ge 0$
    \begin{align*}
        x^2 - ax -b \le 0 \iff \frac{a- \sqrt{a^2 + 4b}}{2} \le x \le \frac{a+ \sqrt{a^2 + 4b}}{2}
    \end{align*}
    Whence
    \begin{align*}
        |x| \le \frac{|a|+\sqrt{a^2 + 4b}}{2} \le \frac{|a| + |a| + 2 \sqrt{b}}{2} = |a| + \sqrt{b}
    \end{align*}
    We apply this result to \ref{ineq: quadineq},
    \begin{align*}
        |\bmu_i^T \bu| &\le \frac{2}{n_i} |\mathds{1}^T (\hat{\tlbA}_i \bu - \hat{\tlby}_i)| + \frac{2}{\lambda n_i} \sqrt{\frac{\lambda}{2} \|\tlby\|^2 + f(0) + \epsilon \psi(0) } \\ &\le \frac{2}{n_i} | \mathds{1}^T \hat{\tlby}_i| + \frac{2}{n_i} \|\mathds{1}^T\hat{\tlbA}_i\|_2 \|\bu\|_2 + \frac{2}{\lambda n_i} \sqrt{\frac{\lambda}{2} \|\tlby\|^2 + f(0) + \epsilon \psi(0) } 
    \end{align*}
    Now taking expectation and using a combination Cauchy-Schwarz and Jensen inequalities yields
    \begin{align*}
        \bbE_{\tlbA, \tlby} |\bmu_i^T \bu| \le  \frac{2}{n_i} \bbE_{\tlbA, \tlby}| \mathds{1}^T \hat{\tlby}_i| + \frac{2}{n_i} \sqrt{\bbE_{\tlbA, \tlby}\|\mathds{1}^T\hat{\tlbA}_i\|_2^2 \bbE_{\tlbA, \tlby}\|\bu\|_2^2} + \frac{2}{\lambda n_i} \sqrt{\frac{\lambda}{2} \bbE_{\tlbA, \tlby}\|\tlby\|_2^2 + f(0) + \epsilon \psi(0) } 
    \end{align*}
    Now note that $\bbE \hat{\tlbA}_i = 0$, and by the independence of the rows:
    \begin{align*}
        \bbE_{\tlbA, \tlby}\|\mathds{1}^T\hat{\tlbA}_i\|_2^2 = \sum_{j_1=1}^d \bigl( \sum_{j_2 =1}^n (\hat{\tlbA}_i)_{j_2j_1} \bigr)^2 =  \sum_{j_1=1}^d \sum_{j_2,j_3=1}^n  \bbE_{\tlbA} (\hat{\tlbA}_i)_{j_2j_1} (\hat{\tlbA}_i)_{j_3j_1} =  \sum_{j_2=1}^d\bbE_{\tlbA} \sum_{j_1=1}^n(\hat{\tlbA}_i)_{j_2j_1}^2
    \end{align*}
    By assumption, for each row $\hat{\ba}_i$ of $\hat{\tlbA}_i$, we have that $\bbE_{\hat{\ba}_i} \|\hat{\ba}_i\|_2^2 = O(1)$, which implies $\bbE_{\tlbA, \tlby}\|\mathds{1}^T\hat{\tlbA}_i\|_2^2 = O(d)$.
     Therefore, by the previous part, since $\|\bu\|_2^2  \le \frac{\lambda}{\rho} \|\tlby\|_2^2 + \frac{2}{\rho} h(\bzero)$, then $\bbE_{\tlby} \|\bu\|_2^2 \le \frac{2}{\rho} h(\bzero) + \frac{\lambda}{\rho}  \bbE_{\tlby} \|\tlby\|_2^2 = O(d)$ by assumption on the $\tlby$. Hence it can be seen that $\bbE_{\tlbA, \tlby}\bmu^T \bu = O(1)$
\end{proof}
Now we are ready to conclude the proof with the following lemma.
\begin{lemma} \label{lem: furtherbnd}
    We have the following
    \begin{enumerate}
        \item  $\bbE_{\tlbA, \tlby, \ba,y}(\ba^T \bu - y)^{2q} = O(1)$ for $q \in [3]$
        \item  $\bbE_{\tlbA, \tlby, \ba,y} \|\bu - \bw_\calL \|_3^3 = O(\frac{1}{d})$
    \end{enumerate}
\end{lemma}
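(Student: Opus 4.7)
The plan is to decompose $\ba = \bmu + \bz$ with $\bz := \ba - \bmu$ centered, and exploit the moment control $\bbE_\bz|\bz^T\bv|^{2q} \le K\|\bv\|_2^{2q}/d^q$ (Definition \ref{def: block_reg}.4) together with the a priori bounds on $\bu$ from Lemma \ref{lem: solbnd}.

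For part 1, bound $(\ba^T\bu - y)^{2q} \le 3^{2q-1}\bigl((\bmu^T\bu)^{2q} + (\bz^T\bu)^{2q} + y^{2q}\bigr)$. The term $\bbE y^{2q} = O(1)$ is given by Assumptions \ref{assum: explicit}. Since $\ba$ is independent of $(\tlbA, \tlby)$ and hence of $\bu$, conditioning first on the latter and applying the moment bound yields $\bbE(\bz^T\bu)^{2q} \le K\,\bbE\|\bu\|_2^{2q}/d^q$; raising the deterministic estimate $\|\bu\|_2^2 \le \tfrac{\lambda}{\rho}\|\tlby\|_2^2 + O(1)$ derived in the proof of Lemma \ref{lem: solbnd} to the $q$-th power and using $\bbE\|\tlby\|_2^{2q} \le n^{q-1}\sum_j\bbE y_j^{2q} = O(n^q)$ gives $\bbE\|\bu\|_2^{2q} = O(n^q) = O(d^q)$, so $\bbE(\bz^T\bu)^{2q} = O(1)$. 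For $\bbE(\bmu_i^T\bu)^{2q}$, I would raise the scalar estimate $|\bmu_i^T\bu| \le \tfrac{2}{n_i}|\mathds{1}^T(\hat{\tlbA}_i\bu - \hat{\tlby}_i)| + \sqrt{\tfrac{\|\tlby\|_2^2}{n_i} + O(1/n)}$ used in the proof of Lemma \ref{lem: solbnd} to the $2q$-th power and handle each piece by Rosenthal's inequality for the iid sum $\mathds{1}^T\hat{\tlby}_i$ and by Cauchy--Schwarz on $\mathds{1}^T\hat{\tlbA}_i\bu$, combined with higher-order moment bounds on $\|\bu\|_2$ and $\|\mathds{1}^T\hat{\tlbA}_i\|_2$.

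For part 2, start from the identity $\bu - \bw_\calL = -\gamma\,\bOmega^{-1}\ba$ of Lemma \ref{lem: lineareq}. Since $\bOmega \succeq \rho\bI$ forces $1 + \lambda\ba^T\bOmega^{-1}\ba \ge 1$, we have $|\gamma| \le \lambda|y - \ba^T\bu|$, and therefore $\|\bu - \bw_\calL\|_3^3 \le \lambda^3|y - \ba^T\bu|^3\|\bOmega^{-1}\ba\|_3^3$. Conditioning on $(\tlbA, \tlby)$ and applying Cauchy--Schwarz in $(\ba, y)$ reduces the task to bounding $\bbE_\ba\|\bOmega^{-1}\ba\|_3^6$, since $\bbE|y - \ba^T\bu|^6 = O(1)$ from part 1. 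Decomposing $(\bOmega^{-1}\ba)_i = (\bOmega^{-1}\be_i)^T\bmu + (\bOmega^{-1}\be_i)^T\bz$, the centered contribution is controlled by expanding the square and using Cauchy--Schwarz coordinate-wise, which gives $\bbE\bigl(\sum_i|(\bOmega^{-1}\be_i)^T\bz|^3\bigr)^2 \le K^2 d^{-3}\bigl(\sum_i\|\bOmega^{-1}\be_i\|_2^3\bigr)^2$; since $\sum_i\|\bOmega^{-1}\be_i\|_2^3 \le \|\bOmega^{-1}\|_F^2\,\|\bOmega^{-1}\|_{\mathrm{op}} = O(d/\rho^3)$, one gets the required rate on the centered piece.

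The main obstacle is the $\bmu$-contribution: $\|\bOmega^{-1}\bmu\|_3$ need not vanish as $d \to \infty$, so the mean-aligned coordinates of $\bOmega^{-1}\ba$ cannot be bounded to $O(1/d)$ coordinate-wise without further work. The resolution is to exploit the first-order optimality $\lambda\tlbA^T(\tlbA\bu - \tlby) + \nabla(f+\epsilon\psi)(\bu) = 0$: averaging this equation over the rows of each block forces $\bmu_i^T\bu$ to concentrate near the block mean of $y$, so that $y - \bmu^T\bu$ is small in $L^{2q}$ and, when paired inside $\gamma\,\bOmega^{-1}\ba$ with the mean-aligned part of $\bOmega^{-1}\ba$, the product still yields the $O(1/d)$ rate in expectation. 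The variance assumption $\mathrm{Var}(\ba^T\bOmega^{-1}\ba) \to 0$ then handles the denominator of $\gamma$, which concentrates at a deterministic $O(1)$ limit and contributes no additional scale factor.
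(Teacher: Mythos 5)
Your part 1 and the centered piece of part 2 essentially track the paper's argument: decompose $\ba = \bmu + (\ba - \bmu)$, invoke part 4 of Definition \ref{def: block_reg}, and apply Cauchy--Schwarz; in particular the estimate $\sum_i \|\bOmega^{-1}\be_i\|_2^3 \le \|\bOmega^{-1}\|_{op}\|\bOmega^{-1}\|_F^2 \le d/\rho^3$ is correct and produces the $O(1/d)$ rate for the centered contribution. The genuine gap is in the mean-aligned term of part 2. After your own Cauchy--Schwarz step the quantity left to control is $\bbE\|\bOmega^{-1}\bmu\|_3^6$, which no longer involves $y$ at all, so no control on $y - \bmu^T\bu$ can help. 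Even before splitting, the proposed fix fails: $\gamma = \lambda\frac{y-\ba^T\bu}{1+\lambda\ba^T\bOmega^{-1}\ba}$ has numerator $y - \ba^T\bu = (y - \bmu^T\bu) - (\ba-\bmu)^T\bu$, and the second summand is generically $\Theta(1)$ regardless of how well $\bmu^T\bu$ tracks $y$; so $|\gamma|$ is not small, and the trailing remark about $\mathrm{Var}(\ba^T\bOmega^{-1}\ba)\to 0$ is moot once you have used $1+\lambda\ba^T\bOmega^{-1}\ba \ge 1$.

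The ingredient you are missing — and the real content of this part of the lemma — is that $\bmu^T\bOmega^{-1}\bmu$ in fact \emph{does} vanish. Using $\|\bOmega^{-1}\bmu\|_2 \le \|\bOmega^{-1/2}\|_{op}\sqrt{\bmu^T\bOmega^{-1}\bmu}$, the paper bounds $\bmu^T\bOmega^{-1}\bmu$ by a Schur-complement argument: since $\bOmega = \lambda\tlbA^T\tlbA + \bH$ and $\tlbA$ carries the rank-one mean structure $\mathds{1}\bmu^T$, the matrix $\bOmega$ acquires a large eigenvalue near the $\bmu$-direction, forcing $\bOmega^{-1}\bmu$ to be small. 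Quantitatively one obtains $\bmu^T\bOmega^{-1}\bmu \le \frac{4\|\tlbA'\|_{op}^2}{\rho n_\ell} + 4\sqrt{\|\bmu\|_2^2/(\rho n_\ell)}$ where $\tlbA'$ is the centered design, and the first term is controlled by a separate operator-norm estimate $\|\tlbA'\|_{op} = O(n^{1/3+s})$ (Lemma \ref{lem: opnorm}), proved via symmetrization and matrix Bernstein precisely because the rows are only assumed to have finite sixth moments rather than being subgaussian. Without establishing $\bmu^T\bOmega^{-1}\bmu \to 0$ at a quantitative rate, the mean-aligned contribution cannot be closed, and your argument does not prove part 2.
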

\begin{proof}
    1. First note that
    \begin{align*}
        \bbE_{\tlbA, \tlby, \ba,y}(\ba^T \bu - y)^{2q} = \bbE_{\tlbA, \tlby, \ba,y} \sum_{j=1}^{2q} {{2q}\choose {j}} ((\ba-\bmu+\bmu)^T \bu)^j (-y)^{2q-y}
    \end{align*}
    Moreover,
    \begin{align*}
        \bbE_{\tlbA, \tlby, \ba,y}(\ba^T \bu - y)^{2q} &\le  \sum_{j=1}^{2q} {{2q}\choose {j}} \bbE_{\tlbA, \tlby, \ba,y}\bigg[|(\ba-\bmu+\bmu)^T \bu|^j|y|^{2q-y}\biggr] \\ & \le \sum_{j=1}^{2q} {{2q}\choose {j}} \bbE_{\tlbA, \tlby, \ba,y} \biggl[|y|^{2q-y} 2^j \max\{|\bmu^T \bu|, | (\ba-\bmu)^T \bu| \}^j \biggr]\\ &\le \sum_{j=1}^{2q} 2^j {{2q}\choose {j}} \bbE_{\tlbA, \tlby, \ba,y}\biggl[ |y|^{2q-y}  (|\bmu^T \bu|^j + | (\ba-\bmu)^T \bu|^j)\biggr] \\ & \le \sum_{j=1}^{2q} 2^j {{2q}\choose {j}} \sqrt{\bbE |y|^{4q-2y}}  \Bigl(\sqrt{\bbE |\bmu^T \bu|^{2j}} + \sqrt{\bbE| (\ba-\bmu)^T \bu|^{2j}}\Bigr)
    \end{align*}
    By assumption $\bbE |y|^{2q-y} \le C$ for some constant $C>0$ that is dimension-independent. Moreover, by our assumption, $\bbE|(\ba-\bmu)^T \bu|^j \le \tlC \frac{\|\bu\|_2^j}{d^{j/2}}$. Also, from the previous lemma, $\bbE|\bmu^T \bu|^{2j} = O(1)$. The claim follows.

    2. First note that $ \|\bu - \bw_\calL \|_3^3 = \lambda^3 \Bigl|\frac{y-\ba^T\bu}{1+\lambda\ba^T \bOmega^{-1}\ba}\Bigr|^3 \|\bOmega^{-1}\ba\|_3^3 $ and since $\bOmega \succ 0$, $1 + \lambda\ba^T \bOmega^{-1}\ba > 0$ hence by Minkowski's inequality
    \begin{align*}
        \bbE_{\tlbA, \tlby, \ba,y} \|\bu - \bw_\calL \|_3^3 &\le \lambda^3 \bbE_{\tlbA, \tlby, \ba,y} |a^T \bu -y |^3 \|\bOmega^{-1}\ba\|_3^3 \\ &\le  \lambda^3 \bbE_{\tlbA, \tlby, \ba,y} |a^T \bu -y |^3 \|\bOmega^{-1}(\ba-\bmu + \bmu)\|_3^3  \\ &\le 8 \lambda^3 \bbE_{\tlbA, \tlby, \ba,y}|a^T \bu- y|^3 \max\{\|\bOmega^{-1}(\ba-\bmu)\|_3, \|\bOmega^{-1}\bmu\|_3\}^3 \\ &\le 8 \lambda^3 \bbE_{\tlbA, \tlby, \ba,y} |a^T \bu- y|^3 \|\bOmega^{-1}(\ba-\bmu)\|_3^3 + 8 \lambda^3 \bbE_{\tlbA, \tlby, \ba,y} |a^T \bu- y|^3 |\bOmega^{-1}\bmu\|_3^3 \\ &\le  8 \lambda^3 \sqrt{\bbE_{\tlbA, \tlby, \ba,y} (\ba^T \bu-y )^6 \bbE_{\tlbA, \tlby, \ba,y}\|\bOmega^{-1}(\ba-\bmu)\|_3^6}  \\ & +8 \lambda^3 \sqrt{\bbE_{\tlbA, \tlby, \ba,y} (\ba^T \bu-y )^6 \bbE_{\tlbA, \tlby, \ba,y}\|\bOmega^{-1}\bmu\|_3^6}
    \end{align*}
    Now since we assumed that the objective is $\rho$-strongly convex, or there exists $\rho > 0$ that $\|\bOmega^{-1}\|_{op} \le \frac{1}{\rho}$, denoting $\bomega_j$ as the $j$th row of $\bOmega^{-1}$, we have by the inequality between Frobenius and Operator norm, $ \|\bomega_k\|_2 \le \frac{1}{\rho}$. Now by the assumptions we have
    \begin{align*}
        &\bbE_{\tlbA,\tlby, \ba,y} \|\bOmega^{-1}(\ba-\bmu)\|_3^6 = \bbE_{\tlbA,\tlby, \ba,y} \Bigl(\sum_{j=1}^d |(\ba-\bmu)^T \bomega_j|^3\Bigr)^2 \\ & =  \bbE_{\tlbA,\tlby} \sum_{j=1}^d \bbE_{a,y}|(\ba-\bmu)^T \bomega_j|^6 + \sum_{j,l=1, j\neq l}^d  \bbE |(\ba-\bmu)^T \bomega_j|^3 |(\ba-\bmu)^T \bomega_l|^3 \\ &\le C \sum_{j=1}^d \frac{\|\bomega_j\|_2^6}{d^3}  + C \sum_{j,l=1, j\neq l}^d  \sqrt{\bbE_\ba |(\ba-\bmu)^T \bomega_j|^6 \bbE_\ba |(\ba-\bmu)^T \bomega_l|^6} \le  C d \frac{1}{\rho^6 d^3} + C (d^2-d) \frac{1}{\rho^6 d^3}  \le \frac{C}{\rho^6 d}
    \end{align*}
    We also have for the other term:
    \begin{align*}
    \bbE_{\tlbA,\tlby, \ba,y}\|\bOmega^{-1}\bmu\|_3^6 \le \bbE_{\tlbA,\tlby, \ba,y}\|\bOmega^{-1}\bmu\|_2^6
    \end{align*}
    Note that $\|\bOmega^{-1}\bmu\|_2 \le \|\bOmega^{-1/2}\|_{op} \|\bOmega^{-1/2} \bmu\|_2$. Thus it is sufficient to analyze $\bmu^T \bOmega^{-1} \bmu^T$. Let $t\in \bbR$ be an upper bound for  $\bmu^T \bOmega^{-1} \bmu^T$, that is $\bmu^T \bOmega^{-1} \bmu^T \le t$. Note that by the Schur complement property, we need to find the smallest $t>0$ such that $\bOmega - \frac{1}{t} \bmu \bmu^T \succeq \bzero$ with holds with high probability. Recall that  $\bOmega:= \lambda \tlbA^T \tlbA + \bH$ and by assumption $\bH \succeq \rho \bI$, thus it suffices to find the largest $t>0$ that $\lambda \tlbA^T \tlbA + \rho \bI - \frac{1}{t} \bmu \bmu^T \succeq \bzero$. Let $\tlbA' := \tlbA - \tlbM$ be centered. This problem is equivalent to having $\|(\tlbA' + \tlbM)\bv\|_2^2 + \rho \|\bv\|_2^2 - \frac{1}{t} (\bmu^T \bv)^2 \ge 0$ for every $\bv \in \bbR^d$ 

    Thus we should have
    \begin{align*}
         t  \ge  \frac{(\bmu^T \bv)^2}{\|(\tlbA' + \tlbM)\bv\|_2^2 + \rho \|\bv\|_2^2}
    \end{align*}
    Let $t^* := \sup_{\bv} \frac{(\bmu^T \bv)^2}{\|(\tlbA' + \tlbM)\bv\|_2^2 + \rho \|\bv\|_2^2}$. We show $t^* = O(n^{-s})$ for some $s > 0$ with high probability. Let $\bv^* := \argmax  \frac{(\bmu^T \bv)^2}{\|(\tlbA' + \tlbM)\bv\|_2^2 + \rho \|\bv\|_2^2}$. This implies
    \begin{align*}
        \|(\tlbA' + \tlbM)\bv^*\|_2^2 \le \frac{1}{t^*} (\bmu^T \bv^*)^2 \le  \frac{\|\bmu\|_2^2}{t^*} \|\bv^*\|_2^2 
    \end{align*}
    On the other hand, by triangle inequality we have 
    \begin{align*}
        (\bmu^T \bv^*)^2 \le \frac{1}{n_\ell }\|\tlbM \bv^*\|_2^2 \le \frac{4}{n_\ell}\biggl(\|(\tlbA' + \tlbM)\bv^*\|_2^2 + \|\tlbA'\bv^*\|_2^2 \biggr) \le \frac{4}{n_\ell}\biggl( \frac{\|\bmu\|_2^2}{t^*} \|\bv^*\|_2^2 + \|\tlbA'\|_{op}^2 \|\bv^*\|_2^2 \biggr)
    \end{align*}. 
    Now we have
    \begin{align*}
        t^* =  \frac{(\bmu^T \bv^*)^2}{\|(\tlbA' + \tlbM)\bv^*\|_2^2 + \rho \|\bv^*\|_2^2} \le \frac{\frac{4}{n_\ell}\biggl( \frac{\|\bmu\|_2^2}{t^*} \|\bv^*\|_2^2 + \|\tlbA'\|_{op}^2 \|\bv^*\|_2^2 \biggr)}{\rho \|\bv^*\|_2^2} = \frac{\frac{4\|\bmu\|_2^2}{t^*} + 4\|\tlbA'\|_{op}^2}{\rho n_\ell}
    \end{align*}
    We obtain a quadratic inequality in $t^*$
    \begin{align*}
        t^{*2} - \frac{4 \|\tlbA'\|_{op}^2}{\rho n_\ell} t^* - \frac{4 \|\bmu\|_2^2}{\rho n_\ell} \le 0
    \end{align*}
    Which implies the following upper bound
    \begin{align*}
        \bmu^T \bOmega^{-1} \bmu^T \le \min\{\frac{4 \|\tlbA'\|_{op}^2}{\rho n_\ell} + 4 \sqrt{\frac{\|\bmu\|_2^2}{\rho n_\ell} }, \frac{1}{\rho}\}
    \end{align*}
    According to Lemma \eqref{lem: opnorm} $\|\tlbA'\|_{op} \le n^{\frac{1}{3}+s}$ with probability $1-C\frac{\log(d+n)}{n^s}$.Thus using the law of total expectation, we also observe that 
    \begin{align*}
    \bbE_{\tlbA,\tlby, \ba,y}\|\bOmega^{-1}\bmu\|_3^6 \le \bbE_{\tlbA,\tlby, \ba,y}\|\bOmega^{-1}\bmu\|_2^6 \le \bbE_{\tlbA,\tlby, \ba,y} (\bmu^T \bOmega^{-1} \bmu)^{3} \le \biggl(\frac{4n^{\frac{2}{3}+2s}}{\rho n_\ell} + 4 \sqrt{\frac{\|\bmu\|_2^2}{\rho n_\ell} } \biggr)^3 + C\frac{\log(d+n)}{\rho n^s} 
    \end{align*}
    For $0<s<\frac{1}{6}$, as $\|\bmu\|_2^2 = O(1)$ by assumption, the RHS goes to zero.
\end{proof}
\begin{lemma} \label{lem: opnorm}
    Let $\tlbA'$ be a centered random matrix following the assumptions. We have $ \|\tlbA'\|_{op} \le C n^{\frac{1}{3}+ s}$ with probability at least $1-\frac{C' \log(d+n)}{n^s}$.
\end{lemma}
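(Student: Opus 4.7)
The plan is to use the moment method via Markov's inequality applied to a Schatten norm, exploiting the key bound
\[
\|\tlbA'\|_{op}^4 \le \lambda_{\max}(\tlbA'\tlbA'^T)^2 \le \tr((\tlbA'\tlbA'^T)^2) = \|\tlbA'\tlbA'^T\|_F^2 = \sum_{i,j=1}^n \langle \tlba_i', \tlba_j'\rangle^2,
\]
where $\tlba_i'$ denotes the $i$-th (centered) row. This converts the operator norm control into a computation of expectations of inner products that factor nicely because the rows are independent. I would then split the sum into diagonal and off-diagonal contributions and bound each separately.

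For the diagonal terms $i = j$, I would apply assumption (4) of Definition \ref{def: block_reg} with $q = 4$ and $\bv = e_k$ to get $\bbE (\tlba_i')_k^4 \le K/d^2$, then bound the mixed terms via Cauchy-Schwarz: $\bbE (\tlba_i')_k^2(\tlba_i')_l^2 \le \sqrt{\bbE(\tlba_i')_k^4 \bbE(\tlba_i')_l^4} \le K/d^2$. Summing over $k,l$ gives $\bbE \|\tlba_i'\|^4 \le K$, so the diagonal contribution is at most $nK$. For the off-diagonal terms $i \ne j$, row independence yields $\bbE \langle \tlba_i', \tlba_j'\rangle^2 = \tr(\bSigma_{b(i)} \bSigma_{b(j)})$, where $\bSigma_{b(i)}$ is the covariance of the block containing row $i$. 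Applying assumption (4) with $q = 2$ gives $\bv^T \bSigma_{b(i)} \bv = \bbE(\tlba_i'^T \bv)^2 \le K\|\bv\|^2/d$, hence $\|\bSigma_{b(i)}\|_{op} \le K/d$; summing the same bound over the standard basis yields $\tr(\bSigma_{b(i)}) \le K$. Therefore $\tr(\bSigma_{b(i)} \bSigma_{b(j)}) \le \|\bSigma_{b(i)}\|_{op} \tr(\bSigma_{b(j)}) \le K^2/d$, and the off-diagonal contribution is at most $n(n-1)K^2/d$.

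Combining the two bounds and using $d = \Theta(n)$ (the overparametrized regime $d/n \to \kappa > 1$), I obtain
\[
\bbE \tr((\tlbA'\tlbA'^T)^2) \le nK + \frac{n(n-1)K^2}{d} = O(n).
\]
Markov's inequality then yields
\[
\bbP\bigl(\|\tlbA'\|_{op} > Cn^{1/3+s}\bigr) = \bbP\bigl(\|\tlbA'\|_{op}^4 > C^4 n^{4/3+4s}\bigr) \le \frac{\bbE \|\tlbA'\|_{op}^4}{C^4 n^{4/3+4s}} = O\bigl(n^{-1/3-4s}\bigr),
\]
which for any $s > 0$ is stronger than the stated bound $C'\log(d+n)/n^s$, so the lemma follows for sufficiently large $C$.

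I do not anticipate a serious obstacle; the argument is essentially bookkeeping once one recognizes that the Schatten-$4$ norm is tame under the finite sixth-moment assumptions. The only subtlety is that the rows live in distinct blocks with possibly different covariance matrices $\bSigma_i$, but the uniform bounds $\|\bSigma_i\|_{op} \le K/d$ and $\tr(\bSigma_i) \le K$ extracted from assumption (4) handle this uniformly. Alternative strategies (a standard $\tfrac{1}{4}$-net combined with per-vector sixth-moment Markov) would accrue a $5^d$ union bound penalty that is incompatible with the polynomial tail, so it is essential here to pass through a Schatten norm rather than a net.
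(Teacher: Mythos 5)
Your proof is correct and takes a genuinely different route from the paper. The paper proves this lemma via the Hermitian dilation, symmetrization with Rademacher signs, and the matrix Bernstein inequality (conditionally on the rows), which yields $\bbE\|\tlbA'\|_{op} \lesssim \log(d+n)\, n^{1/3}$ using the \emph{sixth} moment bound $\bbE\|\tlba_i'\|_2^6 = O(1)$ and $\bbE\max_i\|\tlba_i'\|_2^2 \lesssim n^{1/3}$, followed by scalar Markov. You instead bound the Schatten-$4$ norm directly via $\|\tlbA'\|_{op}^4 \le \tr\bigl((\tlbA'\tlbA'^T)^2\bigr) = \sum_{i,j}\langle\tlba_i',\tlba_j'\rangle^2$, compute the diagonal ($\bbE\|\tlba_i'\|^4 \le K$) and off-diagonal ($\tr(\bSigma_{b(i)}\bSigma_{b(j)}) \le K^2/d$) contributions, and apply Markov to the fourth moment. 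Your calculations check out: the key facts $\|\bSigma_{b(i)}\|_{op} \le K/d$, $\tr(\bSigma_{b(i)}) \le K$, and $\tr(AB) \le \|A\|_{op}\tr(B)$ for PSD $A,B$ all follow cleanly from assumption (4) of Definition~\ref{def: block_reg}, and the resulting tail $O(n^{-1/3-4s})$ is strictly sharper than the paper's $C'\log(d+n)/n^s$. Your approach is more elementary (no matrix concentration inequalities), uses only fourth moments rather than sixth, and even gives a tighter bound on the expected operator norm ($O(n^{1/4})$ by Hölder versus the paper's $O(n^{1/3}\log n)$). The paper's Bernstein machinery would pay off if one had subgaussian/subexponential row assumptions yielding exponential tails, but under the paper's polynomial-moment hypotheses the trace-moment method is the right tool and, as you found, dominates. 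One minor note: the lemma holds for \emph{any} constant $C>0$, not just sufficiently large $C$, since your tail exponent $-\frac{1}{3}-4s < -s$ strictly regardless of $C$; a larger $C$ only shrinks the prefactor.
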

\begin{proof}
    First, we provide an upper bound for the expectation $\bbE_{\tlbA'} \|\tlbA'\|_{op}$ using the matrix Bernstein inequality and the symmetrization technique. First we construct a corresponding hermitian matrix:
    \begin{align*}
         \|\tlbA'\|_{op} = \biggl\|\begin{pmatrix}
             \bzero & \tlbA' \\
             \tlbA'^T & \bzero
         \end{pmatrix}\biggr\|_{op} = \biggl\| \sum_{i=1}^n \bX_i \biggr\|_{op}
    \end{align*}
    With $\bX_i:= \begin{pmatrix}
        \bzero & \bE_{ii} \tlbA' \\
        \tlbA'^T \bE_{ii} & \bzero
    \end{pmatrix}$ where $\bE_{ii}$ is the all-zero matrix except $(i,i)$-entry where it is equal to 1. Note that $\bI = \sum_{i=1}^n \bE_{ii}$ and $ \bE_{ii} \tlbA'$ essentially picks the i'th row of $\tlbA'$, which is $\tlba_i^{'T}$. Let $\{\epsilon_i\}_{i=1}^n$ be an iid sequence of symmetric Bernoulli random variables supported on $\{\pm 1\}$, independent of $\bX_i$'s. Then by the symmetrization lemma \cite{vershynin2018high}, we have
    \begin{align*}
        \bbE_{\tlbA'} \|\tlbA'\|_{op} = \bbE_{\tlbA'} \biggl\| \sum_{i=1}^n \bX_i \biggr\|_{op} \le \bbE_{\tlbA', \bepsilon} \biggl\| \sum_{i=1}^n \epsilon_i \bX_i \biggr\|_{op}
    \end{align*}
    Now conditioning on $\tlbA'$ or equivalently $\bX_i$'s, since $\epsilon \bX_i$'s are independent zero-mean symmetric matrices of size $(d+n) \times (d+n)$ with bounded operator norm, we can leverage Matrix Bernstein inequality \cite{vershynin2018high} to obtain ($\lesssim$ means up to some constant)
    \begin{align*}
        \bbE_{\tlbA'} \|\tlbA'\|_{op} &\lesssim  \sqrt{\log(d+n)} \bbE_{\tlbA'} \biggl\|  \sum_{i=1}^n \bbE_{\epsilon_i}\bigl(\epsilon_i \bX_i\bigr)^2 \biggr\|_{op}^{1/2}  + \log(d+n) \bbE_{\tlbA'} \max_{1\le i\le n} \bigl\|\bX_i\bigr\|_{op} \\ & = \sqrt{\log(d+n)} \bbE_{\tlbA'} \biggl\|  \sum_{i=1}^n \bX_i^2 \biggr\|_{op}^{1/2}  + \log(d+n) \bbE_{\tlbA'} \max_{1\le i\le n} \bigl\|\bX_i\bigr\|_{op}  \\ & = \sqrt{\log(d+n)} \bbE_{\tlbA'} \biggl\| \begin{pmatrix}
            \sum_{i=1}^n \bE_{ii} \tlbA' \tlbA'^T \bE_{ii} & \bzero \\ \bzero & \sum_{i=1}^n \tlba_i' \tlba_i'^T
        \end{pmatrix} \biggr\|_{op}^{1/2} + \log(d+n) \bbE_{\tlbA'} \max_{1\le i\le n} \bigl\|\bX_i\bigr\|_{op} 
    \end{align*}
    Note that for $\bigl\|\bX_i\bigr\|_{op}$, we have
    \begin{align*}
        \bigl\|\bX_i\bigr\|_{op} = \bigl\|\bE_{ii}\tlbA'\bigr\|_{op} = \max_{\|\bv\|_2=1} \bigl\|\bE_{ii}\tlbA' \bv\bigr\|_2^2 = \max_{\|\bv\|_2=1} (\tlba^{'T}_i \bv)^2 = \|\tlba'_i\|_2^2
    \end{align*}
    Moreover, using triangle inequality
    \begin{align*}
        \biggl\| \begin{pmatrix}
            \sum_{i=1}^n \bE_{ii} \tlbA' \tlbA'^T \bE_{ii} & \bzero \\ \bzero & \sum_{i=1}^n \tlba_i' \tlba_i'^T
        \end{pmatrix} \biggr\|_{op} &\le \biggl\| \sum_{i=1}^n \bE_{ii} \tlbA' \tlbA'^T \bE_{ii} \biggr\|_{op} + \biggl \| \sum_{i=1}^n \tlba_i' \tlba_i'^T\biggr\|_{op} \\ & = \biggl\| \sum_{i=1}^n \|\tlba'_i\|_2^2 \bE_{ii} \biggr\|_{op} + \biggl \| \sum_{i=1}^n \tlba_i' \tlba_i'^T\biggr\|_{op} \\ & = \max_{1\le i \le n} \|\tlba'_i\|_2^2 + \biggl \| \sum_{i=1}^n \tlba_i' \tlba_i'^T\biggr\|_{op}
    \end{align*}
    Combining these results along with Jensen's inequality yields
    \begin{align*}
        \bbE_{\tlbA'} \|\tlbA'\|_{op} \lesssim \sqrt{\log(d+n)} \biggl( \bbE_{\tlbA'} \max_{1\le i \le n} \|\tlba'_i\|_2^2 + \bbE_{\tlbA'} \biggl \| \sum_{i=1}^n \tlba_i' \tlba_i'^T\biggr\|_{op} \biggr)^{1/2} + \log(d+n) \bbE_{\tlbA'} \max_{1\le i\le n} \|\tlba'_i\|_2^2
    \end{align*}
    Now to analyze $\bbE_{\tlbA'} \biggl \| \sum_{i=1}^n \tlba_i' \tlba_i'^T\biggr\|_{op}$, we use the symmetrization trick again with iid Bernoulli $\{\epsilon_i'\}_{i=1}^n$ and Bernstein's similar to Vershynin's
    \begin{align*}
        \bbE_{\tlbA'} \biggl \| \sum_{i=1}^n \tlba_i' \tlba_i'^T\biggr\|_{op} & \le \bbE_{\tlbA'} \biggl \| \sum_{i=1}^n \tlba_i' \tlba_i'^T- \bbE_{\tlba_i'} \tlba_i' \tlba_i'^T\biggr\|_{op} + \biggl \| \sum_{i=1}^n \bbE_{\tlba_i'} \tlba_i' \tlba_i'^T\biggr\|_{op}  \\ & \le \bbE_{\tlbA', \bepsilon'} \biggl \| \sum_{i=1}^n \epsilon_i'\tlba_i' \tlba_i'^T\biggr\|_{op}  + \sum_{i=1}^n \biggl \|  \bbE_{\tlba_i'} \tlba_i' \tlba_i'^T\biggr\|_{op}  \\ & \lesssim \sqrt{\log n}\bbE_{\tlbA'}  \biggl \| \sum_{i=1}^n \bbE_{\epsilon'_i} \bigl(\epsilon_i' \tlba_i' \tlba_i'^T \bigr)^2\biggr\|_{op}^{1/2} + \log n  \bbE_{\tlbA'} \max_{1\le i \le n} \bigl \| \tlba_i' \tlba_i'^T\|_{op} + \sum_{i=1}^n \biggl \|  \bbE_{\tlba_i'} \tlba_i' \tlba_i'^T\biggr\|_{op} \\ &= \sqrt{\log n}\bbE_{\tlbA'}  \biggl \| \sum_{i=1}^n \|\tlba'_i\|_2^2  \tlba_i' \tlba_i'^T\biggr\|_{op}^{1/2} + \log n \bbE_{\tlbA'} \max_{1\le i \le n} \bigl \|\tlba_i'\|_2^2 + \sum_{i=1}^n \biggl \|  \bbE_{\tlba_i'} \tlba_i' \tlba_i'^T\biggr\|_{op} \\ &\lesssim \sqrt{\log n} \bbE_{\tlbA'} \max_{1\le i \le n} \bigl \|\tlba_i'\|_2  \biggl \| \sum_{i=1}^n \tlba_i' \tlba_i'^T\biggr\|_{op}^{1/2} + \log n \bbE_{\tlbA'} \max_{1\le i \le n}  \|\tlba_i'\|_2^2 + \sum_{i=1}^n \biggl \|  \bbE_{\tlba_i'} \tlba_i' \tlba_i'^T\biggr\|_{op}
    \end{align*}
    Now for the first term, applying Cauchy Schwartz yields
    \begin{align*}
        \bbE_{\tlbA'} \max_{1\le i \le n} \|\tlba_i'\|_2  \biggl \| \sum_{i=1}^n \tlba_i' \tlba_i'^T\biggr\|_{op}^{1/2} \le \biggl( \bbE_{\tlbA'}  \max_{1\le i \le n}  \|\tlba_i'\|_2^2 \bbE_{\tlbA'} \biggl \| \sum_{i=1}^n \tlba_i' \tlba_i'^T\biggr\|_{op} \biggr)^{1/2}
    \end{align*}
    Moreover, for the last term we have from our assumptions
    \begin{align*}
        \biggl \|  \bbE_{\tlba_i'} \tlba_i' \tlba_i'^T\biggr\|_{op} = \max_{\|\bv\|_2=1}  \bbE_{\tlba_i'} (\tlba_i'^T\bv)^2 \le \max_{\|\bv\|_2=1} C \frac{\|\bv\|_2^2}{d} = \frac{C}{d}
    \end{align*}
    Thus we have
    \begin{align*}
        \bbE_{\tlbA'} \biggl \| \sum_{i=1}^n \tlba_i' \tlba_i'^T\biggr\|_{op} \lesssim  \sqrt{\log n}\biggl( \bbE_{\tlbA'}  \max_{1\le i \le n} \|\tlba_i'\|_2^2 \bbE_{\tlbA'} \biggl \| \sum_{i=1}^n \tlba_i' \tlba_i'^T\biggr\|_{op} \biggr)^{1/2} +  \log n \bbE_{\tlbA'} \max_{1\le i \le n}  \|\tlba_i'\|_2^2 + C
    \end{align*}
    Solving for $\biggl(\bbE_{\tlbA'} \biggl \| \sum_{i=1}^n \tlba_i' \tlba_i'^T\biggr\|_{op} \biggr)^{1/2}$ yields
    \begin{align*}
        \biggl(\bbE_{\tlbA'} \biggl \| \sum_{i=1}^n \tlba_i' \tlba_i'^T\biggr\|_{op} \biggr)^{1/2} \lesssim \sqrt{\log n} \biggl( \bbE_{\tlbA'}  \max_{1\le i \le n} \|\tlba_i'\|_2^2\biggr)^{1/2} + 2 \biggl(\log n \bbE_{\tlbA'} \max_{1\le i \le n} \|\tlba_i'\|_2^2 + C\biggr)^{1/2}
    \end{align*}
    Thus
    \begin{align*}
        \bbE_{\tlbA'} \biggl \| \sum_{i=1}^n \tlba_i' \tlba_i'^T\biggr\|_{op} \lesssim \log n \bbE_{\tlbA'}  \max_{1\le i \le n} \|\tlba_i'\|_2^2 + C
    \end{align*}
    Summarising, we have
    \begin{align*}
        \bbE_{\tlbA'} \|\tlbA'\|_{op} \lesssim \sqrt{\log(d+n)} \biggl( \bbE_{\tlbA'} \max_{1\le i \le n} \|\tlba'_i\|_2^2 + \log n \bbE_{\tlbA'}  \max_{1\le i \le n} \|\tlba_i'\|_2^2 + C\biggr)^{1/2} + \log(d+n) \bbE_{\tlbA'} \max_{1\le i\le n} \|\tlba'_i\|_2^2
    \end{align*}
    Thus it remains to provide an upper bound for $\bbE_{\tlbA'} \max_{1\le i\le n} \|\tlba'_i\|_2^2$, we have by Jensen's
    \begin{align*}
        \biggl(\bbE_{\tlbA'} \max_{1\le i\le n} \|\tlba'_i\|_2^2\biggr)^3 \le \bbE_{\tlbA'} \max_{1\le i\le n} \|\tlba'_i\|_2^6 \le \sum_{i=1}^n \bbE_{\tlba'_i} \|\tlba'_i\|_2^6
    \end{align*}
    Now by Definition \ref{def: block_reg} and Holder's we have 
    \begin{align*}
        \bbE_{\tlba'_i} \|\tlba'_i\|_2^6 = \sum_{\alpha_1, \alpha_2, \alpha_3} \bbE_{\tlba'_i} \tlba^{'2}_{i\alpha_1} \tlba^{'2}_{i\alpha_2} \tlba^{'2}_{i\alpha_3} \le \sum_{\alpha_1, \alpha_2, \alpha_3} \biggl( \bbE_{\tlba'_i} \tlba^{'6}_{i\alpha_1} \biggr)^{1/3} \biggl( \bbE_{\tlba'_i} \tlba^{'6}_{i\alpha_2} \biggr)^{1/3} \biggl( \bbE_{\tlba'_i} \tlba^{'6}_{i\alpha_3} \biggr)^{1/3} \le \sum_{\alpha_1, \alpha_2, \alpha_3} \frac{C}{d^3} \le C 
    \end{align*}
    Thus
    \begin{align*}
        \bbE_{\tlbA'} \max_{1\le i\le n} \|\tlba'_i\|_2^2 \le C n^{\frac{1}{3}}
    \end{align*}
    All in all
    \begin{align*}
        \bbE_{\tlbA'} \|\tlbA'\|_{op} \lesssim \log(d+n) n^{\frac{1}{3}}
    \end{align*}
    Thus by Markov's inequality for $s>0$
    \begin{align*}
        \bbP\bigl( \|\tlbA'\|_{op} \ge C n^{\frac{1}{3}+ s}\bigr) \le \frac{\bbE_{\tlbA'} \|\tlbA'\|_{op}}{Cn^{\frac{1}{3}+ s}} \le \frac{C' \log(d+n) n^{\frac{1}{3}}}{n^{\frac{1}{3}+ s}} = \frac{C' \log(d+n)}{n^s}
    \end{align*}
\end{proof}
\begin{lemma} \label{lem: regularity}
    For any function $\psi:\bbR \rightarrow \bbR$ with bounded third derivative, $\tlpsi(\bw):= \bbE_\ba \psi(\ba^T \bw)$ is regular
\end{lemma}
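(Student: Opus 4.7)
The plan is to verify the three conditions of Definition \ref{def: h} for $\tlpsi(\bw) := \bbE_\ba \psi(\ba^T \bw)$ directly from the integral representation. Condition (1) is immediate: $\tlpsi(\bzero) = \psi(0) = O(1) = O(d)$. For condition (2), since $\psi'''$ is bounded, $\psi, \psi', \psi''$ all grow at most polynomially, and the finite moments (up to order six) of $\ba$ from part 4 of Definition \ref{def: block_reg} justify differentiating under the expectation via dominated convergence. This yields
\begin{align*}
\frac{\partial^3 \tlpsi}{\partial w_i \partial w_j \partial w_k}(\bw) = \bbE_\ba\bigl[\psi'''(\ba^T\bw)\, a_i a_j a_k\bigr].
\end{align*}

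For condition (3), contracting this tensor with $\bv$ and using $|\psi'''| \le M$,
\begin{align*}
\Bigl|\sum_{i,j,k=1}^d \frac{\partial^3 \tlpsi}{\partial w_i \partial w_j \partial w_k}(\bw)\, v_i v_j v_k\Bigr| = \bigl|\bbE_\ba[\psi'''(\ba^T\bw)(\ba^T\bv)^3]\bigr| \le M\, \bbE_\ba |\ba^T\bv|^3,
\end{align*}
so everything reduces to a uniform upper bound on $\bbE_\ba |\ba^T\bv|^3$ in terms of $\sum_i |v_i|^3$. I would split $\ba^T\bv = (\ba-\bmu)^T\bv + \bmu^T\bv$ and use $|x+y|^3 \le 4(|x|^3+|y|^3)$. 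The first summand is controlled by part 4 of Definition \ref{def: block_reg} at $q=3$, yielding $\bbE_\ba|(\ba-\bmu)^T\bv|^3 \le K\|\bv\|_2^3/d^{3/2}$. The second summand satisfies $|\bmu^T\bv|^3 \le \|\bmu\|_2^3\|\bv\|_2^3 = O(\|\bv\|_2^3)$ by Cauchy--Schwarz and $\|\bmu\|_2 = O(1)$. Finally, H\"older's inequality $\sum_i v_i^2 \le d^{1/3}(\sum_i |v_i|^3)^{2/3}$ converts $\|\bv\|_2^3$ into $d^{1/2}\sum_i|v_i|^3$, so that altogether $\bbE_\ba|\ba^T\bv|^3 \le C_f \sum_i |v_i|^3$ for a constant $C_f$ as required.

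The main subtlety, and the only real obstacle to a clean statement, is the scaling of $C_f$ with $d$: the centered part contributes at order $1/d$, while the rank-one mean term forces $C_f$ to grow like $d^{1/2}$ under the bare hypothesis $\|\bmu\|_2 = O(1)$. This growth is still benign for the way regularity is invoked downstream, since in Lemma \ref{lem: auxbnd}(3) the factor $C_{f+\epsilon\psi}$ multiplies $\|\bw-\bu\|_3^3 = O(1/d)$ (Lemma \ref{lem: furtherbnd}), and in (\ref{ineq: lama}) it enters with a $1/\sqrt{d}$ prefactor, so a $d^{1/2}$ blow-up is absorbed. I would therefore either keep track of the explicit $d$-dependence of $C_f$ throughout the universality argument, or verify that the specific $\psi$ arising from Remark \ref{rem: class} (a smooth surrogate of an indicator, so with $\|\psi'''\|_\infty$ controlled by the smoothing scale) is compatible with these cancellations when it is used as a test function.
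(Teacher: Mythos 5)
Your derivation of the third-order bound takes a slightly different route than the paper's, which first applies Cauchy--Schwarz to obtain $\sqrt{\bbE_\ba\psi'''(\ba^T\bw)^2}\sqrt{\bbE_\ba(\ba^T\bv)^6}$ and then bounds the sixth moment; you instead use $|\psi'''|\le M$ directly and bound $\bbE_\ba|\ba^T\bv|^3$. Both lead to the same place modulo the mean term, and there your split $\ba=(\ba-\bmu)+\bmu$ is genuinely the more careful one: the paper applies part 4 of Definition \ref{def: block_reg} to $\ba^T\bv$ even though that assumption controls only $\bbE_\ba|(\ba-\bmu)^T\bv|^q$, so the mean contribution is silently dropped. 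You are right that with $\|\bmu\|_2 = \Theta(1)$ the constant is forced to grow: taking $\bv = \bmu$, the left side of the third-order condition is $\Theta(1)$ while $\|\bmu\|_3^3 = \Theta(d^{-1/2})$ for a generic unit vector, so $C_f = \Omega(\sqrt{d})$.

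The assertion that this $\sqrt{d}$ growth is ``absorbed'' downstream is the part of your argument that does not hold. In \eqref{ineq: lama} the constant enters as $(18C_{f+\epsilon\psi}/\rho)^{3/2}$, not linearly as your phrasing suggests; with $C_{f+\epsilon\psi} = \Theta(\sqrt{d})$ the corresponding term is $\Theta(d^{3/4}/(n\sqrt{d})) = \Theta(d^{1/4}/n)$, and the Lindeberg telescoping over $n$ rows yields $\Theta(d^{1/4})$, which diverges. More fundamentally, the Markov threshold $\rho/(18C_{f+\epsilon\psi}) = \Theta(d^{-1/2})$ then drops below the typical magnitude $\Theta(d^{-1/3})$ of $\|\bu - \bw_{\calL}\|_3$ implied by Lemma \ref{lem: furtherbnd}, so the event used in Lemma \ref{lem: bnd1stterm} ceases to be rare and the localization collapses. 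The gap you identified is real, but the resolution you sketch does not close it. A more promising fix would exploit that the regularity bound is only ever invoked along the random direction $\bv = \bw_{\calL} - \bu \propto \bOmega^{-1}\ba$, for which $\bmu^T\bv$ is separately controlled by the Schur-complement argument already appearing in Lemma \ref{lem: furtherbnd}; but that requires reworking Definition \ref{def: h} into a direction-dependent condition rather than the worst case over all $\bv$.
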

\begin{proof}
    We only need to verify the third order Taylor remainder bound, 
    \begin{align*}
        \sum_{\alpha_1, \alpha_2, \alpha_3} \frac{\partial ^3 \tlpsi}{\partial w_{\alpha_1} \partial w_{\alpha_2} \partial w_{\alpha_3}} v_{\alpha_1} v_{\alpha_2} v_{\alpha_3} =  \sum_{\alpha_1, \alpha_2, \alpha_3} \bbE_{\ba} \psi'''(\ba^T \bw) a_{\alpha_1} a_{\alpha_2} a_{\alpha_3} v_{\alpha_1} v_{\alpha_2} v_{\alpha_3} = \bbE_{\ba} (\ba^T \bv)^3 \psi'''(\ba^T \bw)
    \end{align*}
    By Cauchy Schwartz, we have
    \begin{align*}
        \sum_{\alpha_1, \alpha_2, \alpha_3} \frac{\partial ^3 \tlpsi}{\partial w_{\alpha_1} \partial w_{\alpha_2} \partial w_{\alpha_3}} v_{\alpha_1} v_{\alpha_2} v_{\alpha_3} \le \sqrt{\bbE_\ba \psi'''(\ba^T \bw)^2} \sqrt{\bbE_\ba (\ba^T \bv)^6 }
    \end{align*}
    By boundedness of the third derivative of $\psi$, and assumptions
    \begin{align*}
        \sum_{\alpha_1, \alpha_2, \alpha_3} \frac{\partial ^3 \tlpsi}{\partial w_{\alpha_1} \partial w_{\alpha_2} \partial w_{\alpha_3}} v_{\alpha_1} v_{\alpha_2} v_{\alpha_3} \le C_{\psi} \frac{\|\bv\|_2^3}{d^{3/2}} \le C_{\psi} \|\bv\|_3^3
    \end{align*}
\end{proof}

\section{Generalized CGMT}

After reducing the analysis to the equivalent gaussian model, one can leverage a new generalization of the CGMT \cite{akhtiamov2024novel} to analyze $\Phi(\bG)$ which is stated as follows and recover the precise asymptotic properties of the solutions.

\begin{theorem}[Generalized CGMT]\label{thm: gcgmt}
    Let $\calS_w \subset \mathbb{R}^d, \calS_{v_1} \subset \mathbb{R}^{n_1} , \dots, \calS_{v_k}  \subset \mathbb{R}^{n_k}$ be compact convex sets. Denote $\calS_v := \calS_{v_1} \times \dots \times \calS_{v_k}$, let $\bv \in \calS_v$ stand for $(\bv_1, \dots, \bv_k) \in \calS_{v_1} \times \dots \times \calS_{v_k}$ and $\psi(\bw,\bv): \calS_w \times \calS_{v} \to \bbR$ be convex on $\calS_w$ and concave on $\calS_v$. Also let $\bSigma_1, \dots, \bSigma_k \in \mathbb{R}^{d \times d}$ be arbitrary PSD matrices.  Furthermore, let $\bG_1 \in \bbR^{n_1\times d}, \dots, \bG_k \in \bbR^{n_k\times d}$, $\bg_1, \dots, \bg_k \in \bbR^d$, $\bh_1 \in \bbR^{n_1}, \dots, \bh_k \in \bbR^{n_k}$ all have i.i.d $\calN(0,1)$ components and $\bG = (\bG_1, \dots, \bG_k)$, $\bg = \left(\bg_1, \dots, \bg_k\right)$, $\bh = \left(\bh_1, \dots, \bh_k\right)$ be the corresponding $k$-tuples. Define
    \begin{align}\label{eq: POGCMGT}
        \Psi (\bG) := \min_{\bw\in \calS_w} \max_{\bv \in \calS_{v}} \sum_{\ell = 1}^k \bv_{\ell}^T \bG_\ell\bSigma_{\ell}^{\frac{1}{2}} \bw  + \psi(\bw,\bv)
    \end{align}
    \begin{align}\label{eq: AOGCGMT}
        \phi (\bg, \bh) := \min_{\bw\in \calS_w} \max_{\bv\in \calS_v} \sum_{\ell = 1}^k \left[ \|\bv_\ell\|_2 \bg_{\ell}^T \bSigma_{\ell}^{\frac{1}{2}}\bw + \|\bSigma_{\ell}^{\frac{1}{2}}\bw\|_2 \bh_{\ell}^T \bv_\ell\right] + \psi(\bw,\bv)
    \end{align}
     Let $\calS$ be an arbitrary open subset of $\calS_w$ and $\calS^c = \calS_w \setminus \calS $. Let $\phi_\calS$ be the optimal value of the optimization \ref{eq: AOGCGMT} when $\calS_w$ is restricted to $\calS$. Assume also that there exist $\epsilon, \delta >0$, $\barphi, \barphi_{\calS^c}$ such that
        \begin{itemize}
            \item $\barphi_{\calS^c} \ge \barphi + 3 \delta$
            \item $\phi(g,h) < \barphi + \delta$ with probability at least $1-\epsilon$
            \item $\phi_{\calS^c} > \barphi_{\calS^c} - \delta$ with probability at least $1-\epsilon$
        \end{itemize}
        Then for the optimal solution $\bw$ of the optimization \ref{eq: POGCMGT}, we have $\bbP(\bw_\Psi (\bG) \in \calS) \ge 1-2^{k+1}\epsilon$
\end{theorem}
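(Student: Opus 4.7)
The plan is to extend the Convex Gaussian Min-max Theorem (CGMT) from the single-matrix setting of \cite{akhtiamov2024novel} to this multi-block setting with independent Gaussian matrices $\bG_1,\dots,\bG_k$ and distinct covariances $\bSigma_\ell$. The starting point is the pair of Gaussian processes
\begin{align*}
X_{\bw,\bv} &:= \sum_{\ell=1}^k \bv_\ell^T \bG_\ell \bSigma_\ell^{1/2}\bw, \\
Y_{\bw,\bv} &:= \sum_{\ell=1}^k \bigl(\|\bv_\ell\|_2\,\bg_\ell^T \bSigma_\ell^{1/2}\bw + \|\bSigma_\ell^{1/2}\bw\|_2\,\bh_\ell^T \bv_\ell\bigr),
\end{align*}
whose min-max values (after adding $\psi(\bw,\bv)$) are exactly $\Psi(\bG)$ and $\phi(\bg,\bh)$. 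First I would verify Gordon's covariance hypotheses termwise: independence of the blocks decouples the covariances across $\ell$, giving $\bbE[X_{\bw,\bv}X_{\bw',\bv'}]=\sum_\ell(\bv_\ell^T\bv_\ell')(\bw^T\bSigma_\ell\bw')$ and $\bbE[Y_{\bw,\bv}Y_{\bw',\bv'}]=\sum_\ell[\|\bv_\ell\|\|\bv_\ell'\|(\bw^T\bSigma_\ell\bw')+\|\bSigma_\ell^{1/2}\bw\|\|\bSigma_\ell^{1/2}\bw'\|\bv_\ell^T\bv_\ell']$, and a Cauchy-Schwarz argument in each block yields the inner- and outer-variable comparisons required by Gordon's min-max inequality.

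Next, I would promote Gordon's one-sided comparison to the two-sided CGMT-style statement by iterating over blocks. Freezing the Gaussian randomness in all but the $\ell$-th block produces a conditional min-max whose objective remains convex-concave in $(\bw,\bv)$, because all frozen pieces are affine in $(\bw,\bv)$. The single-block CGMT from \cite{akhtiamov2024novel} then applies to the conditional problem and contributes a factor of $2$ to the probability. Integrating and iterating $\ell=1,\dots,k$ gives, for convex compact $\calS_w$, $\calS_{v_\ell}$ and convex-concave $\psi$,
\begin{align*}
P\bigl(\Psi(\bG)>t\bigr)\le 2^k P\bigl(\phi(\bg,\bh)>t\bigr),\qquad P\bigl(\Psi(\bG)<t\bigr)\le 2^k P\bigl(\phi(\bg,\bh)<t\bigr).
\end{align*}

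With the $k$-block two-sided inequality in hand, the localization follows the classical CGMT blueprint. Take $t_1=\barphi+\delta$ and $t_2=\barphi+2\delta$, so $t_1<t_2\le\barphi_{\calS^c}-\delta$. The first hypothesis $P(\phi>t_1)\le\epsilon$ combined with the $2^k$-bound gives $P(\Psi(\bG)>t_1)\le 2^k\epsilon$. The second hypothesis $P(\phi_{\calS^c}<t_2)\le\epsilon$, combined with the analogous $2^k$-bound applied to the $\calS^c$-restricted problem, gives $P(\Psi_{\calS^c}(\bG)<t_2)\le 2^k\epsilon$. A union bound places both events on an intersection of probability at least $1-2^{k+1}\epsilon$; on this intersection $\Psi(\bG)\le t_1<t_2\le\Psi_{\calS^c}(\bG)$, which forces the minimizer $\bw_{\Psi(\bG)}$ to lie in $\calS$.

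The main obstacle I anticipate is applying the CGMT reverse inequality to the $\calS^c$-restricted problem, since $\calS^c$ is in general not convex while the classical CGMT reverse requires convexity of the outer feasible set. I expect to invoke the deviation variant of CGMT from \cite{akhtiamov2024novel}, in which the non-convex restriction is controlled via an auxiliary convex problem on $\calS_w$ together with Gordon's direct inequality on $\phi_{\calS^c}$. The delicate technical step is to verify that this convex/non-convex separation survives the block-by-block conditioning used in the iteration, so that the convex-concave structure (and therefore the factor-of-$2$ CGMT bound in each block) is preserved throughout, accumulating to the final $2^{k+1}$ factor.
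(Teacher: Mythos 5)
The paper does not prove Theorem \ref{thm: gcgmt}; it states the result and attributes it to \cite{akhtiamov2024novel}, so there is no in-paper proof to compare against. Judging your proposal on its own terms: the covariance verification, the recognition that the $\calS^c$-restricted side needs only the one-sided (non-convex) Gordon inequality, and the localization bookkeeping with $t_1 = \barphi+\delta$, $t_2 = \barphi+2\delta$ and a two-event union bound producing $1-2^{k+1}\epsilon$ are all sound. However, the central mechanism you rely on to obtain the factor $2^k$ in the upper-tail direction, namely iterating the \emph{two-sided} CGMT block by block, has a genuine gap. The justification that ``all frozen pieces are affine in $(\bw,\bv)$'' fails as soon as even one block has been replaced by its auxiliary term: the term $\|\bv_{\ell'}\|\,\bg_{\ell'}^T\bSigma_{\ell'}^{1/2}\bw + \|\bSigma_{\ell'}^{1/2}\bw\|\,\bh_{\ell'}^T\bv_{\ell'}$ is not affine, is not convex in $\bw$ whenever $\bh_{\ell'}^T\bv_{\ell'}<0$, and is not concave in $\bv_{\ell'}$ whenever $\bg_{\ell'}^T\bSigma_{\ell'}^{1/2}\bw>0$. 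Since the reverse inequality $\bbP(\Psi>t)\le 2\,\bbP(\phi\ge t)$ is proved by exchanging $\min$ and $\max$ (Sion's theorem), which requires the residual objective seen by the active block to be convex--concave, the iteration cannot proceed beyond the first block in this direction. Your lower-tail iteration $\bbP(\Psi<t)\le 2^k\,\bbP(\phi\le t)$ is fine, because the one-sided Gordon comparison used there needs no convexity and conditioning block-by-block costs a factor $2$ each time.

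The fix is to avoid iterating the reverse CGMT altogether. Apply $\min$--$\max$ duality once to the \emph{full} $k$-block PO, which is legitimate because the entire objective $\sum_\ell \bv_\ell^T\bG_\ell\bSigma_\ell^{1/2}\bw + \psi(\bw,\bv)$ is convex--concave and the feasible sets are compact and convex. This rewrites $-\Psi(\bG)$ as a $\min$--$\max$ of the same multi-block bilinear form with the roles of $\bw$ and $\bv$ swapped, and (after replacing each $\bG_\ell$ by $-\bG_\ell$, which is distributionally equivalent) one then applies the \emph{direct} multi-block Gordon comparison to the flipped problem. That direct comparison can be obtained either by the block-by-block conditioning you describe, or in one shot by augmenting the PO process with $k$ auxiliary standard Gaussians $g_1,\dots,g_k$ and conditioning on $\{g_\ell<0\;\forall\ell\}$, an event of probability $2^{-k}$; the block-diagonal covariance structure makes Gordon's hypotheses hold termwise by Cauchy--Schwarz. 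With this substitution the remainder of your argument goes through as written.
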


\section{Proof of Theorem \ref{thm: main_reg}} \label{append: reg}
By leveraging our universality theorem \ref{thm: main_univ1} we are ready to analyze the linear regression problem descrided in \ref{thm: main_reg} in full details.
\begin{proof}
    First note that for the ground truth we have $w_* = argmin \bbE_P (y-\bx^T \bw)^2 =  \bR_x^{-1} \bR_{xy}$. By a simple transformation, the constraint could also be written as
\begin{align*}
    \begin{pmatrix}
        \bX & \by  
    \end{pmatrix}\begin{pmatrix}
        \bw \\ -1
    \end{pmatrix} = 0
\end{align*}
We would like to write for a $\bG$ with i.i.d entries:
\begin{align*}
    \begin{pmatrix}
        \bX & \by  
    \end{pmatrix} = \bG \bS
\end{align*} Indeed 
\begin{align*}
    \bS^T \bbE \bG^T \bG \bS = \bS^T \bS = \bbE \begin{pmatrix}
        \bX^T \\ \by^T  
    \end{pmatrix} \begin{pmatrix}
        \bX & \by  
    \end{pmatrix} = \bbE \begin{pmatrix}
        \bR_x & \bR_{xy} \\
        \bR_{yx} & R_y
    \end{pmatrix}
\end{align*}
Now we can denote $\bS = \begin{pmatrix}
    \bR_x^{1/2} & \bR^{-T/2}_x \bR_{xy} \\
    0 & R_\Delta^{1/2}
\end{pmatrix}$ with $R_\Delta := R_y - \bR_{yx} \bR_x^{-1} \bR_{xy}$. And $\bR^{-T/2}_x \bR_{xy} = \bR^{1/2}_x \bw_*$. Thus we can write the optimization as
\begin{align*}
    &\min_\bw \|\bw-\bw_0\|^2 \\
    s.t \quad  &\bG \begin{pmatrix}
        \bR_x^{1/2} (\bw - \bw_*) \\
        - R_\Delta^{1/2} 
    \end{pmatrix} = 0
\end{align*}
we can write the optimization as
\begin{align*}
    &\min_\bw \|\bw-\bw_0\|^2 \\
    s.t \quad  &\bG \begin{pmatrix}
        \bR_x^{1/2} (\bw - \bw_*) \\
        - R_\Delta^{1/2} 
    \end{pmatrix} = 0
\end{align*}

Let $\bu:= \bR_x^{1/2} (\bw - \bw_*)$ and $\bx= \bw_* - \bw_0$, then
\begin{align*}
    \min_\bw \|\bR_x^{-1/2}\bu+\bx\|^2 \\
    s.t \quad  \bG \begin{pmatrix}
        \bu \\
        - R_\Delta^{1/2}
    \end{pmatrix} = 0
\end{align*}
We use a Lagrange multiplier to bring in the constraint:
\begin{align*}
    \min_\bu \max_\bv \|\bR_x^{-1/2}\bu+\bx\|^2 + \bv^T \bG\begin{pmatrix}
        \bu \\
        - R_\Delta^{1/2} 
    \end{pmatrix}
\end{align*}
Now we can appeal to CGMT to obtain the AO:
\begin{align*}
    \min_\bu \max_\bv \|\bR_x^{-1/2}\bu+\bx\|^2 + \|\bv\|(\bg^T \bu + g' R_\Delta^{1/2}) + \|\begin{pmatrix}
        \bu \\
        -R_\Delta^{1/2}
    \end{pmatrix}\| \bh^T \bv
\end{align*}
Dropping the $g' R_\Delta^{1/2}$ term and doing the optimization over the direction of $\bv$, yields
\begin{align*}
    \min_\bu \max_{\beta \ge 0} \|\bR_x^{-1/2}\bu+\bx\|^2 + \beta \|\bh\| \sqrt{\|\bu\|^2 + R_\Delta} + \beta \bg^T \bu
\end{align*}
Using the identity $\sqrt{x}= \min_{\tau \ge 0} \frac{\tau}{2}+ \frac{x^2}{2\tau}$
\begin{align*}
    \min_{\tau \ge 0, \bu} \max_{\beta \ge 0} \|\bR_x^{-1/2}\bu+\bx\|^2 +  \frac{\beta \|\bh\|\tau}{2}+\frac{\beta \|\bh\|}{2\tau}(\|\bu\|^2 + R_\Delta)+ \beta \bg^T \bu
\end{align*}
Now we obtain a quadratic optimization over $u$ which could be 
rewritten as
\begin{align*}
    \begin{pmatrix}
        \bu^T  & 1
    \end{pmatrix} \begin{pmatrix}
        \bR_x^{-1} + \frac{\beta  \sqrt{n}}{2\tau} \bI & \bR_x^{-1/2} \bx + \frac{\beta}{2} \bg \\
        \bx^T \bR_x^{-T/2} + \frac{\beta}{2} \bg^T & \|\bx\|^2
    \end{pmatrix} \begin{pmatrix}
        \bu \\ 
        1
    \end{pmatrix}
\end{align*}
Doing the optimization over $\bu$, we are left with
\begin{align*}
    \min_{\tau \ge 0} \max_{\beta \ge 0} \|\bx\|^2 +   \frac{\beta \tau \sqrt{n}}{2} + \frac{\beta \sqrt{n}}{2\tau} R_\Delta - (\bR_x^{-1/2} \bx + \frac{\beta}{2} \bg)^T (\bR_x^{-1} + \frac{\beta  \sqrt{n}}{2\tau} \bI)^{-1} (\bR_x^{-1/2} \bx + \frac{\beta}{2} \bg)
\end{align*}
Expanding the third term yields \begin{align*}
    \min_{\tau \ge 0} \max_{\beta \ge 0} \|\bx\|^2 +   \frac{\beta \tau \sqrt{n}}{2} + \frac{\beta \sqrt{n}}{2\tau} R_\Delta - \bx^T \bR_x^{-T/2}  (\bR_x^{-1} + \frac{\beta  \sqrt{n}}{2\tau} \bI)^{-1} \bR_x^{-1/2} \bx - \frac{\beta^2}{4} \bg^T  (\bR_x^{-1} + \frac{\beta  \sqrt{n}}{2\tau} \bI)^{-1} \bg
\end{align*}
Let $\bz := \bR_x^{1/2} \bx$, we assume $\bbE \bz\bz^T = \frac{\|\bz\|^2}{d} I$, and $e_a = \bx^T \bR_x \bx = \|\bz\|^2$ using concentration:
\begin{align*}
    \min_{\tau \ge 0} \max_{\beta \ge 0} \|\bx\|^2 +   \frac{\beta \tau \sqrt{n}}{2} + \frac{\beta \sqrt{n}}{2\tau} R_\Delta - d \frac{1}{d} \tr  \bR_x^{-T}  (\bR_x^{-1} + \frac{\beta  \sqrt{n}}{2\tau} I)^{-1} \bR_x^{-1}  - d \frac{\beta^2}{4} \frac{1}{d}  \tr (\bR_x^{-1} + \frac{\beta  \sqrt{n}}{2\tau} I)^{-1}
\end{align*}
Therefore
\begin{align*}
    \min_{\tau \ge 0} \max_{\beta \ge 0} \|\bx\|^2 +   \frac{\beta \tau \sqrt{n}}{2} + \frac{\beta \sqrt{n}}{2\tau} R_\Delta - e_a  \int p(r) \frac{1}{r(1 + \frac{\beta  \sqrt{n}}{2\tau} r)} dr  - d \frac{\beta^2}{4} \int p(r) \frac{r}{1 + \frac{\beta  \sqrt{n}}{2\tau} r} dr
\end{align*}
Dropping the constant term $\|\bx\|^2$, we arrive at the following scalarized optimization
\begin{align*}
    \min_{\tau \ge 0} \max_{\beta \ge 0} \frac{\beta \tau \sqrt{n}}{2} + \frac{\beta \sqrt{n}}{2\tau} R_\Delta - d   \int p(r) \frac{\frac{e_a}{d} + \frac{\beta^2}{4}r^2}{r(1 + \frac{\beta  \sqrt{n}}{2\tau} r)} dr 
\end{align*}
Let $e_a' := \frac{e_a}{d}$. To analyze the optimization further, noting that the objective is differentiable w.r.t $\beta, \tau$, by taking derivatives we have at the optimal point:
\begin{align*}
    &\frac{\tau \sqrt{n}}{2} + \frac{\sqrt{n} R_\Delta}{2 \tau} - d \int p(r) \frac{ \tau (\sqrt{n} r^2 \beta^2 + 4 r \tau \beta - 4 e_a' \sqrt{n})}{2(2 \tau + r\beta \sqrt{n})^2} dr = 0 \\
    &\frac{\beta \sqrt{n}}{2} - \frac{\beta R_\Delta \sqrt{n}}{2\tau^2} - d \int p(r) \frac{\beta \sqrt{n}(\beta^2 r ^2 + 4 e_a')}{2(2 \tau + r\beta \sqrt{n})^2} dr = 0 
\end{align*}
Rearranging
\begin{align*}
    &\tau^2 \sqrt{n} + \sqrt{n} R_\Delta = d \int p(r) \frac{ \tau^2 (\sqrt{n} r^2 \beta^2 + 4 r \tau \beta - 4 e_a' \sqrt{n})}{(2 \tau + r\beta \sqrt{n})^2} dr \\
    &\tau^2 \sqrt{n} - \sqrt{n} R_\Delta = d \int p(r) \frac{\tau^2(\sqrt{n}\beta^2 r ^2 + \sqrt{n} 4 e_a')}{(2 \tau + r\beta \sqrt{n})^2} dr
\end{align*}
Summing and subtracting:
\begin{align*}
     & n =  d \int p(r) \frac{ n r^2 \beta^2 + 2 \sqrt{n} r \tau \beta}{(2 \tau + r\beta \sqrt{n})^2} dr = d \int p(r) (1- \frac{2\sqrt{n} r \tau \beta + 4 \tau^2}{(2 \tau + r\beta \sqrt{n})^2}) dr\\
     &\sqrt{n} R_\Delta = d \int p(r) \frac{\tau^2 ( 2 r \tau \beta - 4 e_a' \sqrt{n})}{(2 \tau + r\beta \sqrt{n})^2} dr
\end{align*}
Thus
\begin{align*}
   & \frac{d-n}{d} = \int p(r)  \frac{2\sqrt{n} r \tau \beta + 4 \tau^2}{(2 \tau + r\beta \sqrt{n})^2} dr = \int p(r) \frac{2\tau}{2 \tau + r\beta \sqrt{n}} dr  = \frac{2\tau}{\beta \sqrt{n}} S_P(- \frac{2\tau}{\beta \sqrt{n}}+i0)\\
   & \sqrt{n} R_\Delta = d \int p(r) \frac{\tau^2 ( 2 r \tau \beta - 4 e_a' \sqrt{n})}{(2 \tau + r\beta \sqrt{n})^2} dr
\end{align*}
If one can solve $\frac{d-n}{d} = \frac{2\tau}{\beta \sqrt{n}} S_P(- \frac{2\tau}{\beta \sqrt{n}}+i0)$ to find $\theta:=\frac{\beta}{\tau}$, then plugging into the second equation yields
\begin{align*}
    \sqrt{n} R_\Delta^2 = d \int p(r) \frac{\tau^2 (2 r \theta \tau^2 - 4 e_a' \sqrt{n})}{(2\tau + r \theta \tau \sqrt{n})^2} dr = d \int p(r) \frac{2 r \theta \tau^2 - 4 e_a' \sqrt{n}}{(2 + r \theta \sqrt{n})^2} dr
\end{align*}
Which yields the following equation for $\tau^2$
\begin{align*}
    \tau^2 = \frac{\sqrt{n}}{2 \theta d \int p(r) \frac{r}{(2+r \theta \sqrt{n})^2} dr} \left [R_\Delta + 4 d e_a' \int \frac{p(r)}{(2 + r \theta \sqrt{n})^2} dr \right]
\end{align*}
Now note that
\begin{align*}
    &\frac{d-n}{d} = \int p(r) \frac{2\tau}{2\tau + r\beta \sqrt{n}} dr = \int  p(r) \frac{2}{2 + r \theta \sqrt{n}} dr = \int p(r) \frac{2(2+r\theta \sqrt{n})}{(2+ r\theta \sqrt{n})^2} dr = \\ &=
     2 \theta \sqrt{n} \int p(r) \frac{r}{(2+r\theta \sqrt{n})^2} dr + 4 \int \frac{p(r)}{(2+r \theta \sqrt{n})^2} dr 
\end{align*}
Let $\alpha:= 4 \int \frac{p(r)}{(2+r \theta \sqrt{n})^2} dr $, $\alpha':=  2 \theta \sqrt{n} \int p(r) \frac{r}{(2+r\theta \sqrt{n})^2} dr $ and . Then the expression for $\tau$ could be written as
\begin{align*}
    \tau^2 = \frac{n}{d \alpha'} (R_\Delta + B e_a) 
\end{align*}
As $\alpha+\alpha' = \frac{d-n}{d}$, one can write
\begin{align*}
    \tau^2 = \frac{n}{d (\frac{d-n}{d}-\alpha)} (R_\Delta + \alpha e_a) = \frac{n}{d-n-\alpha d} (R_\Delta + \alpha e_a) = \frac{n}{d(1-\alpha)-n} R_\Delta + \frac{\alpha n}{d(1-\alpha)-n} e_a 
\end{align*}
Now the generalization error is
\begin{align*}
    \tau^2 - R_\Delta = \frac{2n - d(1-\alpha)}{d(1-\alpha)-n} R_\Delta + \frac{nB}{d(1-\alpha)-n} e_a = \frac{(2n-d)R_\Delta + (d R_\Delta + n e_a) \alpha}{d-n -d \alpha}
\end{align*}
With $\alpha= 4 \int \frac{p(r)}{(2+r \theta \sqrt{n})^2} dr $.
Now note that using Cauchy Schwarz applied to $\sqrt{p(r)}$ and $\frac{2\sqrt{p(r)}}{2+r\theta \sqrt{n}}$
\begin{align*}
    (\frac{d-n}{d})^2 = (\int p(r) \frac{2}{2+r\theta \sqrt{n}} dr)^2 \le \int p(r) dr \int \frac{4p(r)}{(2+r\theta \sqrt{n})^2} dr = \alpha
\end{align*}
Thus $\alpha \ge (\frac{d-n}{d})^2$. Now for the scalar distribution $p(r) = \delta (r-r_0)$, since $2+ r_0 \theta \sqrt{n} = 2 + \frac{2n}{d-n} = \frac{2d}{d-n}$, then $\alpha = (\frac{d-n}{d})^2$. Thus $\alpha$ achieves its lower bound (uniquely) for the scalar distribution. Now note that assuming $n,d, e_a, R_\Delta$ are fixed, the generalization error is increasing in $\alpha$, thus the error expression for the generalization error in the scalar distribution case is a lower bound.
\end{proof}
\section{Useful Results}
\begin{lemma}\label{lem:gen_error}
Given a weight vector $\bw$, and assuming the feature vectors are equally likely to be drawn from class 1 or class 2 and satisfy part 2 of Assumptions \ref{assum: class}, the corresponding classification error  is given by
\begin{align}\label{eq: error}
    \frac{1}{2}Q(\frac{\bmu_1^T\bw}{\sqrt{\bw^T \bSigma_1 \bw}}) + \frac{1}{2}Q(-\frac{\bmu_2^T\bw}{ \sqrt{\bw^T \bSigma_2 \bw}})
\end{align}
\end{lemma}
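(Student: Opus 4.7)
The plan is to condition on the class of the new point $\bx_{new}$ and then apply a one-dimensional central limit theorem to the linear statistic $\bw^T\bx_{new}$. Since the two classes are equiprobable and the classifier assigns $\mathrm{sign}(\bw^T\bx_{new})$ (with class $1$ labelled $+1$ and class $2$ labelled $-1$), the overall misclassification probability splits as
\[
\tfrac{1}{2}\bbP(\bw^T\bx_{new}<0\mid \text{class }1)+\tfrac{1}{2}\bbP(\bw^T\bx_{new}>0\mid \text{class }2).
\]
Writing $\bx_{new}=\bmu_c+\bar\bx_c$ with $\bar\bx_c$ zero mean and covariance $\bSigma_c$, we have $\bw^T\bx_{new}=\bmu_c^T\bw+\bw^T\bar\bx_c$, whose deterministic mean is $\bmu_c^T\bw$ and whose variance is $\bw^T\bSigma_c\bw$.

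The crux is to show that the normalized fluctuation $\bw^T\bar\bx_c/\sqrt{\bw^T\bSigma_c\bw}$ is asymptotically $\calN(0,1)$. I would do this via Theorem \ref{clt}: after the change of variable $\bz_c:=\bSigma_c^{-1/2}\bar\bx_c$ the vector is isotropic, and for the direction $\btheta_c:=\bSigma_c^{1/2}\bw/\|\bSigma_c^{1/2}\bw\|_2$ one has the identity $\btheta_c^T\bz_c=\bw^T\bar\bx_c/\sqrt{\bw^T\bSigma_c\bw}$. Part 2 of Assumptions \ref{assum: class}, combined with the moment and variance conditions of Definition \ref{def: block_reg}, supplies the norm-concentration hypothesis $\bbP\bigl(|\|\bz_c\|_2/\sqrt{d}-1|\ge\epsilon_d\bigr)\le\epsilon_d$ required by Theorem \ref{clt}, so the claimed Gaussian limit follows.

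Substituting these limits into the two conditional probabilities gives
\[
\bbP(\bw^T\bx_{new}<0\mid \text{class }1)\to Q\!\left(\frac{\bmu_1^T\bw}{\sqrt{\bw^T\bSigma_1\bw}}\right),\qquad \bbP(\bw^T\bx_{new}>0\mid \text{class }2)\to Q\!\left(-\frac{\bmu_2^T\bw}{\sqrt{\bw^T\bSigma_2\bw}}\right),
\]
using $\bbP(Z>t)=Q(t)$ for a standard normal $Z$, and averaging produces the claimed formula. The main obstacle is matching the concrete concentration provided by Definition \ref{def: block_reg} (bounded moments up to order six and vanishing variance of quadratic forms in $\bar\bx_c$) with the hypotheses of Theorem \ref{clt}, which are stated with Lipschitz-type norm concentration. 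For distributions satisfying the LCP of Definition \ref{def: LCP} this is automatic; in the weaker setting one must use the sixth-moment bound to extract a polynomial (rather than exponential) rate $\epsilon_d\to 0$ for $\|\bz_c\|_2/\sqrt{d}$, which is still enough for the limiting classification error to equal the stated expression. A minor additional point is independence of $\bw$ from the test sample, which is what legitimizes treating $\btheta_c$ as deterministic when invoking the CLT.
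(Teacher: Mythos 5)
Your decomposition is sound: conditioning on the class, centering $\bx_{new}$ around its class mean, and reading off the mean $\bmu_c^T\bw$ and variance $\bw^T\bSigma_c\bw$ of $\bw^T\bx_{new}$ is exactly the right structure. The genuine gap is in the final step, where you invoke Theorem~\ref{clt} for the \emph{specific, fixed} direction $\btheta_c = \bSigma_c^{1/2}\bw/\|\bSigma_c^{1/2}\bw\|_2$. That theorem is a statement about a \emph{random} $\btheta$ drawn uniformly from $\bbS^{d-1}$: it bounds $\nu$-measure of the set of directions on which the one-dimensional marginal fails to be approximately Gaussian, but it says nothing about whether a given fixed $\btheta_c$ lands in the bad set. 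Your remark that ``independence of $\bw$ from the test sample legitimizes treating $\btheta_c$ as deterministic'' inverts the logic: independence is needed so that the new sample cannot conspire with $\bw$, but it does not upgrade a statement about a random direction to one about a fixed direction. Concretely, if the rows of $\bX$ have i.i.d.\ non-Gaussian entries and one picks $\bw$ supported on a single coordinate, the normalized statistic $\bw^T\bar\bx_c/\sqrt{\bw^T\bSigma_c\bw}$ is exactly that one entry (rescaled) and does not converge to $\calN(0,1)$, even though the data satisfies part~2 of Assumptions~\ref{assum: class}. So the lemma as you are trying to prove it (arbitrary fixed $\bw$, arbitrary block-regular data) cannot go through without an additional delocalization hypothesis on $\bw$.

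There are two clean ways to close the gap, both of which the paper gestures at. One is to observe that the formula is \emph{exact} when the feature vectors are Gaussian: if $\bar\bx_c\sim\calN(0,\bSigma_c)$, then $\bw^T\bx_{new}\sim\calN(\bmu_c^T\bw,\,\bw^T\bSigma_c\bw)$ and the two $Q$-terms follow immediately with no CLT at all; the transfer from the Gaussian design to the general design is then delegated entirely to the universality machinery (Corollary~\ref{corr: testuniv}), which is proved separately for the specific $\bw$'s arising from the optimization. The other, if one insists on working with the non-Gaussian test point directly, is to justify that $\bw$ is itself a ``generic'' random vector — this is what the paper does for Corollary~\ref{corr: testuniv}, where it notes via CGMT that $\bw_{\Psi(\bG)}$ is approximately Gaussian in distribution, so that the set of directions for which Theorem~\ref{clt} fails is hit only with vanishing probability. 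Either route requires an ingredient your proof currently lacks: in the first, reduce to the Gaussian design before applying the lemma; in the second, supply a delocalization argument for $\btheta_c$ rather than asserting it may be treated as deterministic.
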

\begin{lemma}\label{lem:w0}

    Assume that the feature vectors are equally likely to be drawn from class $1$ or class $2$, Assumptions \ref{assum: class} hold and $\bw_0$ is defined as in Assumption \ref{assum: init}. Then the classification error corresponding to $\bw_0$ is equal to 
    \begin{align*}
        \frac{1}{2}Q\left(\frac{\sqrt{1-r}\tr{(\bSigma_1+\bSigma_2)^{-1}}}{\sqrt{2\tr{(\bSigma_1(\bSigma_1+\bSigma_2)^{-2})} + \frac{t_\eta^2}{t_*^2}  \tr{\bSigma_1}}} \right) + \frac{1}{2}Q\left(\frac{\sqrt{1-r}\tr{(\bSigma_1+\bSigma_2)^{-1}}}{\sqrt{2\tr{(\bSigma_2(\bSigma_1+\bSigma_2)^{-2})} + \frac{t_\eta^2}{t_*^2}  \tr{\bSigma_2}}} \right)
    \end{align*}

\end{lemma}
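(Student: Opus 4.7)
\textbf{Proof proposal for Lemma \ref{lem:w0}.}

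The plan is to apply Lemma \ref{lem:gen_error} directly with $\bw = \bw_0 = t_* \bw_* + t_\eta \bmeta$, where $\bw_* = (\bSigma_1 + \bSigma_2)^{-1}(\bmu_1 - \bmu_2)$, and then reduce the quantities $\bmu_1^T \bw_0$, $-\bmu_2^T \bw_0$, $\bw_0^T \bSigma_i \bw_0$ to traces using the deterministic-equivalent relations of Assumption \ref{assum: class}(3) and Assumption \ref{assum: init}. First I would expand
\begin{align*}
\bmu_1^T \bw_0 &= t_* \, \bmu_1^T (\bSigma_1+\bSigma_2)^{-1}(\bmu_1-\bmu_2) + t_\eta \, \bmu_1^T \bmeta,
\end{align*}
apply the convergence $\bmu_1^T \bC \bmu_1 \to \tr(\bC)/d$ and $\bmu_1^T \bC \bmu_2 \to r\tr(\bC)/d$ with $\bC = (\bSigma_1+\bSigma_2)^{-1}$, and argue that the cross term $\bmu_1^T \bmeta$ is negligible (this follows from the genericity implicit in Assumption \ref{assum: init}, applied with $\bC = \bmu_1 \bmu_1^T$ scaled appropriately, giving $(\bmu_1^T\bmeta)^2 \to 0$). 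This yields $\bmu_1^T \bw_0 \to t_*(1-r)\tr((\bSigma_1+\bSigma_2)^{-1})/d$. The computation for $-\bmu_2^T\bw_0$ is symmetric and produces the same asymptotic value, which explains why both summands of the final expression share the same numerator.

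Next I would compute $\bw_0^T \bSigma_1 \bw_0$ by expanding the quadratic form as
\begin{align*}
\bw_0^T \bSigma_1 \bw_0 = t_*^2 \bw_*^T \bSigma_1 \bw_* + 2 t_* t_\eta \, \bw_*^T \bSigma_1 \bmeta + t_\eta^2 \bmeta^T \bSigma_1 \bmeta.
\end{align*}
For the first term, I set $\bC = (\bSigma_1+\bSigma_2)^{-1}\bSigma_1(\bSigma_1+\bSigma_2)^{-1}$ and use the same convergence on $\bmu_i^T\bC\bmu_j$ to get $\bw_*^T \bSigma_1 \bw_* \to 2(1-r)\tr(\bSigma_1(\bSigma_1+\bSigma_2)^{-2})/d$. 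The third term is handled by Assumption \ref{assum: init} directly: $\bmeta^T \bSigma_1 \bmeta \to (1-r)\tr(\bSigma_1)/d$. The middle cross term $\bw_*^T \bSigma_1 \bmeta$ I would show is lower order; this again uses the genericity of $\bmeta$ relative to the deterministic vector $\bSigma_1 \bw_*$. Combining these and factoring $t_*^2(1-r)/d$ out of the sum produces
\begin{align*}
\bw_0^T \bSigma_1 \bw_0 \to \frac{t_*^2(1-r)}{d}\left[ 2\tr(\bSigma_1(\bSigma_1+\bSigma_2)^{-2}) + \tfrac{t_\eta^2}{t_*^2}\tr(\bSigma_1) \right].
\end{align*}

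Taking the square root and forming the ratio $\bmu_1^T\bw_0 / \sqrt{\bw_0^T\bSigma_1\bw_0}$ makes the $t_*$ and one factor of $\sqrt{1-r}$ cancel, leaving precisely the argument of the first $Q$ in the claim (up to the dimensional normalization shared with Lemma \ref{lem:optimal}); applying the analogous computation with $\bSigma_2$ in place of $\bSigma_1$ yields the second summand. Continuity of $Q$ on $\bbR$ then converts the in-probability convergence of the ratios into convergence of the error to the stated deterministic limit.

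The main obstacle I anticipate is justifying that the cross terms $\bmu_i^T\bmeta$ and $\bw_*^T \bSigma_i \bmeta$ are negligible compared to the leading-order quantities. Assumption \ref{assum: init} is stated only in terms of the quadratic form $\bmeta^T \bC \bmeta$ for matrices $\bC$ of bounded operator norm, and extracting concentration of linear functionals $\bv^T \bmeta$ from this requires a polarization argument (apply the quadratic-form statement to rank-one perturbations $\bC = \bv\bv^T$ after truncating, or to the symmetric combination $(\bC_0 + \epsilon \bv\bv^T)$) together with control on $\|\bv\|_2$ for the specific deterministic $\bv$'s that arise, namely $\bmu_i$ and $\bSigma_i \bw_*$. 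Everything else reduces to bookkeeping with the trace approximations already used to derive Lemma \ref{lem:optimal}.
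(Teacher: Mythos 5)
Your proposal is essentially correct, and since the paper states Lemma~\ref{lem:w0} without a proof (it is a direct plug-in into Lemma~\ref{lem:gen_error}, unlike Lemma~\ref{lem:optimal} which involves an optimization), there is no alternative paper argument to compare against: the route you take is the one the authors clearly intend.

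Two points are worth confirming explicitly. First, the polarization device you describe is the right fix for the cross terms and does close the gap you flagged: applying the quadratic-form hypothesis of Assumption~\ref{assum: init} to the rank-one matrix $\bC = \bv\bv^T/\|\bv\|_2^2$ (which has unit operator norm) gives $(\bv^T\bmeta)^2/\|\bv\|_2^2 \to (1-r)/d \to 0$ in probability, so $\bv^T\bmeta \rarrowp 0$ whenever $\|\bv\|_2 = O(1)$. The relevant vectors here are $\bv = \bmu_i$ (unit norm by assumption) and $\bv = \bSigma_i\bw_* = \bSigma_i(\bSigma_1+\bSigma_2)^{-1}(\bmu_1-\bmu_2)$, whose norm is at most $\|\bSigma_i(\bSigma_1+\bSigma_2)^{-1}\|_{op}\,\|\bmu_1-\bmu_2\|_2 \le 2$ since $\bSigma_i \preceq \bSigma_1+\bSigma_2$. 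Since these $\bv$ depend on $\bmu_1,\bmu_2$, you should say explicitly that you condition on $\bmu_1,\bmu_2$ (so that $\bC$ becomes deterministic) and that $\bmeta$ is assumed independent of them — which is consistent with how the paper treats $\bR_0 = \bbE_{\bmu_1,\bmu_2}\bw_0\bw_0^T$ in the proof of Theorem~\ref{thm: main_class}, where the cross-expectation $\bbE\,\bw_*\bmeta^T$ is dropped.

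Second, the ``dimensional normalization'' you notice is not a defect of your proof. Running your computation to the end, the argument of $Q$ comes out as $\frac{\sqrt{1-r}\,\tr((\bSigma_1+\bSigma_2)^{-1})/\sqrt{d}}{\sqrt{2\tr(\bSigma_1(\bSigma_1+\bSigma_2)^{-2}) + \frac{t_\eta^2}{t_*^2}\tr\bSigma_1}}$, i.e.\ with an extra $1/\sqrt{d}$ relative to the displayed formula. Plugging $\bSigma_1=\bSigma_2=\frac{\sigma^2}{d}\bI$ into your formula reproduces exactly the identity $Q^{-1}(e_a)^{-2} = \frac{\sigma^2}{d(1-r)}\bigl(\frac{t_\eta^2}{t_*^2}\frac{4\sigma^4}{d^2}+2\bigr)$ that the paper uses in the single-direction case of Theorem~\ref{thm: main_class}, whereas the formula as displayed in Lemma~\ref{lem:w0} (and the matching one in Lemma~\ref{lem:optimal}) would produce an extra factor of $d$. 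So your version is the one consistent with the rest of the paper; the stated lemma is off by the $\sqrt{d}$ you identified, and you are right to attribute it to a shared normalization slip rather than to a flaw in your argument.
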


\section{Proof of Lemma \ref{lem:optimal}} \label{append: lem2}

Let us proceed to prove Lemma \ref{lem:optimal}. We will take the gradient of \eqref{eq: error} w.r.t. $\bw$ and set it to $0$:

\begin{proof}
    \begin{align*}
    & \nabla_\bw \left [\frac{1}{2}Q(-\frac{\bmu_2^T\bw}{ \sqrt{\bw^T \bSigma_2 \bw}})\right] =  \frac{1}{4\sqrt{2\pi}} \left[ \exp{(-\frac{\bw^T\bmu_2\bmu_2^T\bw}{ 2\bw^T \bSigma_2 \bw})}\frac{\sqrt{\bw^T \bSigma_2 \bw}}{\bmu_2^T\bw }\frac{\bw^T \bSigma_2 \bw\bmu_2\bmu_2^T\bw - \bw^T\bmu_2\bmu_2^T\bw \bSigma_2 \bw}{(\bw^T \bSigma_2 \bw)^2} \right] = \\
    & \frac{1}{4\sqrt{2\pi}} \left[ \frac{1}{s_2}exp(-\frac{ s_2^2}{2})(\bmu_2 t_2 - s_2^2 \bSigma_2\bw)\right] \text{ where } s_2 = \frac{\bmu_2^T\bw}{ \sqrt{\bw^T \bSigma_2 \bw}}, t_2 = \frac{\bmu_2^T\bw}{ \bw^T \bSigma_2 \bw}
    \end{align*}
    \begin{align*}
    & \nabla_\bw \left [\frac{1}{2}Q(\frac{\bmu_1^T\bw}{ \sqrt{\bw^T \bSigma_1 \bw}})\right] =  -\frac{1}{4\sqrt{2\pi}} \left[ \exp{(-\frac{\bw^T\bmu_1\bmu_1^T\bw}{ 2\bw^T \bSigma_1 \bw})}\frac{\sqrt{\bw^T \bSigma_1 \bw}}{\bmu_1^T\bw }\frac{\bw^T \bSigma_1 \bw\bmu_1\bmu_1^T\bw - \bw^T\bmu_1\bmu_1^T\bw \bSigma_1 \bw}{(\bw^T \bSigma_1 \bw)^2} \right] = \\
    & - \frac{1}{4\sqrt{2\pi}} \left[ \frac{1}{s_1}\exp{(-\frac{ s_1^2}{2})}(\bmu_1 t_1 - s_1^2 \bSigma_1\bw)\right] \text{ where } s_1 = \frac{\bmu_1^T\bw}{ \sqrt{\bw^T \bSigma_1 \bw}}, t_1 = \frac{\bmu_1^T\bw}{ \bw^T \bSigma_1 \bw}
    \end{align*}
Hence, the following equality holds at any optimal $\bw$:
$$\frac{1}{s_2}\exp{(-\frac{ s_2^2}{2})}(\bmu_2 t_2 - s_2^2 \bSigma_2\bw) = \frac{1}{s_1}\exp{(-\frac{ s_1^2}{2})}(\bmu_1 t_1 - s_1^2 \bSigma_1\bw)$$

Therefore, $\bw = (\tls_1\bSigma_1 + \tls_2\bSigma_2)^{-1}(\tlt_1\bmu_1 - \tlt_2\bmu_2)$, where we have 
\begin{align*}
    \tls_1 = s_1\exp{(-\frac{s_1^2}{2})}, \quad \tls_2 = -s_2\exp{(-\frac{s_2^2}{2})},\\  \tlt_1 = \frac{t_1}{s_1}\exp{(-\frac{s_1^2}{2})}, \quad \tlt_2 = \frac{t_2}{s_2}\exp{(-\frac{s_2^2}{2})}
\end{align*}

Now we can consider the following optimization over 4 scalar variables:
\begin{align}\label{opt: bin}
    \min_{\tls_1, \tls_2, \tlt_1, \tlt_2} \frac{1}{2}Q(\frac{\bmu_1^T\bw}{ \sqrt{\bw^T \bSigma_1 \bw}}) + \frac{1}{2}Q(-\frac{\bmu_2^T\bw}{ \sqrt{\bw^T \bSigma_2 \bw}})
\end{align}
Note that since the optimization \eqref{opt: bin} is low-dimensional, by the Convexity Lemma \cite{liese2008statistical}, we can interchange the limit in $n$ and $\min$, implying that we can analyze the concentrated optimization instead.
We then arrive at the following fixed point equations using Assumptions \ref{assum: class}:

\begin{align}\label{eq: fixed_point}
    & \frac{s^2_1}{t^2_1} = \tr{(\bSigma_1(\tilde{s}_1\bSigma_1 + \tilde{s}_2\bSigma_2)^{-2})}(\tilde{t}^2_1+\tilde{t}^2_2 - 2\tilde{t}_1\tilde{t}_2r)d\\
    & \frac{s^2_1}{t_1} = \tr{(\tilde{s}_1\bSigma_1 + \tilde{s}_2\bSigma_2)^{-1}}(\tilde{t}_1 - \tilde{t}_2r)d \\
    & \frac{s^2_2}{t^2_2} = \tr{(\bSigma_2(\tilde{s}_1\bSigma_1 + \tilde{s}_2\bSigma_2)^{-2})}(\tilde{t}^2_1+\tilde{t}^2_2 - 2\tilde{t}_1\tilde{t}_2r)d\\
    & \frac{s^2_2}{t_2} = \tr{(\tilde{s}_1\bSigma_1 + \tilde{s}_2\bSigma_2)^{-1}}(\tilde{t}_2 - \tilde{t}_1r)d
\end{align}

Note that, since we assume $p$ is exchangeable in its arguments, the system of equations \eqref{eq: fixed_point} is invariant under the transformation $(s_1,t_1,s_2,t_2) \to (-s_2,t_2,-s_1,t_1)$. As such, $s_1 = -s_2$ and $t_1 = t_2$ at the optimal point, implying that $\bw_* = (\bSigma_1 + \bSigma_2)^{-1}(\bmu_1 - \bmu_2)$ is the optimal classifier, as for the chosen linear classification procedure the performance in invariant to the rescalings of the classifier vector and therefore we can completely  drop $s_1,s_2,t_1, t_2$ from the equations. 

\end{proof}

\section{Proof of Theorem \ref{thm: main_class}} \label{append: class}

By Gaussian universality proved in Theorem \ref{thm: main_univ1}, we analyze the binary classification problem as follows
\begin{proof}

Define 
\begin{align*}
    \bR_0 := \bbE_{\bmu_1,\bmu_2} \bw_0\bw_0^T &=  \bbE_{\bmu_1,\bmu_2} (t_*\bw_* + t_\eta \bmeta) (t_*\bw_* + t_\eta \bmeta)^T = \frac{t_\eta^2(1-r)}{d}I_d + t_*^2 \bbE_{\bmu_1,\bmu_2} \bw_*\bw_*^T \\ & = \frac{t_\eta^2(1-r)}{d}I_d + 2\frac{t_*^2}{d}(1-r)(\bSigma_1 + \bSigma_2)^{-2}
\end{align*}

Note that the eigenvalue density of $\bR_0$ is captured by $\phi(s_1,s_2) = \frac{t_\eta^2(1-r)}{d} +  \frac{2t_*^2(1-r)}{d}(s_1 + s_2)^{-2}$, assuming $(s_1, s_2) \sim p$, where $p$ is the joint EPD function of $\bSigma_1$ and $\bSigma_2$. 

We would like to start with finding $\alpha$. Note that the data matrix $\bX$ can be decomposed as follows where $\bG_1, \bG_2$ are i.i.d. standard normal:

\begin{align*}
\bX = \begin{pmatrix}
         \bG_1 \bSigma_1^{1/2}  \\ 
        \bG_2 \bSigma_2^{1/2} 
    \end{pmatrix} + \begin{pmatrix}
        \mathds{1} & 0 \\
        0 & \mathds{1}
    \end{pmatrix} \begin{pmatrix}
        \bmu_1^T \\
        \bmu_2^T   \end{pmatrix}
\end{align*}

We then have: 

\begin{align}\label{eq: Xw0}
\bX \bw_0 = \begin{pmatrix}
         \bG_1 \bSigma_1^{1/2}\bw_0  \\ 
        \bG_2 \bSigma_2^{1/2}\bw_0 
    \end{pmatrix} + \begin{pmatrix}
        \mathds{1} & 0 \\
        0 & \mathds{1}
    \end{pmatrix} \begin{pmatrix}
        \bmu_1^T\bw_0 \\
        \bmu_2^T\bw_0   \end{pmatrix}
\end{align}

It is immediate to see that the first term of \eqref {eq: Xw0} is a centered normally distributed random vector with covariance $\diag(\bw_0^T\bSigma_1\bw_0, \dots, \bw_0^T\bSigma_1\bw_0, \bw_0^T\bSigma_2\bw_0, \dots, \bw_0^T\bSigma_2\bw_0)$, and therefore its norm concentrates to $\sqrt{\frac{n}{2}\bw_0^T(\bSigma_1 + \bSigma_2)\bw_0} = \Theta(\sqrt{\tr(\bSigma_1 + \bSigma_2)}\|\bw_0\|)$ according to our assumptions. The second term of \eqref {eq: Xw0} is a deterministic vector of norm $\Theta(\sqrt{n}\|\bw_0\|\sqrt{1-r})$.

Similarly:
\begin{align}\label{eq: yXw0}
    \by^T \bX \bw_0 = \begin{pmatrix}
        \mathds{1} ^T & \mathds{-1}^T 
    \end{pmatrix} \biggl( \begin{pmatrix}
        \bG_1 \bSigma_1^{1/2} \bw_0 \\ 
        \bG_2 \bSigma_2^{1/2} \bw_0
    \end{pmatrix} + \begin{pmatrix}
        \mathds{1} & 0 \\
        0 & \mathds{1}
    \end{pmatrix} \begin{pmatrix}
        \bmu_1^T \bw_0 \\
        \bmu_2^T \bw_0
    \end{pmatrix} \biggr) = \begin{pmatrix}
    \mathds{1} ^T & \mathds{-1}^T 
    \end{pmatrix}  \begin{pmatrix}
        \bG_1 \bSigma_1^{1/2} \bw_0 \\ 
        \bG_2 \bSigma_2^{1/2} \bw_0 
    \end{pmatrix}  + \frac{n}{2} (\bmu_1 - \bmu_2)^T\bw_0
\end{align}

Note that we can drop the first term of \eqref{eq: yXw0} compared to the second term. Indeed, the first term is a centered normal random variable with variance  $\frac{n}{2}\bw_0^T(\bSigma_1 + \bSigma_2)\bw_0 = \Theta(\|\bw_0\|^2 \tr{(\bSigma_1 + \bSigma_2)})$ and the second term is a deterministic quantity of order $\Theta(n\|\bw_0\|\sqrt{1-r})$. Thus we can write since
\begin{align*}
  & \alpha = \frac{\by^T\bX\bw_0}{\|\bX\bw_0\|^2  } = \frac{\frac{n}{2}(\bmu_1 - \bmu_2)^T\bw_0}{\frac{n}{2}\bw_0^T(\bSigma_1 + \bSigma_2)\bw_0 + \frac{n}{2}(\bmu_1^T\bw_0)^2 + \frac{n}{2}(\bmu_2^T\bw_0)^2} \\ 
\end{align*}
Moreover, by independence of $\bmeta$ from $\bmu_i$ we can drop the first term in $(\bmu_1 - \bmu_2)^T\bw_0 = t_\eta (\bmu_1 - \bmu_2)^T \bmeta + t_*(\bmu_1 - \bmu_2)^T \bw_*$ and have
\begin{align*}
    (\bmu_1 - \bmu_2)^T \bw_* = (\bmu_1 - \bmu_2)^T (\bSigma_1 + \bSigma_2)^{-1}(\bmu_1 - \bmu_2) \rarrowp 2(1-r) \tr (\bSigma_1 + \bSigma_2)^{-1} 
\end{align*}
On the other hand:
\begin{align*}
    \bmu_1^T \bw_0 =t_\eta \bmu_1^T \bmeta + t_* \bmu_1^T \bw_* = t_\eta \bmu_1^T \bmeta + t_* \bmu_1^T (\bSigma_1 + \bSigma_2)^{-1}(\bmu_1 - \bmu_2)
\end{align*}
Which implies that we can drop the cross term in the expansion of $(\bmu_1^T \bw_0)^2$:
\begin{align*}
    (\bmu_1^T \bw_0)^2 \approx t^2_\eta (\bmu_1^T \bmeta)^2 + t^2_* \Bigl(\bmu_1^T (\bSigma_1 + \bSigma_2)^{-1}(\bmu_1 - \bmu_2)\Bigr)^2 \rarrowp \frac{1-r}{d} t_\eta^2 + \frac{t_*^2(1-r)^2}{d^2}\tr^2(\bSigma_1 + \bSigma_2)^{-1}
\end{align*}
We can drop the first without loss of generality as $t_\eta$ usually does not grow with $d$. This implies that we have for $\alpha$
\begin{align*}
  \alpha = & \frac{\frac{n}{2} \frac{t_*(1-r)}{d}2\tr(\bSigma_1 + \bSigma_2)^{-1}}{\frac{n}{2} \tr{(\bR_0 (\bSigma_1 + \bSigma_2))}+ n \frac{t_*^2(1-r)^2}{d^2}\tr^2(\bSigma_1 + \bSigma_2)^{-1}} \\
  & =\frac{\frac{1}{2}\frac{t_*(1-r)}{d}2\tr(\bSigma_1 + \bSigma_2)^{-1}}{\frac{1}{2} \tr{(\bR_0 (\bSigma_1 + \bSigma_2))}+ \frac{t_*^2(1-r)^2}{d^2}\tr^2(\bSigma_1 + \bSigma_2)^{-1}}
\end{align*}
 % & \frac{\frac{n}{2} \frac{t_*(1-r)}{d}2\tr(\bSigma_1 + \bSigma_2)^{-1}}{\frac{n}{2} \tr{(\bR_0 (\bSigma_1 + \bSigma_2))}+ n \frac{t_*^2(1-r)^2}{d^2}\tr^2(\bSigma_1 + \bSigma_2)^{-1}+\textcolor{blue}{n \frac{1-r}{d} t_\eta^2 }} \\
 %  & =\frac{\frac{1}{2}\frac{t_*(1-r)}{d}2\tr(\bSigma_1 + \bSigma_2)^{-1}}{\frac{1}{2} \tr{(\bR_0 (\bSigma_1 + \bSigma_2))}+ \frac{t_*^2(1-r)^2}{d^2}\tr^2(\bSigma_1 + \bSigma_2)^{-1}+\textcolor{blue}{\frac{1-r}{d} t_\eta^2 }}
Then SGD converges to
\begin{align*}
    \min_\bw \|\bw-\alpha \bw_0\|^2 \\
    s.t \quad  \by=\bX\bw 
\end{align*}

Introducing a Lagrange multiplier $\bv \in \bbR^n$:
\begin{align*}
    \min_\bw \max_\bv \|\bw-\alpha \bw_0\|^2 + \bv^T \bG \begin{pmatrix}
    \bSigma_1^{1/2} \\
    \bSigma_2^{1/2}
\end{pmatrix} \bw + \bv^T \bM \bw - \bv^T \by
\end{align*}
Using Theorem 4 from \cite{akhtiamov2024novel},
\begin{align*}
    \min_\bw \max_\bv \|\bw-\alpha \bw_0\|^2   + \sum_{i=1}^2 \|v_i\| \bg_i^T \bSigma_i^{1/2} \bw + \|\bSigma_i^{1/2}\bw\| \bh_i^T \bv_i + \bv_i^T \mathds{1} \bmu_i^T \bw + \bv_i^T \mathds{1} (-1)^i
\end{align*}
Denoting $\beta_i := \|\bv_i\|$ and performing optimization over the direction of $v_i$:
\begin{align*}
    \min_\bw \max_{\beta_1, \beta_2 \ge 0} \|\bw- \alpha \bw_0\|^2 + \sum_{i=1}^2 \beta_i (\bg_i^T \bSigma_i^{1/2} \bw + \|\bh_i \|\bSigma_i^{1/2}\bw\| + \bmu_i^T \bw \mathds{1} + (-1)^i \mathds{1}\|)
\end{align*}
We have
\begin{align*}
    \min_\bw \max_{\beta_1, \beta_2 \ge 0} \|\bw- \alpha \bw_0\|^2 + \sum_{i=1}^2 \beta_i (\bg_i^T \bSigma_i^{1/2} \bw + \sqrt{\frac{n}{2}}\sqrt{\|\bSigma_i^{1/2}\bw\|^2 + (\bmu_i^T \bw+(-1)^i)^2})
\end{align*}
Applying the square root trick to the norm inside the objective we arrive at:
\begin{align*}
    \min_{\tau_1, \tau_2 \ge 0, \bw} \max_{\beta_1, \beta_2 \ge 0} \|\bw- \alpha \bw_0\|^2 + \sum_{i=1}^2 \beta_i \bg_i^T \bSigma_i^{1/2} \bw + \sqrt{\frac{n}{2}} \frac{\beta_i \tau_i}{2} + \frac{\beta_i}{2\tau_i}\sqrt{\frac{n}{2}}(\|\bSigma_i^{1/2}\bw\|^2 + (\bmu_i^T \bw+(-1)^i)^2)
\end{align*}
Introducing  $\gamma_i$'s as the Fenchel duals for $(\bmu_i^T \bw+(-1)^i)^2$ we have: 
\begin{align*}
     \min_{\tau_1, \tau_2 \ge 0, \bw} \max_{\beta_1, \beta_2 \ge 0, \gamma_1, \gamma_2} \|\bw- \alpha \bw_0\|^2 + \sum_{i=1}^2 \beta_i \bg_i^T \bSigma_i^{1/2} \bw + \sqrt{\frac{n}{2}} \frac{\beta_i \tau_i}{2} + \frac{\beta_i}{2\tau_i}\sqrt{\frac{n}{2}}(\|\bSigma_i^{1/2}\bw\|^2 + \bmu_i^T \bw \gamma_i + (-1)^i \gamma_i - \frac{\gamma_i^2}{4})
\end{align*}
Performing optimization over $\bw$ yields:
\begin{align*}
    &\min_{\tau_1, \tau_2 \ge 0} \max_{\beta_1, \beta_2 \ge 0, \gamma_1, \gamma_2} \alpha^2 \|\bw_0\|^2 + \sum_{i=1}^2 \sqrt{\frac{n}{2}} \frac{\beta_i \tau_i}{2} + \sqrt{\frac{n}{2}} \frac{\beta_i}{2 \tau_i}  (\gamma_i (-1)^i - \frac{\gamma_i^2}{4}) 
    \\& - \Bigl(-\alpha \bw_0 + \frac{1}{2} (\beta_1 \bSigma_1^{1/2} \bg_1 + \beta_2 \bSigma_2^{1/2} \bg_2) + \frac{\beta_1}{4\tau_1} \sqrt{\frac{n}{2}} \gamma_1 \bmu_1 + \frac{\beta_2}{4\tau_2} \sqrt{\frac{n}{2}} \gamma_2 \bmu_2\Bigr)^T  \Bigl(I + \frac{\beta_1}{2\tau_1} \sqrt{\frac{n}{2}} \bSigma_1 + \frac{\beta_2}{2\tau_2} \sqrt{\frac{n}{2}} \bSigma_2\Bigr)^{-1} \\
    &  \cdot \Bigl(- \alpha \bw_0 + \frac{1}{2} (\beta_1 \bSigma_1^{1/2} \bg_1 + \beta_2 \bSigma_2^{1/2} \bg_2) + \frac{\beta_1}{4\tau_1} \sqrt{\frac{n}{2}} \gamma_1 \bmu_1 + \frac{\beta_2}{4\tau_2} \sqrt{\frac{n}{2}} \gamma_2 \bmu_2\Bigr)
\end{align*}

We drop the $\alpha^2\|\bw_0\|^2$ from the objective because it does not contain any of the variables the optimization is performed over and therefore dropping it will not change the optimal values of $\beta_1, \beta_2,\gamma_1,\gamma_2,\tau_1,\tau_2$. We obtain:
\begin{align*}
    \min_{\tau_1, \tau_2 \ge 0} \max_{\beta_1, \beta_2 \ge 0, \gamma_1, \gamma_2}  \sum_{i=1}^2 \sqrt{\frac{n}{2}} \frac{\beta_i \tau_i}{2}  + \sqrt{\frac{n}{2}} \frac{\beta_i}{2 \tau_i}  (\gamma_i (-1)^i - \frac{\gamma_i^2}{4}) - \alpha^2 \bw_0^T (I + \frac{\beta_1}{2\tau_1} \sqrt{\frac{n}{2}} \bSigma_1 + \frac{\beta_2}{2\tau_2} \sqrt{\frac{n}{2}} \bSigma_2)^{-1} \bw_0 -
    \\ - \frac{\beta_1^2}{4}  \bg_1 ^T \bSigma_1^{T/2} (I + \frac{\beta_1}{2\tau_1} \sqrt{\frac{n}{2}} \bSigma_1 + \frac{\beta_2}{2\tau_2} \sqrt{\frac{n}{2}} \bSigma_2)^{-1} \bSigma_1^{1/2} \bg_1 -  \frac{\beta_2^2}{4} \bg_2 ^T \bSigma_2^{T/2}(I + \frac{\beta_1}{2\tau_1} \sqrt{\frac{n}{2}} \bSigma_1 + \frac{\beta_2}{2\tau_2} \sqrt{\frac{n}{2}} \bSigma_2)^{-1} \bSigma_2^{1/2} \bg_2 - 
    \\ - \frac{n\beta_1^2 \gamma_1^2}{32\tau_1^2} \bmu_1^T (I + \frac{\beta_1}{2\tau_1} \sqrt{\frac{n}{2}} \bSigma_1 + \frac{\beta_2}{2\tau_2} \sqrt{\frac{n}{2}} \bSigma_2)^{-1} \bmu_1 - \frac{n\beta_2^2 \gamma_2^2}{32\tau_2^2} \bmu_2^T (I + \frac{\beta_1}{2\tau_1} \sqrt{\frac{n}{2}} \bSigma_1 + \frac{\beta_2}{2\tau_2} \sqrt{\frac{n}{2}} \bSigma_2)^{-1} \bmu_2 + 
    \\ + \frac{\beta_1}{2\tau_1} \sqrt{\frac{n}{2}} \gamma_1 \alpha \bw_0^T (I + \frac{\beta_1}{2\tau_1} \sqrt{\frac{n}{2}} \bSigma_1 + \frac{\beta_2}{2\tau_2} \sqrt{\frac{n}{2}} \bSigma_2)^{-1} \bmu_1 + \frac{\beta_2}{2\tau_2} \sqrt{\frac{n}{2}} \gamma_2 \alpha \bw_0^T (I + \frac{\beta_2}{2\tau_2} \sqrt{\frac{n}{2}} \bSigma_1 + \frac{\beta_2}{2\tau_2} \sqrt{\frac{n}{2}} \bSigma_2)^{-1} \bmu_2 -
    \\ - \frac{n \beta_1 \beta_2 \gamma_1 \gamma_2}{16 \tau_1 \tau_2} \bmu_1^T (I + \frac{\beta_1}{2\tau_1} \sqrt{\frac{n}{2}} \bSigma_1 + \frac{\beta_2}{2\tau_2} \sqrt{\frac{n}{2}} \bSigma_2)^{-1} \bmu_2
\end{align*}
The objective above can be rewritten in the integral form in the following way:
\begin{align*}
    \min_{\tau_1, \tau_2 \ge 0} \max_{\beta_1, \beta_2 \ge 0, \gamma_1, \gamma_2}  \sum_{i=1}^2 \sqrt{\frac{n}{2}} \frac{\beta_i \tau_i}{2} + \sqrt{\frac{n}{2}} \frac{\beta_i}{2 \tau_i}  (\gamma_i (-1)^i - \frac{\gamma_i^2}{4}) - \alpha^2 \bw_0^T (I + \frac{\beta_1}{2\tau_1} \sqrt{\frac{n}{2}} \bSigma_1 + \frac{\beta_2}{2\tau_2} \sqrt{\frac{n}{2}} \bSigma_2)^{-1} \bw_0 -
    \\ -d \int\int p(s_1, s_2) \frac{\frac{s_1\beta_1^2+ s_2\beta_2^2}{4}}{1 + \frac{\beta_1}{2\tau_1} \sqrt{\frac{n}{2}} s_1 + \frac{\beta_2}{2\tau_2} \sqrt{\frac{n}{2}} s_2} ds_1 ds_2 - \int \int p(s_1, s_2) \frac{\frac{n\beta_1^2 \gamma_1^2}{32\tau_1^2} + \frac{n\beta_2^2 \gamma_2^2}{32\tau_2^2} }{1 + \frac{\beta_1}{2\tau_1}\sqrt{\frac{n}{2}} s_1 + \frac{\beta_2}{2\tau_2}\sqrt{\frac{n}{2}}s_2} ds_1 ds_2+
    \\ + \frac{\beta_1}{2\tau_1} \sqrt{\frac{n}{2}} \gamma_1 \alpha \bw_0^T (I + \frac{\beta_1}{2\tau_1} \sqrt{\frac{n}{2}} \bSigma_1 + \frac{\beta_2}{2\tau_2} \sqrt{\frac{n}{2}} \bSigma_2)^{-1} \bmu_1 + \frac{\beta_2}{2\tau_2} \sqrt{\frac{n}{2}} \gamma_2 \alpha \bw_0^T (I + \frac{\beta_2}{2\tau_2} \sqrt{\frac{n}{2}} \bSigma_1 + \frac{\beta_2}{2\tau_2} \sqrt{\frac{n}{2}} \bSigma_2)^{-1} \bmu_2 \\
     - 2r \int \int p(s_1, s_2)\frac{\frac{n\beta_1\beta_2\gamma_1 \gamma_2}{32 \tau_1 \tau_2}}{1+\frac{\beta_1}{2 \tau_1}\sqrt{\frac{n}{2}}s_1 + \frac{\beta_2}{2\tau_2}\sqrt{\frac{n}{2}}s_2} ds_1 ds_2
\end{align*}

Note that
\begin{align*}
    \alpha^2 \bw_0^T (I + \frac{\beta_1}{2\tau_1} \sqrt{\frac{n}{2}} \bSigma_1 + \frac{\beta_2}{2\tau_2} \sqrt{\frac{n}{2}} \bSigma_2)^{-1} \bw_0  \rightarrow \alpha^2 \tr (\bR_0 (I + \frac{\beta_1}{2\tau_1} \sqrt{\frac{n}{2}} \bSigma_1 + \frac{\beta_2}{2\tau_2} \sqrt{\frac{n}{2}} \bSigma_2)^{-1}) 
\end{align*}
Moreover
\begin{align*}
    \tr (\bR_0 (I + \frac{\beta_1}{2\tau_1} \sqrt{\frac{n}{2}} \bSigma_1 + \frac{\beta_2}{2\tau_2} \sqrt{\frac{n}{2}} \bSigma_2)^{-1}) \rightarrow d \int \int p(s_1, s_2) \frac{\phi(s_1, s_2)}{1+\frac{\beta_1}{2 \tau_1}\sqrt{\frac{n}{2}} s_1 + \frac{\beta_2}{2\tau_2}\sqrt{\frac{n}{2}} s_2} ds_1 ds_2
\end{align*}
And

\begin{align*}
     \bw_0^T (I + \frac{\beta_1}{2\tau_1} \sqrt{\frac{n}{2}} \bSigma_1 + \frac{\beta_2}{2\tau_2} \sqrt{\frac{n}{2}} \bSigma_2)^{-1} \bmu_1  \rightarrow   \frac{t_*(1 - r)}{d} \tr ((I + \frac{\beta_1}{2\tau_1} \sqrt{\frac{n}{2}} \bSigma_1 + \frac{\beta_2}{2\tau_2} \sqrt{\frac{n}{2}} \bSigma_2)^{-1}(\bSigma_1 + \bSigma_2)^{-1})
\end{align*}
The latter term can be derived in the integral form as:
\begin{align*}
    \tr \left((I + \frac{\beta_1}{2\tau_1} \sqrt{\frac{n}{2}} \bSigma_1 + \frac{\beta_2}{2\tau_2} \sqrt{\frac{n}{2}} \bSigma_2)^{-1}  (\bSigma_1 + \bSigma_2)^{-1}\right)\rightarrow d \int \int p(s_1, s_2) \frac{(s_1+s_2)^{-1}}{1+\frac{\beta_1}{2 \tau_1}\sqrt{\frac{n}{2}} s_1 + \frac{\beta_2}{2\tau_2}\sqrt{\frac{n}{2}} s_2} ds_1 ds_2
\end{align*}
    Since $p(s_1,s_2)$ is exchangeable in its arguments, we see that the latter objective remains invariant under the transformation that swaps $(\beta_1, \tau_1, \gamma_1)$ with $(\beta_2, \tau_2, -\gamma_2)$. Therefore, $\beta := \beta_1 = \beta_2$, $\tau := \tau_1 = \tau_2$ and $\gamma := \gamma_2 = -\gamma_1$ holds at the optimal point. The objective then reduces to the following:
    
\begin{align}\label{eq: obj_theta}
    \min_{\tau \ge 0} \max_{\beta \ge 0, \gamma}   & \sqrt{\frac{n}{2}}\beta\tau  + \sqrt{\frac{n}{2}} \frac{\beta}{\tau}  (\gamma  - \frac{\gamma^2}{4}) -  \int\int p(s_1, s_2) \frac{d\alpha^2\phi(s_1,s_2) 
    + d\frac{(s_1 + s_2)\beta^2}{4} + \frac{n\beta^2\gamma^2}{16\tau^2}(1-r) }{1 + \frac{\beta}{2\tau}(s_1 + s_2) \sqrt{\frac{n}{2}} } ds_1 ds_2 
    \\  & - 2t_*(1-r)\alpha \sqrt{\frac{n}{2}} \int \int p(s_1, s_2)\frac{\frac{\beta\gamma}{2\tau}(s_1+s_2)^{-1}}{1+\frac{\beta}{2\tau}\sqrt{\frac{n}{2}}(s_1 + s_2)} 
\end{align}

Note that $\gamma$ can be found explicitly in terms of $\beta$ and $\tau$ as the optimization over $\gamma$ is quadratic. Denote $e_0 = \frac{\|w_0\|^2}{d}$. 
To facilitate derivations, let us introduce a few notations. Let
\begin{align*}
    &F_1(\frac{\beta}{\tau}, \gamma) :=  \int\int p(s_1, s_2) \frac{d\alpha^2\phi(s_1,s_2)
     + \frac{n\beta^2\gamma^2}{16\tau^2}(1-r) + 2t_*(1-r)\alpha \sqrt{\frac{n}{2}}\frac{\beta\gamma}{2\tau}(s_1+s_2)^{-1}}{1 + \frac{\beta}{2\tau}(s_1 + s_2) \sqrt{\frac{n}{2}} } ds_1 ds_2 \\
    &F_2(\frac{\beta}{\tau}, \beta) :=  d \int \int p(s_1, s_2) \frac{\frac{(s_1+ s_2)\beta^2}{4}}{1+\frac{\beta}{ 2\tau}\sqrt{\frac{n}{2}} (s_1 + s_2)} ds_1 ds_2
\end{align*}
Thus the objective is
\begin{align*}
    &\min_{\tau \ge 0} \max_{\beta  \ge 0, \gamma} -F_1(\frac{\beta}{\tau}, \gamma) - F_2(\frac{\beta}{\tau}, \beta) + \sqrt{\frac{n}{2}}\beta\tau  + \sqrt{\frac{n}{2}} \frac{\beta}{\tau}  (\gamma  - \frac{\gamma^2}{4})
\end{align*}

Now taking derivatives w.r.t $\tau$ yields 
\begin{align*}
    &\partial_{\tau} F_1(\frac{\beta}{\tau}, \gamma) = - \frac{\beta}{\tau^2} \partial_{x} F_1(x, \gamma) \\
    & \partial_{\tau} F_2(\frac{\beta}{\tau}, \beta) = - \frac{\beta}{\tau^2} \partial_{x} F_2(x, \beta)
\end{align*}
Now for $\beta$:
\begin{align*}
   &\partial_{\beta} F_1(\frac{\beta}{\tau},  \gamma) = \frac{1}{\tau} \partial_{x} F_1(x,  \gamma) \\ 
   & \partial_{\beta} F_2(\frac{\beta}{\tau}, \beta) = \frac{2}{\beta} F_{2} (\frac{\beta}{\tau}, \beta) + \frac{1}{\tau} \partial_{x} F_2(x, \beta)
\end{align*}

Setting the derivative of the objective w.r.t $\tau$ to $0$:
\begin{align*}
     \frac{\beta}{\tau^2} \partial_{x} F_1(x, \gamma) + \frac{\beta}{\tau^2} \partial_{x} F_2(x, \beta) +  \sqrt{\frac{n}{2}}\beta  - \sqrt{\frac{n}{2}} \frac{\beta}{\tau^2}  (\gamma  - \frac{\gamma^2}{4}) = 0
\end{align*}
Multiplying by $\tau^2$ and dropping $\beta$ yields:
\begin{align}
     \partial_{x} F_1(x, \gamma) + \partial_{x} F_2(x, \beta) +  \sqrt{\frac{n}{2}}\tau^2  - \sqrt{\frac{n}{2}} (\gamma  - \frac{\gamma^2}{4}) = 0
\end{align}

Derivative w.r.t. $\beta$
\begin{align*}
     -\frac{1}{\tau} \partial_{x} F_1(x, \gamma) - \frac{2}{\beta} F_{2} (\frac{\beta}{\tau}, \beta) - \frac{1}{\tau} \partial_{x} F_2(x, \beta) + \sqrt{\frac{n}{2}}\tau  + \sqrt{\frac{n}{2}}\frac{1}{\tau} (\gamma  - \frac{\gamma^2}{4}) = 0
\end{align*}

Multiplying by $\tau$, 
\begin{align*}
    -\partial_{x} F_1(x, \gamma) - \frac{2\tau}{\beta} F_{2} (\frac{\beta}{\tau}, \beta) - \partial_{x} F_2(x, \beta) + \sqrt{\frac{n}{2}} \tau^2 + \sqrt{\frac{n}{2}}   (\gamma - \frac{\gamma^2}{4}) = 0
\end{align*}

Therefore the set of equations is:
\begin{align*}
    \begin{cases}
          \partial_{x} F_1(x, \gamma) + \partial_{x} F_2(x, \beta) +  \sqrt{\frac{n}{2}}\tau^2  - \sqrt{\frac{n}{2}} (\gamma  - \frac{\gamma^2}{4}) = 0 \\
        -\partial_{x} F_1(x, \gamma) - \frac{2\tau}{\beta} F_{2} (\frac{\beta}{\tau}, \beta) - \partial_{x} F_2(x, \beta) + \sqrt{\frac{n}{2}} \tau^2 + \sqrt{\frac{n}{2}}   (\gamma - \frac{\gamma^2}{4}) = 0
    \end{cases}
\end{align*}

Summing the equations yields
\begin{align*}
     -\frac{2\tau}{\beta} F_{2} (\frac{\beta}{\tau},  \beta) + 2\sqrt{\frac{n}{2}} \tau^2 = 0 \rightarrow \frac{2}{\beta} F_{2} (\frac{\beta}{\tau},  \beta) = 2\sqrt{\frac{n}{2}} \tau
\end{align*}
Which implies
\begin{align*}
     d \int \int p(s_1,s_2) \frac{\frac{(s_1 + s_2)\beta}{2}}{1+\frac{\beta}{2 \tau}\sqrt{\frac{n}{2}} (s_1 + s_2)} ds_1 ds_2 = 2\sqrt{\frac{n}{2}} \tau
\end{align*}

Multiplying both sides by $\frac{\sqrt{\frac{n}{2}}}{d\tau}$ and introducing $\theta = \frac{\beta}{\tau}$ we obtain: 
\begin{align*}
    \int \int p(s_1, s_2) \frac{\frac{\theta}{2}\sqrt{\frac{n}{2}}(s_1+s_2)}{1+\frac{\theta}{2} \sqrt{\frac{n}{2}}(s_1 + s_2)}) ds_1 ds_2 = \frac{n}{d}
\end{align*}
This can be rewritten as:
\begin{align*}
    \int \int p(s_1, s_2) (1-\frac{2\sqrt{2}}{2 \sqrt{2}+\theta\sqrt{n} (s_1 + s_2)}) ds_1 ds_2 = \frac{n}{d}
\end{align*}
We arrive at the following equation for $\theta$:
\begin{align}\label{eq: s_sigma}
    \frac{d-n}{d} = \int \int p(s_1,s_2) \frac{2\sqrt{2}}{2 \sqrt{2}+\theta \sqrt{n} (s_1 +s_2)} ds_1 ds_2 
\end{align}
The last equation above can be reformulated in terms of the Stiltjes transform as follows: 

\begin{align*}
     \frac{2\sqrt{2}}{\theta\sqrt{n}}S_{\bSigma_1 + \bSigma_2}(-\frac{2\sqrt{2}}{\theta\sqrt{n}}) = \frac{d-n}{d}
\end{align*}

After finding $\theta$ from the equation above, we proceed to identify the optimal value of $\gamma$. We will do so by taking the derivative of \eqref{eq: obj_theta} by $\gamma$ and setting it to $0$:

\begin{align*}
      &  \sqrt{\frac{n}{2}} \theta  (1  - \frac{\gamma}{2}) -  \int\int p(s_1, s_2) \frac{ \frac{n\theta^2\gamma}{8}(1-r) }{1 + \frac{\theta}{2}(s_1 + s_2) \sqrt{\frac{n}{2}} } ds_1 ds_2 - \alpha \sqrt{\frac{n}{2}} \int \int p(s_1, s_2)\frac{\theta  t_*(1-r)(s_1+s_2)^{-1}}{1+\frac{\theta}{2}\sqrt{\frac{n}{2}}(s_1 + s_2)} = 0 
\end{align*}

Dividing both sides by $\theta$, factoring $\sqrt{n}$ out and using \eqref{eq: s_sigma} we obtain: 

\begin{align*}
      &    \gamma \sqrt{n}\left(\frac{1}{2\sqrt{2}} + \frac{\sqrt{n} \theta(1-r)}{8}\frac{d-n}{d}\right) = \sqrt{\frac{n}{2}}  - \alpha \sqrt{\frac{n}{2}} \int \int p(s_1, s_2)\frac{t_* (1-r)(s_1+s_2)^{-1}}{1+\frac{\theta}{2}\sqrt{\frac{n}{2}}(s_1 + s_2)} 
\end{align*}

Hence, we can find $\gamma$ via 

\begin{align*}
      &    \gamma  = \left(\frac{1}{2} + \frac{\sqrt{2n} \theta(1-r)(d-n)}{8d}\right)^{-1} \left ( 1    - \alpha \int \int p(s_1, s_2)\frac{t_*(1-r)(s_1+s_2)^{-1}}{1+\frac{\theta}{2}\sqrt{\frac{n}{2}}(s_1 + s_2)} \right)
\end{align*}

Finally, to find $\tau$, recall \eqref{eq: tau}

\begin{align*}
     \partial_{x} F_1(x, \gamma) + \partial_{x} F_2(x, \beta) +  \sqrt{\frac{n}{2}}\tau^2  - \sqrt{\frac{n}{2}} (\gamma  - \frac{\gamma^2}{4}) = 0
\end{align*}

Using $F_2$ is quadratic in its second argument we have:

\begin{align*}
     \partial_{x} F_1(x, \gamma) + \tau^2 ( \partial_{x} F_2(x, \theta) +  \sqrt{\frac{n}{2}})  = \sqrt{\frac{n}{2}} (\gamma  - \frac{\gamma^2}{4}) 
\end{align*}

\begin{align*}
      \tau^2 ( \partial_{x} F_2(x, \theta) +  \sqrt{\frac{n}{2}})  = \sqrt{\frac{n}{2}} (\gamma  - \frac{\gamma^2}{4}) - \partial_{x} F_1(x, \gamma) 
\end{align*}

\begin{align*}
      \tau^2  = ( \partial_{x} F_2(x, \theta) +  \sqrt{\frac{n}{2}})^{-1} (\sqrt{\frac{n}{2}} (\gamma  - \frac{\gamma^2}{4}) - \partial_{x} F_1(x, \gamma)) 
\end{align*}

We have
\begin{align*}
    F_1(x, \gamma) :=  \int\int p(s_1, s_2) \frac{d\alpha^2\phi(s_1,s_2)
     + \frac{nx^2\gamma^2}{16}(1-r) + 2t_*(1-r)\alpha \sqrt{\frac{n}{2}}\frac{x\gamma}{2}(s_1+s_2)^{-1}}{1 + \frac{x}{2}(s_1 + s_2) \sqrt{\frac{n}{2}} } ds_1 ds_2 
\end{align*}
Then 
\begin{align*}
    \partial_x F_1(x,\gamma) = -\frac{d\alpha^2}{2} \sqrt{\frac{n}{2}} \int\int p(s_1, s_2) \frac{\phi(s_1,s_2) (s_1 + s_2) 
     }{(1 + \frac{x}{2}(s_1 + s_2) \sqrt{\frac{n}{2}} )^2} ds_1 ds_2 \\ + \frac{n x \gamma^2 (1-r)}{8} \int\int p(s_1, s_2) \frac{1
     }{1 + \frac{x}{2}(s_1 + s_2) \sqrt{\frac{n}{2}} } ds_1 ds_2 \\ - \frac{n x^2 \gamma^2 (1-r)}{32} \sqrt{\frac{n}{2}} \int\int p(s_1, s_2) \frac{(s_1 + s_2) 
     }{(1 + \frac{x}{2}(s_1 + s_2) \sqrt{\frac{n}{2}} )^2} ds_1 ds_2 \\- \alpha t_* (1-r)  \gamma \sqrt{\frac{n}{2}} \int\int p(s_1, s_2) \frac{(s_1 + s_2)^{-1} 
     }{1 + \frac{x}{2}(s_1 + s_2) \sqrt{\frac{n}{2}} } ds_1 ds_2 \\ + 
     \frac{\alpha t_* (1-r) x \gamma}{2} \frac{n}{2}  \int\int p(s_1, s_2) \frac{1 
     }{(1 + \frac{x}{2}(s_1 + s_2) \sqrt{\frac{n}{2}} )^2} ds_1 ds_2
\end{align*}

Moreover
\begin{align*}
    F_2(x, \theta) =  d \int \int p(s_1, s_2) \frac{\frac{(s_1+ s_2)\theta^2}{4}}{1+\frac{x}{ 2}\sqrt{\frac{n}{2}} (s_1 + s_2)} ds_1 ds_2
\end{align*}
Then
\begin{align*}
    \partial_x F_2(x, \theta) = - \frac{d \theta^2 }{8} \sqrt{\frac{n}{2}} \int \int p(s_1, s_2) \frac{(s_1+ s_2)^2}{(1+\frac{x}{ 2}\sqrt{\frac{n}{2}} (s_1 + s_2))^2} ds_1 ds_2
\end{align*}

Thus, all in all:

\begin{align}\label{eq: tau}
    \tau^2 = \biggl(\sqrt{\frac{n}{2}} - \frac{d \theta^2 }{8} \sqrt{\frac{n}{2}} \int \int p(s_1, s_2) \frac{(s_1+ s_2)^2}{(1+\frac{\theta}{ 2}\sqrt{\frac{n}{2}} (s_1 + s_2))^2} ds_1 ds_2\biggr)^{-1} \nonumber \\ \biggl[\sqrt{\frac{n}{2}} (\gamma  - \frac{\gamma^2}{4})+\frac{d\alpha^2}{2} \sqrt{\frac{n}{2}} \int\int p(s_1, s_2) \frac{\phi(s_1,s_2) (s_1 + s_2) 
     }{(1 + \frac{\theta}{2}(s_1 + s_2) \sqrt{\frac{n}{2}} )^2} ds_1 ds_2 \nonumber \\ - \frac{n \theta \gamma^2 (1-r)}{8} \int\int p(s_1, s_2) \frac{1
     }{1 + \frac{\theta}{2}(s_1 + s_2) \sqrt{\frac{n}{2}} } ds_1 ds_2 \nonumber \\ + \frac{n \theta^2 \gamma^2 (1-r)}{32} \sqrt{\frac{n}{2}} \int\int p(s_1, s_2) \frac{(s_1 + s_2) 
     }{(1 + \frac{\theta}{2}(s_1 + s_2) \sqrt{\frac{n}{2}} )^2} ds_1 ds_2 \nonumber \\ - \alpha t_* (1-r)  \gamma \sqrt{\frac{n}{2}} \int\int p(s_1, s_2) \frac{(s_1 + s_2)^{-1} 
     }{1 + \frac{\theta}{2}(s_1 + s_2) \sqrt{\frac{n}{2}} } ds_1 ds_2 \nonumber  \\ + 
     \frac{\alpha t_* (1-r) \theta \gamma n}{4}   \int\int p(s_1, s_2) \frac{1 
     }{(1 + \frac{\theta}{2}(s_1 + s_2) \sqrt{\frac{n}{2}} )^2} ds_1 ds_2\biggr] 
\end{align}

$Error = Q\left(\frac{1 -\frac{\gamma}{2} }{\sqrt{\tau^2 - (\frac{\gamma}{2})^2}}\right)$

\subsection{Single direction case} In the special case when $\Sigma_1 = \Sigma_2 = \sigma^2 \frac{\bI_d}{d}$ we obtain as $\bR_0  = \frac{t_\eta^2(1-r)}{d}I_d + 2\frac{t_*^2}{d}(1-r)(\bSigma_1 + \bSigma_2)^{-2}$: 

\begin{align*}
   \alpha &= \frac{\by^T\bX\bw_0}{\|\bX\bw_0\|^2  } = \frac{\frac{t_*(1-r)}{d}\tr(\bSigma_1 + \bSigma_2)^{-1}}{\frac{1}{2} \tr{(\bR_0 (\bSigma_1 + \bSigma_2))}+ \frac{t_*^2(1-r)^2}{d^2}\tr^2(\bSigma_1 + \bSigma_2)^{-1}} \\ &=
   \frac{t_*\frac{d}{2\sigma^2}}{\frac{1}{d}\sigma^2(t_\eta^2 + t_*^2\frac{d^2}{2\sigma^4}) + t_*^2(1-r)\frac{d^2}{4\sigma^4}} 
\end{align*}
% \begin{align*}
%   & \alpha = \frac{\by^T\bX\bw_0}{\|\bX\bw_0\|^2  } = \frac{\frac{t_*(1-r)}{d}\tr(\bSigma_1 + \bSigma_2)^{-1}}{\frac{1}{2} \tr{(\bR_0 (\bSigma_1 + \bSigma_2))}+ \frac{t_*^2(1-r)^2}{d^2}\tr^2(\bSigma_1 + \bSigma_2)^{-1}+\textcolor{blue}{\frac{1-r}{d} t_\eta^2 }} = \\
%   & \frac{t_*\frac{d}{2\sigma^2}}{\frac{1}{d}\sigma^2(t_\eta^2 + t_*^2\frac{d^2}{2\sigma^4}) + t_*^2(1-r)\frac{d^2}{4\sigma^4}} 
%   & 
% \end{align*}
We also have
\begin{align*}
    \alpha &= \frac{t_*(1-r) \frac{d}{2\sigma^2}}{\frac{t_\eta^2(1-r)}{2d} \tr \frac{2\sigma^2}{d}\bI+\frac{t_*^2}{d}(1-r) \tr(\frac{2\sigma^2}{d}\bI)^{-1} + \frac{t_*^2 (1-r)^2}{d^2}  \tr^2 (\frac{2\sigma^2}{d}\bI)^{-1}} \\ &=
    \frac{t_*(1-r) \frac{d}{2\sigma^2}}{\frac{t_\eta^2(1-r)\sigma^2}{d}+\frac{d t_*^2 (1-r)}{2\sigma^2}+\frac{t_*^2 (1-r)^2 d^2}{4\sigma^4} } = \frac{t_* \frac{d}{2\sigma^2}}{\frac{t_\eta^2\sigma^2}{d}+\frac{d t_*^2}{2\sigma^2}+\frac{t_*^2 (1-r) d^2}{4\sigma^4}} = \frac{t_* \frac{d}{2\sigma^2}}{\frac{\sigma^2}{d}(t_\eta^2 + t_*^2 \frac{d^2}{2\sigma^4})+ t_*^2 (1-r) \frac{d^2}{4\sigma^4}}
\end{align*}
% \begin{align*}
%     &\alpha = \frac{t_*(1-r) \frac{d}{2\sigma^2}}{\frac{t_\eta^2(1-r)}{2d} \tr \frac{2\sigma^2}{d}\bI+\frac{t_*^2}{d}(1-r) \tr(\frac{2\sigma^2}{d}\bI)^{-1} + \frac{t_*^2 (1-r)^2}{d^2}  \tr^2 (\frac{2\sigma^2}{d}\bI)^{-1}+\textcolor{blue}{\frac{1-r}{d} t_\eta^2 }} \\ =
%     &\frac{t_*(1-r) \frac{d}{2\sigma^2}}{\frac{t_\eta^2(1-r)\sigma^2}{d}+\frac{d t_*^2 (1-r)}{2\sigma^2}+\frac{t_*^2 (1-r)^2 d^2}{4\sigma^4} +\textcolor{blue}{\frac{1-r}{d} t_\eta^2 }} = \frac{t_* \frac{d}{2\sigma^2}}{\frac{t_\eta^2\sigma^2}{d}+\frac{d t_*^2}{2\sigma^2}+\frac{t_*^2 (1-r) d^2}{4\sigma^4}+\textcolor{blue}{\frac{1}{d}t_\eta^2 }} = \frac{t_* \frac{d}{2\sigma^2}}{\frac{\sigma^2}{d}(t_\eta^2 + t_*^2 \frac{d^2}{2\sigma^4})+ t_*^2 (1-r) \frac{d^2}{4\sigma^4}+\textcolor{blue}{\frac{1}{d}t_\eta^2 }}
% \end{align*}
Multiplying both sides by $t^*(1-r)\frac{d}{2\sigma^2}$ we arrive at:

\begin{align}\label{eq: Gamma}
  & \Gamma := \alpha t^*(1-r)\frac{d}{2\sigma^2} = \frac{1}{\frac{\sigma^2}{d(1-r)}(\frac{t_\eta^2}{t_*^2}\frac{4\sigma^4}{d^2} + 2) + 1} 
  \end{align} 
% \begin{align}\label{eq: Gamma}
%   & \Gamma := \alpha t^*(1-r)\frac{d}{2\sigma^2} = \frac{1}{\frac{\sigma^2}{d(1-r)}(\frac{t_\eta^2}{t_*^2}\frac{4\sigma^4}{d^2} + 2) + 1+ \textcolor{blue}{ \frac{1}{d} \frac{t_\eta^2}{t_*^2} \frac{4\sigma^4}{d^2 (1-r)}}} 
%   \end{align} 
  Note that plugging in $\bw_0 = t_*(\bSigma_1 + \bSigma_2)^{-1}(\bmu_1 - \bmu_2) + t_\eta \bmeta$ for $\bSigma_1 = \bSigma_2 = \frac{\sigma^2}{d}\bI_d$ into the expression for the generalization error yields 
  $$e_a = Q\left(\frac{\bmu_1^T\bw_0}{\sqrt{\bw_0^T \bSigma_1 \bw_0}}\right) = Q\left(\frac{t_*(1-r)\frac{d}{2\sigma^2}}{\sqrt{t_*^2(1-r)\frac{d}{2\sigma^2} + \frac{\sigma^2}{d}t_{\eta}^2(1-r)}} \right)  $$
  Taking $Q^{-1}$ of both sides and then squaring them we arrive at:
  $$Q^{-1}(e_a)^2 = \frac{t_*^2(1-r)^2\frac{d^2}{4\sigma^4}}{t_*^2(1-r)\frac{d}{2\sigma^2} + \frac{\sigma^2}{d}t_{\eta}^2(1-r)} = \frac{t_*^2(1-r)^2\frac{d^2}{4\sigma^4}}{t_*^2(1-r)\frac{d}{2\sigma^2} + \frac{\sigma^2}{d}t_{\eta}^2(1-r)}  =  \frac{1}{\frac{\sigma^2}{(1-r)d }(2 + 4\frac{\sigma^4}{d^2}\frac{t_\eta^2}{t_*^2})} $$
This yields that by taking $\rho:= \frac{d(1-r)}{\sigma^2}$
\begin{align*}
    \frac{t_\eta}{t_*} = \frac{d}{2\sigma^2} \sqrt{\frac{\rho}{Q^{-1}(e_a)^2} - 2} = \frac{\rho}{2(1-r)}\sqrt{\frac{\rho}{Q^{-1}(e_a)^2} - 2}
\end{align*}
  We thus obtain from \eqref{eq: Gamma}:
  \begin{align}
  \Gamma =& \frac{1}{Q^{-1}(e_a)^{-2}  + 1} = \frac{1}{Q^{-1}(e_a)^{-2}  + 1}  = \frac{Q^{-1}(e_a)^{2}}{Q^{-1}(e_a)^{2}  + 1} = 1 - \frac{1}{Q^{-1}(e_a)^{2}  + 1}
\end{align}
% \begin{align}
%   \Gamma =& \frac{1}{Q^{-1}(e_a)^{-2}  + 1+\textcolor{blue}{\frac{1}{d} \frac{t_\eta^2}{t_*^2} \frac{4\sigma^4}{d^2 (1-r)}}} = \frac{1}{Q^{-1}(e_a)^{-2}  + 1+\textcolor{blue}{ \frac{1}{d(1-r)} (\frac{\rho}{Q^{-1}(e_a)^2}-2)}}  = \frac{Q^{-1}(e_a)^{2}}{Q^{-1}(e_a)^{2}  + 1} = 1 - \frac{1}{Q^{-1}(e_a)^{2}  + 1}\\=&\textcolor{blue}{\frac{d(1-r)Q^{-1}(e_a)^2}{\rho+d(1-r)+(d(1-r)-2)Q^{-1}(e_a)^2}}
% \end{align}
We proceed to find $\theta$ next. From the general case we had
\begin{align*}
     \frac{2\sqrt{2}}{\theta\sqrt{n}}S_{\bSigma_1 + \bSigma_2}(-\frac{2\sqrt{2}}{\theta\sqrt{n}}) = \frac{d-n}{d}
\end{align*}
Now plugging for the distribution of $\bSigma_1, \bSigma_2$ yields
\begin{align*}
    1 + \theta\sqrt{\frac{n}{2}}\frac{\sigma^2}{d} = \frac{d}{d - n} \\
    \theta\sqrt{\frac{n}{2}}\frac{\sigma^2}{d} = \frac{n}{d - n} \\
    \theta = \frac{d\sqrt{2n}}{(d - n)\sigma^2}
\end{align*}
Using \eqref{eq: Gamma} we can simplify the last expression above:
\begin{align*}
      &    \gamma  = \left(\frac{1}{2} + \frac{\sqrt{2n} \theta(1-r)(d-n)}{8d}\right)^{-1} \left( 1    - \alpha \int \int p(s_1, s_2)\frac{t_*(1-r)(s_1+s_2)^{-1}}{1+\frac{\theta}{2}\sqrt{\frac{n}{2}}(s_1 + s_2)} \right) = \\
      & = \left(\frac{1}{2} + \frac{\sqrt{2n} \theta(1-r)(d-n)}{8d}\right)^{-1}\left( 1 - \frac{\Gamma}{1+\frac{\theta}{2}\sqrt{\frac{n}{2}}(s_1 + s_2)}\right) = \\
      & \left(\frac{1}{2} + \frac{\sqrt{2n} \frac{d\sqrt{2n}}{(d - n)\sigma^2}(1-r)(d-n)}{8d}\right)^{-1}\left( 1 - \frac{\Gamma}{1+\frac{d\sqrt{2n}}{2(d - n)\sigma^2}\sqrt{\frac{n}{2}}\frac{2\sigma^2}{d})}\right) =
\end{align*}
Thus we can write for $\gamma$
\begin{align*}
     \gamma &= \left(\frac{1}{2} + \frac{n\frac{2}{\sigma^2}(1-r)}{8}\right)^{-1}\left( 1 - \frac{\Gamma}{1+\frac{n}{d - n}}\right)  \\
      &= \left(\frac{1}{2} + \frac{n(1-r)}{4\sigma^2}\right)^{-1}\left( 1 - \frac{d-n}{d}(1 - \frac{1}{Q^{-1}(e_a)^2  + 1})\right) =  \frac{ \frac{n}{d} + \frac{d-n}{d}\frac{1}{Q^{-1}(e_a)^2  + 1}}{\frac{1}{2} + \frac{\rho}{4\kappa}} \text{ where } \rho: = \frac{d(1-r)}{\sigma^2} 
\end{align*}
% \\ &= \textcolor{blue}{\frac{1-(1-\frac{1}{\kappa})\frac{(1-r)Q^{-1}(e_a)^2}{\rho+1-r-(r+1)Q^{-1}(e_a)^2}}{\frac{1}{2} + \frac{\rho}{4\kappa}}}
Furthermore, we have for $\tau^2$
\begin{align*}
    \tau^2 &= \biggl(\sqrt{\frac{n}{2}} - \frac{d \theta^2 }{8} \sqrt{\frac{n}{2}} \int \int p(s_1, s_2) \frac{(s_1+ s_2)^2}{(1+\frac{\theta}{ 2}\sqrt{\frac{n}{2}} (s_1 + s_2))^2} ds_1 ds_2\biggr)^{-1} \\ &\biggl[\sqrt{\frac{n}{2}} (\gamma  - \frac{\gamma^2}{4})+\frac{d\alpha^2}{2} \sqrt{\frac{n}{2}} \int\int p(s_1, s_2) \frac{\phi(s_1,s_2) (s_1 + s_2) 
     }{(1 + \frac{\theta}{2}(s_1 + s_2) \sqrt{\frac{n}{2}} )^2} ds_1 ds_2 \\ &- \frac{n \theta \gamma^2 (1-r)}{8} \int\int p(s_1, s_2) \frac{1
     }{1 + \frac{\theta}{2}(s_1 + s_2) \sqrt{\frac{n}{2}} } ds_1 ds_2 \\ &+ \frac{n \theta^2 \gamma^2 (1-r)}{32} \sqrt{\frac{n}{2}} \int\int p(s_1, s_2) \frac{(s_1 + s_2) 
     }{(1 + \frac{\theta}{2}(s_1 + s_2) \sqrt{\frac{n}{2}} )^2} ds_1 ds_2 \\ &- \alpha t_* (1-r)  \gamma \sqrt{\frac{n}{2}} \int\int p(s_1, s_2) \frac{(s_1 + s_2)^{-1} 
     }{1 + \frac{\theta}{2}(s_1 + s_2) \sqrt{\frac{n}{2}} } ds_1 ds_2 \\ &+ 
     \frac{\alpha t_* (1-r) \theta \gamma n}{4}   \int\int p(s_1, s_2) \frac{1 
     }{(1 + \frac{\theta}{2}(s_1 + s_2) \sqrt{\frac{n}{2}} )^2} ds_1 ds_2\biggr] =   
\end{align*}
Simplifying furthermore yields
\begin{align*}
     &\tau^2 = \biggl(1 - \frac{d}{n} \int \int p(s_1, s_2) \frac{\frac{n^2}{(d-n)^2}}{\frac{d^2}{(d-n)^2}} ds_1 ds_2\biggr)^{-1} \\
     &\biggl[(\gamma  - \frac{\gamma^2}{4})+\int\int \frac{\alpha^2 p(s_1, s_2) (1-r)^2(t_\eta^2 + t_*^2\frac{d^2}{2\sigma^4})\sigma^2}
     {d(1-r)(\frac{d}{d-n} )^2}ds_1 ds_2 \\ &- \frac{n \frac{2d}{(d - n)\sigma^2} \gamma^2 (1-r)}{8} \int\int p(s_1, s_2) \frac{1
     }{\frac{d}{d-n} } ds_1 ds_2 \\ &+ \frac{n \frac{2nd^2}{(d - n)^2\sigma^4} \gamma^2 (1-r)}{32} \int\int p(s_1, s_2) \frac{\frac{2\sigma^2}{d} 
     }{(\frac{d}{d-n} )^2} ds_1 ds_2 \\ &- \alpha t_* (1-r)  \gamma \int\int p(s_1, s_2) \frac{\frac{d}{2\sigma^2} 
     }{\frac{d}{d-n}} ds_1 ds_2 \\ &+ 
     \frac{\alpha t_* (1-r) \frac{2d}{(d - n)\sigma^2}\gamma n}{4}   \int\int p(s_1, s_2) \frac{1 
     }{(\frac{d}{d-n} )^2} ds_1 ds_2\biggr] =
\end{align*}
Further manipulation leads to
\begin{align*}
    \tau^2 =& \frac{\kappa}{\kappa-1} \biggl[\gamma  - \frac{\gamma^2}{4}+ \frac{\Gamma^2}{Q^{-1}(e_a)^2} (1-\frac{1}{\kappa})^2 - \frac{\rho \gamma^2}{4\kappa} + \frac{\rho\gamma^2 }{8\kappa^2} - \Gamma \gamma (1-\frac{1}{\kappa}) + 
     (1-\frac{1}{\kappa}) \frac{1}{\kappa} \Gamma \gamma  \biggr] \\ =& \frac{\kappa}{\kappa-1} \biggl[\gamma  - \frac{\gamma^2}{4}+ \frac{\Gamma^2}{Q^{-1}(e_a)^2} (1-\frac{1}{\kappa})^2 - \frac{\rho \gamma^2}{4\kappa} + \frac{\rho\gamma^2 }{8\kappa^2} - 
     (1-\frac{1}{\kappa})^2 \Gamma \gamma  \biggr] \\ = & \frac{\kappa}{\kappa-1} \biggl[\Bigl(\frac{\rho}{8\kappa^2} - \frac{\rho}{4 \kappa}  - \frac{1}{4}\Bigr) \gamma^2 + \Bigl(1- 
     (1-\frac{1}{\kappa})^2 \Gamma\Bigr) \gamma + \frac{\Gamma^2}{Q^{-1}(e_a)^2} (1-\frac{1}{\kappa})^2 \biggr]
\end{align*}
Recall that $\Gamma = 1 - \frac{1}{Q^{-1}(e_a)^{2}  + 1}$ and $\frac{\Gamma} {Q^{-1}(e_a)^{2}} = 1 - \Gamma $. Hence, we obtain: 

% \begin{align*}
%      \tau^2 = \frac{d}{d-n} \biggl[(\gamma  - \frac{\gamma^2}{4})+\frac{\Gamma(1- \Gamma)(d-n)^2}
%      {d^2} - \frac{\rho \gamma^2}{4} + \frac{n\rho\gamma^2 }{8d} - \frac{\Gamma \gamma(d-n)^2}{d^2}\biggr]
% \end{align*}
\begin{align}\label{eq: tauao}
    \tau^2 = \frac{\kappa}{\kappa-1} \biggl[\Bigl(\frac{\rho}{8\kappa^2} - \frac{\rho}{4 \kappa}  - \frac{1}{4}\Bigr) \gamma^2 + \Bigl(1- 
     (1-\frac{1}{\kappa})^2 \Gamma\Bigr) \gamma + \Gamma(1-\Gamma) (1-\frac{1}{\kappa})^2 \biggr]
\end{align}
On the other hand, $\gamma = \frac{1-(1-\frac{1}{\kappa}) \Gamma}{\frac{1}{2} + \frac{\rho}{4\kappa}}$ and $\Gamma = 1 - \frac{1}{Q^{-1}(e_a)^{2}  + 1}$ combining these fact with \ref{eq: tauao} yields
\begin{align*}
    e_p = Q\Biggl(\frac{ \Bigl(Q^{-1}(e_a)^2 (2 \kappa +\rho -2)+\rho\Bigr)\sqrt{ \kappa(\kappa -1)}}{\sqrt{(2 \kappa +\rho)((4 \kappa -2) Q^{-1}(e_a)^4+Q^{-1}(e_a)^2 \left(2 \kappa ^3+\kappa ^2 \rho -2 \kappa  (\rho -1)+\rho \right)+ 2 \kappa ^2)}} \Biggr)
\end{align*}
% Also note that
% \begin{align*}
%     \Gamma = \frac{\kappa}{\kappa-1} \biggl(1- \Bigl(\frac{1}{2} + \frac{\rho}{4\kappa}\Bigr) \gamma\biggr)
% \end{align*}
% Recall also that $\gamma = \frac{1 - \frac{d-n}{d}\Gamma}{\frac{1}{2} + \frac{\rho}{4}}$ Thus, $\Gamma = \frac{d}{d-n}(1 - (\frac{1}{2} + \frac{\rho}{4})\gamma)$. 

% Therefore we derive: 

% \begin{align*}
%          & \tau^2 = \frac{d}{d-n} \biggl[(\gamma  - \frac{\gamma^2}{4})+\frac{(1 - (\frac{1}{2} + \frac{\rho}{4})\gamma)(1- \frac{d}{d-n}(1 - (\frac{1}{2} + \frac{\rho}{4})\gamma))(d-n)}
%      {d} \\
%      &  - \frac{\rho \gamma^2}{4} + \frac{n\rho\gamma^2 }{8d} - \frac{(1 - (\frac{1}{2} + \frac{\rho}{4})\gamma) \gamma(d-n)}{d}\biggr] = \\
%      & \frac{d}{d-n} \biggl[(\gamma  - \frac{\gamma^2}{4})+\frac{(1 - (\frac{1}{2} + \frac{\rho}{4})\gamma)(\frac{d}{d-n}(\frac{1}{2} + \frac{\rho}{4})\gamma- \frac{n}{d-n})(d-n)}{d} \\
%       & - \frac{\rho \gamma^2}{4} + \frac{n\rho\gamma^2 }{8d} - \frac{(1 - (\frac{1}{2} + \frac{\rho}{4})\gamma) \gamma(d-n)}{d}\biggr] = \\
%     & \frac{d}{d-n} \biggl[\gamma \left( \frac{n}{d} + \frac{n+d}{d}(\frac{1}{2} + \frac{\rho}{4})\right) -\gamma^2\left(\frac{1}{4} + (\frac{1}{2} + \frac{\rho}{4})^2 + \frac{\rho}{4} - \frac{n\rho}{8d} - \frac{d-n}{d}(\frac{1}{2}+\frac{\rho}{4})\right) - \frac{n}{d}\biggr]  = \\
%     & \frac{d}{d-n} \biggl[-(\frac{\rho}{4}+\frac{n}{2d})(1 + \frac{\rho}{4})\gamma^2 + \left( \frac{n}{d} + \frac{n+d}{d}(\frac{1}{2} + \frac{\rho}{4})\right)\gamma - \frac{n}{d}\biggr] 
% \end{align*}

\end{proof}

\end{document}